\newcommand{\myTitle}{Learning Equivariant Representations}
\newcommand{\myName}{Carlos Esteves}
\newcommand{\myFaculty}{}
\newcommand{\myUni}{University of Pennsylvania}
\providecommand{\mLyX}{L\kern-.1667em\lower.25em\hbox{Y}\kern-.125emX\@}
\newcolumntype{H}{>{\setbox0=\hbox\bgroup}c<{\egroup}@{}}
\newcolumntype{Z}{>{\setbox0=\hbox\bgroup}c<{\egroup}@{\hspace*{-\tabcolsep}}}
\newcommand{\bgl}{\cellcolor[HTML]{DDDDDD}}
\newcommand{\bgd}{\cellcolor[HTML]{BBBBBB}}
\newcommand{\chaptersubtitle}[1]{{\raggedright\Large\itshape #1\par}}
\newcommand{\li}[2]{{_#1}{#2}}
\setlist[description]{labelindent=1cm}
\theoremstyle{definition} %
\newtheorem{definition}{Definition}
\theoremstyle{plain} %
\newtheorem{theorem}{Theorem}
\theoremstyle{remark} %
\theoremstyle{theorem}
\newtheorem{proposition}{Proposition}
\newtheorem{lemma}{Lemma}
\theoremstyle{remark}
\newtheorem*{remark}{Remark}
\newtheorem*{example}{Example}
\newtheorem*{examples}{Examples}
\renewcommand{\listfigurename}{List of Illustrations}
\newcommand{\twobytwo}[4]{\begin{pmatrix}#1 & #2 \\#3 & #4\end{pmatrix}}
\newcommand{\twobyone}[2]{\begin{pmatrix}#1 \\ #2 \end{pmatrix}}
\newcommand{\threebythree}[9]{\begin{pmatrix}#1 & #2 & #3 \\#4 & #5 & #6\\#7 & #8 & #9 \end{pmatrix}}
\newcommand{\norm}[1]{\left\lVert#1\right\rVert}
\newcommand{\fun}[3]{\ensuremath{#1\colon #2\to #3}}
\DeclarePairedDelimiter\inner{\langle}{\rangle}
\DeclarePairedDelimiter\abs{\lvert}{\rvert}
\newcommand{\eps}{\epsilon}
\newcommand{\R}{\mathbb{R}}
\newcommand{\C}{\mathbb{C}}
\newcommand{\Z}{\mathbb{Z}}
\newcommand{\N}{\mathbb{N}}
\newcommand{\Q}{\mathbb{Q}}
\newcommand{\I}{\mathcal{I}}
\newcommand{\matrixgroup}[2]{%
  \ifthenelse{\equal{#2}{}}
  {\ensuremath{\mathbf{#1}}}
  {\ensuremath{\mathbf{#1}(#2)}}}
\newcommand{\GL}[1]{\matrixgroup{GL}{#1}}
\newcommand{\SO}[1]{\matrixgroup{SO}{#1}}
\newcommand{\SU}[1]{\matrixgroup{SU}{#1}}
\newcommand{\SL}[1]{\matrixgroup{SL}{#1}}
\newcommand{\tr}{\text{tr}}
\renewcommand{\l}{\ell}
\newcommand{\im}{f}
\newcommand{\simtwo}{\matrixgroup{SIM}{2}}
\newabbreviation{FFT}{FFT}{Fast Fourier Transform}
\newabbreviation{CNN}{CNN}{convolutional neural network}
\newabbreviation{G-CNN}{G-CNN}{group equivariant convolutional neural network}
\newcommand{\cnn}{\gls{CNN}}
\newcommand{\cnns}{\glspl{CNN}}
\newcommand{\gcnn}{\gls{G-CNN}}
\newcommand{\gcnns}{\glspl{G-CNN}}
\newcommand{\Gcnns}{\Glspl{G-CNN}}
\newcommand{\fft}{\gls{FFT}}
\newabbreviation{PTN}{PTN}{polar transformer network}
\newcommand{\ptn}{\gls{PTN}}
\newcommand{\ptns}{\glspl{PTN}}
\newcommand{\Ptns}{\Glspl{PTN}}
\newabbreviation{STN}{STN}{spatial transformer network}
\newcommand{\stn}{\gls{STN}}
\newcommand{\stns}{\glspl{STN}}
\newabbreviation{EMVN}{EMVN}{equivariant multi-view network}
\newcommand{\emvns}{\glspl{EMVN}}
\newabbreviation{SCHN}{SCHN}{spherical convolutional hourglass network}
\newcommand{\schn}{\gls{SCHN}}
\newcommand{\Schn}{\Gls{SCHN}}
\newabbreviation{SVD}{SVD}{singular-value decomposition}
\newcommand{\svd}{\gls{SVD}}
\newabbreviation{pca}{PCA}{principal component analysis}
\newcommand{\pca}{\gls{pca}}
\newabbreviation{sft}{SFT}{spherical Fourier transform}
\newcommand{\sft}{\gls{sft}}
\newabbreviation{isft}{ISFT}{inverse spherical Fourier transform}
\newcommand{\isft}{\gls{isft}}
\newabbreviation{soft}{SOFT}{rotation group Fourier transform}
\newcommand{\soft}{\gls{soft}}
\newabbreviation{swsft}{SWSFT}{spin-weighted spherical Fourier transform}
\newcommand{\swsft}{\gls{swsft}}
\newabbreviation{ReLU}{ReLU}{rectified linear unit}
\newcommand{\relu}{\gls{ReLU}}
\newcommand{\relus}{\glspl{ReLU}}
\newabbreviation{sgd}{SGD}{stochastic gradient descent}
\newcommand{\sgd}{\gls{sgd}}
\newabbreviation{map}{mAP}{mean average precision}
\newcommand{\map}{\gls{map}}
\newabbreviation{miou}{mIoU}{mean intersection over union}
\newcommand{\miou}{\gls{miou}}
\newabbreviation{RGB}{RGB}{red, green, blue}
\newcommand{\rgb}{\acrshort{RGB}}
\newabbreviation{SVHN}{SVHN}{street view house numbers}
\newcommand{\svhn}{\gls{SVHN}}
\newabbreviation{MV}{MV}{multi-view}
\newcommand{\mv}{\gls{MV}}
\newcommand{\Mv}{\Gls{MV}}
\newabbreviation{mvcnn}{MVCNN}{multi-view convolutional neural network}
\newcommand{\mvcnn}{\gls{mvcnn}}
\newabbreviation{hspace}{H-space}{homogeneous space}
\newcommand{\hspc}{\gls{hspace}}
\newabbreviation{gconv}{G-conv}{group convolution}
\newcommand{\gconv}{\gls{gconv}}
\newcommand{\gconvs}{\glspl{gconv}}
\newabbreviation{hconv}{H-conv}{homogeneous space convolution}
\newcommand{\hconv}{\gls{hconv}}
\newabbreviation{hcorr}{H-corr}{homogeneous space cross-correlation}
\newcommand{\hcorr}{\gls{hcorr}}
\newabbreviation{wap}{WAP}{weighted average pooling}
\newcommand{\wap}{\gls{wap}}
\newcommand{\Wap}{\Gls{wap}}
\newabbreviation{wgap}{WGAP}{weighted global average pooling}
\newcommand{\wgap}{\gls{wgap}}
\newcommand{\Wgap}{\Gls{wgap}}
\newabbreviation{spp}{SP}{spectral pooling}
\newcommand{\spp}{\gls{spp}}
\newcommand{\magl}{MAG-L}
\newcommand{\healpix}{HEALPix}
\newcommand{\simtwomnist}{SIM$2$MNIST}
\newcommand{\Simtwomnist}{SIM$2$MNIST}
\newcommand{\mnist}{MNIST}
\newcommand{\mnistr}{MNIST-R}
\newcommand{\mnistrts}{MNIST-RTS}
\newcommand{\ccnn}{CCNN}
\newcommand{\pcnn}{PCNN}
\newcommand{\sift}{SIFT}
\newcommand{\rotsvhn}{ROTSVHN}
\newcommand{\resnet}[1]{ResNet$#1$}
\newcommand{\ptnresnet}[1]{PTN-\resnet{#1}}
\newcommand{\mvcnnm}[1]{%
  \ifthenelse{\equal{#1}{}}
  {MVCNN-M}
  {MVCNN-M-$#1$}}
\newcommand{\zz}{\ensuremath{z/z}}
\newcommand{\sotsot}{\ensuremath{\mathbf{SO}(3)/\mathbf{SO}(3)}}
\newcommand{\zsot}{\ensuremath{z/\mathbf{SO}(3)}}
\DeclareMathOperator*{\argmax}{arg\,max}
\DeclareMathOperator*{\argmin}{arg\,min}
\newabbreviation{pnp}{PnP}{Perspective-n-Point}
\newcommand{\pnp}{\gls{pnp}}
\newabbreviation{mse}{MSE}{mean squared error}
\newcommand{\mse}{\gls{mse}}
\newabbreviation{swsh}{SWSH}{spin-weighted spherical harmonic}
\newcommand{\swsh}{\gls{swsh}}
\newcommand{\swshs}{\glspl{swsh}}
\newabbreviation{swsf}{SWSF}{spin-weighted spherical function}
\newcommand{\swsf}{\gls{swsf}}
\newcommand{\swsfs}{\glspl{swsf}}
\newabbreviation{swscnn}{SWSCNN}{spin-weighted spherical CNN}
\newcommand{\swscnn}{\gls{swscnn}}
\newcommand{\swscnns}{\glspl{swscnn}}
\newabbreviation{hks}{HKS}{heat kernel signature}
\newcommand{\hks}{\gls{hks}}
\newabbreviation{svfmnist}{SVFMNIST}{spherical vector field MNIST}
\newcommand{\svfmnist}{\gls{svfmnist}}
\begin{document}
\frenchspacing
\raggedbottom
\selectlanguage{american} %
\pagenumbering{roman}
\pagestyle{plain}

\def\mytitle{LEARNING EQUIVARIANT REPRESENTATIONS} %
\def\myauthor{Carlos Henrique Machado Silva Esteves}
\def\myauthorfull{Carlos Henrique Machado Silva Esteves}
\def\mysupervisorname{Kostas Daniilidis}
\def\mysupervisortitle{Professor of Computer and Information Science}
\newlength{\superlen}   %
\settowidth{\superlen}{\mysupervisorname, \mysupervisortitle} %
\def\gradchairname{Mayur Naik}
\def\gradchairtitle{Professor of Computer and Information Science}
\newlength{\chairlen}   %
\settowidth{\chairlen}{\gradchairname, \gradchairtitle} %
\newlength{\maxlen}
\setlength{\maxlen}{\maxof{\superlen}{\chairlen}}
\def\mydepartment{Computer and Information Science}
\def\myyear{2020}
\def\signatures{46 pt} %

\pagenumbering{roman}
\deftripstyle{pgnumbottomcenter}{}{}{}{}{\pagemark{}}{}
\pagestyle{pgnumbottomcenter}

\begin{titlepage}
\thispagestyle{empty} %
\begin{center}

\doublespacing

{\mytitle}
\\[1ex]
\myauthor
\\[1ex]
{A DISSERTATION}

in 

\mydepartment

Presented to the Faculties of the University of Pennsylvania

in 

Partial Fulfillment of the Requirements for the

Degree of Doctor of Philosophy

\myyear

\end{center}

\vfill %

\begin{flushleft}

Supervisor of Dissertation\\[\signatures] %

\renewcommand{\tabcolsep}{0 pt}
\begin{table}[h]
\begin{tabularx}{\maxlen}{l}
\toprule
\mysupervisorname, \mysupervisortitle\\ %
\end{tabularx}
\end{table}

Graduate Group Chairperson\\[\signatures] %

\begin{table}[h]
\begin{tabularx}{\maxlen}{l}
\toprule
\gradchairname, \gradchairtitle\\ %
\end{tabularx}
\end{table}

\doublespacing

Dissertation Committee %

Jean Gallier, Professor of Computer and Information Science

Jianbo Shi, Professor of Computer and Information Science

Alejandro Ribeiro, Professor of Electrical and Systems Engineering

Ameesh Makadia, Staff Research Scientist, Google Research

\end{flushleft}

\end{titlepage}

\doublespacing

\thispagestyle{empty} %

\vspace*{\fill}

\begin{flushleft}
\mytitle

 \copyright \space COPYRIGHT
 
\myyear

\myauthorfull\\[24 pt] %

This work is licensed under the \\
Creative Commons Attribution \\
NonCommercial-ShareAlike 3.0 \\
License

To view a copy of this license, visit

\url{http://creativecommons.org/licenses/by-nc-sa/3.0/}

\end{flushleft}
\pagebreak 

\setcounter{page}{3}  %

\begin{center}
\textit{To Sam.}
\end{center}

\clearpage
\begin{center}
{\Large\spacedallcaps{ACKNOWLEDGEMENT}}
\end{center}
\vspace{1cm}
First and foremost I must thank my advisor, Kostas Daniilidis,
for accepting me into his wonderful group and for the guidance and support
throughout these five years.
Coming to Penn was a turning point in my life;
it has given me enormous personal growth and prepared me
for a future career doing what I love.
None of this would have happened without Kostas.
I will be forever grateful.

Jean Gallier was always a source of inspiration
and his lectures and writings influenced me tremendously.
I am also grateful for having watched
the writing of \emph{Aspects of Harmonic Analysis and Representation Theory} from the beginning,
for partaking in the ``underground'' Tuesday meetings,
for his comments about my work and for our impromptu conversations.

I thank Ameesh Makadia for being a fantastic mentor during the past three years,
for all his prior work that we built upon,
and also for the great and productive time during my internship at Google Research in New York.
I am excited about our future endeavours!

I am grateful for the interactions with Jianbo Shi and Alejandro Ribeiro
both in classes and as part of my thesis committee,
and for the inspiring work that their groups produce.

I was fortunate to have spent a summer at Facebook AI Research in California.
It was nice to work on different projects
and learn about related topics that I had not approached before.
I thank Georgia Gkioxari and Justin Johnson for that.

Finally, I thank my wife Cristiane for staying by my side the whole time and
for being so understanding and supportive,
our son Samuel for the joy he gives us every day,
and my parents for prioritizing my education and for all the support in my early years.

\clearpage
\begin{center}
{\Large\spacedallcaps{ABSTRACT}}
\vspace{2cm}

\mytitle

\myauthor

\mysupervisorname
\end{center}
\vspace{1cm}
State-of-the-art deep learning systems often require large amounts of data and computation.
For this reason, leveraging known or unknown structure of the data is paramount.
Convolutional neural networks (CNNs) are successful examples of this principle,
their defining characteristic being the shift-equivariance.
By sliding a filter over the input, when the input shifts, the response shifts by the same amount,
exploiting the structure of natural images where semantic content is independent of absolute pixel positions.
This property is essential to the success of CNNs in audio, image and video recognition tasks.
In this thesis, we extend equivariance to other kinds of transformations, such as rotation and scaling.
We propose equivariant models for different transformations defined by groups of symmetries.
The main contributions are
(i) polar transformer networks, achieving equivariance to the group of similarities on the plane,
(ii) equivariant multi-view networks, achieving equivariance to the group of symmetries of the icosahedron,
(iii) spherical CNNs, achieving equivariance to the continuous $3$D rotation group,
(iv) cross-domain image embeddings, achieving equivariance to $3$D rotations for $2$D inputs, and
(v) spin-weighted spherical CNNs, generalizing the spherical CNNs and
achieving equivariance to $3$D rotations for spherical vector fields.
Applications include image classification, $3$D shape classification and retrieval, panoramic image classification and segmentation, shape alignment and pose estimation.
What these models have in common is that they leverage symmetries in the data to reduce sample and model complexity and improve generalization performance.
The advantages are more significant on (but not limited to) challenging tasks where
data is limited or input perturbations such as arbitrary rotations are present.

\onehalfspacing
\clearpage
\tableofcontents

\doublespacing
\cleardoublepage
\pagestyle{scrheadings}
\pagenumbering{arabic}
\cleardoublepage

\chapter{Introduction}
\label{sec:introduction}

Learning representations from data enabled enormous progress in a wide variety of applications
in domains such as audio~\cite{oord2016wavenet}, image~\cite{he2016deep}, and natural language~\cite{devlin2018bert}.
Most state-of-the-art approaches consist of deep learning systems that require large amounts of data and computation.
For this reason, leveraging the known or unknown structure of the data is paramount,
and leads to reduced amount of required training data, fewer model parameters and faster training times.

\emph{Convolution} is a way to leverage the structure of the data.
Recall the familiar convolution of functions $f$ and $k$ on the real line
\begin{align*}
  (f * k)(x) = \int\limits_{t\in \R} f(t)k(x-t)\, dt.
\end{align*}
We define the shift operator $(\lambda_yf)(x) = f(x-y)$.
One important convolution property is the \emph{shift-equivariance}:
$(\lambda_y f) * k = \lambda_y (f * k)$.
Intuitively, if the filter $k$ is designed to respond to some pattern in $f$,
this property tells us that the response will be the same (just shifted) no matter where the
pattern appears.
This is fundamental to the success of \cnns\ introduced by \textcite{fukushima1980neocognitron}.
For example, the application of \cnns\ to image analysis exploits the structure of natural images where
the semantic content is independent of absolute pixel positions.

The filter $k$ is \emph{learned} and compactly supported,
and convolution allows weight-sharing, in contrast with fully connected networks.
The combination of \cnns\ and the backpropagation algorithm (\textcite{lecun1989backpropagation}) is an essential part of the recent deep learning revolution.

In this thesis, we generalize shift equivariance
and present models equivariant to transformations defined by different groups of symmetries.
This property is called \emph{group equivariance}~\cite{cohen2016group}.
Let $\lambda_g,\, \lambda'_g$ denote left group actions on $X,\, Y$ for some $g \in G$.
We say that a map \fun{\Phi}{X}{Y} is equivariant to actions of $G$ when
\begin{align}
  \Phi(\lambda_g (f)) = \lambda'_g (\Phi (f)),
  \label{eq:equivariance}
\end{align}
equivalently represented by the commutative diagram
\[
  \begin{tikzcd}
    X \arrow{r}{\Phi} \arrow[swap]{d}{\lambda_g} & Y \arrow{d}{\lambda'_g} \\
    X \arrow{r}{\Phi} & Y.
  \end{tikzcd}
\]
For each model, we design and parametrize $\Phi$ such that it is equivariant and its parameters are optimizable.
The actions $\lambda$ and $\lambda'$ are not necessarily the same since
$\Phi$ may map between different spaces.
When $\lambda'$ is the identity, we say that $\Phi$ is \emph{invariant} to $G$.
Some authors reserve the term \emph{equivariant} for when $\lambda=\lambda'$ and use
\emph{covariant} otherwise, but we will not make this distinction.

\Gcnns\ are \cnns\ that exhibit group equivariance.
They excel in scenarios with limited data and where inputs are subjected to a
large class of transformations (e.g., rotations).
There are successful applications in $3$D shape analysis~\cite{sphcnnijcv,deng2018ppf,ZhangHRY19},
spherical data analysis~\cite{sphhg,CohenWKW19,esteves20_spin_weigh_spher_cnns},
medical imaging~\cite{cohencube,bekkers2018roto,graham20_dense_steer_filter_cnns_exploit},
satellite/aerial imaging \cite{cyclicsym,henriques2017warped},
cosmology~\cite{cyclicsym,perraudin2019deepsphere}, and
physics/chemistry~\cite{s.2018spherical,kondor2018clebsch,AndersonHK19}.

Performance improvements were also achieved in popular upright
natural image datasets such as CIFAR\num{10/100}~\cite{CohenW17,weiler2019general},
showing that equivariance is not only beneficial when dealing with global input transformations.
This is because local patches can still be seen as transformations of some canonical patch.
For example, low level features such as corners may appear in any orientation even if
inputs are globally aligned, so an equivariant corner detector may reduce the burden of
learning different corner detectors for different orientations.

The second major theme of this thesis is \cnns\ on non-Euclidean spaces.
Most \cnns\ employ convolution on Euclidean spaces;
for example, $\R$ for audio, $\R^2$ for images and $\R^3$ for volumetric occupancy grids.
When exploring group-equivariance, it makes sense to consider features on spaces where
the group acts transitively; these spaces are not necessarily Euclidean.
The quintessential example is the group of rotations \SO{3}%
\footnote{$\SO{3}$ is the group of special orthogonal $3\times 3$ matrices,
 which is identified with $3$D rotations.}
acting on its homogeneous space,
the sphere $S^2$, which we discuss in depth in \cref{sph:sec:sphcnn}.

\section{Contributions and organization}
The following list shows the organization of this thesis,
summarizing the contributions presented in each chapter.
\begin{itemize}
\item In \cref{sec:prelim}, we introduce the theoretical background that enables our contributions.
  In particular, we cover the machinery necessary to define and evaluate integrals and convolutions on groups, which includes group representation theory, the Haar integral, and harmonic analysis.
  It originally appeared as part of \textcite{abs-2004-05154}.
\item In \cref{ptn:sec:ptn}, we discuss the \ptns, which achieve invariance to translation and
  equivariance to continuous rotations and scale,
  by doing a polar transform on the input image with a learned center.
  It was originally published in~\textcite{esteves2018polar}
  and resulted in state-of-the-art performance on the rotated \mnist\ and
  \simtwomnist\ image classification benchmarks.
\item In \cref{emvn:sec:emvn}, we discuss the \emvns, which assemble deep descriptors from multiple
  views of an object or scene as a function on the icosahedral group and
  achieves equivariance to this group through discrete group convolutions.
  It was originally published in~\textcite{Esteves_2019_ICCV}
  and resulted in state-of-the-art performance in multiple $3$D shape retrieval
  and classification benchmarks.
\item In \cref{sph:sec:sphcnn}, we discuss the spherical \cnns, which achieve equivariance to
  continuous $3$D rotations through spherical convolutions computed in the spectral domain.
  Its was originally published in~\textcite{esteves18eccv}
  with an extended version in~\textcite{sphcnnijcv},
  and resulted in performance comparable to the state of the art in $3$D shape
  classification and retrieval, but with orders of magnitude fewer model parameters.
  We also present an extension that was the first equivariant model for panoramic image segmentation,
  and appeared originally in~\textcite{sphhg}.
\item In \cref{cross:sec:cross}, we discuss cross-domain equivariant embeddings,
  in which we learn a mapping
  from $2$D views of a $3$D object to the spherical CNN features of the object.
  The encoded $3$D properties and inherited $3$D equivariance enable
  (i) computation of the $3$D relative pose between two views using spherical correlation, and
  (ii) synthesis of novel views with an inverter network by rotating the embeddings.
  It was originally published in~\textcite{esteves-icml19}.
\item In \cref{spin:sec:spin}, we discuss the \swscnns, which are a generalization of the
  spherical \cnns\ from \cref{sph:sec:sphcnn}.
  By considering the class of \swsfs, we are able use anisotropic filters in a memory
  and computation efficient manner, while also extending \SO{3}-equivariance to
  vector fields on the sphere for the first time.
  The approach yields state-of-the-art performance on spherical image classification
  and semantic segmentation.
  It was originally published in~\textcite{esteves20_spin_weigh_spher_cnns}.
  \item
  In \cref{sec:conclusion}, we summarize the contributions
  and discuss directions for future work.
  The first direction involves using mean curvature flows to map $3$D meshes to the sphere.
  This results in invertible maps that can be represented as spherical vector fields,
  and allows the application of spherical \cnns\ and spin-weighted spherical \cnns\ to
  new problems such as $3$D object part segmentation and mesh prediction.
  The second direction is to apply equivariant representations to large scale computer vision
  problems.
  The third direction is about unsupervised learning of symmetries,
  where the goal is to detect and exploit symmetries present in the data
  without assuming what they are.
\end{itemize}

\section{Related work}
This section contains a broad discussion of related work involving symmetries, invariances
and equivariances in signal processing, computer vision, and machine learning.
Chapter-specific related work is discussed in each chapter.

The concept of equivariance as described in \cref{eq:equivariance} is well established in mathematics,
but its use in computer vision and pattern recognition is more recent.
We are interested in equivariance to transformations other than translation,
since standard \cnns\ are already translation-equivariant.
The most often encountered of such transformations are rotations.

One of the earliest studies of rotation invariance in pattern recognition
was by \textcite{danielsson1980rotation}, while
\textcite{nordberg1996equivariance} introduced one of the first rotation equivariant
features in computer vision.

\textcite{segman1992canonical} introduced the canonical coordinates method that,
for some groups, gives a change of coordinates that transform the group action in a translation.

A closely related topic is steerability, introduced by \textcite{freeman1991design},
which is a way of using linear combinations of basis filters to synthesize new filters
transformed by some group action.
The convolution with a filter bank constructed
as the orbit of a canonical filter by some group
is equivariant to the group.

In classic computer vision, \textcite{harris1988combined} already
sought rotation-invariance for their early image corner detectors.
Similarly, \textcite{lowe1999object,lowe2004distinctive} designed
rotation-invariant local image feature descriptors.

In $3$D object recognition, simple rotation-invariant moment-based global descriptors
appeared as early as \textcite{lo19893},
being further developed by \textcite{burel1995three,kazhdan2003rotation}
using the spherical Fourier transform invariance properties.

In another direction, \textcite{kondor2008group} introduced several group theoretical methods for
machine learning problems, including translation and rotation invariant
image features obtained from group spectral coefficients.

With the massive popularization of deep learning and \cnns,
researchers started to seek invariant and equivariant deep-learned representations.
\textcite{kivinen2011transformation} developed translation and rotation-equivariant
restricted Boltzmann machines.
\textcite{bruna2013invariant} introduced one of the first rotation and scale
invariant convolutional networks, however the wavelet-based filters were not learned.
\textcite{gens2014deep} presented a \cnn\ model that can be made approximately invariant
to arbitrary groups.

\textcite{cohen2016group} formalized \gcnns\ as a generalization of \cnns\ using group convolutions.
Its applications were to small discrete groups of planar rotations and reflections.
\textcite{worrall2017harmonic} achieved equivariance to the continuous group of $2$D rotations,
while \textcite{esteves2018polar} introduced equivariance to the group of planar similarities.

The equivariant \cnns\ mentioned so far have scalar fields as feature maps
(meaning each channel transforms independently).
\textcite{CohenW17} introduced more general features that are
vectors in a group representation vector space.

When $3$D inputs are considered, \SO{3}-equivariance become desirable.
\textcite{s.2018spherical,esteves18eccv} achieved it by considering
spherical inputs and computing convolutions in the spectral domain.
\textcite{weiler3dsteerable} obtained \SO{3}-equivariance for volumetric inputs and
\textcite{tensor} for point clouds,
both following the framework of the steerable \cnns~\cite{CohenW17}.

More recently, \textcite{CohenWKW19} removed the usual constraint that features must live
on homogeneous spaces by introducing gauge-equivariant \cnns,
which work on general manifolds.
\textcite{Bekkers20} removed the usual constraint of considering only discrete
or compact groups by introducing a method to design \cnns\ equivariant to any Lie group.

While most works target practical applications of equivariant representations,
there were theoretical developments seeking to characterize and generalize these models.
\textcite{kondor18icml} proved that equivariance to the action of a compact group requires
a group-convolutional structure, while
\textcite{cohen2019general} generalized this result from scalar fields to general fields,
and introduced a taxonomy to categorize dozens of prior works.

\section{Results from cognitive science}
Cognitive science is often a source of inspiration for artificial intelligence research.
In particular, the study of biological vision has lead to
advancements in computer vision.
In the context of this thesis, it makes sense to review what is known about the
invariances and equivariances in biological visual systems.

The seminal work of \textcite{hubel1959receptive,hubel1968receptive} discovered
cells in the visual system with localized receptive fields, and that cells at higher levels
can receive inputs from multiple cells at lower level, exhibiting a larger receptive field.
The replication of units composed of multiple cells over the whole visual field results in
a translation-equivariant representation.
This inspired the introduction of \cnns\ by \textcite{fukushima1980neocognitron}.

\textcite{hubel1959receptive,hubel1968receptive} also found neurons
that are sensitive to edges in specific orientations.
\textcite{bosking1997orientation} showed two different arrangements of such neurons containing
a cycle of possible orientations.
Both arrangements can be interpreted as equivariant representations.
The first is referred to as pinwheel,
where an input edge rotation also results in a rotation of the activations;
\textcite{petitot2003neurogeometry} interpreted it as a circle bundle over the retina.
The second arrangement is linear, such that an edge rotation corresponds to a circular shift.

There seems to exist little evidence of
viewpoint-invariant neurons~\cite{kandel2013principles}.
\textcite{schwartz1983shape} found neurons invariant to scale,
position within their receptive field, contrast, color and texture,
while \textcite{logothetis1995shape} found viewpoint-sensitive neurons
in the inferior temporal cortex of monkeys trained to recognize complex $3$D shapes.

A classic experiment by \textcite{shepard1971mental} asked human participants to
tell if two images from different viewpoints correspond to the same object.
They showed that the time to solve the task is proportional to the
rotation angle between both images.
The evidence is that humans solve this task by creating a mental model of the object
and executing a mental rotation to align both views,
which implies that there is no direct rotation invariant representation
as the ones obtained with current equivariant \cnns\
(e.g., with our methods in~\cref{sph:sec:sphcnn,cross:sec:cross}).
Humans exercise a form of high-level reasoning to solve this task that is not
yet possible with current artificial neural networks.

Recently, further connections between deep learning and neuroscience have been explored,
in the direction of modeling biological neural responses with artificial neural networks.
\textcite{yamins2014performance} trained a number of biologically-plausible neural networks
on image classification tasks
and discovered that models that match human performance have activations correlated with activations
on the inferior temporal cortex and V4 cortex.
\textcite{klindt2017neural} leveraged the translation equivariance of \cnns\ to model
neural responses in the V1 cortex. This exploits the fact that there are multiple neurons
computing the approximately the same functions, replicated along the visual field.
\textcite{ecker2018a} extended these results by observing that cycles of orientation-sensitive neurons
are also replicated along the visual field, so
a translation and rotation-equivariant \cnn\ is more suitable for the task.

Refer to \textcite{kandel2013principles,bermudez2014cognitive} for more details
about cognitive science and biological visual systems.
\glsresetall
\chapter{Theoretical background}
\label{sec:prelim}
\section{Introduction}
This thesis has two major themes, (i) neural networks that are group equivariant
and (ii) neural networks on non-Euclidean spaces.
In this chapter we present the theoretical background that enables our contributions.
Fortunately, the non-Euclidean spaces we consider are homogeneous spaces of the groups,
so the theory is interconnected.

In this chapter, we present the theory behind \gcnns, in particular of group convolutions,
which is not usually covered in recent papers due to space constraints.
We discuss group representation theory (\cref{h:sec:group}),
integration and harmonic analysis on non-Euclidean spaces (\cref{h:sec:haar,h:sec:harm}).
\Cref{h:sec:appl} shows how this theory is applied to \gcnns.

Most of the material in \cref{h:sec:group,h:sec:haar,h:sec:harm}
is presented in a more rigorous and complete way in \textcite{gallierncharm}.
We omit deep proofs related to the Haar measure and the Peter-Weyl theorem,
and often tailor the material to just the parts required to understand the current
\gcnns.
We do, nevertheless, derive the irreducible representations of
$\SL{2, \C}$, $\SU{2}$ and $\SO{3}$ and show how special functions,
including the spherical harmonics, arise in the process.
Furthermore, we define and prove the formulas for \SO{3} and spherical convolutions
and cross-correlations that are used in recent works.
While \textcite{gallierncharm} is the main reference utilized,
we sometimes follow \textcite{nakahara2003geometry,serre1977linear,dieudonn1980special,folland2016course,hall2015lie,vilenkin1978special,rudin2006real} when more appropriate.

The content of this chapter appeared originally
as part of a literature review (\textcite{abs-2004-05154}).

\section{Group representation theory }
\label{h:sec:group}
Group representation theory is the study of groups by the way they act on vector spaces,
which is done by \emph{representing} elements of the group as linear maps between vector spaces.
\subsection{Groups and homogeneous spaces}
We begin with basic definitions about groups.
\begin{definition}[group]
\label{h:def:group}
A \emph{group} $(G, \cdot)$ is a set $G$ equipped with an associative binary operation $\fun{\cdot}{G \times G}{G}$,
an identity element, and where every element has an inverse also in the set.
When $\cdot$ is commutative, we call the group \emph{abelian} or \emph{commutative}.
When the set is equipped with a topology where $\cdot$ and the inverse map are continuous,
we call it a \emph{topological group}.
When such topology is compact, we call the group a \emph{compact group}.
When $G$ is a \emph{smooth manifold} and $\cdot$ and the inverse map are smooth, it is a \emph{Lie group}.
A \emph{subgroup} $(H, \cdot)$ of a group $(G, \cdot)$ is a group such that $H \subseteq G$.
\end{definition}
\begin{examples}
  \begin{itemize}
  \item[]
  \item The integers under addition $(\Z, +)$ form an abelian,
    non-compact group.
  \item The group of all permutations of a set of $n$ symbols,
called the symmetric group $S_n$ is a finite, non-commutative group of $n!$ elements.
  \item The group of rotations in $3$D, \SO{3}, is a compact, non-commutative Lie group.
  \end{itemize}
\end{examples}
\noindent For a negative example,
consider the sphere $S^2$ and its north pole $\nu=(0, 0, 1)$.
We can identify any point on the sphere by angles $(\alpha, \beta)$,
which represent a rotation of the north pole $R_y(\beta)$ (around $y$)
followed by $R_z(\alpha)$ (around $z$); we write
${x = R_z(\alpha)R_y(\beta)\nu}$.
Now define the operation $\cdot$ such that
$x_1 \cdot x_2 =  R_z(\alpha_1)R_y(\beta_1)R_z(\alpha_2)R_y(\beta_2)\nu$.
Any rotation in $\SO{3}$ can be represented as $R_z(\alpha_1)R_y(\beta_1)R_z(\alpha_2)R_y(\beta_2)$,
and not only the ones of the form $R_z(\alpha_1)R_y(\beta_1)$;
therefore the operation $\cdot$ as defined is not closed in $S^2$,
and $(S^2, \cdot)$ is not a group.

While $S^2$ is not a group, we will show that it is a homogeneous space
of $\SO{3}$
Intuitively, homogeneous spaces are spaces where the group acts ``nicely''.
For this reason, they are useful as the feature domain in \gcnns.
Homogeneous spaces are closely related to coset spaces;
we now define both structures and show how they relate.
\begin{definition}[homogeneous space]
  \label{h:def:hspace}
  The action of a group $G$ is \emph{transitive} on a space $X$
  if for any pair of elements $x$ and $y$ in $X$,
  there exists an element $g$ in $G$ such that $y = gx$.
  A \emph{homogeneous space} $X$ of a group $G$ is a space where the group acts transitively.
\end{definition}

\begin{definition}[coset space]
  Given a subgroup $H$ and an element $g$ of a group $G$,
  we define the \emph{left coset} $gH$ as ${gH = \{gh \mid h \in H\}}$.
  The set of left cosets partition $G$ and is called the \emph{left coset space} $G/H$.
  We define the right cosets $Hg$ and their coset space $H\backslash G$ analogously.
\end{definition}
\noindent Let $o \in X$ be an arbitrarily chosen origin of $X$ and $H_o$ its stabilizer.
Then, there is a bijection%
\footnote{The bijection will be a homeomorphism is all cases we consider, but not in general.}
between $X$ and $G/H_o$.

We will often refer to elements of a homogeneous space $X \cong G/H_o$ by the coset $gH_o$,
and the map $g \mapsto gH_o$ is a projection from the group $G$ to the homogeneous space $X$.
Since we are interested in maps that are equivariant to actions of some group $G$,
we will frequently consider maps between homogeneous spaces of $G$.
\begin{example}
  Let us return to the sphere $S^2$ and its north pole $\nu=(0, 0, 1)$.
  The sphere is a homogeneous space since $\SO{3}$ acts transitively on it.
  The set of rotations that do not move $\nu$ is the stabilizer
  $H_\nu = \{R_z(\delta) \mid \delta \in [0, 2\pi) \}$.
  Any rotation in $R\in \SO{3}$ can be written as $R = R_z(\alpha)R_y(\beta)R_z(\gamma)$,
  and generate left cosets of the form
  \[RH_\nu = \{R_z(\alpha)R_y(\beta)R_z(\gamma+\delta) \mid \delta \in [0, 2\pi)\}.\]
  The pair $(\alpha,\, \beta)$ uniquely identify each coset, which gives an isomorphism
  between points on the sphere and the set of all cosets $\SO{3}/H_\nu$.
  Since $H_\nu$ is isomorphic to group of planar rotations $\SO{2}$,
  we write $S^2 \cong \SO{3} / \SO{2}$.
\end{example}

\subsection{Group representations}
Group representations have numerous applications.
Most important to our purposes are
(i) they represent actions on vector spaces
(for example, $\lambda_g$ in \cref{eq:equivariance} could be a linear representation),
and (ii) they form bases for spaces of functions on groups, as will be detailed in \cref{h:sec:harm}.

\begin{definition}[representation]
  A \emph{group homomorphism} between groups $G$ and $H$ is a map \fun{f}{G}{H} such that $f(g_1g_2) = f(g_1)f(g_2)$.
  Let $G$ be a group and $V$ a vector space over some field.
  A \emph{linear representation} is a group homomorphism \fun{\rho}{G}{\GL{V}},
  where $\GL{V}$ is the general linear group.%
  \footnote{When $V$ is finite-dimensional and $n = \dim V$, $\GL{V}$ is identifiable with the group of $n \times n$ invertible matrices.}
  If $V$ is an inner product space and $\rho$ is continuous and preserves the inner product,
  it is called a \emph{unitary representation}.
  The \emph{character} of a representation $\rho$ is the map \fun{\chi_\rho}{G}{\C}
  such that $\chi_\rho(g) = \tr(\rho(g))$.
\end{definition}
\begin{example}
  Consider the multiplicative group $G$ of complex numbers of the form $g_\theta = e^{i\theta}$.
  The map
  \[\rho(e^{i\theta}) = \twobytwo{\cos \theta}{ \sin\theta}{-\sin\theta}{\cos\theta}\]
  is a representation of $G$ on $\R^2$.
  We can check that $g_\theta g_\phi = g_{\theta+\phi}$ and
  $\rho(e^{i(\theta + \phi)}) = \rho(e^{i\theta})\rho(e^{i\phi})$.
\end{example}
\begin{example}
  Let $L^2(G)$ be the Hilbert space of square integrable functions on $G$, and
  let \fun{\rho}{G}{\GL{L^2(G})} act on \fun{f}{G}{\C} as
  $(\rho(g)(f))(x) = f(g^{-1}x)$.
  $\rho$ defined this way is a representation of $G$; specifically,
  it is a \emph{left regular representation} of $G$.
\end{example}
\begin{definition}[irreducible representation]
  Let \fun{\rho}{G}{\GL{V}} be a representation of $G$ on a vector space $V$,
  and $W$ be a vector subspace of $V$.
  When $W$ is invariant under the action of $G$, i.e.,
  for all $g \in G$ and $w\in W$ we have $\rho(g)(w) \in W$,
  the restriction of $\rho$ to $W$ is a representation of $G$ on $W$, called a \emph{subrepresentation}.
  When the only subrepresentations of $\rho$ are $V$ and the zero vector space,
  we call $\rho$ an \emph{irreducible representation} or \emph{irrep}.
\end{definition}
\begin{example}
  Consider the group $\SO{3}$ and the vector space $V$ of $3\times 3$ real matrices ($V \cong \R^9$).
  We define a representation \fun{\rho}{\SO{3}}{\GL{V}} such that ${\rho(g)(A) = g^\top A g}$.
  Now consider the subspace $W$ of $V$ comprised of antisymmetric matrices ($B=-B^\top$).
  It turns out $W$ is invariant to $\rho$,
  \begin{align}
    (\rho(g)(B))^\top = (g^\top B g)^\top = -g^\top B g = -\rho(g)(B)
  \end{align}
  so $\rho(g)(B) \in W$ for all $g \in \SO{3}$ and $B \in W$.
  Therefore $\rho$ is a reducible representation.
  It is, however, irreducible as a representation on $W$.
\end{example}
\begin{remark}
Every representation of a finite group is a direct sum of irreps (Maschke's theorem).
\end{remark}
\begin{remark}
Every finite-dimensional unitary representation of a compact group is a direct sum of unitary irreducible representations (unirreps).
\end{remark}

We often want to determine all irreducible representations of a group,
or decompose a representation in its irreducible parts.
The characters ${\fun{\chi_\rho}{G}{\C}}$ play an important role in this task.
First, we define the inner product of characters $\inner{\chi_i, \chi_j} = \int_G \chi_i(g)\overline{\chi_j(g)}\, dg$.%
\footnote{This involves integration on the group, which we will define in \cref{h:sec:haar}.}
The following properties hold:
\begin{itemize}
\item Isomorphic representations have the same character.
  The converse is only true for semisimple representations,
  which include unitary representations and all representations of finite or compact groups.
\item Distinct characters of irreducible representations of compact groups are orthogonal,
  $\inner{\chi_i, \chi_j} = 0$ when $i \neq j$.
\item A representation of a compact group is irreducible if and only if its character $\chi$ satisfies $\inner{\chi, \chi} = 1$.
\item The character of a direct sum of representations is the sum of the individual characters.
\end{itemize}

Now let \fun{\rho_1}{G}{\GL{V_1}} and \fun{\rho_2}{G}{\GL{V_2}} be finite-dimensional representations.
The map \fun{\rho_{12}}{G}{V_1 \otimes V_2}
obtained via tensor product $\rho_{12}(g) = \rho_1(g) \otimes \rho_2(g)$
is a representation of $V_1 \otimes V_2$.
This representation is not irreducible in general,
and the Clebsch-Gordan theory studies how it decomposes into irreps.
\begin{definition}[G-map]
  Given two representations \fun{\rho_1}{G}{\GL{V_1}} and \fun{\rho_2}{G}{\GL{V_2}},
  a \emph{G-map} is a linear map \fun{f}{V_1}{V_2} such that $${f(\rho_1(g)(v_1)) = \rho_2(g)(f(v_1))}$$
  for every $g \in G$ and $v_1\in V_1$.
  If $f$ is invertible, we say that $\rho_1$ and $\rho_2$ are \emph{equivalent},
  and we can define equivalence classes of representations.
  A G-map is sometimes called a \emph{G-linear}, \emph{G-equivariant}, or \emph{intertwining} map.
\end{definition}
\begin{remark}
  In the context of neural networks, we usually have alternating linear maps and nonlinearities.
  In equivariant neural networks, we want the linear maps to be G-maps.
  The representations will often be the natural action $(\rho(g)f)(x) = f(g^{-1}x)$.  
\end{remark}

The following is an important result characterizing G-maps between irreps.
\begin{theorem}[Schur's Lemma]
\label{h:thm:schur}
Let \fun{\rho_1}{G}{\GL{V_1}} and \fun{\rho_2}{G}{\GL{V_2}} be irreducible representations of $G$,
and \fun{f}{V_1}{V_2} a \emph{G-map} between them.
Then $f$ is either zero or an isomorphism.
If $V_1=V_2$ and $\rho_1=\rho_2$ are complex representations, then $f$ is a multiple of the identity map, $f = \lambda \text{id}$.
\end{theorem}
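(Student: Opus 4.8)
The plan is to prove both parts by exploiting the fact that the kernel and image of a $G$-map are subrepresentations, and then invoking irreducibility to collapse them to the trivial options. First I would establish the following basic claim: if $\fun{f}{V_1}{V_2}$ is a $G$-map, then $\ker f \subseteq V_1$ is invariant under $\rho_1$ and $\operatorname{im} f \subseteq V_2$ is invariant under $\rho_2$. For the kernel, take $v \in \ker f$; then $f(\rho_1(g)(v)) = \rho_2(g)(f(v)) = \rho_2(g)(0) = 0$, so $\rho_1(g)(v) \in \ker f$. For the image, take $w = f(v) \in \operatorname{im} f$; then $\rho_2(g)(w) = \rho_2(g)(f(v)) = f(\rho_1(g)(v)) \in \operatorname{im} f$. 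Both verifications are one-line applications of the intertwining property and should be stated explicitly but without ceremony.

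Next I would invoke irreducibility twice. Since $\rho_1$ is irreducible, its only subrepresentations are $V_1$ and $\{0\}$, so $\ker f$ is either all of $V_1$ or $\{0\}$. Likewise, since $\rho_2$ is irreducible, $\operatorname{im} f$ is either $V_2$ or $\{0\}$. If $\ker f = V_1$ then $f = 0$ and we are done. Otherwise $\ker f = \{0\}$, so $f$ is injective; this forces $\operatorname{im} f \neq \{0\}$ (assuming $V_1 \neq \{0\}$, which holds for a nonzero irrep), hence $\operatorname{im} f = V_2$, so $f$ is also surjective. An injective and surjective $G$-map is an isomorphism of representations, establishing the dichotomy of the first statement.

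For the second statement I would specialize to $V_1 = V_2 = V$ and $\rho_1 = \rho_2 = \rho$ over $\C$. Here the key tool is that $f$, as a linear endomorphism of a finite-dimensional complex vector space, has an eigenvalue $\lambda$, since the characteristic polynomial splits over the algebraically closed field $\C$. The plan is to consider $g := f - \lambda\,\text{id}$ and observe that it is again a $G$-map, because $\text{id}$ commutes with every $\rho(g)$ and the $G$-maps form a vector space. By construction $g$ is not an isomorphism (its kernel contains an eigenvector, so $\ker g \neq \{0\}$). Applying the first part of the lemma to $g$ then forces $g = 0$, i.e. $f = \lambda\,\text{id}$.

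The main obstacle, and the only place where the hypotheses genuinely bite, is the existence of an eigenvalue in the second part: this is exactly why the field must be algebraically closed (the complex hypothesis) and why finite-dimensionality is needed to guarantee a nonempty spectrum via the characteristic polynomial. Over $\R$ the argument breaks down, as the rotation representation exhibited in the earlier example has no real eigenvalues yet is a nonscalar $G$-map. The first part, by contrast, is purely formal and needs neither the field nor the dimension assumption beyond ensuring $V_1$ is nonzero so that injectivity yields a nonzero image.
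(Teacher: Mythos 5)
Your proof is correct and is the standard argument for Schur's Lemma: the kernel and image of a $G$-map are invariant subspaces, irreducibility collapses each to the trivial options, and the complex endomorphism case follows by subtracting an eigenvalue (which exists because $\C$ is algebraically closed and the representation is finite-dimensional) and applying the dichotomy to $f - \lambda\,\text{id}$. Note, however, that the paper itself states this theorem without proof — it defers classical representation-theoretic results to its references (e.g., Gallier, Serre, Hall) — so there is no in-paper argument to compare against; your closing remark that the paper's earlier example of the rotation representation of the circle group on $\R^2$ witnesses the failure of the scalar conclusion over $\R$ is a nice way to tie the necessity of the complex hypothesis back to the text.
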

Henceforth, we assume representations are complex (representation vector space is over $\C$)
except when stated otherwise.

This concludes our introduction to group representation theory.
For more details we recommend \textcite{gallierncharm,serre1977linear,hall2015lie}.
\section{Integration}
\label{h:sec:haar}
In order to compute Fourier transforms and convolutions on groups,
we need to integrate functions on groups.
The key ingredient is the Haar measure.
We begin with the familiar Riemann integral,
discuss its limitations and introduce Lebesgue integration as the remedy.
The Lebesgue integral allows integration over arbitrary sets
given an appropriate measure.
Finally, we define the Haar measure,
which is the appropriate measure used for integration on
locally compact groups.
\subsection{The Riemann integral}
Intuitively, the Riemann integral is the familiar ``area under the curve''
of a continuous function on an interval of the real line \fun{f}{[a,b]}{\R}.
The idea is to partition the integration interval and
define the integral as the sum of areas of the rectangles defined by one value
of $f$ on each subinterval and the subinterval width,
on the limit where such widths tend to zero.

\begin{definition}[Riemann integral]
  For an interval $[a,b] \subset \R$ and a subdivision $T=\{t_i\}$ with $0 \le i \le n$,
  $t_0=a$, $t_{n}=b$, and $t_k < t_{k+1}$ for all $k < n$,
  the \emph{Cauchy-Riemann sum} $s_T(f)$ of a continuous function \fun{f}{[a,b]}{\R} is
  \begin{align*}
    s_T(f) = \sum_{k=0}^{n-1}(t_{k+1}-t_{k})f(t_k).
  \end{align*}
  The \emph{diameter} of the subdivision $T$ is $\delta(T) = \max_k\{t_{k+1}-t_k\}$.
  Now consider any sequence of subdivisions $T_j$ such that $\lim_{j \to \infty}\delta(T_j) = 0$
  (consequently, $n \to \infty$).
  We define the \emph{Riemann integral} as $\int_a^b f(t)\,dt = \lim_{j \to \infty} s_{T_j}(f)$.
\end{definition}
It can be shown that $s_{T_j}$ always converge to the same limit
for any sequence of subdivisions $T_j$.
Importantly, the map $f \mapsto \int_a^bf(t) \, dt$ is linear.
The Riemann integral can be extended to functions on products of closed intervals on $R^n$
and to vector valued functions.
However, it cannot be defined on more general domains;
the Lebesgue integration was introduced to overcome this limitation.
\subsection{Lebesgue integration}
Lebesgue integration can be defined on arbitrary sets,
and allows taking limits of sequences of functions under integration,
which is necessary in Fourier analysis, for example.

In this section, we follow \textcite{rudin2006real} for the most part.
Refer to \textcite{gallierncharm} for a more general approach
which allow functions taking value on arbitrary (possibly infinite-dimensional)
vector spaces.

We begin by defining the crucial concept of \emph{measure}.
\begin{definition}[measure]
  A collection $\Sigma$ of subsets of a set $X$ is a \emph{$\sigma$-algebra}
  if it contains $X$ and is closed under complementation and countable unions.
  We call the tuple $(X, \Sigma)$ a \emph{measurable space},
  and the subsets in $\Sigma$ are \emph{measurable sets}.
  A function \fun{f}{X}{Y} is \emph{measurable} if
  the preimage of every measurable set in $Y$ is in $\Sigma$.
  A \emph{measure}  is a function \fun{\mu}{\Sigma}{[0, \infty]}
  which is \emph{countably additive},
  \begin{align}
    \mu\left( \bigcup_{i=0}^\infty A_i \right) = \sum_{i=0}^{\infty} \mu(A_i) \label{h:eq:additivity}
  \end{align}
  for a disjoint collection of $A_i \in \Sigma$.
  The tuple $(X, \Sigma, \mu)$ is called a \emph{measure space}.
\end{definition}
\begin{example}
  On the real line $\R$, we define $\mathcal{B}(\R)$ as the smallest $\sigma$-algebra containing every open interval.
  This is known as the $\sigma$-algebra of Borel sets, or the Borel algebra.
  Then \fun{\mu}{\mathcal{B}(\R)}{[0, \infty]} defined such that
  $\mu((a,b]) = b-a$ is a measure in $(\R, \mathcal{B}(\R))$;
  it is usually called the Borel measure.
\end{example}

Carathéodory's theorem allows the construction of measures and measure spaces from an outer measure.
\begin{theorem}[Carathéodory]
  \label{h:thm:caratheodoty}
  An \emph{outer measure} $\mu^*$ on a set $X$ is a function \fun{\mu^*}{2^X}{[0, \infty]}
  such that (i) $\mu^*(\emptyset)=0$, (ii) if $A \subseteq B$, $\mu^*(A) \le \mu^*(B)$ and (iii)
  \begin{align}
    \mu^*\left( \bigcup_{i=0}^\infty A_i \right) \le \sum_{i=0}^{\infty} \mu^*(A_i).
    \label{h:eq:subadditivity}
  \end{align}
  Note that \cref{h:eq:subadditivity} is a relaxation of \cref{h:eq:additivity}, called \emph{subadditivity.}
  We can construct an outer measure on $X$ as
  \begin{align}
    \mu^*(A) = \inf \left\{\sum_{n=0}^\infty \lambda(I_n) \mid A \subseteq \bigcup_{n=0}^\infty I_n \right\}
    \label{h:eq:outer}
  \end{align}
  where $\lambda$ is any positive function with $\lambda(\emptyset)=0$
  and there is a family $\{I_n\}$ of subsets of $X$ that contains the empty set and
  covers any subset $A \subseteq X$.
  Now consider the family of subsets
  \begin{align*}
    \Sigma = \{A \in 2^X \mid \mu^*(A) = \mu^*(E \cap A) + \mu^*(E \cap (X-A)),\, \forall E \in 2^X\}.
  \end{align*}
  Then $\Sigma$ is a $\sigma$-algebra and the restriction $\mu$ of $\mu^*$ to $\Sigma$ is a measure,
  so $(X, \Sigma, \mu)$ is a measure space.
\end{theorem}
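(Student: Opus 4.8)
The statement really bundles two claims: first, that the infimum formula \cref{h:eq:outer} defines an outer measure, and second — crucially using only the three outer-measure axioms — that the Carathéodory criterion carves out a $\sigma$-algebra on which $\mu^*$ restricts to a genuine measure. I would dispatch the first claim directly from the definition. Property (i) follows because the covering family contains $\emptyset$ with $\lambda(\emptyset)=0$, so covering $\emptyset$ by itself witnesses $\mu^*(\emptyset)\le 0$. Monotonicity (ii) is immediate, since every cover of $B$ is a cover of $A$ whenever $A\subseteq B$, so the infimum for $A$ is taken over a larger family. For subadditivity (iii), given $\epsilon>0$ I would pick for each $A_i$ a cover whose $\lambda$-sum is within $\epsilon/2^{i+1}$ of $\mu^*(A_i)$; concatenating these countably many covers yields a cover of $\bigcup_i A_i$ with total $\lambda$-sum at most $\sum_i\mu^*(A_i)+\epsilon$, and letting $\epsilon\to 0$ closes the argument.

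For the second claim, the first useful observation is that subadditivity already gives $\mu^*(E)\le\mu^*(E\cap A)+\mu^*(E\cap(X-A))$ for every $E$ and $A$, so membership in $\Sigma$ is equivalent to the single reverse inequality $\mu^*(E)\ge\mu^*(E\cap A)+\mu^*(E\cap(X-A))$; this halves the work throughout. That $X\in\Sigma$ and that $\Sigma$ is closed under complementation are then trivial — the latter because the defining condition is symmetric under $A\leftrightarrow X-A$. To promote $\Sigma$ to an algebra I would verify closure under finite unions: for $A,B\in\Sigma$ and any test set $E$, splitting $E$ first by $A$ and then splitting $E\cap(X-A)$ by $B$ expresses $\mu^*(E)$ as a sum of three terms, two of which recombine via subadditivity to bound $\mu^*(E\cap(A\cup B))$, giving $A\cup B\in\Sigma$.

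The crux is upgrading from finite to countable unions while simultaneously extracting countable additivity. I would first reduce to the disjoint case (legitimate since $\Sigma$ is already an algebra), then prove by induction the key lemma that for disjoint $A_1,\dots,A_n\in\Sigma$ and every $E$ one has $\mu^*(E\cap\bigcup_{i=1}^n A_i)=\sum_{i=1}^n\mu^*(E\cap A_i)$, where each step tests the measurability of $A_n$ against $E\cap\bigcup_{i=1}^n A_i$ and uses disjointness to identify the two pieces. Applying the measurability of $B_n=\bigcup_{i=1}^n A_i$ together with monotonicity (to replace $X-B_n$ by $X-B$, where $B=\bigcup_{i=1}^\infty A_i$) yields $\mu^*(E)\ge\sum_{i=1}^n\mu^*(E\cap A_i)+\mu^*(E\cap(X-B))$ for every $n$. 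Passing to the limit and invoking countable subadditivity on the resulting sum delivers both $B\in\Sigma$ and, upon choosing $E=B$, the identity $\mu^*(B)=\sum_i\mu^*(A_i)$. Thus $\Sigma$ is a $\sigma$-algebra and $\mu=\mu^*|_\Sigma$ is countably additive, hence a measure.

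The main obstacle I anticipate is precisely this last interchange of limit and infimum: the finite-additivity lemma and the monotone replacement of $X-B_n$ by $X-B$ must be orchestrated so that the limiting inequality stays sharp enough to be squeezed against subadditivity from the opposite side. Everything else is bookkeeping with the two inequalities, but that final squeeze is where the measure property genuinely emerges.
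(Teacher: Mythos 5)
Your proposal cannot be checked against a proof in the paper, because the paper does not contain one: \cref{h:thm:caratheodoty} is stated without proof in the background chapter (\cref{sec:prelim}), which explicitly defers rigor to its references (the chapter says it follows Rudin for this section and omits deep proofs), and the theorem is used only once, to construct the Lebesgue measure in the example that follows it. So your argument fills a gap the paper deliberately leaves open rather than paralleling or diverging from an existing proof.

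On its own merits, your argument is the standard Carath\'eodory construction and it is correct. Part one is handled properly: the empty-set cover gives property (i), inclusion of cover families gives (ii), and the $\epsilon/2^{i+1}$ concatenation of near-optimal covers gives (iii), as required for \cref{h:eq:outer}. Part two is orchestrated in the right order: reducing membership in $\Sigma$ to the single reverse inequality via subadditivity, symmetry of the criterion for complements, the three-term splitting (test by $A$, then test $E \cap (X-A)$ by $B$) for closure under finite unions, disjointification using the algebra structure, the finite-additivity lemma proved by induction testing $A_n$ against $E \cap \bigcup_{i \le n} A_i$, and finally the passage $\mu^*(E) \ge \sum_{i=1}^{n}\mu^*(E \cap A_i) + \mu^*(E \cap (X-B))$ for all $n$, followed by the squeeze against countable subadditivity, which simultaneously yields $B \in \Sigma$ and, with $E = B$, countable additivity of the restriction. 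The one point you flag as delicate --- replacing $X - B_n$ by $X - B$ via monotonicity before taking the limit --- is indeed the step that makes the limiting inequality strong enough, and you place it correctly. This is essentially the proof found in the measure-theory texts the paper cites, so if the paper were to include a proof, it would look like yours.
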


\begin{example}
  Let $\mu^*$ be an outer measure constructed as in \cref{h:eq:outer}
  where $\{I_n\}$ is the set of all open intervals in $\R$
  and $\lambda((a,b)) = b-a$.
  By applying \cref{h:thm:caratheodoty} to $\mu^*$ we obtain
  the Lebesgue measure $\mu_L$,
  and the $\sigma$-algebra of Lebesgue measurable sets $\mathcal{L}(\R)$.
  It can be shown that $\mathcal{B}(\R) \subset \mathcal{L}(\R)$;
  this extends to $\R^n$.
\end{example}

Equipped with the notion of measures and measurable functions,
we can define the Lebesgue integral.
\begin{definition}[Lebesgue integral]
  Let $(X, \Sigma, \mu)$ be a measure space.
  We define the \emph{characteristic function} $\chi_A$ of a measurable set $A$ as the
  indicator function \fun{\chi_A}{X}{\{0,1\}} that is 1 when $x \in A$ and 0 otherwise.
  A \emph{simple function} is a function $s$ on $X$ whose range consist only of finitely many distinct values;
  formally, $s(x) = \sum_{i=0}^n \alpha_i\chi_{A_i}(x)$ where $\{\alpha_i\}$
  is the set of distinct values.
  We define the integral of a measurable simple function over a set $E \in \Sigma$ as
  \begin{align*}
    \int\limits_E s\,d\mu = \sum_{i=0}^n\alpha_i \mu(A_i \cap E).
  \end{align*}
  We call a function $f$ \emph{positive} when $f(x) \ge 0$ for all $x$,
  and say that $f \le k$ when $k-f$ is positive.
  For a measurable positive function \fun{f}{X}{[0, \infty]} we define the \emph{Lebesgue integral} as
  \begin{align*}
    \int\limits_E f\,d\mu = \sup \int\limits_E s\,d\mu,
  \end{align*}
  where the supremum is over all simple functions $s$ such that $0 \le s \le f$.
\end{definition}
The Lebesgue integral is easily extended to complex valued functions \fun{f}{X}{\C}
by noting that we can write $f = u^{+} - u^{-} + i(v^{+} - v^{-})$
for positive functions $u^{+}$, $u^{-}$, $v^{+}$, $v^{-}$;
the integral is then obtained by linearity.

Intuitively, while the Riemann integral partitions the domain of $f$ to compute
the integral, the Lebesgue integral partitions its range.
This is the key to enable integration over more general domains.
\begin{example}
  Consider again the measure space $(\R, \mathcal{B}(\R), \mu)$,
  and the indicator function for the rational numbers \fun{f}{\R}{\{0, 1\}},
  $f(x)=1$ if $x \in \Q$ and $f(x)=0$ otherwise.
  The function is not Riemann-integrable since here is no interval where it is continuous.
  However it is a simple function that takes the value 1 on a set of measure zero
  (since $\Q$ is countable), and 0 elsewhere.
  Hence, $f$ is Lebesgue integrable and its integral is zero on any interval.
\end{example}

\subsection{The Haar measure}
\label{h:haar}
The Lebesgue integral allows integration on arbitrary sets,
when they are given the structure of a measure space.
The Haar measure gives such structure to locally compact groups.

\begin{theorem}[Haar measure]
  Consider a locally compact Hausdorff topological group $G$,
  and the Borel $\sigma$-algebra $\mathcal{B}(G)$ generated by its open subsets.
  There exists a unique measure $\mu$, up to a multiplicative constant, such that
  $\mu$ is left-invariant, i.e., $\mu(gE) = \mu(E)$ for all $E \in \mathcal{B}(G)$ and $g \in G$.
  Furthermore, $\mu$ is $\sigma$-regular,
  \begin{align*}
    \mu(E) &= \inf \{\mu(U) \mid E \subseteq U,\, U \text{ open}  \}, \\
    \mu(E) &= \inf \{\mu(K) \mid K \subseteq E,\, K \text{ compact} \}.
  \end{align*}
  The measure $\mu$ defined as such is called the \emph{left Haar measure}.
  We define the \emph{right Haar measure} analogously;
  both measures are not necessarily equal.
\end{theorem}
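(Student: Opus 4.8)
The plan is to treat existence and uniqueness separately, following the classical functional-analytic route and then invoking the Carathéodory construction of \cref{h:thm:caratheodoty} to pass from a functional to a measure. Throughout I work on $C_c(G)$, the continuous compactly supported functions on $G$, write $C_c^+(G)$ for its nonnegative cone, and denote left translation by $(\lambda_s f)(x) = f(s^{-1}x)$. The strategy is to build a nonzero left-invariant positive linear functional \fun{I}{C_c(G)}{\R}, realize it as integration against a regular Borel measure, and then check that left-invariance of $I$ forces left-invariance of the measure.

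For existence, given $f \in C_c^+(G)$ and a fixed ``probe'' $\varphi \in C_c^+(G)$ with $\varphi \neq 0$, I would define the Haar covering number $(f : \varphi)$ as the infimum of $\sum_i c_i$ over all finite dominations $f \le \sum_i c_i\, \lambda_{s_i}\varphi$; compactness of $\mathrm{supp}\, f$ guarantees at least one such domination exists, so $(f:\varphi)$ is finite. Fixing a reference $f_0 \neq 0$ and setting $I_\varphi(f) = (f:\varphi)/(f_0:\varphi)$ normalizes away the scaling ambiguity. First I would verify the routine properties of $I_\varphi$: left-invariance, monotonicity, positive homogeneity, subadditivity, and the two-sided bound $1/(f_0:f) \le I_\varphi(f) \le (f:f_0)$, which confines every value $I_\varphi(f)$ to a fixed compact interval independent of $\varphi$. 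The substantive property is \emph{approximate additivity}: if $f$ and $g$ have disjoint supports and $\varphi$ is supported in a small enough neighborhood of the identity $e$, then $I_\varphi(f+g)$ is within $\varepsilon$ of $I_\varphi(f)+I_\varphi(g)$.

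I would then pass to the limit as $\mathrm{supp}\,\varphi$ shrinks to $\{e\}$. Viewing each $I_\varphi$ as a point of the product space $\prod_{f} [\,1/(f_0:f),\,(f:f_0)\,]$, which is compact by Tychonoff, the net of functionals indexed by decreasing neighborhoods of $e$ has a cluster point $I$. In the limit, subadditivity together with approximate additivity upgrade to genuine additivity, so $I$ extends to a nonzero left-invariant positive linear functional on all of $C_c(G)$. To obtain the measure I would set $\mu^*(A) = \inf\{\,\sup\{I(f) : f \in C_c^+(G),\, f \le \mathbf{1}_U\} : A \subseteq U \text{ open}\}$ in the manner of the Riesz construction, check it is an outer measure, and apply \cref{h:thm:caratheodoty}; the resulting $\sigma$-algebra contains $\mathcal{B}(G)$, left-invariance of $I$ yields $\mu(gE)=\mu(E)$, and the inner and outer regularity statements fall out immediately from the infimum-over-open and supremum-over-compact shape of the construction.

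For uniqueness, suppose $\mu$ and $\nu$ are two left Haar measures. For $f,g \in C_c(G)$ with $\int_G g\, d\mu \neq 0$, I would form the double integral $\int_G\int_G f(x)\,g(y)\,d\mu(x)\,d\nu(y)$ and manipulate it by a Fubini interchange combined with the left-invariant substitutions $x \mapsto yx$, so that the quotient $\big(\int_G f\, d\nu\big)\big/\big(\int_G f\, d\mu\big)$ is revealed to be independent of $f$, equal to $\big(\int_G g\, d\nu\big)\big/\big(\int_G g\, d\mu\big)$. This common constant $c$ gives $\nu = c\,\mu$, which is the asserted uniqueness up to a multiplicative constant. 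The main obstacle is the approximate-additivity estimate and its survival under the limiting process: bounding the covering numbers of $f+g$ below by those of $f$ and $g$ requires separating the supports by an open set and choosing $\varphi$ finely enough relative to that separation, and it is precisely this estimate that converts the merely subadditive pre-functionals $I_\varphi$ into the additive limit $I$. The compactness step, in turn, is exactly where local compactness and the Hausdorff hypothesis on $G$ are used.
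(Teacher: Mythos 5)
The paper does not actually prove this theorem: it explicitly defers the ``deep proofs related to the Haar measure'' to \textcite{gallierncharm,folland2016course}, and offers only a one-sentence construction idea (measure a compact set by counting how many left translates of a small open set cover it, then take limits). Your proposal is the standard way of making that sketch precise --- Weil's covering-number functionals $(f:\varphi)$, a Tychonoff cluster point, a Riesz--Carath\'eodory construction of the measure, and a Fubini argument for uniqueness --- so you are on exactly the route the paper points at. However, your statement of the crucial lemma is too weak to support the conclusion you draw from it. You assert approximate additivity only for $f,g$ with \emph{disjoint supports}, and then claim this ``upgrades to genuine additivity, so $I$ extends to a nonzero left-invariant positive linear functional on all of $C_c(G)$.'' That inference fails: a cluster point of the net $I_\varphi$ inherits additivity only on the pairs for which the estimate was proved, and an arbitrary sum $f+g$ with overlapping supports cannot be decomposed into disjointly supported pieces. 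The classical lemma is that for \emph{arbitrary} $f_1,f_2\in C_c^+(G)$ and $\epsilon>0$ there is a neighborhood $V$ of $e$ such that $\mathrm{supp}\,\varphi\subseteq V$ implies $I_\varphi(f_1)+I_\varphi(f_2)\le I_\varphi(f_1+f_2)+\epsilon$; its proof is the genuinely delicate step (one sets $h=f_1+f_2+\delta g_0$ for a bump $g_0$ over the supports, uses uniform continuity of $f_i/h$, and compares coverings). The disjoint-support case is easy --- in fact exact, no $\epsilon$ needed --- which is a sign it cannot be the crux; without the general version there is no linear functional for your Riesz step to represent.

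The uniqueness argument as literally described also does not close. Starting from $\int\!\int f(x)g(y)\,d\mu(x)\,d\nu(y)$, the substitution $x\mapsto yx$ followed by a Fubini interchange leaves you needing either a right translation $y\mapsto yx^{-1}$ or produces conjugated arguments of the form $g(y^{-1}xy)$, and neither move is licensed by left-invariance; for non-unimodular $G$ this obstruction is fatal rather than cosmetic. The standard repair (see \textcite{folland2016course}) integrates the auxiliary kernel
\begin{align*}
  h(x,y)=\frac{f(x)\,g(yx)}{\int_G g(tx)\,d\lambda(t)},
\end{align*}
whose normalizing denominator is precisely what makes the $x$-dependence cancel after a left-invariant substitution, revealing a constant independent of $f$. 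So the skeleton of your uniqueness proof (Fubini plus left-invariance of both measures) is correct, but the specific double integral you start from is not the one that works, and both this and the additivity lemma above need to be repaired before the proposal constitutes a proof.
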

It can be shown that $\mu(U) > 0$ for any non-empty open $U \in \mathcal{B}(G)$ and
$\mu(K) < \infty$ for any compact $K \in \mathcal{B}(G)$.

The construction idea is to define the measure of a subset $K \in \mathcal{B}(G)$
as the number of left-translations of a small $U \in  \mathcal{B}(G)$ necessary to cover $K$.
It is made precise by taking appropriate limits and enforcing measure properties.

Now define the left action operator $\lambda_u(f)(g) = f(u^{-1}g)$.
Given a left Haar measure $\mu$, its left invariance implies
\begin{align}
  \int \lambda_s(f)\,d\mu = \int f \, d\lambda_s(\mu) = \int f d\mu
\end{align}
for any \fun{f}{G}{\C} and $s \in G$.
We write $d\mu(g) = dg$ to simplify the notation;
then the relation $\int_G f(s^{-1}g) \,dg = \int_G f(g) \,dg$
gives a variable substitution formula that appears in many proofs.
For functions on the line, this translates to the usual
$\int_{-\infty}^{\infty} f(x-y)\, dx = \int_{-\infty}^{\infty} f(x)\, dx$,
where the Lebesgue measure is also a Haar measure.
\begin{example}
  Consider again the group $G$ of unitary complex numbers of the form $g_\theta = e^{i\theta}$,
  for $-\pi \le \theta < \pi$, and the function \fun{\lambda}{G}{\R} such that $\lambda(e^{i\theta}) = \theta$.
  We obtain the Haar measure from the Lebesgue measure $\mu_L$ on $\R$ as
  $\mu(A) = \mu_L(\lambda(A))$; it can be shown to be left-invariant.
\end{example}
\begin{example}
  For the group $\GL{n, \R}$, the Haar measure is given by $dA / |det(A)|^n$,
  where $dA$ is the Lebesgue measure on $R^{n^2}$.
\end{example}
\noindent On a Lie group of dimension $n$, we can construct an alternating $n$-form on the tangent space at the
identity and transport it to the tangent space at any point using left group actions.
The result is a left-invariant differential $n$-form (volume form) on the group that induces the left Haar measure \cite{hall2015lie}.

Next, we introduce modular functions, which are useful to determine some group properties.
\begin{definition}[modular function]
Let $\mu$ be a left Haar measure on a group $G$,
and define the right action operator $\rho_s(f)(g) = f(gs)$.
It follows that $\rho_s(\mu)$ is also a left Haar measure and
since the left Haar measure is unique up to scalar multiplication,
we have $\rho_s(\mu) = \Delta(s)\mu$ for \fun{\Delta}{G}{(0, \infty]}.
We call the function $\Delta$ a \emph{modular function}.
If $\Delta(s) = 1$ for all $s \in G$, we call $G$ \emph{unimodular}.
\end{definition}
\noindent In particular, a left Haar measure is also a right Haar measure
if and only if the group is unimodular.
Abelian groups are unimodular, and so are compact groups.

Next, we want to obtain measures on homogeneous spaces from measures on groups.
Let $G$ be a locally compact group with a subgroup $H$.
Now consider the homogeneous space $G/H$
where we suppose there is a measure $\gamma$.
We call $\gamma$ \emph{$G$-invariant} if ${\lambda_u(\gamma) = \gamma}$, for all $u \in G$,
where $\lambda_u$ is the left action operator ${\lambda_u(f)(g) = f(u^{-1}g)}$.
The following theorem gives the conditions for the existence of a $G$-invariant measure.
\begin{theorem}
  \label{h:thm:hmeasure}
  Let $G$ be a locally compact group with a subgroup $H$,
  $\mu$ a left Haar measure on $G$ and $\xi$ a left Haar measure in $H$.
  There is a unique G-invariant measure $\gamma$ on G/H (up to scalar multiplication)
  if and only if the modular function $\Delta_H$ equals the restriction of $\Delta_G$ to $H$.
  We can then write
  \begin{align*}
    \int\limits_G f(u)\,d\mu(u) = \int\limits_{G/H}\int\limits_{H} f(uh)\,d\xi(h)d\gamma(uH),
  \end{align*}
  for any function $f$ of compact support on $G$.
\end{theorem}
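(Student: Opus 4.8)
The plan is to realize $\gamma$ as a positive linear functional on compactly supported continuous functions on $G/H$, obtained by pulling back the Haar integral on $G$ through an averaging map, and to see that the modular condition is exactly what makes this pullback consistent. Writing $C_c(G)$ and $C_c(G/H)$ for the spaces of compactly supported continuous functions and $\pi\colon G \to G/H$ for the coset projection, I would introduce the averaging operator \fun{T}{C_c(G)}{C_c(G/H)} by
\begin{align*}
  (Tf)(uH) = \int_H f(uh)\,d\xi(h).
\end{align*}
First I would check $T$ is well defined: the integrand $h \mapsto f(uh)$ has compact support in $H$, and left-invariance of $\xi$ shows the value is independent of the representative $u$, since replacing $u$ by $uh_0$ and substituting $h \mapsto h_0^{-1}h$ leaves the integral unchanged. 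A standard construction (multiplying a fixed nonnegative lift by a quotient of functions on $G/H$) then shows $T$ is surjective and maps nonnegative functions onto nonnegative functions.

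The heart of the proof is the symmetric identity
\begin{align}
  \int_G f(x)\,(Tg)(xH)\,d\mu(x) = \int_G g(y)\,(Tf)(yH)\,d\mu(y)
  \label{h:eq:symmetric}
\end{align}
for all $f,g \in C_c(G)$. To derive it I would expand the left side as the double integral $\int_G\int_H f(x)g(xh)\,d\xi(h)\,d\mu(x)$, apply Fubini, and in the inner $G$-integral substitute $x \mapsto xh^{-1}$, a right translation that introduces a factor involving $\Delta_G(h)$; a further substitution $h \mapsto h^{-1}$ in the $H$-integral introduces a factor involving $\Delta_H(h)$. The two combine into the single factor $\Delta_G(h)\,\Delta_H(h)^{-1}$, which is identically $1$ precisely when $\Delta_G|_H = \Delta_H$; under this hypothesis the factors cancel and the integrals reassemble into the right side of \cref{h:eq:symmetric}. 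This cancellation is the main obstacle: tracking the two modular factors carefully is exactly where the hypothesis is consumed, and it is the only place where it is needed.

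Granting \cref{h:eq:symmetric}, I would show $Tf = 0 \implies \int_G f\,d\mu = 0$: for every $g$ the right side of \cref{h:eq:symmetric} vanishes, and since $T$ is surjective this gives $\int_G f(x)\psi(xH)\,d\mu(x)=0$ for all $\psi \in C_c(G/H)$; choosing $\psi \equiv 1$ on the compact set $\pi(\mathrm{supp}\,f)$ (Urysohn) yields $\int_G f\,d\mu = 0$. Hence $I(Tf) := \int_G f\,d\mu$ is a well-defined, linear, and positive functional on $C_c(G/H)$, so the Riesz--Markov representation theorem furnishes a Radon measure $\gamma$ with $\int_G f\,d\mu = \int_{G/H}(Tf)\,d\gamma$, which is the claimed formula. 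Because $T$ intertwines the left actions, $T(\lambda_u f) = \lambda_u(Tf)$, left-invariance of $\mu$ transfers to $G$-invariance of $\gamma$; and any other $G$-invariant $\gamma'$ makes $f \mapsto \int_{G/H}(Tf)\,d\gamma'$ a left-invariant positive functional on $C_c(G)$, hence a scalar multiple of the Haar integral by uniqueness of $\mu$, forcing $\gamma' = c\gamma$.

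For the converse, given a $G$-invariant $\gamma$ I would first note, exactly as in the uniqueness argument, that $f \mapsto \int_{G/H}(Tf)\,d\gamma$ is left-invariant on $C_c(G)$ and therefore proportional to $\int_G\,d\mu$, so the integration formula holds. Applying it to the right-translate $\rho_{h_0}f$ for $h_0 \in H$ and evaluating in two ways gives $\Delta_G(h_0)\int_G f\,d\mu$ on the group side and $\Delta_H(h_0)\int_G f\,d\mu$ on the quotient side (the latter because right translation by $h_0$ inside the $\xi$-average produces the factor $\Delta_H(h_0)$). Comparing the two forces $\Delta_G(h_0) = \Delta_H(h_0)$ for every $h_0 \in H$, which is the asserted condition.
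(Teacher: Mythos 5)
The paper never proves this theorem: it appears as imported background in \cref{sec:prelim}, where the author explicitly omits ``deep proofs related to the Haar measure'' and defers to \textcite{folland2016course,gallierncharm}. So there is no in-paper argument to compare against, and your proposal must stand on its own --- which it does. What you wrote is the standard proof of Weil's integration formula as found in those references: the averaging operator $T$, the symmetric identity in which the substitutions $x \mapsto xh^{-1}$ and $h \mapsto h^{-1}$ produce the factors $\Delta_G(h)$ and $\Delta_H(h)^{-1}$ that cancel exactly under the hypothesis $\Delta_G|_H = \Delta_H$, the resulting well-definedness of the functional $Tf \mapsto \int_G f\,d\mu$, Riesz--Markov to realize it as a Radon measure $\gamma$, the intertwining $T(\lambda_u f) = \lambda_u(Tf)$ for $G$-invariance, uniqueness of the Haar functional for uniqueness of $\gamma$, and right translation by $h_0 \in H$ for the converse. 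Two steps deserve to be flagged as genuine (if standard) lemmas rather than routine checks: first, the surjectivity of $T$ together with the fact that nonnegative functions on $G/H$ admit nonnegative preimages --- the construction $f(x) = g(x)\,\psi(xH)/(Tg)(xH)$ requires a Bruhat-type function $g$ and a partition-of-unity argument, and both the well-definedness step (via surjectivity) and the positivity needed for Riesz--Markov rest on it; second, the Fubini interchange, which is justified because the integrands are continuous with compact support. Neither affects correctness. A last cosmetic point: depending on the sign convention for the modular function, the right-translation step in your converse produces $\Delta_G(h_0)^{-1}$ and $\Delta_H(h_0)^{-1}$ rather than $\Delta_G(h_0)$ and $\Delta_H(h_0)$, but since both sides carry the same power the comparison forces $\Delta_G|_H = \Delta_H$ either way.
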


\begin{remark}
  When the $\Delta_H$ is not equal to the restriction of $\Delta_G$ to $H$,
  there is a weaker form of invariance in measures, called \emph{quasi-invariance}.
  Quasi-invariant measures on $G/H$ always exist. Refer to \textcite{folland2016course,gallierncharm}
  for details.
\end{remark}

\section{Harmonic analysis}
\label{h:sec:harm}
Recall the Fourier series expansion of a periodic function $f$
\begin{align*}
  f(\theta) &= \sum_{m \in \Z} \hat{f}(m) e^{i  m \theta}, \\
  (\mathcal{F}f)(m) &= \hat{f}(m) = \frac{1}{2\pi} \int\limits_{-\pi}^{\pi} f(\theta)e^{-i m \theta}\, d\theta.
\end{align*}
A periodic scalar-valued function $f$ can be seen as a function on the circle \fun{f}{S^1}{\R}.
The expansion in Fourier series is a decomposition in the basis $\{e^{im\theta}\}$ for $m \in \Z$
of the space of square-integrable functions on the circle, $L^2(S^1)$.
Fourier analysis has numerous applications in signal processing, differential equations and number theory.
Most important for our purposes is the convolution theorem,
\begin{align}
  \mathcal{F}(f * k)(m) = (\mathcal{F}f)(m)(\mathcal{F}k)(m) = \hat{f}(m)\hat{k}(m),
\end{align}
which states that convolution in the spatial domain corresponds to multiplication in the spectral domain.
This has immense practical implications for efficient computation of convolutions,
thanks to the \fft\ algorithm.

In this section, we generalize these concepts to functions on compact groups.
We consider a compact group $G$,
and the Hilbert space $L^2(G)$ of square integrable functions on $G$.
Integrals on compact groups are well defined as discussed in \cref{h:sec:haar}.
We state the Peter-Weyl theorem, which gives an
orthonormal basis for $L^2(G)$ constructed from irreducible representations of $G$.
This paves the way to harmonic analysis on compact groups,
which we demonstrate by generalizing the Fourier transform and convolution theorem.
Again these have important practical applications
and are used to compute group convolutions in recent equivariant neural networks.
Finally, we show how the theory applies to homogeneous spaces of compact groups.
\subsection{The Peter-Weyl Theorem}
The Peter-Weyl theorem gives an explicit
orthonormal basis for $L^2(G)$, constructed from irreducible representations of a group $G$.
The basis is formed by matrix elements, which we define first.
\begin{definition}[matrix elements]
  \label{h:def:me}
  Let $\rho$ be a unitary representation of a compact group $G$.
  We denote $\phi_{x,y}(g) = \inner{\rho(g)(x), y}$ the \emph{matrix elements} of $\rho$.
  Note that $\phi_{e_i,e_j}(g)$ is one entry of the matrix $\rho(g)$ when $e_i,\, e_j$ are basis vectors,
  so we define $\rho_{ij}(g) = \phi_{e_i,e_j}(g)$.%
\end{definition}
\begin{theorem}[Peter-Weyl]
  \label{h:thm:pw}
  Let $G$ be a compact group. We present the theorem in three parts.
  The first relates matrix elements and spaces of functions on $G$.
  The second decomposes representations of $G$,
  and the third gives a basis for $L^2(G)$.
  \paragraph{Part I} The linear span of the set of matrix elements of unirreps
  of $G$ is dense in the space of continuous complex valued functions on $G$, under the uniform norm.
  This implies it is also dense in $L^2(G)$.
  \paragraph{Part II} Let $\hat{G}$ be the set of equivalence classes of unirreps of $G$.
  For a unirrep $\rho$ of $G$,
  we denote its representation space by $H_\rho$ where $\dim H_\rho = d_\rho$,
  and its equivalence class by $[\rho] \in \hat{G}$.
  If $\pi$ is a (reducible) unitary representation of $G$,
  it splits in the orthogonal direct sum $H_\pi = \bigoplus_{[\rho] \in \hat{G}} M_\rho $,
  where $M_\rho$ is the largest subspace where $\pi$ is equivalent to $\rho$.
  Each $M_\rho$ splits in equivalent irreducible subspaces $M_\rho = \bigoplus_{i=1}^{n} H_\rho$,
  where $n$ is the \emph{multiplicity} of $[\rho]$ in $\pi$.
  \paragraph{Part III}
  Let $\varepsilon_\rho$ be the linear span of the matrix elements of $\rho$ for $[\rho] \in \hat{G}$.
  $L^2(G)$ can be decomposed as $L^2(G) = \bigoplus_{[\rho] \in \hat{G}} \varepsilon_\rho$.
  If $\pi$ is a regular representation on $L^2(G)$, the multiplicity of $[\rho] \in \hat{G}$ in $\pi$ is $d_\rho$.
  An orthonormal basis of $L^2(G)$ is
  \begin{equation*}
    \{ \sqrt{d_\rho}\rho_{ij} \mid 1 \le i,\, j \le d_\rho,\, [\rho] \in \hat{G} \}
  \end{equation*}
  where $\rho_{ij}$ is as in \cref{h:def:me}.
  Constructing the basis involves choosing a representative per equivalence class.
\end{theorem}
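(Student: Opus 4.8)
The plan is to derive all three parts from a single structural fact: the right regular representation of $G$ on $L^2(G)$ splits as an orthogonal Hilbert-space direct sum of finite-dimensional irreducible subrepresentations. Once this is in hand, Part I follows because each such finite-dimensional piece is spanned by matrix elements, Part II is a general statement about unitary representations of a compact group that I can read off from Schur's lemma (\cref{h:thm:schur}), and Part III is the orthogonality relations together with a dimension count. The engine that produces the finite-dimensional subrepresentations is the spectral theory of compact self-adjoint operators, so the bulk of the work is analytic rather than algebraic.

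First I would set up convolution operators. For $k \in L^2(G)$ define $T_k f = k * f$, where $(k*f)(x) = \int_G k(y) f(y^{-1}x)\, dy$. Since $G$ is compact its Haar measure is finite, so the kernel $(x,y) \mapsto k(xy^{-1})$ lies in $L^2(G \times G)$ and $T_k$ is Hilbert--Schmidt, hence compact; choosing $k$ with $k(g^{-1}) = \overline{k(g)}$ makes $T_k$ self-adjoint. A short change of variables (using unimodularity of compact groups) shows $T_k$ commutes with the right regular representation $R_s f(x) = f(xs)$, since $(k*(R_s f))(x) = (k*f)(xs)$. The spectral theorem then gives an orthogonal decomposition of $L^2(G)$ into eigenspaces of $T_k$, every nonzero eigenvalue having a finite-dimensional eigenspace. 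Because $T_k$ commutes with $R$, each eigenspace is $R$-invariant, and because $T_k f = \lambda f$ with $\lambda \ne 0$ forces $f = \lambda^{-1}(k*f)$ to be continuous, these invariant subspaces consist of continuous functions. Decomposing each finite-dimensional $R$-invariant eigenspace into irreducibles (by complete reducibility of finite-dimensional unitary representations of compact groups) produces genuine finite-dimensional unirreps sitting inside $L^2(G)$.

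Next I would prove density (Part I). Let $E \subseteq L^2(G)$ be the closed span of all finite-dimensional $R$-invariant subspaces. If $E \ne L^2(G)$, pick a nonzero $f \perp E$; then $f$ is orthogonal to every nonzero eigenspace of every Hermitian $T_k$, forcing $k * f = 0$ for all such $k$. Taking $k$ to run through an approximate identity supported in shrinking neighborhoods of the identity gives $k * f \to f$, a contradiction, so $E = L^2(G)$. Since each finite-dimensional invariant subspace is spanned by the matrix elements of its irreducible constituents, the span of matrix elements of unirreps is dense; density in $C(G)$ under the uniform norm comes from the same approximate-identity estimate applied to a continuous target. Part II is then the standard isotypic decomposition: grouping equivalent irreducible constituents and invoking Schur's lemma to see that the component $M_\rho$ is the maximal $\rho$-equivalent subspace and splits into copies of $H_\rho$.

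For Part III I would invoke the Schur orthogonality relations, which follow directly from Schur's lemma applied to the averaged intertwiner $\int_G \rho(g) A \sigma(g)^{-1}\, dg$: matrix elements of inequivalent unirreps are orthogonal, and for a single unirrep $\inner{\rho_{ij}, \rho_{kl}} = d_\rho^{-1}\delta_{ik}\delta_{jl}$. This shows $\{\sqrt{d_\rho}\,\rho_{ij}\}$ is orthonormal, and combined with the density from Part I it is an orthonormal basis, with the $\varepsilon_\rho$ mutually orthogonal and $L^2(G) = \bigoplus_{[\rho]} \varepsilon_\rho$. The multiplicity claim is a dimension count: $\dim \varepsilon_\rho = d_\rho^2$ while $\varepsilon_\rho$ is $d_\rho$ copies of the $d_\rho$-dimensional irrep, so $[\rho]$ occurs with multiplicity $d_\rho$. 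The main obstacle is the analytic heart of the argument --- establishing compactness of $T_k$, continuity of its eigenfunctions, and running the approximate-identity density argument --- since everything algebraic reduces cleanly to Schur's lemma once finite-dimensional invariant subspaces are known to exist and to exhaust $L^2(G)$.
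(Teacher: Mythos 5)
The paper offers no proof to compare against: it states the Peter-Weyl theorem and explicitly defers all proofs to Gallier's text (``We omit deep proofs related to the Haar measure and the Peter-Weyl theorem''). So your proposal has to be judged against the standard literature, and there it is recognizably the classical argument: Hilbert--Schmidt convolution operators $T_k$, the spectral theorem for compact self-adjoint operators, $R$-invariance of their eigenspaces, an approximate-identity contradiction for density, identification of finite-dimensional invariant subspaces of continuous functions with spans of matrix elements, and Schur orthogonality plus a dimension count for Part III. That analytic core is set up correctly: compactness of $T_k$ from finiteness of the Haar measure, self-adjointness for symmetric kernels using unimodularity, commutation of left convolution with right translation, continuity of eigenfunctions with nonzero eigenvalue, and the conclusion $k * f = 0$ for $f$ orthogonal to all finite-dimensional invariant subspaces.

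Two steps are genuinely thinner than they need to be. First, Part II: for a general, possibly infinite-dimensional, unitary representation $\pi$, the isotypic decomposition is not something you can ``read off from Schur's lemma'' (\cref{h:thm:schur}). Schur's lemma organizes irreducible constituents once they exist; the nontrivial point is that every nonzero unitary representation of a compact group contains a finite-dimensional irreducible subrepresentation at all, after which a Zorn's lemma maximality argument exhausts $H_\pi$. That existence statement requires redeploying your compact-operator machinery inside $H_\pi$ (via operators of the form $\int_G k(g)\pi(g)\,dg$), or an appeal to Part I through the matrix coefficients $g \mapsto \langle \pi(g)v, w\rangle$; your sketch never says this, and without it Part II is only established for the regular representation. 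Second, the uniform-norm density in $C(G)$ does not follow from ``the same approximate-identity estimate applied to a continuous target'': the approximate identity makes $\norm{k*f - f}_\infty$ small for continuous $f$, but you must still approximate $k*f$ itself uniformly by finite sums of matrix elements. The standard device is the smoothing inequality $\norm{k*g}_\infty \le \norm{k}_2 \norm{g}_2$, which converts the already-established $L^2$-approximation of $f$ by some $h$ in your space $E$ into uniform approximation of $k*f$ by $k*h$ (and $k*h$ stays in $E$ because $T_k$ intertwines $R$). Both gaps are repairable with tools you already have on the table, but as written they are missing ideas rather than omitted routine details.
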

\begin{example}
  The $\SO{3}$ irreducible representations $\rho^\ell$ can be written as $2\ell+1 \times 2\ell+1$ matrices for $\ell \in \N$, with entries \fun{\rho_{ij}^\ell}{\SO{3}}{\C},
  \begin{align*}
    \rho^{0} = (\rho_{0,0}^{0}), &&
                                   \rho^{1} = \threebythree
                                   {\rho_{-1,-1}^{1}}{\rho_{-1,0}^{1}}{\rho_{-1,1}^{1}}
                                   {\rho_{0,-1}^{1}}{\rho_{0,0}^{1}}{\rho_{0,1}^{1}}
                                   {\rho_{1,-1}^{1}}{\rho_{1,0}^{1}}{\rho_{1,1}^{1}}, &&
                                                                                        \rho^{2} = \cdots\, ,
  \end{align*}
  and the matrix elements $\rho_{i,j}^{\ell}$ form a basis for $L^2(\SO{3})$.
  We will derive these elements in \cref{h:sec:su2}.
\end{example}
For simplicity, we avoided introducing Hilbert algebras, ideals,
and the interesting connection between representations of groups and of algebras.
We refer the reader to \textcite{gallierncharm} for a complete description
of the Peter-Weyl theorem, with proofs.
\subsection{Fourier analysis on compact groups}
Part III of \cref{h:thm:pw} gives an orthonormal basis for $L^2(G)$,
so for any $f \in L^2(G)$ we can write,
\begin{align}
  f(g) &= \sum_{[\rho] \in \hat{G}}\sum_{i,j=1}^{d_\rho} c_{ij}^\rho \rho_{ij}(g), \label{h:eq:fwcoord} \\
  c_{ij}^\rho &= d_\rho \int\limits_{g \in G} f(g)\overline{\rho_{ij}(g)}\, dg, \label{h:eq:decomp}
\end{align}
where \cref{h:eq:decomp} is the inner product in $L^2(G)$,
the matrix elements $\rho_{ij}$ are as in \cref{h:def:me},
and the coefficients $c_{ij}^\rho$ absorb an extra $\sqrt{d_\rho}$ for simplification.

Now we define the Fourier transform of $f \in L^2(G)$ as a function on $\hat{G}$
whose values are on $\GL{H_\rho}$ for an input $[\rho]$.
\begin{equation}
  \hat{f}(\rho) = \mathcal{F}(f)([\rho]) = \int\limits_{g \in G} f(g) \rho(g)^{*}\, dg, \label{h:eq:ft}
\end{equation}
where $\rho$ is the representative of $[\rho]$,
$^{*}$ indicates the conjugate transpose,
and we introduce $\hat{f}(\rho)$ to shorten notation.
It is easy to see that the element $i$, $j$ of $\hat{f}(\rho)$
corresponds to $\frac{c_{ji}^\rho}{d_\rho}$ as defined in \cref{h:eq:decomp}, and
\begin{equation*}
  \sum_{i,j=1}^{d_\rho} c_{ij}^\rho \rho_{ij}(g) =
  \sum_{i,j=1}^{d_\rho} d_\rho \hat{f}(\rho)_{ji}\rho_{ij}(g) =
  d_\rho \tr(\hat{f}(\rho) \rho(g)).
\end{equation*}
Applying this result to \cref{h:eq:fwcoord} yields the Fourier inversion formula,
\begin{equation}
  f(g) = \sum_{[\rho] \in \hat{G}} d_\rho \tr(\hat{f}(\rho) \rho(g)). \label{h:eq:fi}
\end{equation}
\begin{remark}
\Cref{h:eq:ft,h:eq:fi} give the Fourier transform and inverse
independently of the choice of a basis, in contrast with \cref{h:eq:fwcoord,h:eq:decomp}.
\end{remark}
\begin{remark}
We are not discussing convergence here;
refer to \textcite{folland2016course,gallierncharm} for details.
\end{remark}
\begin{example}
  Consider the multiplicative group of complex numbers of the form $e^{i\theta}$,
  identified with the planar rotation group $\SO{2}$.
  The unirreps of this group on $\C$ are given by $\rho_n(e^{i\theta}) = e^{in\theta}$,
  for $n \in \Z$.
  Since they are one dimensional, they are also the matrix elements and hence form an orthonormal basis
  for $L^2(\SO{2})$.
  We can index $\rho_n$ by $n$ and $\SO{2}$ by $\theta$,
  and write the Fourier transform and inverse on $L^2(\SO{2})$ as
  \begin{align}
    \hat{f}(n) &= \int\limits_{0}^{2\pi} f(\theta) e^{-in\theta} \, \frac{d\theta}{2\pi}, \\
    f(\theta) &= \sum_{n=-\infty}^{\infty} \hat{f}(n) e^{in\theta},
  \end{align}
  which are the familiar formulas for the Fourier series of periodic functions.

  This simple example shows how the Fourier analysis of periodic functions on the line fit in the theory described.
  See \cref{h:sec:su2} for a more complete example with a non-commutative group.
\end{example}
\subsection{Convolution theorem on compact groups}
\label{h:sec:gconv}
Given the existence of the left Haar measure as discussed in \cref{h:sec:haar},
we define the convolution between functions \fun{f,k}{G}{\C} on a group $G$ as
\begin{equation}
  (f * k)(g) = \int\limits_{u \in G} f(u)k(u^{-1}g)\, du = \int\limits_{u \in G} f(gu)k(u^{-1})\, du.
  \label{h:eq:gconv}
\end{equation}

A simple change of variables leveraging the measure left-invariance
shows that group convolution is equivariant,
\begin{align*}
  (\lambda_u f * k)(g)
  &= \int\limits_{v \in G} f(u^{-1}v) k(v^{-1}g) \, dv\\
  &= \int\limits_{w \in G} f(w) k((uw)^{-1}g)  \, dw && (v \mapsto uw) \\
  &= \int\limits_{w \in G} f(w) k(w^{-1}u^{-1}g)  \, dw\\
  &= (f * k)(u^{-1}g) \\
  &= (\lambda_u (f * k))(g).
\end{align*}

\begin{theorem}[Convolution theorem]
  \label{h:thm:conv}
  Let $f$ and $k$ be square integrable functions on a compact group $G$ ($f,\,g \in L^2(G)$).
  The Fourier transform of the convolution $f*k$ is $(\widehat{f * k})(\rho) = \hat{k}(\rho) \hat{f}(\rho)$.
\end{theorem}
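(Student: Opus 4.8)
The plan is to establish the identity by a direct computation, expanding the left-hand side through the definitions of the group convolution in \cref{h:eq:gconv} and of the Fourier transform in \cref{h:eq:ft}. The two structural facts I would lean on are the left-invariance of the Haar measure from \cref{h:sec:haar}, which licenses a substitution of the form $g \mapsto uv$ under the integral, and the fact that $\rho$ is a unitary representation, so that $\rho(uv)^{*} = (\rho(u)\rho(v))^{*} = \rho(v)^{*}\rho(u)^{*}$. Since $G$ is compact its Haar measure is finite, every $L^2$ function is also $L^1$, and the product $f(u)k(u^{-1}g)$ is integrable on $G \times G$; this is what I would invoke to justify interchanging the order of integration via Fubini's theorem.

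Concretely, I would start from
\begin{equation*}
  (\widehat{f * k})(\rho) = \int_G \left( \int_G f(u)\, k(u^{-1}g)\, du \right) \rho(g)^{*}\, dg,
\end{equation*}
interchange the two integrals, and substitute $g = uv$ (so that $u^{-1}g = v$ and $dg = dv$ by left-invariance), obtaining
\begin{equation*}
  (\widehat{f * k})(\rho) = \int_G f(u) \left( \int_G k(v)\, \rho(v)^{*}\rho(u)^{*}\, dv \right) du.
\end{equation*}
Because $\rho(u)^{*}$ is independent of $v$ and $\hat{k}(\rho) = \int_G k(v)\,\rho(v)^{*}\, dv$, the inner integral equals $\hat{k}(\rho)\,\rho(u)^{*}$. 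Pulling the constant matrix $\hat{k}(\rho)$ out of the remaining $u$-integral then leaves $\hat{k}(\rho) \int_G f(u)\,\rho(u)^{*}\, du = \hat{k}(\rho)\,\hat{f}(\rho)$, as claimed.

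The step I expect to require the most care is the bookkeeping of the order of the matrix factors, since $\rho$ is in general non-abelian and the quantities $\hat{f}(\rho)$, $\hat{k}(\rho)$ are operators rather than scalars. It is precisely the anti-homomorphism property of the conjugate transpose, $\rho(uv)^{*} = \rho(v)^{*}\rho(u)^{*}$, that produces the reversed order $\hat{k}(\rho)\hat{f}(\rho)$ rather than $\hat{f}(\rho)\hat{k}(\rho)$; getting the factors in the wrong order would be the tell-tale sign of a dropped convention. The only analytic subtleties are the Fubini interchange and the validity of the substitution, both of which are immediate here thanks to compactness and the left-invariance of the Haar measure, so I would state these briefly rather than dwell on them.
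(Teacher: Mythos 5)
Your proposal is correct and follows essentially the same route as the paper's proof: expand the definitions, interchange the integrals, substitute $v = u^{-1}g$ using left-invariance of the Haar measure, and use $\rho(uv)^{*} = \rho(v)^{*}\rho(u)^{*}$ to obtain the reversed factor order $\hat{k}(\rho)\hat{f}(\rho)$. The only difference is presentational — you pull the constant matrix $\hat{k}(\rho)$ out of the inner integral while the paper reorders the two integrals, and you spell out the Fubini justification that the paper leaves implicit — but the argument is the same.
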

\begin{proof}
Let us compute the Fourier transform of $f * k$ using \cref{h:eq:ft},
\begin{align*}
  (\widehat{f * k})(\rho) &= \int\limits_{g \in G} \left(\int\limits_{u \in G} f(u)k(u^{-1}g)\, du\right) \rho(g)^{*}\, dg \\
    &= \int\limits_{u \in G} \int\limits_{g \in G} f(u)k(u^{-1}g) \rho(g)^{*} \,dg\, du \\
    &= \int\limits_{u \in G} \int\limits_{v \in G} f(u)k(v) \rho(uv)^{*} \,dv\, du && (v = u^{-1}g) \\
    &= \int\limits_{v \in G} k(v) \rho(v)^{*} \int\limits_{u \in G} f(u)\rho(u)^{*} \,dv\, du
    && \text{(homomorphism, reorder)} \\
    &= \hat{k}(\rho) \hat{f}(\rho).
\end{align*}
\end{proof}
\begin{remark}
  There is an analogous cross-correlation theorem that we prove in the same way.
  We define the group cross-correlation as
  \[(f \star k)(g) = \int\limits_{u \in G} f(u)  k(g^{-1} u) \, du \]
  and follow the same steps as before, obtaining
\begin{align*}
  (\widehat{f \star k})(\rho)
  &= \int\limits_{v \in G} k(v) \rho(v^{-1})^{*} \int\limits_{u \in G} f(u)\rho(u)^{*} \,dv\, du.
\end{align*}
Note that the only difference is the term $v^{-1}$.
Since $\hat{k}(\rho) = \int_{v\in G}k(v)\rho(v)^*$,
assuming real-valued $k$ we have
$\hat{k}(\rho)^* = \int_{v\in G}k(v)\rho(v^{-1})^*$ and
\begin{align}
  (\widehat{f \star k})(\rho) = \hat{k}(\rho)^*\hat{f}(\rho).
  \label{h:eq:groupcorrspectral}
\end{align}
\end{remark}
\noindent This shows that the Fourier transform of the compact group convolution
is the matrix product of the Fourier transforms of each input.
It generalizes the convolution theorem on the circle,
which says that the Fourier transform of the convolution is the scalar multiplication
of the inputs Fourier transforms.

The convolution theorem is fundamental for the efficient computation of convolutions,
since the \fft\ can be generalized to compact groups \cite{driscoll1994computing,kostelec2008ffts}.
Furthermore, the spectral computation avoids interpolation errors and
extra computational cost caused by the lack of regular grids for arbitrary groups.
\subsection{Examples: SL$(2)$, SU$(2)$ and SO$(3)$}
\label{h:sec:su2}
Now we find expressions for the matrix elements of
representations of $\SL{2, \C}$, $\SU{2}$ and $\SO{3}$,
which allow computing the Fourier transforms and convolutions on these groups.
We follow one of the approaches by \textcite{vilenkin1978special},
also used by \textcite{dieudonn1980special,gurarie2007symmetries}.

The strategy is to first find the matrix elements for irreps of $\SL{2, \C}$,
then restrict them to $\SU{2}$ and $\SO{3}$.
\subsubsection{Representations of SL($2$, $\C$)}
The special linear group $\SL{2, \C}$ consists of $2\times 2$ complex matrices with determinant 1,
\begin{align}
  g = \twobytwo{a}{c}{b}{d} \label{h:eq:gsl2}
\end{align}
where $ad - bc = 1$.

Now consider the space $V_{\ell}$ of homogeneous polynomials of degree $2\ell$ in two complex variables,
where $\ell$ is integer or half-integer,
\begin{align*}
  x = \twobyone{x_1}{x_2}, && P_\ell(x) = P_\ell(x_1, x_2) = \sum_{i=-\l}^\l\alpha_ix_1^{\ell-i}x_2^{\ell+i}.
\end{align*}
We define \fun{\pi_\ell}{\SL{2,\C}}{\GL{V_{\ell}}} as
\begin{align}
  (\pi_\ell(g)P_\ell)(x) = P_\ell(g^{-1}x), \label{h:eq:reprsl}
\end{align}
which is linear and a group homomorphism.
Hence, $\pi_\ell$ is a representation of $\SL{2, \C}$ on $V_{\ell}$
(of dimension $2\ell+1$).
Furthermore, it can be shown that these are irreducible,
and in fact these are the only irreps of $\SL{2, \C}$ and $\SU{2}$, up to equivalence.

Now let us derive expressions for the matrix elements.
Consider the polynomial in one variable
$Q_\ell(x) = P_\ell(x, 1) = \sum_{i=-\l}^\l\alpha_ix^{\ell-i}$, of degree $2\ell$.
Writing $P_\ell$ in terms of $Q_\ell$ yields
\begin{align}
  P_\ell(x_1, x_2) = x_2^{2\ell}Q_\ell(x_1/x_2). \label{h:eq:pq}
\end{align}
We denote $H_\ell$ the space of all polynomials $Q_\ell$ (of degree $2\ell$) for $\ell \ge 0$.
We rewrite \cref{h:eq:reprsl} for $g$ as in \cref{h:eq:gsl2} where $g^{-1}=\twobytwo{d}{-c}{-b}{a}$,
\begin{align}
  (\pi_\ell(g)P_\ell)(x_1, x_2) = P_\ell(dx_1-cx_2, -bx_1 + ax_2), \label{h:eq:reprp}
\end{align}
and define $\rho_\ell$ as the application of $\pi_\ell$ to $Q_\ell\in H_\ell$ using \cref{h:eq:reprp,h:eq:pq}
\begin{align}
  (\rho_\ell(g)Q_\ell)(x) = (-bx + a)^{2\ell}Q_\ell\left(\frac{dx-c}{-bx + a}\right). \label{h:eq:repq}
\end{align}

The monomials $x^{\ell-m}$ for $-\ell \le m \le \ell$ are a basis of $H_\ell$.
Now consider the inner product on $H_\ell$ defined by
\begin{align}
  \inner{x^{\ell-m}, x^{\ell-n}} &= 0, && m \neq n \label{h:eq:innerq1} \\
  \inner{x^{\ell-m}, x^{\ell-m}} &= (\ell-m)!(\ell+m)!, \label{h:eq:innerq2}
\end{align}
which is adapted from an inner product (sometimes called the Bombieri scalar product) on $V_\ell$
\[ \inner{x^{\ell+m}y^{\ell-m},x^{\ell+m}y^{\ell-m}} = (\ell+m)!(\ell-m)!/(2\ell)!. \]

It turns out the representation $\rho_\ell$ defined as in \cref{h:eq:repq}
is unitary under the inner product defined by \cref{h:eq:innerq1,h:eq:innerq2}.
The following is an orthonormal basis $\{\psi_m\}$ for $H_\ell$ with this inner product
\begin{align*}
  \psi_m(x) = \frac{x^{\ell-m}}{\sqrt{(\ell-m)!(\ell+m)!}}.
\end{align*}
The element at position $(m, n)$%
\footnote{Not the conventional way of indexing since $-\ell \le m, n \le \ell$,
but convenient in our notation.}
of the matrix for $\rho_\ell(g)$ under this basis is
\begin{align}
\rho_{\ell}^{mn}(g) = \inner{\rho_{\ell}(g)(\psi_n), \psi_m}. \label{h:eq:rhoij}
\end{align}
According to \cref{h:eq:repq}, $\rho_\ell$ acts on $Q(x) = x^{\ell-n}$ as
\begin{align*}
\rho_\ell(g)x^{\ell-n} = (-bx + a)^{\ell+n}(dx-c)^{\ell-n}
\end{align*}
for $g$ as in \cref{h:eq:gsl2}.
We substitute it in \cref{h:eq:rhoij} to obtain
\begin{align}
  \rho_{\ell}^{mn}(g) = \frac{\inner{(-bx + a)^{\ell+n}(dx-c)^{\ell-n}, x^{\ell-m}}}
  {\sqrt{(\ell-n)!(\ell+n)!(\ell-m)!(\ell+m)!}}. \label{h:eq:rhoijlong}
\end{align}
Observe that $\inner{Q(x),x^{\ell-m}}$ for some polynomial $Q(x)$
is the coefficient of $x^{\ell-m}$ in $Q(x)$ multiplied by $(\ell-m)!(\ell+m)!$,
according to \cref{h:eq:innerq2}.
Recall that the Taylor formula for a function $Q(x)$ around $x=0$ is
$Q(x) = \sum_{n=0}^\infty \frac{d^n}{dx^n}\frac{x^n}{n!}$.
We apply it to obtain the coefficient of $x^{\ell-m}$ in \cref{h:eq:rhoijlong},
\begin{align}
  \rho_{\ell}^{mn}(g) = \sqrt{\frac{(\ell+m)!}{(\ell-n)!(\ell+n)!(\ell-m)!}}
  \frac{d^{\ell-m}}{dx^{\ell-m}}\left((-bx + a)^{\ell+n}(dx-c)^{\ell-n}\right)\Big|_{x=0},
  \label{h:eq:mesl2}
\end{align}
with $g=\twobytwo{a}{c}{b}{d}$ as usual.
Substituting $z=b(dx-c)$ and using that $ad-bc=1$ yields
\begin{align}
  \rho_{\ell}^{mn}(g) = \sqrt{\frac{(\ell+m)!}{(\ell-n)!(\ell+n)!(\ell-m)!}}
  \frac{b^{n-m}}{d^{n+m}}
  \frac{d^{\ell-m}}{dz^{\ell-m}}\left((1-z)^{\ell+n}z^{\ell-n}\right)\Big|_{z=-bc}.
  \label{h:eq:mesl2z}
\end{align}
This is a general formula for matrix elements of the unirreps
of $\SL{2, \C}$, which generate an orthonormal basis of $L^2(\SL{2, \C})$
as stated by the Peter-Weyl theorem.
\subsubsection{Representations of SU($2$)}
We now restrict the $\SL{2, \C}$ representations to $g \in \SU{2}$,
the group of $2\times 2$ unitary matrices with determinant \SI{1}{}.
So for $g \in \SU{2}$ we have $g^{*}g = gg^* = I$, which implies
\begin{align}
  g = \twobytwo{a}{-\overline{b}}{b}{\overline{a}} \label{h:eq:gsu2}
\end{align}
where $a\overline{a} + b\overline{b} = 1$, and the bar denotes the complex conjugate.
It follows that $\SU{2} < \SL{2, \C}$.
We can factor $g \in \SU{2}$ as
\begin{align}
  g_{\alpha\beta\gamma} =
  \twobytwo{e^{-i\alpha/2}}{0}{0}{e^{i\alpha/2}}
  \twobytwo{\cos(\beta/2)}{-\sin(\beta/2)}{\sin(\beta/2)}{\cos(\beta/2)}
  \twobytwo{e^{-i\gamma/2}}{0}{0}{e^{i\gamma/2}},
  \label{h:eq:factorsu2}
\end{align}
where $\alpha$, $\beta$ and $\gamma$ are ZYZ Euler angles,
$0 \le \alpha < 2\pi$, $0 \le \beta < \pi$ and $-2\pi \le \gamma < 2\pi$.
Now consider representations \fun{\rho_\ell}{\SU{2}}{\GL{H_\ell}},
which are a special case of the representations of $\SL{2, \C}$,
and hence inherit their properties.
Since $\rho_\ell$ is a group homomorphism,
\begin{align}
  \rho_\ell(g_{\alpha \beta \gamma}) =
  \rho_\ell(g_{\alpha 0 0})
  \rho_\ell(g_{0 \beta 0})
  \rho_\ell(g_{0 0 \gamma}).
  \label{h:eq:factorrho}
\end{align}
Since $\rho(g_{\alpha 0 0})$ corresponds to $a=e^{-i\alpha/2}$, $d=\overline{a}$, and $b=c=0$
in \cref{h:eq:repq}, we find that
$\rho_\ell(g_{\alpha 0 0})(\psi_m)= e^{-i\alpha m} \psi_m$,
which implies that only the diagonal elements of $\rho_\ell(g_{\alpha 0 0})$
are nonzero; they are
\begin{align}
\rho_\ell^{mm}(g_{\alpha 0 0}) = e^{-i\alpha m}. \label{h:eq:rholmm}
\end{align}
The expression for $\rho_\ell(g_{0 0 \gamma})$ is analogous.
The middle factor in \cref{h:eq:factorrho} is multiplied by diagonal matrices on both sides,
so we write the matrix elements
\begin{align*}
\rho_\ell^{mn}(g_{\alpha \beta \gamma}) = e^{-i(m\alpha + n\gamma)}\rho_\ell^{mn}(g_{0 \beta 0}).
\end{align*}
To compute $\rho_\ell^{mn}(g_{0 \beta 0})$,
we apply
\begin{align*}
a=d=\cos(\beta/2) \text{, } b=\sin(\beta/2) \text{ and } c=-\sin(\beta/2)
\end{align*}
to \cref{h:eq:mesl2z},
and note that the derivative is evaluated at $z=-bc=\sin^2(\beta/2)$.
We define $P_{mn}^\ell(\cos\beta) = \rho_\ell^{mn}(g_{0 \beta 0})$,
and make the substitution
\[z \mapsto \frac{1-x}{2},\]
where the derivative is now evaluated at $x=-2\sin^2(\beta/2)+1 = \cos\beta$.
Then $b=\sqrt{(1-x)/2}$ and $d=\sqrt{(1+x)/2}$. We have,
\begin{align}
  P_{mn}^\ell(x)
  &=  c_{mn}^\ell \frac{\left(\frac{1-x}{2}\right)^{\frac{n-m}{2}}}
    {\left(\frac{1+x}{2}\right)^{\frac{n+m}{2}}}
    \frac{(-1)^{\ell-m}}{2^{m-\ell}}
  \frac{d^{\ell-m}}{dx^{\ell-m}}
    \left(\left(\frac{1+x}{2}\right)^{\ell+n}\left(\frac{1-x}{2}\right)^{\ell-n}\right) \nonumber \\
  &=  c_{mn}^\ell \frac{(-1)^{\ell-m}}{2^{\ell}}
    \frac{(1-x)^{\frac{n-m}{2}}}{(1+x)^{\frac{n+m}{2}}}
    \frac{d^{\ell-m}}{dx^{\ell-m}}
    \left((1+x)^{\ell+n}(1-x)^{\ell-n}\right). \label{h:eq:mesu2}
\end{align}
where
\begin{align*}
    c_{mn}^\ell = \sqrt{\frac{(\ell+m)!}{(\ell-n)!(\ell+n)!(\ell-m)!}},
\end{align*}
and
\begin{align}
  \rho_\ell^{mn}(g_{\alpha \beta \gamma}) = e^{-i(m\alpha + n\gamma)}P_{mn}^\ell(\cos\beta),
  \label{h:eq:mesu2general}
\end{align}
which is a general formula for matrix elements of $\SU{2}$ unirreps.
The matrices formed with the $\rho_\ell^{mn}$ and $P_{mn}^\ell$ are
also known as a Wigner-D and Wigner-d matrices, respectively.
\subsubsection{Representations of SO($3$)}
$\SU{2}$ is isomorphic to the group of unit quaternions, hence a double cover of $\SO{3}$,
which is easily verifiable by noting that every rotation in $\SO{3}$ can be written as two
different quaternions $q$ and $-q$.
We have ${\SO{3} \cong \SU{2}/\{I, -I\}}$.
The representations of $\SO{3}$ are then those representations of $\SU{2}$ where
$\rho_\ell(I) = \rho_\ell(-I)$.
By substituting $b=c=0$ in \cref{h:eq:mesl2},
we see that the only nonzero terms outside the square root
occur when $n=m$, yielding diagonal matrices with entries
proportional to $a^{\ell + m}d^{\ell - m}$,
\begin{align}
  \rho_{\ell}^{mm}\left(\twobytwo{a}{0}{0}{d}\right)
  &= \frac{1}{(\ell-m)!}a^{\ell + m}d^{\ell - m}.
\end{align}
Recall that for $\SU{2}$ representations, $\ell$ can be integer or half integer.
For $a=d=1$ the expression reduces to $1/(\ell-m)!$
while for $a=d=-1$ it reduces to $(-1)^{2\ell}/(\ell-m)!$,
from where we conclude that $\rho_\ell(I) = \rho_\ell(-I)$ only when $\ell$ is integer.
Therefore, the representations of $\SO{3}$ are also given by \cref{h:eq:mesu2general},
but with $\ell$ taking only integer values.

\subsubsection{Relation with special functions}
The Jacobi polynomials generalize the Gegenbauer, Legendre, and Chebyshev polynomials,
and thus give origin to several special functions.
One way to represent the Jacobi polynomials is via the Rodrigues' formula%
\footnote{Not to be confused with the Rodrigues' rotation formula.}
\begin{align*}
  P_n^{(\alpha,\beta)}(z) = \frac{(-1)^n}{2^n n!}
  (1-z)^{-\alpha} (1+z)^{-\beta}
  \frac{d^n}{dz^n}
  \left( (1-z)^{\alpha+n} (1+z)^{\beta+n} \right).
\end{align*}
Note how it is tightly related to our expression for the matrix elements in \cref{h:eq:mesu2},
showing how the special functions arise in the study of group representations.

By setting $m=n=0$ and $\ell$ integer in \cref{h:eq:mesu2}, we get
\begin{align}
  P_{00}^\ell(x) = \frac{(-1)^{\ell}}{2^{\ell}\ell !}
  \frac{d^{\ell}}{dz^{\ell}}
  (1-x^2)^{\ell},
\end{align}
the Legendre polynomials, which describe the zonal spherical harmonics.

The associated Legendre polynomials can be written as
\begin{align*}
  P_m^\ell(x) = \frac{(-1)^{\ell + m}}{2^\ell \ell!} (1-x^2)^{m/2} \frac{d^{\ell+m}}{dx^{\ell+m}} (1-x^2)^\ell.
\end{align*}
By setting $\ell$ integer and $n=0$ in \cref{h:eq:mesu2},
we can relate $P_{m,n}^\ell$ with the associated Legendre polynomials,
\begin{align*}
  P_{-m,0}^\ell(x)
  &= \frac{(-1)^{\ell+m}}{2^\ell \ell!} \sqrt{\frac{(\ell-m)!}{(\ell+m)!}}
  (1-x^2)^{m/2}
  \frac{d^{\ell+m}}{dz^{\ell+m}}
  \left((1-x^2)^{\ell}\right) \\
  &= \sqrt{\frac{(\ell-m)!}{(\ell+m)!}} P_m^\ell(x).
\end{align*}
Noting that $P_{m,n}^\ell=P_{-m,-n}^\ell$ we write
\begin{align}
  \label{h:eq:assocleg}
  P_m^\ell(x) = \sqrt{\frac{(\ell+m)!}{(\ell-m)!}} P_{m0}^\ell(x).
\end{align}

The spherical harmonics are usually defined in terms of the associated Legendre polynomials
\begin{align}
  Y_m^\ell(\theta, \phi) = \sqrt{\frac{(2\ell + 1)}{4\pi} \frac{(\ell-m)!}{(\ell+m)!}}
  P_{m}^\ell(\cos\theta)e^{im\phi}.
  \label{h:eq:sphharmdef}
\end{align}
Using \cref{h:eq:mesu2general,h:eq:assocleg}, we obtain a relation
between the spherical harmonics and the representations $\rho_\ell^{mn}$ ,
\begin{align}
  \rho_\ell^{m0}(g_{\alpha \beta \gamma})
  &= P_{m0}^\ell(\cos\beta) e^{-im\alpha} \nonumber \\
  \label{h:eq:wig2sph}
  &= \sqrt{\frac{(\ell-m)!}{(\ell+m)!}} P_{m}^\ell(\cos\beta) e^{-im\alpha} \nonumber \\
  &= \sqrt{\frac{4\pi}{(2\ell+1)}} \overline{Y_m^\ell(\beta, \alpha)}.
\end{align}
With this relation, we find an expression for the rotation of spherical harmonics.
Let $g\nu$ be the point obtained by rotating the north pole by $g$.
Since $\rho_{\ell}(g_1g_2) = \rho_{\ell}(g_1) \rho_{\ell}(g_2)$,
\begin{align*}
  \rho_{\ell}^{m0}(g_1g_2) &= \sum_{n=-\ell}^{\ell} \rho_{\ell}^{mn}(g_1) \rho_{\ell}^{n0}(g_2), \\
  \overline{Y_m^\ell(g_1 g_2 \nu)} &= \sum_{n=-\ell}^{\ell} \rho_{\ell}^{mn}(g_1)
                                     \overline{Y_n^\ell(g_2\nu)}.
\end{align*}
Taking conjugates on both sides we arrive at the spherical harmonics rotation formula,
which will be useful in following proofs.
For $x\in S^2$ and $g \in \SO{3}$,
\begin{align}
  \label{h:eq:sphharmrot}
  Y_m^\ell(g x) &=
\sum_{n=-\ell}^{\ell} \overline{\rho_{\ell}^{mn}(g)} Y_n^\ell(x),
\end{align}
which we write in vector notation as
$Y^\ell(g x) = \overline{\rho_{\ell}(g)} Y^\ell(x).$

\subsection{Fourier analysis on homogeneous spaces}
\label{h:sec:fourierh}
We now consider functions on the homogeneous space $G/H$ of a compact group $G$ with subgroup $H$;
specifically, consider square integrable functions in $L^2(G/H)$.
Recall that $G/H$ is the set of left cosets and that ${gHh = gH}$ for all $gH \in G/H$ and $h \in H$.
Hence, we can regard functions in $L^2(G/H)$ as the functions in $L^2(G)$ such that $f(gh) = f(g)$
for all $g \in G$ and $h \in H$
(functions that are constant on each coset $gH$ for all $g \in G$).
Using \cref{h:eq:fwcoord}, we write
$f(g) = \sum_{[\rho] \in \hat{G}}\sum_{i,j=1}^{d_\rho} c_{ij}^\rho \rho_{ij}(g)$,
and expand $f(gh)$ as
\begin{align*}
  f(gh) &= \sum_{[\rho] \in \hat{G}}\sum_{i,j=1}^{d_\rho} c_{ij}^\rho \rho_{ij}(gh) \\
        &= \sum_{[\rho] \in \hat{G}}\sum_{i,j=1}^{d_\rho} c_{ij}^\rho \sum_{k=1}^{d_\rho}\rho_{ik}(g)\rho_{kj}(h) \\
        &= \sum_{[\rho] \in \hat{G}}\sum_{i=1}^{d_\rho}\sum_{k=1}^{d_\rho}\left( \sum_{j=1}^{d_\rho}  c_{ij}^\rho \rho_{kj}(h)\right) \rho_{ik}(g).
\end{align*}
We want $f(g)=f(gh)$, so we compare this expression with the expansion of $f(g)$.
Since the $\rho_{ij}$ are linearly independent, we have
\begin{equation}
  \sum_{j=1}^{d_\rho}  c_{ij}^\rho \rho_{kj}(h) = c_{ik}^{\rho} \label{h:eq:cij}
\end{equation}
for all $\rho$, $i$, $k$, and $h$.
Now suppose the trivial representation of $H$ has multiplicity $n_\rho \ge 1$ in
the restriction of $\rho$ to $H$.
We can reorder the basis such that the trivial representations appear first.
This implies $\rho_{kj}(h) = \delta_{kj}$ for $j \le n_\rho$ which agrees with $\cref{h:eq:cij}$.
After reordering, $\rho_{kj}$ integrates to zero for $k > n_\rho$ or $j > n_\rho$,
(only trivial matrix elements integrals are nonzero).
Applying this to both sides of \cref{h:eq:cij} yields $c_{ik}^\rho = 0$ for $k > n_\rho$,
which implies that any $f \in L^2(G/H)$ can be expanded as
\begin{align}
  f(gH) &= \sum_{[\rho] \in \hat{G}}\sum_{i=1}^{d_\rho}\sum_{j=1}^{n_\rho} c_{ij}^\rho \rho_{ij}(g),
  \label{h:eq:expandh}
\end{align}
where $n_\rho$ is the multiplicity of the trivial representation of $H$ in $\rho$ (which may be zero).
Only the first $n_\rho$ columns of each $\rho$ are necessary for the Fourier analysis on homogeneous spaces.
In the special case that $n_\rho=1$, only the matrix elements $\rho_{i1}$ will appear.
These are called the \emph{associated spherical functions} \cite{vilenkin1978special}.

When considering functions on the homogeneous space of right cosets, $L^2(H\backslash G)$,
we arrive at similar results where only the first $n_\rho$ rows will appear in the expansion.
When considering functions on the double coset space $L^2(H\backslash G/H)$,
only the first $n_\rho$ rows and columns will appear.
In this last case, when $n_{\rho}=1$, only the matrix elements $\rho_{11}$ appear.
These are called \emph{zonal spherical functions}.
When $n_{\rho} \leq 1$ for every $\rho$, the algebra (with the convolution product)
$L^2(H\backslash G/H)$ is commutative.
\begin{remark}
  The functions just defined are called \emph{spherical} because of the
  special case $G=\SO{3}$ and $H=\SO{2}$ (recall that $S^2 \cong \SO{3}/\SO{2}$).
  These terms apply, however, to any compact group and its homogeneous spaces.
\end{remark}
\begin{remark}
  This discussion generalizes to a locally compact group $G$ (not necessarily compact),
  and compact subgroup $K$, under certain conditions where $(G, K)$ is called a Gelfand pair.
  Refer to \textcite{gallierncharm} for details.
\end{remark}

\subsection{Example: Fourier analysis on $S^2$}
\label{h:sec:fouriers2}
We apply the results of \cref{h:sec:fourierh} to the group $G=\SO{3}$ and subgroup $H=\SO{2}$,
where the homogeneous space is isomorphic to the sphere $S^2 \cong \SO{3}/\SO{2}$.
Elements of $\SO{3}$ decompose in Euler angles components similarly to \cref{h:eq:factorsu2}
and by setting $\alpha=\beta=0$ we obtain a subgroup isomorphic
to $\SO{2}$ consisting of rotations around the axis through the poles.
We obtain the restriction of $\SO{3}$ irreps to this subgroup
by setting $\alpha=\beta=0$ for integer $\ell$ in \cref{h:eq:factorrho},
resulting in $\rho_\ell(g_{0 0 \gamma})$ which is diagonal and defined by
$\rho_\ell^{nn}(g_{0 0 \gamma}) = e^{-i\gamma n}$ (\cref{h:eq:rholmm}).
Therefore the trivial representation of $\SO{2}$ appears only when $n=0$ and
its multiplicity is 1 for all $\ell$,
and using that $\overline{Y_m^\ell} = (-1)^mY_{-m}^\ell)$,
the expansion in \cref{h:eq:expandh} reduces to
\begin{align*}
  f(g_{\alpha\beta\gamma})
  &= \sum_{\ell \in \N}\sum_{m=-\ell}^{\ell}b_{m}^\ell \rho_\ell^{m0}(g_{\alpha\beta\gamma}) \\
  &= \sum_{\ell \in \N}\sum_{m=-\ell}^{\ell}b_{m}^\ell i^{m}\sqrt{\frac{4\pi}{(2\ell+1)}} \overline{Y_m^\ell(\beta, \alpha)} && (\cref{h:eq:wig2sph}) \\
  &= \sum_{\ell \in \N}\sum_{m=-\ell}^{\ell}b_{-m}^\ell (-i)^{-m}\sqrt{\frac{4\pi}{(2\ell+1)}} Y_{m}^\ell(\beta, \alpha) \\
  &= \sum_{\ell \in \N}\sum_{m=-\ell}^{\ell} c_m^\ell Y_{m}^\ell(\beta, \alpha).
\end{align*}
We rewrite the expansion as
\begin{align}
  \label{h:eq:sphharm}
  f(\theta, \phi) &= \sum_{\ell \in \N}\sum_{m=-\ell}^{\ell}\hat{f}_{m}^\ell Y_m^\ell(\theta, \phi),
\end{align}
which shows that the spherical harmonics $Y_m^\ell$ form indeed
an orthonormal basis for $L^2(S^2)$.
The decomposition into the basis is then given by
\begin{align}
  \hat{f}_{m}^\ell &= \int\limits_{x\in S^2} f(x) \overline{Y_m^\ell(x)} \, dx,
\end{align}
where $x \in S^2$ can be parameterized angles $\theta$ and $\phi$.

This concludes our introduction to harmonic analysis.
For more details we recommend \textcite{gallierncharm,dieudonn1980special,folland2016course}.
\textcite{chirikjian2000engineering} present an applied take on the subject.
\section{Group convolutional neural networks}
\label{h:sec:appl}
We can see a typical deep neural network as a chain of affine operations
$W_i$ whose parameters are optimized, interspersed with nonlinearities $\sigma_i$,
\begin{align}
  f_{\text{out}} = W_n(\cdots \sigma_2(W_2(\sigma_1(W_1f_{\text{in}}))) \cdots).
\end{align}
In \cnns, these operations are convolutions with an added bias.
The most common nonlinearities are pointwise; one popular example is the \relu,
$\sigma(x_i) = \max(x_i, 0)$.

In \gcnns, the operations are group or homogeneous space convolutions.
There are different classes of networks that vary with respect to the group considered,
whether the equivariance is to global transformations or local (patch-wise),
and whether the feature maps are scalar or more general fields.
In this section, we discuss the classes of networks we cover in this thesis,
in light of the theory presented so far.
\subsection{Finite group CNNs}
On a finite group,
the counting measure can be used and convolution reduces to summing over each element of the group,
\begin{align}
(f * k)(g) = \frac{1}{|G|} \sum\limits_{x \in G} f(x) k(x^{-1} g).
\end{align}
This simple operation has been successfully applied for rotation equivariance on discrete subgroups of $\SO{3}$;
\textcite{cohencube,worrall2018cubenet} consider the octahedral group of \SI{24} elements.
We show an application that uses the icosahedral group of \SI{60} elements in \cref{emvn:sec:emvn}.

\subsection{Spherical CNNs}
\label{h:sec:sphcnns}

The sphere is not a group, but there are two group convolutional operations that we can define
for spherical functions: the spherical cross-correlation and spherical convolution.
Both are equivariant to \SO{3} and can be used as \gcnns\ operations.

\paragraph{Spherical Cross-Correlation}
The spherical cross-correlation between a function $f$ and a filter $k$ lifts
the results to a function on $\SO{3}$,
\begin{align}
\label{h:eq:sphcorreq}
(f \star k)(g) = \int\limits_{x \in S^2} k(g^{-1}x)f(x) \, dx.
\end{align}
This operation has a pattern matching interpretation.
Suppose $k$ is a rotated version of $f$; then the correlation achieves
its maximum value when $g$ is the rotation that aligns $k$ and $f$.
Note that $f$ and $k$ are functions on $S^2$,
while $f\star k$ is a function on $\SO{3}$.
\begin{proposition}[spherical cross-correlation]
  The spherical cross-correlation between $f,\, k \in L^2(S^2)$ as defined in \cref{h:eq:sphcorreq}
  can be computed in the spectral domain via outer products of vectors of spherical harmonics coefficients,
  \begin{align*}
    \widehat{(f \star k)}^{\ell} = \overline{\hat{k}^{\ell}} (\hat{f}^\ell)^\top.
  \end{align*}
\end{proposition}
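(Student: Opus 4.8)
The plan is to compute the $\SO{3}$ Fourier transform of $f \star k$ directly from its definition in \cref{h:eq:sphcorreq} and show that, at each level $\ell$, the coefficient matrix factors as a rank-one outer product. First I would write, using the Fourier transform \cref{h:eq:ft} on $\SO{3}$,
\[
  \widehat{(f \star k)}(\rho_\ell) = \int\limits_{g \in \SO{3}} \left( \int\limits_{x \in S^2} k(g^{-1}x) f(x)\, dx \right) \rho_\ell(g)^{*}\, dg,
\]
swap the order of integration (Fubini applies since $f,k \in L^2$ and $\SO{3}$ is compact), and pull $f(x)$ out of the inner $g$-integral. This isolates the matrix $\int_{\SO{3}} k(g^{-1}x)\rho_\ell(g)^{*}\, dg$ as the object to evaluate.

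Next I would expand the filter in spherical harmonics, $k = \sum_{\ell',p}\hat{k}_p^{\ell'} Y_p^{\ell'}$, and apply the rotation formula \cref{h:eq:sphharmrot} together with unitarity of $\rho_{\ell'}$ (so that $\rho_{\ell'}(g^{-1}) = \rho_{\ell'}(g)^{*}$) to rewrite $Y_p^{\ell'}(g^{-1}x) = \sum_q \rho_{\ell'}^{qp}(g)\, Y_q^{\ell'}(x)$. This is the key step: it converts the dependence on $g^{-1}x$ into Wigner-D matrix elements evaluated at $g$, thereby separating the $g$ and $x$ variables. The $g$-integral then reduces to $\int_{\SO{3}} \rho_{\ell'}^{qp}(g)\,\overline{\rho_\ell^{nm}(g)}\, dg$, which the orthogonality of matrix elements guaranteed by the Peter-Weyl theorem (\cref{h:thm:pw}) collapses to Kronecker deltas forcing $\ell'=\ell$, $q=n$, $p=m$ (up to a $1/(2\ell+1)$ factor). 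What survives is $\hat{k}_m^\ell$ multiplied by the projection $\int_{S^2} f(x)\, Y_n^\ell(x)\, dx$, so the $(m,n)$ entry of the coefficient matrix is the product of a single coefficient of $k$ and a single coefficient of $f$ — precisely the outer-product structure claimed.

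Finally I would identify these surviving factors with the entries of $\overline{\hat{k}^\ell}$ and $(\hat{f}^\ell)^\top$. The hard part will be the conjugation bookkeeping in this last identification: one must track the complex conjugates introduced by the conjugate-transpose in \cref{h:eq:ft}, by the unitarity relation $\rho_\ell(g^{-1})=\rho_\ell(g)^{*}$ used in the rotation step, and by the convention $\hat{f}_m^\ell = \int_{S^2} f\,\overline{Y_m^\ell}$, and to use (for the real-valued inputs of the applications) the identity $\int_{S^2} f\, Y_n^\ell = \overline{\hat{f}_n^\ell}$ so that the conjugate lands on $\hat{k}^\ell$ rather than $\hat{f}^\ell$, matching the stated placement. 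I would also verify that the $1/(2\ell+1)$ normalization cancels under the coefficient conventions fixed earlier. The two remaining ingredients — Fubini and the matrix-element orthogonality — are routine given the machinery already developed in \cref{h:thm:pw,h:eq:sphharmrot}.
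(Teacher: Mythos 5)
Your route is genuinely different from the paper's, and the core of it works. The paper never invokes the forward \SO{3} transform \cref{h:eq:ft}: it expands \emph{both} $f$ and $k$ in spherical harmonics inside \cref{h:eq:sphcorreq}, applies the rotation formula \cref{h:eq:sphharmrot}, integrates over $S^2$ using orthonormality of the spherical harmonics, and then recognizes the surviving expression $\sum_\ell \tr\bigl(\,\cdot\,\rho_\ell(g)\bigr)$ as a Fourier series on \SO{3}, reading the coefficients off the inversion formula \cref{h:eq:fi}. You instead integrate over the group: forward transform, Fubini, expansion of $k$ alone, and collapse of the $g$-integral by Schur orthogonality of matrix elements (Peter--Weyl, \cref{h:thm:pw}). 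Both arguments are sound; yours computes each entry of $\widehat{(f\star k)}^\ell$ directly and makes the rank-one structure appear mechanically rather than by pattern-matching, at the cost of invoking matrix-element orthogonality, which the paper's proof never needs.

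The gap is in your final identification, on two counts. First, the conjugation goes the other way. The factors surviving Schur orthogonality are $\hat{k}_m^\ell$ unconjugated (your rotation step produces $\rho_{\ell'}^{qp}(g)$, which pairs with $\overline{\rho_\ell^{nm}(g)}$, the $(m,n)$ entry of $\rho_\ell(g)^{*}$) and $\int_{S^2} f\,Y_n^\ell\,dx$, which for real $f$ equals $\overline{\hat{f}_n^\ell}$. So the identity you cite places the conjugate on $\hat{f}$, not on $\hat{k}$, and your route honestly yields $\widehat{(f\star k)}^\ell \propto \hat{k}^{\ell}\,(\overline{\hat{f}^{\ell}})^\top$, the complex conjugate of the displayed formula. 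This is not fatal: for real $f,k$ the correlation is a real-valued function, and conjugate coefficient matrices describe the same function (expansion in $\overline{\rho_\ell}$ rather than $\rho_\ell$); the paper's derivation lands on the other placement because it passes from $\rho_\ell(g^{-1})$ to $\rho_\ell(g)^\top$, which is exact only for real representations, in place of $\overline{\rho_\ell(g)}^\top$. But you should not expect your computation, carried out carefully under the stated conventions, to reproduce the display verbatim. Second, the $1/(2\ell+1)$ from Schur orthogonality does not cancel. It is absorbed by the weight $d_\rho = 2\ell+1$ in \cref{h:eq:fi}: the proposition's $\widehat{(f\star k)}^\ell$ is the coefficient of the unweighted expansion $(f\star k)(g) = \sum_\ell \tr\bigl(\widehat{(f\star k)}^\ell\rho_\ell(g)\bigr)$, i.e.\ $(2\ell+1)$ times what \cref{h:eq:ft} produces. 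State your result against that convention rather than expecting the factor to disappear on its own.
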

\begin{proof}
We evaluate \cref{h:eq:sphcorreq} by expanding $f$ and $k$ as in \cref{h:eq:sphharm},
where $Y^\ell(x) \in \C^{2\ell + 1}$ contains the spherical harmonics of degree $\ell$ evaluated at $x$,
and $\hat{f}^{\ell} \in \C^{2\ell + 1}$ contains the respective coefficients.
We assume real-valued functions (hence the complex conjugation on the first line),
and use the spherical harmonics rotation formula from \cref{h:eq:sphharmrot}.
\begin{align*}
  (f \star k)(g)
  &= \int\limits_{x \in S^2}
  \sum_{\ell'} (\overline{\hat{k}^{\ell'}})^{\top} \overline{Y^{\ell'}(g^{-1}x)}
    \sum_\ell Y^\ell(x)^{\top} \hat{f}^\ell  \, dx \\
  &= \int\limits_{x \in S^2}
  \sum_{\ell'} (\overline{\hat{k}^{\ell'}})^{\top} \rho_{\ell}(g^{-1}) \overline{Y^{\ell'}(x)}
    \sum_\ell Y^\ell(x)^{\top} \hat{f}^\ell  \, dx \\
  &= \sum_{\ell,\ell'}
    (\overline{\hat{k}^{\ell'}})^{\top} \rho_{\ell}(g)^\top
    \int\limits_{x \in S^2} \overline{Y^{\ell'}(x)}
    Y^\ell(x)^{\top} \hat{f}^\ell  \, dx.
\end{align*}
By orthonormality of the spherical harmonics, $\int_{x \in S^2} \overline{Y^{\ell'}(x)}Y^{\ell}(x)^{\top}$
is the identity $I_{2\ell + 1}$ when $\ell=\ell'$ and zero otherwise. Then,
\begin{align*}
  (f \star k)(g)
  &= \sum_{\ell}
    (\overline{\hat{k}^{\ell}})^{\top} \rho_{\ell}(g)^\top
    \hat{f}^\ell \\
  &= \sum_{\ell} \tr(\hat{f}^\ell(\overline{\hat{k}^{\ell}})^{\top} \rho_{\ell}(g)^\top)
  && (x^\top A y = \tr(yx^\top A)) \\
  &= \sum_{\ell} \tr(\overline{\hat{k}^{\ell}} (\hat{f}^\ell)^\top \rho_{\ell}(g) ).
\end{align*}
where we used the cyclic and transpose properties of the trace in the last part.
The last line is a Fourier expansion of a function on \SO{3} (\cref{h:eq:fi})
with coefficients given by the outer product of the input coefficients.
This can be restated as
\begin{align}
  \widehat{(f \star k)}^{\ell} = \overline{\hat{k}^{\ell}} (\hat{f}^\ell)^\top,
\end{align}
or in terms of matrix elements,
$\widehat{(f \star k)}_{mn}^{\ell} = \overline{\hat{k}_m^{\ell}} \hat{f}_n^\ell$.
\end{proof}
\begin{remark}
  \textcite{makadia2006} show an alternative proof of this result.
  The efficient computation for sampled functions
  relies on the sampling theorem described by \textcite{kostelec2008ffts}.
\end{remark}

This operation is used in the first layer of \textcite{s.2018spherical},
where the following layers have inputs and outputs on \SO{3} and thus use
pure group cross-correlation as shown in \cref{h:eq:groupcorrspectral}.

The spherical cross-correlation has further applications
in pose estimation \cite{makadia2006,makadia2007correspondence}
and $3$D shape retrieval \cite{makadia2010spherical}.
We show applications for pose estimation in a deep learning setting in \cref{cross:sec:cross,sph:sec:align}.

\paragraph{Spherical Convolution}
The spherical convolution has inputs and outputs on the sphere,
\begin{align}
(f * k)(x) = \int\limits_{g \in \SO{3}} f(g \nu) k(g^{-1} x) \, dg, \label{h:eq:sphconveq}
\end{align}
where $\nu$ is a fixed point on the sphere (the north pole).
To interpret this operation, we split the integral as in \cref{h:thm:hmeasure},
which holds since $\SO{3}$ and $\SO{2}$ are unimodular,
\begin{align*}
  (f * k)(x)
  &= \int\limits_{g_{\alpha\beta} \in \SO{3}/\SO{2}} \int\limits_{g_\gamma \in \SO{2}}
    f(g_{\alpha\beta}g_\gamma \nu) k((g_{\alpha\beta}g_\gamma)^{-1} x) \, dg_{\alpha\beta}dg_{\gamma}, \\
  &= \int\limits_{g_{\alpha\beta} \in \SO{3}/\SO{2}}
    f(g_{\alpha\beta} \nu) \left( \int\limits_{g_\gamma \in \SO{2}} k(g_\gamma^{-1}g_{\alpha\beta}^{-1} x) \, dg_{\gamma} \right) dg_{\alpha\beta}. \\
\end{align*}
The inner integral averages $k$ over rotations around the $z$ axis,
resulting in a zonal function (constant on latitudes);
note that this  limits the expressivity of the filters.
The outer integral is then a spherical inner product
where $x$ determines the filter orientation.
\textcite{driscoll1994computing} shows how to compute the convolution
efficiently in the spectral domain. The following lemma will be necessary.
\begin{lemma}
  \label{h:lemma:dh}
  For $f \in L^2(S^2)$, let $\rho_\ell^{mn}$ be the matrix elements of the unirreps of $\SO{3}$,
  $\nu$ the north pole, and $\hat{f}_n^\ell$ the spherical harmonic coefficient of $f$
  corresponding to $Y_n^\ell$. The following holds
  \begin{align*}
    \int\limits_{u \in \SO{3}} f(u \nu) \overline{\rho_\ell^{mn}(u^{-1})}\, du =
    2\pi\sqrt{\frac{4\pi}{2\ell+1}} \hat{f}_n^\ell
  \end{align*}
  for $m=0$. The integral is 0 otherwise.
\end{lemma}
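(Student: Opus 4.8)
The plan is to reduce the integral over $\SO{3}$ to the defining integral for the spherical harmonic coefficient $\hat{f}_n^\ell$ by decomposing the Haar measure into Euler angles. The central observation is that $f(u\nu)$ depends only on where $u$ sends the north pole, i.e. only on the coset $u\SO{2} \in \SO{3}/\SO{2} \cong S^2$: writing $u = g_{\alpha\beta\gamma}$ in ZYZ Euler angles, the final rotation around $z$ by $\gamma$ fixes $\nu$, so $u\nu$ is the point with polar angle $\beta$ and azimuth $\alpha$, and $f(u\nu) = f(\beta,\alpha)$ is independent of $\gamma$. All the $\gamma$-dependence therefore sits inside the matrix element, and integrating it out will produce both the vanishing for $m \neq 0$ and the prefactor $2\pi$.

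First I would dispose of the inverse using unitarity. Since each $\rho_\ell$ is a unitary representation (\cref{h:sec:su2}), we have $\rho_\ell(u^{-1}) = \rho_\ell(u)^{*}$, hence $\overline{\rho_\ell^{mn}(u^{-1})} = \rho_\ell^{nm}(u)$, which by \cref{h:eq:mesu2general} equals $e^{-i(n\alpha + m\gamma)}P_{nm}^\ell(\cos\beta)$. Substituting this together with $du = \sin\beta\, d\alpha\, d\beta\, d\gamma$ (usual ranges $\alpha,\gamma\in[0,2\pi)$, $\beta\in[0,\pi]$), the triple integral factors so that the only $\gamma$-dependent term is $e^{-im\gamma}$. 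The inner integral $\int_0^{2\pi} e^{-im\gamma}\, d\gamma = 2\pi\delta_{m0}$ then immediately forces the vanishing for $m\neq 0$ and supplies the prefactor $2\pi$ when $m=0$.

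For $m=0$ the remaining double integral over $(\alpha,\beta)$ is recognized as a spherical harmonic coefficient. Using \cref{h:eq:wig2sph} in the form $\rho_\ell^{n0}(g_{\alpha\beta\gamma}) = e^{-in\alpha}P_{n0}^\ell(\cos\beta) = \sqrt{4\pi/(2\ell+1)}\,\overline{Y_n^\ell(\beta,\alpha)}$ rewrites the integrand as $f\,\overline{Y_n^\ell}$ against the surface measure $\sin\beta\, d\alpha\, d\beta$ on $S^2$, whose integral is exactly $\hat{f}_n^\ell$. Collecting the constants yields $2\pi\sqrt{4\pi/(2\ell+1)}\,\hat{f}_n^\ell$, as claimed.

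The step requiring the most care is the bookkeeping of normalizations: the factor $2\pi$ comes from the fiber $\SO{2}$, while the identification of the double integral with $\hat{f}_n^\ell$ relies on the $S^2$ surface measure having total mass $4\pi$, consistent with the orthonormality of the $Y_m^\ell$ underlying \cref{h:eq:sphharm}. I would also invoke unitarity abstractly when handling $u^{-1}$ rather than substituting negated Euler angles, since $-\beta$ lies outside the standard range $[0,\pi]$ and a direct substitution would require an awkward re-expression of the angles. An equivalent route avoiding Euler angles is to expand $f(u\nu) = \sum_{\ell',n'}\hat{f}_{n'}^{\ell'} Y_{n'}^{\ell'}(u\nu)$, rewrite $Y_{n'}^{\ell'}(u\nu) = \sqrt{(2\ell'+1)/(4\pi)}\,\overline{\rho_{\ell'}^{n'0}(u)}$, and apply the Schur orthogonality of matrix elements from \cref{h:thm:pw}; this collapses the sum to the single term $\ell'=\ell$, $n'=n$, $m=0$ and reproduces the same constant, at the cost of making the $\SO{3}$ mass $8\pi^2$ explicit.
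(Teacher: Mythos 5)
Your proof is correct, and its skeleton matches the paper's: dispose of the inverse by unitarity ($\overline{\rho_\ell^{mn}(u^{-1})} = \rho_\ell^{nm}(u)$), use \cref{h:eq:wig2sph} to turn the surviving matrix element into $\overline{Y_n^\ell}$, and factor the $\SO{3}$ integral into a fiber part (yielding $2\pi$) times an $S^2$ part (yielding $\hat{f}_n^\ell$). The difference is in execution. The paper proves the vanishing for $m \neq 0$ abstractly: it applies the right translation $u \mapsto u g_{\alpha 0 0}$ under the Haar integral, uses $g_{\alpha 0 0}\nu = \nu$, and concludes that the matrix $M = \int f(u\nu)\overline{\rho_\ell(u^{-1})}\,du$ satisfies $M = \rho_\ell(g_{\alpha 0 0})M$ for all $\alpha$, which kills every row with $m \neq 0$; it then splits the remaining integral with the quotient-integral formula of \cref{h:thm:hmeasure}. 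You instead write the Haar measure explicitly in Euler angles as $\sin\beta\, d\alpha\, d\beta\, d\gamma$ and obtain both the vanishing and the factor $2\pi$ from the single computation $\int_0^{2\pi} e^{-im\gamma}\, d\gamma = 2\pi\delta_{m0}$. Your route is more elementary and does everything in one Fubini step, but it leans on the explicit Euler-angle form of the Haar measure with the correct normalization --- a fact the paper never writes down, though your bookkeeping ($8\pi^2$ total mass, $2\pi$ fiber, $4\pi$ base) is consistent with the paper's conventions, so the constant comes out right. The paper's coordinate-free version buys generality: the invariance trick and \cref{h:thm:hmeasure} transfer verbatim to any compact group and closed subgroup, whereas the explicit measure decomposition is specific to $\SO{3}/\SO{2}$. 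Your closing remark about the Peter--Weyl/orthogonality alternative is also a valid third route.
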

\begin{proof}
  We apply the change of variables ${u \mapsto ug_{\alpha 0 0}}$ (a rotation around $z$)
  to the following expression,
\begin{align*}
  \int\limits_{u \in \SO{3}} f(u \nu) \overline{\rho_\ell(u^{-1})}\, du
  &= \int\limits_{u \in \SO{3}} f(ug_{\alpha 0 0} \nu) \overline{\rho_\ell(g_{-\alpha 0 0}u^{-1})}\, du \\
  &= \int\limits_{u \in \SO{3}} f(u \nu) \rho_\ell(g_{\alpha 0 0})\overline{\rho_\ell(u^{-1})}\, du,
\end{align*}
where we used that a rotation around $z$ does not move the north pole, $g_{\alpha 0 0}\nu = \nu$.
The left and right hand sides must be equal for all $\alpha$,
and $\rho_\ell^{mm}(g_{\alpha 0 0}) = e^{-i\alpha m}$ (\cref{h:eq:rholmm}),
so the rows of  $\int_{u \in \SO{3}} f(u \nu) \rho_\ell(u^{-1})\, du$
must be zero for all $m \neq 0$.
Only the matrix values $\overline{\rho_\ell^{0n}(u^{-1})}$ influence the nonzero row,
and $\overline{\rho_\ell^{0n}(u^{-1})} = \rho_\ell^{n0}(u)$ holds.
Using \cref{h:eq:wig2sph}, we obtain
\begin{align*}
  \int\limits_{u \in \SO{3}} f(u \nu) \overline{\rho_\ell^{0n}(u^{-1})}\, du
  &= \int\limits_{u \in \SO{3}} f(u \nu) \rho_\ell^{n0}(u)\, du \\
  &= \sqrt{\frac{4\pi}{2\ell+1}} \int\limits_{u \in \SO{3}} f(u \nu) \overline{Y_n^\ell(u\nu)}\, du \\
  &= \sqrt{\frac{4\pi}{2\ell+1}} \int\limits_{h \in \SO{2}} \int\limits_{x \in S^2} f(x) \overline{Y_n^\ell(x)}\, dx\, dh \\
  &= 2\pi\sqrt{\frac{4\pi}{2\ell+1}} \hat{f}_n^\ell,
\end{align*}
where \cref{h:thm:hmeasure} was used in the last passage.
\end{proof}

The spherical convolution is efficiently computed in the spectral domain.
\begin{proposition}[spherical convolution]
  The spherical convolution between $f,\, k \in L^2(S^2)$ as defined in \cref{h:eq:sphconveq}
  can be computed in the spectral domain via pointwise multiplication of spherical harmonics coefficients,
  \begin{align*}
    \widehat{f * k}_m^\ell = 2\pi\sqrt{\frac{4\pi}{2\ell+1}} \hat{f}_m^\ell\hat{k}_0^\ell.
  \end{align*}
\end{proposition}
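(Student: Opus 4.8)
The plan is to compute the spherical-harmonic coefficient $\widehat{f*k}_m^\ell = \int_{x\in S^2}(f*k)(x)\,\overline{Y_m^\ell(x)}\,dx$ directly from the definition of the spherical convolution in \cref{h:eq:sphconveq} and then reduce it, via \cref{h:lemma:dh}, to a product of the coefficients of $f$ and $k$. First I would substitute the definition of the convolution and invoke Fubini to exchange the order of the $\SO{3}$ and $S^2$ integrals, giving $\widehat{f*k}_m^\ell = \int_{g\in\SO{3}} f(g\nu)\left(\int_{x\in S^2} k(g^{-1}x)\,\overline{Y_m^\ell(x)}\,dx\right)dg$.

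Next I would attack the inner $S^2$ integral. Changing variables $x\mapsto gy$ and using the rotation-invariance of the spherical measure (the $\SO{3}$-invariant measure on $S^2\cong\SO{3}/\SO{2}$ guaranteed by \cref{h:thm:hmeasure}) turns it into $\int_{y\in S^2} k(y)\,\overline{Y_m^\ell(gy)}\,dy$. Applying the rotation formula \cref{h:eq:sphharmrot}, so that $\overline{Y_m^\ell(gy)} = \sum_n \rho_\ell^{mn}(g)\,\overline{Y_n^\ell(y)}$, and recognizing the remaining $S^2$ integral as the coefficient $\hat{k}_n^\ell$, the inner integral collapses to $\sum_n \rho_\ell^{mn}(g)\,\hat{k}_n^\ell$. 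Substituting back leaves $\widehat{f*k}_m^\ell = \sum_n \hat{k}_n^\ell \int_{g\in\SO{3}} f(g\nu)\,\rho_\ell^{mn}(g)\,dg$.

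The final step is to match the remaining $\SO{3}$ integral to \cref{h:lemma:dh}. Since the $\rho_\ell$ are unitary, $\rho_\ell^{mn}(g) = \overline{\rho_\ell^{nm}(g^{-1})}$, so the integral equals $\int_{g\in\SO{3}} f(g\nu)\,\overline{\rho_\ell^{nm}(g^{-1})}\,dg$, which is exactly the left-hand side of \cref{h:lemma:dh} with its two indices read as $(n,m)$. The lemma then forces this to vanish unless $n=0$, in which case it equals $2\pi\sqrt{4\pi/(2\ell+1)}\,\hat{f}_m^\ell$. Only the $n=0$ term of the sum survives, and I would conclude $\widehat{f*k}_m^\ell = 2\pi\sqrt{4\pi/(2\ell+1)}\,\hat{f}_m^\ell\hat{k}_0^\ell$, as claimed.

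I expect the main obstacle to be the index bookkeeping in this last step: one must apply the unitary relation in the correct order and track which of the two indices of $\rho_\ell^{mn}$ the lemma annihilates. It is the index contracted against $\hat{k}$, not the free index $m$, that is forced to zero, so that the $m$-th coefficient of $f$ together with only the $0$-th coefficient of $k$ is selected. By contrast the analytic content (Fubini and invariance of the measure) is routine once the sampling and convergence issues are taken for granted.
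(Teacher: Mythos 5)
Your proof is correct and follows essentially the same route as the paper's: both hinge on \cref{h:lemma:dh} combined with the rotation formula \cref{h:eq:sphharmrot}, and your index bookkeeping in the last step is right — the lemma annihilates the index contracted against $\hat{k}$, leaving only $\hat{k}_0^\ell$ together with $\hat{f}_m^\ell$. The only difference is organizational: the paper expands $k$ in spherical harmonics inside the spatial expression $(f*k)(x)$ and reads off the output expansion at the end, whereas you integrate against $\overline{Y_m^\ell}$ first and invoke unitarity of $\rho_\ell$ to reach the same lemma, which is a cosmetic rather than substantive distinction.
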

\begin{proof}
  Now we replace $k$ in \cref{h:eq:sphconveq} by its spherical harmonics expansion
  \begin{align*}
    (f \star k)(x)
    &= \int\limits_{g \in \SO{3}} f(g \nu) k(g^{-1} x) \, dg \\
    &= \int\limits_{g \in \SO{3}} f(g \nu) \sum_\ell (\hat{k}^\ell)^\top Y^\ell(g^{-1} x) \, dg \\
    &= \int\limits_{g \in \SO{3}}  f(g \nu) \sum_\ell(\hat{k}^\ell)^\top \overline{\rho_\ell(g^{-1})}Y^\ell(x) \, dg \\
    &= \sum_\ell (\hat{k}^\ell)^\top \left(\int\limits_{g \in \SO{3}}  f(g \nu) \overline{\rho_\ell(g^{-1})} \, dg \right) Y^\ell(x)
  \end{align*}
  Applying \cref{h:lemma:dh} to the integral within parenthesis, we obtain a
  matrix which has a single nonzero row corresponding to $m=0$,
  so only the $\hat{k}^\ell$ element corresponding to $m=0$ will influence the result.
  We write,
  \begin{align*}
    (f \star k)(x)
    &= \sum_\ell 2\pi\sqrt{\frac{4\pi}{2\ell+1}} \hat{k}_0^\ell (\hat{f}^\ell)^\top Y^\ell(x),
  \end{align*}
  which is the expansion in spherical harmonics of $(f \star k)(x)$.
  The relation
  \[\widehat{f * k}_m^\ell = 2\pi\sqrt{\frac{4\pi}{2\ell+1}} \hat{f}_m^\ell\hat{k}_0^\ell\]
 follows immediately.
\end{proof}
\begin{remark}
  Observe that only the coefficients $\hat{k}_0^\ell$ appear in the expression,
  which corresponds the coefficients of a zonal spherical function.
  This implies that for any $k$, there is always a zonal function $k'$ such that
  $f * k = f * k'$.
\end{remark}
\begin{remark}
The spherical convolution as described here is equivalent to the Funk-Hecke formula,
which can be extended to $S^n$; refer to \textcite{gallierdiffgeom} for details.
\end{remark}
In \cref{sph:sec:sphcnn} we present a spherical CNN where
the spherical convolution is the main operation and all inputs,
features and outputs are functions on the sphere.
The efficient computation for sampled functions
on the sphere relies on the sampling theorem as shown by \textcite{driscoll1994computing}.

\glsresetall

\def \i {\item}
\def \bi {\begin{itemize}\item}
\def \ei {\end{itemize}}
\def \be {\begin{equation*}}
\def \ee {\end{equation*}}

\newcommand{\bfmath}[1]{\mbox{\boldmath $#1$}}
\newcommand{\skewsym}[1]{[#1]_{\times}}

\newcommand{\quat}{\bfmath{{q}}}
\newcommand{\cquat}{\bfmath{\bar{q}}}
\newcommand{\xvec}{\bfmath{\vec{x}}}
\newcommand{\qvec}{\bfmath{\vec{q}}}
\newcommand{\avec}{\bfmath{\vec{a}}}
\newcommand{\bvec}{\bfmath{\vec{b}}}
\newcommand{\pvec}{\bfmath{\vec{p}}}
\newcommand{\qsc}{{q}_{0}}

\newcommand{\ldirvec}{\bfmath{\vec{l}}}
\newcommand{\lmomvec}{\bfmath{\vec{m}}}

\newcommand{\Rot}{\bfmath{R}}
\newcommand{\tv}{\bfmath{\vec{t}}}
\newcommand{\pv}{\bfmath{\vec{p}}}
\newcommand{\pquat}{\bfmath{{p}}}
\newcommand{\xquat}{\bfmath{{x}}}
\newcommand{\aquat}{\bfmath{{a}}}
\newcommand{\bquat}{\bfmath{{b}}}

\newcommand{\ldir}{{{\bfmath{l}}}}
\newcommand{\lmom}{{\bfmath{m}}}
\newcommand{\tq}{\bfmath{t}}

\newcommand{\e}{\epsilon}
\newcommand{\duq}{\bfmath{\check{q}}}
\newcommand{\cduq}{\bfmath{{\bar{\check{q}}}}}
\newcommand{\dua}{\bfmath{\check{a}}}
\newcommand{\cdua}{\bfmath{{\bar{\check{a}}}}}
\newcommand{\dub}{\bfmath{\check{b}}}
\newcommand{\cdub}{\bfmath{{\bar{\check{b}}}}}

\newcommand{\unk}{\bfmath{{q}}_v}

\newcommand{\quq}{\bfmath{{q}}}
\newcommand{\qua}{\bfmath{{a}}}
\newcommand{\qub}{\bfmath{{b}}}

\newcommand{\cquq}{\bfmath{\bar{q}}}
\newcommand{\cqua}{\bfmath{\bar{a}}}
\newcommand{\cqub}{\bfmath{\bar{b}}}

\newcommand{\inn}[2]{#1^{T}#2} 

\newcommand{\davec}{\bfmath{\check{\vec{a}}}}
\newcommand{\dbvec}{\bfmath{\check{\vec{b}}}}

\newcommand{\qmat}{\bfmath{Q}}
\newcommand{\wmat}{\bfmath{W}}

\newcommand{\Smat}{\bfmath{S}}
\newcommand{\Tmat}{\bfmath{T}}
\newcommand{\Umat}{\bfmath{U}}
\newcommand{\Vmat}{\bfmath{V}}
\newcommand{\Sigmat}{\bfmath{\Sigma}}

\newcommand{\veight}{\bfmath{\vec{v}_{8}}}
\newcommand{\vseven}{\bfmath{\vec{v}_{7}}}
\newcommand{\vvec}{\bfmath{\vec{v}}}
\newcommand{\uvec}{\bfmath{\vec{u}}}

\newcommand{\Ifour}{\bfmath{I}_{4\times4}}
\newcommand{\Othreethree}{\bfmath{0}_{3\times3}}
\newcommand{\othreeone}{\bfmath{0}_{3\times1}}
\newcommand{\ofourone}{\bfmath{0}_{4\times1}}

\newcommand{\Amat}{\bfmath{A}}
\newcommand{\Bmat}{\bfmath{B}}
\newcommand{\Cmat}{\bfmath{C}}
\newcommand{\Imat}{\bfmath{I}}
\newcommand{\Mmat}{\bfmath{M}}
\newcommand{\Nmat}{\bfmath{N}}
\newcommand{\Rmat}{\bfmath{R}}
\newcommand{\Xmat}{\bfmath{X}}
\newcommand{\Ymat}{\bfmath{Y}}
\newcommand{\nv}{\bfmath{\vec{n}}}
\newcommand{\xv}{\bfmath{\vec{x}}}

\newcommand{\pitwo}{\mathbb{P}^2}
\newcommand{\pithree}{\mathbb{P}^3}
\newcommand{\rone}{\mathbb{R}}
\newcommand{\rotdil}{SO(2)\times\rone^+}
\newcommand{\dilrot}{\ensuremath{\SO{2}\times\rone^+}}

\newcommand{\rtwo}{\mathbb{R}^2}
\newcommand{\two}{\mathbb{Z}^2}
\newcommand{\rthree}{\mathbb{R}^3}
\newcommand{\rthreeo}{\mathbb{R}^3 \setminus \{ (0,0,0) \}}
\newcommand{\rfouro}{\mathbb{R}^4 \setminus \{ (0,0,0,0) \}}
\newcommand{\uvw}{\left(\begin{array}{c} u \\ v \\ w\end{array}\right)}
\newcommand{\uvwp}{\left(\begin{array}{c} u' \\ v' \\ w'\end{array}\right) }

\newcommand{\starg}{\star_G}
\newcommand{\redc}[1]{{\color{red} #1}}

\chapter{Equivariance to planar similarities}
\label{ptn:sec:ptn}
\chaptersubtitle{The Polar Transformer Networks}
\section{Introduction}
Whether at the global pattern or local feature level \cite{granlund1978search},
the quest for invariant and equivariant representations
is as old as the fields of computer vision and pattern recognition.

The state of the art in ``hand-crafted'' approaches is typified by \sift~\cite{lowe2004distinctive}.
These detector/descriptors identify the intrinsic scale and rotation of a region \cite{lindeberg1994scale,chomat2000local} and produce an equivariant descriptor,
then normalized for scale and rotation invariance.
More recently, \textcite{sifre2013rotation} proposed the scattering transform which offers
representations invariant to translation, scaling, and rotations.

The current consensus is representations should be learned not designed.
Equivariance to translations by convolution and invariance to local deformations by pooling are now
textbook material \cite[335]{goodfellow2016deep} but approaches to equivariance to more general
deformations are still maturing.

Most recent approaches with learned filters are equivariant only to a small subgroup of planar rotations~\cite{dieleman2015rotation,cohen2016group,marcos16_rotat_equiv_vector_field_networ,Zhou_2017_CVPR}.
\textcite{worrall2017harmonic} introduce a notable exception that is equivariant to continuous rotations,
but using constrained filters with limited expressivity.

In this chapter, we describe the polar transformer networks,
which are equivariant to continuous rotations and dilations
and have unconstrained learned filters.
We combine ideas of \stns\ and canonical coordinate representations~\cite{segman1992canonical}
to achieve invariance to translations and equivariance to rotations and dilations.
The three stage network learns to identify the object center
then transforms the input into log-polar coordinates (see \cref{ptn:fig:logpolar}).
In this coordinate system, planar convolutions correspond to
group convolutions in rotation and scale.
\Ptns\ produce an equivariant representation
without the challenging parameter regression of \stns.
We enlarge the notion of equivariance in \cnns\ beyond
harmonic networks \cite{worrall2017harmonic} and group convolutions \cite{cohen2016group}
by capturing both rotations and dilations of arbitrary precision.
The \ptns\ handle only global deformations, as do \stns.

\begin{figure}[htbp]
  \begin{center}
    \includegraphics[width=.5\linewidth]{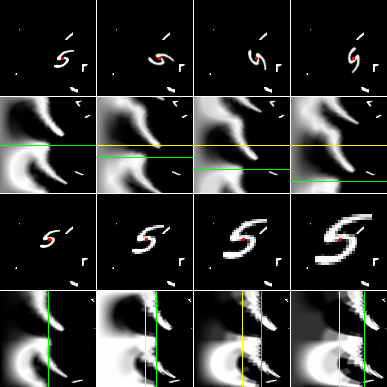}
  \end{center}
  \caption{In the log-polar representation, rotations around the origin become vertical shifts, and dilations around the origin become horizontal shifts. The distance between the yellow and green lines is proportional to the rotation angle/scale factor. Top rows: sequence of rotations, and the corresponding polar images. Bottom rows: sequence of dilations, and the corresponding polar images. }
  \label{ptn:fig:logpolar}
\end{figure}

We present state-of-the-art performance on rotated \mnist\ and \simtwomnist, which we introduce.
To summarize the contributions of this chapter:
\begin{itemize}
\item
  We develop a \cnn\ architecture capable of learning an image representation invariant to
  translation and equivariant to rotation and dilation.
\item
  We propose the polar transformer module, which performs a differentiable log-polar transform,
  amenable to backpropagation training. The transform origin is a latent variable.
\item
  We show how the polar transform origin can be learned effectively as the centroid of a single channel heatmap predicted by a fully convolutional network.
\end{itemize}

Most of the content in this chapter appeared originally in~\textcite{esteves2018polar}.
Source code is available at \url{https://github.com/daniilidis-group/polar-transformer-networks}.

\section{Related Work}

\textcite{nordberg1996equivariance} proposed one of the earliest equivariant feature extraction schemes,
suggesting the discrete sampling of $2$D rotations of a complex angle modulated filter.
About the same time, the image and optical processing community discovered the
Mellin transform as a modification of the Fourier transform \cite{zwicke1983new,casasent1976scale}.
The Fourier-Mellin transform is equivariant to rotation and scale while its modulus is invariant.

During the 80s and 90s, invariances of integral transforms were developed
through methods based in the Lie generators of the respective transforms
starting from one-parameter transforms~\cite{ferraro1988relationship}.
\textcite{segman1992canonical} generalized this to Abelian subgroups of the affine group
and proved that for certain classes of transformations there exist \emph{canonical coordinates}
where deformation of the input presents as translation of the output.

Closely related to equivariance is steerability, the interpolation of responses to any group action
using the response of a finite filter basis.
\textcite{freeman1991design} introduced an exact steerability framework,
where rotational steerability for Gaussian derivatives was explicitly computed;
\textcite{simoncelli1992shiftable} extended it to the shiftable pyramid,
which handle rotation and scale.
\textcite{perona1995deformable} proposed a method of approximating steerability by learning a lower dimensional representation of the image deformation from the transformation orbit and the \svd.

A unification of Lie generator and steerability approaches was introduced by \textcite{teo1998design} who used \svd\ to reduce the number of basis functions for a given transformation group.
Teo and Hel-Or developed the most extensive framework for steerability \cite{teo1998design,hel1998canonical},
and proposed the first approach for non-Abelian groups starting with exact steerability for
the largest Abelian subgroup and incrementally steering for the remaining subgroups.
\textcite{CohenW17,jacobsen17_dynam_steer_block_deep_resid_networ} recently combined steerability and learnable filters.

A recent ``hand-crafted'' approach to equivariant representations is the scattering transform \cite{sifre2013rotation} which composes rotated and dilated wavelets.
In a sense similar to \sift~\cite{lowe2004distinctive}, this approach relies on the equivariance of
anchor points (e.g. the maxima of filtered responses in space).
Translation invariance is through the modulus operation, computed after each convolution.
The final scattering coefficient is invariant to translations and equivariant to local rotations and scalings.

Within the context of \cnns, different methods of enforcing equivariance were attempted.
\paragraph{Constraining filters}
Equivariance can be obtained by constraining filter structure similarly to Lie generator
based approaches \cite{segman1992canonical,hel1998canonical}.
Harmonic Networks \cite{worrall2017harmonic} use filters derived from the complex harmonics achieving
both rotational and translational equivariance.

\paragraph{Input orbit}
\textcite{Laptev_2016_CVPR} achieve transformation invariance by pooling feature maps computed over
the input orbit, which scales poorly as it requires forward and backward passes for each orbit element.

\paragraph{Filter orbit}
A filter orbit which is itself equivariant can be used to obtain group equivariance.
\textcite{cohen2016group} convolve with the orbit of a learned filter and prove the equivariance of
group convolutions and preservation of rotational equivariance in the presence of rectification and pooling.
\textcite{dieleman2015rotation} process elements of the image orbit individually and use
the set of outputs for classification.
\textcite{gens2014deep} produce maps of finite-multiparameter groups,
\textcite{Zhou_2017_CVPR,marcos16_rotat_equiv_vector_field_networ} use a rotational filter
orbit to produce oriented feature maps and rotationally invariant features, and
\textcite{lenc2015understanding} propose a transformation layer which acts as a group convolution
by first permuting then transforming by a linear filter.

\paragraph{} \noindent
We achieve global rotational equivariance and expand the notion of \cnn\ equivariance to include scaling.
Our \ptns\ employ log-polar coordinates (canonical coordinates in \textcite{segman1992canonical}) to achieve
rotation-dilation group convolution through translational convolution subject to the assumption of an
image center estimated similarly to the \stns.
Most related to our method is \textcite{henriques2017warped}, which achieves equivariance by warping the inputs to a fixed grid, with no learned parameters.

When learning features from $3$D objects, invariance to transformations is usually achieved through
augmenting the training data with transformed versions of the inputs~\cite{wu20153d},
or pooling over transformed versions during training and/or test \cite{maturana2015voxnet,vam}.
\textcite{sedaghat16_orien_boost_voxel_nets_objec_recog} show that a multi-task approach, i.e.
prediction of both the orientation and class, improves classification performance.
In our extension to $3$D object classification, we explicitly learn representations equivariant to
rotations around a family of parallel axes by transforming the input
to cylindrical coordinates about a predicted axis.

\section{Theoretical Background}
This section is divided into two parts, the first is a review of equivariance
and group convolutions.
The second is an explicit example of the equivariance of group convolutions through the
$2$D similarity transformations group, \simtwo, comprised of translations, dilations and rotations.
Reparameterization of \simtwo\ to canonical coordinates allows for the application of
the \simtwo\ group convolution using translational convolution.

\subsection{Group Equivariance}
Equivariant representations are useful as they encode both semantic and deformation
information in a predictable way.
Let $G$ be a transformation group and $\lambda_g\im$ be the group action applied to
an input \fun{\im}{\Z^2}{\R^n}.
Recall that a mapping $\Phi:E\rightarrow F$ is equivariant to the group G when
for all $g\in G$,
\begin{equation}
\Phi(\lambda_g\im) = \lambda'_g(\Phi(\im))
\end{equation}
where $\lambda_g$ and $\lambda'_g$ correspond to application of $g$ to $E$ and $F$ respectively.
Invariance is the special case of equivariance where $\lambda'_g$ is the identity. In the context of image classification and \cnns,
the group actions can be thought of as image deformations
and $\Phi$ is a map from input image to a feature map or between feature maps.

The inherent translational equivariance of \cnns\ is independent of the convolutional kernel and
evident in the corresponding translation of the output in response to translation of the input. 
Let $f$ and $g$ be real-valued functions on $G$,
the group convolution is defined as in \cref{h:eq:gconv}  %
\begin{align*}
(f * k)(g) = \int\limits_{h \in G} f(h) k(h^{-1}g) \, dh.
\end{align*}

Group convolution requires integrability over a group and identification of the
Haar measure $dg$ as shown in \cref{h:haar},
and is equivariant as shown in \cref{h:sec:gconv}.

\subsection{Equivariance in SIM($2$)}
A similarity transformation, $\rho\in\simtwo$, acts on a point in $x\in\rtwo$ by
\begin{equation}
\rho x \mapsto s\,R\,x + t\quad s\in\rone^+,\, R\in \SO{2},\, t\in\mathbb{R}^2,
\end{equation}
where \SO{2} is the rotation group.
To take advantage of the standard planar convolution in classical \cnns\ we decompose
a $\rho\in\simtwo$ into a translation, $t$ in $\rtwo$ and
a dilated-rotation $r$ in \dilrot.

Equivariance to \simtwo\, is achieved by learning
the center of the dilated rotation, 
shifting the original image accordingly then transforming the image to canonical coordinates.
In this reparameterization, the standard translational convolution is equivalent to the dilated-rotation group convolution.

The origin predictor is an application of \stn~\cite{jaderberg15nips} to global translation prediction;
the centroid of the output is taken as the origin of the input.

The transformation of the image $\lambda_t \im = \im(t-t_o)$ (canonization in \textcite{soatto2013actionable}) reduces the $\simtwo$ deformation to a dilated-rotation
when $t_o$ is the true translation.
After centering, we wish to perform \dilrot\ convolutions on the new image $\im_o=\im(x-t_o)$.
As usual in \gcnns~\cite{cohen2016group}, the first layer lifts the input image to a feature map on the group,
\begin{equation}
f_j(r) = \int\limits_{x \in \rtwo} \im_o(x) k_j(r^{-1}x) \,\,dx
\end{equation}
and subsequent layers have inputs and outputs on the group
\begin{equation}
f_{j+1}(r) = \int\limits_{s \in \rotdil} f_j(s) k_{j+1}(s^{-1}r) \,\,ds
\end{equation}
where $r,s\in$ \dilrot.

We compute this convolution through the use of canonical coordinates for Abelian Lie groups \cite{segman1992canonical}.
The centered image $\im_o$ is transformed to log-polar coordinates,
\[f_p(\xi,\,\theta) = f_o( e^{\xi} \cos(\theta),\,e^{\xi} \sin(\theta)),\]
with $(\xi,\theta)\in$ \dilrot.
In canonical coordinates, for $s=(\xi,\, \theta)$ and $r=(\xi_r,\, \theta_r)$, we have
$s^{-1}r = (\xi_r -\xi,\,\theta_r-\theta)$ and the \dilrot\ group convolution
can be expressed and efficiently implemented as a planar convolution
\begin{equation}
\int\limits_{s } f(s) k(s^{-1}r) \,\,ds
= \int\limits_{s } f_p(\xi,\theta) k(\xi_r -\xi,\theta_r-\theta)
\,\, d\xi d\theta.
\end{equation}

To summarize, we
(1) construct a network of translational convolutions,
(2) take the centroid of the last layer and shift the original image to it,
(3) convert to log-polar coordinates, and
(4) apply a second network\footnote{the network employs rectifier and pooling which preserve equivariance \cite{cohen2016group}.} of translational convolutions.
The result is a feature map invariant to translation and
equivariant to dilated-rotations around the centroid.

\section{Architecture}
Our model consists of two main components connected by the polar transformer module.
The first part is the polar origin predictor and the second is the classifier (a conventional fully convolutional network).
The building block of the network is a $3 \times 3\times K$ convolutional layer followed by batch normalization, a \relu\ and occasional subsampling through strided convolution.
We will refer to this building block simply as \textit{block}.
\Cref{ptn:fig:network} shows the architecture.

\begin{figure}[htbp]
  \begin{center}
    \includegraphics[width=\linewidth]{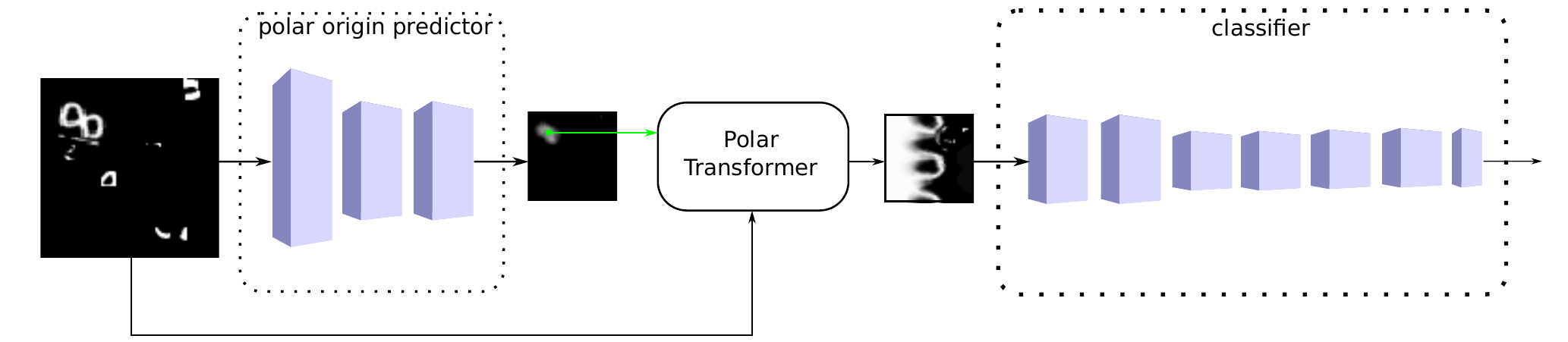}
  \end{center}
  \caption{Network architecture.
    The input image passes through a fully convolutional network, the polar origin predictor, which outputs a heatmap.
    The heatmap's centroid (two coordinates) and the input image go into the polar transformer module,
    which performs a polar transform with origin at the input coordinates.
    The output polar representation is invariant with respect to the original object location;
    rotations and dilations are now shifts,
    which are processed equivariantly by a conventional classifier CNN.}
\label{ptn:fig:network}
\end{figure}

\subsection{Polar Origin Predictor}
\label{ptn:sec:origin}
The polar origin predictor operates on the original image and comprises a sequence of blocks followed by a $1 \times 1$ convolution. The output is a single channel heatmap,
and we make its centroid the polar transform's origin.

There are some difficulties in training a neural network to predict coordinates in images.
Some approaches attempt to use fully connected layers to directly regress the coordinates with limited success~\cite{Toshev_2014_CVPR}.
A better option is to predict heatmaps \cite{NIPS2014_5573, newell2016stacked}, and take their argmax.
However, this is problematic since backpropagation gradients are zero in all but one point, impeding learning.

The usual approach to heatmap prediction is evaluation of a loss against some ground truth.
In this approach the argmax gradient problem is circumvented by supervision.
In our model, the gradient of the output coordinates must be taken with respect to the heatmap
since the polar origin is unknown and must be learned.
We avoid the argmax by taking the centroid of the heatmap as the polar origin.
The gradient of the centroid with respect to the heatmap is constant and nonzero for all points, making learning possible.

\subsection{Polar transformer module}
The polar transformer module takes the origin prediction and image as inputs and outputs the log-polar representation of the input.
The module uses the same differentiable image sampling technique
as the \stns~\cite{jaderberg15nips}, expressing output coordinates
in terms of input and source sample point coordinates $(x_i^s, y_i^s)$.
The log-polar transform in terms of the source sample points and target regular grid $(x_i^t, y_i^t)$ is:

\begin{align}
  x_i^s &=  x_0 + r^{{x_i^t}/{W}} \cos{\frac{2\pi y_i^t}{H}} \\
  y_i^s &= y_0 + r^{{x_i^t}/{W}} \sin{\frac{2\pi y_i^t}{H}}
\end{align}
where $(x_0,\, y_0)$ is the origin, $W,\,H$ are the output width and height, and $r$ is the maximum distance from the origin, set to $0.5\sqrt{H^2 + W^2}$ in our experiments.

\subsection{Wrap-around padding}
\label{ptn:sec:wrap}
To maintain feature map resolution, most CNN implementations use zero-padding.
This is not ideal for the polar representation, as it
is periodic about the angular axis.
A rotation of the input result in a vertical shift of the output, wrapping at the boundary;
hence, identification of the top and bottom most rows is most appropriate.
We achieve this with wrap-around padding on the vertical dimension,
where the top rows of the feature map are padded using the bottom rows and vice versa. %
The horizontal dimension is zero-padded as usual.
\Cref{ptn:tab:ablation} shows a performance evaluation with and without the proposed padding.

\subsection{Polar origin augmentation}
To improve robustness of our method, we augment the polar origin during training time by adding a random shift to the regressed polar origin coordinates.
Note that this comes for little computational cost compared to conventional augmentation methods such as rotating the input image.
\Cref{ptn:tab:ablation} quantifies the performance gains of this kind of augmentation.

\subsection{Relation to human vision}
The approach presented is loosely related to human vision.
The fovea is the central part of the retina and where
the photoreceptor cells are most densely packed,
resulting in the most accurate visual perception.
The peripheral region is coarsely populated, which explains the
lower accuracy of peripheral vision.
The same properties are observed in the log-polar grid that we adopt;
the central pixels are more densely packed than the peripheral ones.

Furthermore, humans exhibit a fixational eye movement,
where the eyes fixate on regions of interest to
leverage the accuracy of the foveal vision.
This behavior is analogous to the origin prediction described in \cref{ptn:sec:origin},
where we first detect a point of interest (the origin)
and then compute the log-polar transform around it,
which maximizes the resolution around the origin.
We refer to \textcite{kandel2013principles} for an introduction to the human visual system.

\section{Experiments}
We consider the image classification task on different datasets,
and compare with different models.
We first describe the models in \cref{ptn:sec:arch},
then the datasets in \cref{ptn:sec:dsets},
and the results follow.

\subsection{Architecture details}
\label{ptn:sec:arch}
We implement the following architectures for comparison,
\begin{itemize}
\item[] \textbf{Conventional CNN (\ccnn)}\quad a fully convolutional network, composed of a sequence of convolutional layers and some rounds of subsampling.
\item[] \textbf{Polar CNN (\pcnn)}\quad same architecture as \ccnn, operating on polar images.
  The log-polar transform is pre-computed at the image center before training, as in \textcite{henriques2017warped}.
  The fundamental difference between our method and this is that we learn the polar origin implicitly, instead of fixing it.
\item[] \textbf{Spatial Transformer Network (STN)}\quad our implementation of \textcite{jaderberg15nips}, replacing the localization network by four blocks of $20$ filters and stride $2$,
  followed by a $20$ unit fully connected layer, which we found to perform better.
  The transformation regressed is in $\simtwo$, and a \ccnn\ comes after the transform.
\item[] \textbf{Polar Transformer Network (PTN)}\quad our proposed method.
  The polar origin predictor comprises three blocks of $20$ filters each,
  with stride $2$ on the first block (or the first two blocks, when input is $96 \times 96$).
  The classification network is the \ccnn.
  \item[] \textbf{PTN-CNN}\quad we classify based on the sum of the per class scores of instances of PTN and \ccnn\ trained independently.
\end{itemize}

The following suffixes qualify the architectures described above:
\begin{itemize}
\item[] \textbf{S}\quad ``small'' network, with seven blocks of $20$ filters and one round of subsampling (equivalent to the Z$2$CNN in \textcite{cohen2016group}).
\item[] \textbf{B}\quad ``big'' network, with $8$ blocks with the following number of filters:
  $16$, $16$, $32$, $32$, $32$, $64$, $64$, $64$.
  We apply subsampling by strided convolution whenever the number of filters increase.
  We add up to two extra blocks of $16$ filters with stride $2$ at the beginning to handle larger
  input resolutions (one for $42 \times 42$ and two for $96 \times 96$).
\item[] \textbf{+}\quad training time rotation augmentation by continuous angles.
\item[] \textbf{++}\quad training and test time rotation augmentation.
  We input $8$ rotated versions the query image and classify using the sum of the per class scores.
\end{itemize}

We perform rotation augmentation for polar-based methods.
In theory, the effect of input rotation is just a shift in the corresponding polar image, which should not affect the classifier \cnn.
In practice, interpolation and angle discretization effects result in slightly different polar images for rotated inputs, so even the polar-based methods benefit from this kind of augmentation.

\subsection{Dataset details}
\label{ptn:sec:dsets}
\begin{itemize}
\item[] \textbf{Rotated \mnist}\quad The rotated \mnist\ dataset~\cite{larochelle2007empirical} contains  $360^\circ$-rotated $28 \times 28$  images of handwritten digits.
  The training, validation and test sets are of sizes \num{10}{k}, \num{2}{k}, and \num{50}{k}, respectively.
\item[] \textbf{\mnistr}\quad we replicate it from \textcite{jaderberg15nips}.
  It has \num{60}{k} training and \num{10}{k} testing samples, where the digits of the original \mnist\ are rotated between $[-90^\circ,\, 90^\circ]$.
  It is also know as half-rotated \mnist~\cite{Laptev_2016_CVPR}.
\item[] \textbf{\mnistrts}\quad we replicate it from \textcite{jaderberg15nips}.
  It has \num{60}{k} training and \num{10}{k} testing samples, where the digits of the original
  \mnist\ are rotated between $[-45^\circ,\, 45^\circ]$,
  scaled between $0.7$ and $1.2$, and shifted within a $42 \times 42$ black canvas.
\item[] \textbf{\simtwomnist}\quad we introduce a more challenging dataset, based on \mnist\,
  perturbed by random transformations from $\simtwo$.
The images are $96 \times 96$, with $360^\circ$ rotations; the scale factors range from $1$ to $2.4$, and the digits can appear anywhere in the image.
The training, validation and test set have size \num{10}{k}, \num{5}{k}, and \num{50}{k}, respectively.
\Cref{ptn:fig:sim2mnist} shows samples from the dataset.
\end{itemize}

\begin{figure}[htbp]
  \begin{center}
    \includegraphics[width=\linewidth]{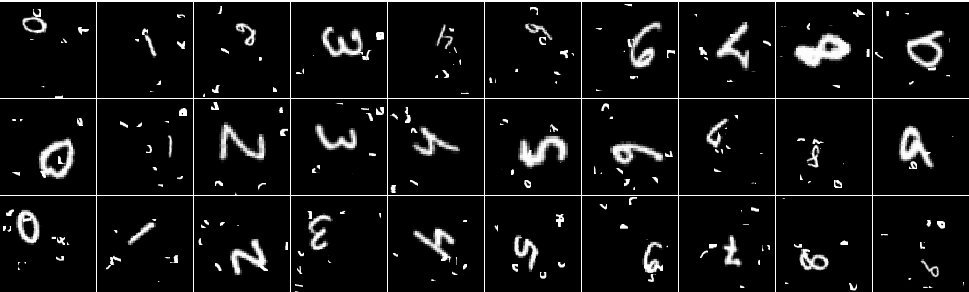}
  \end{center}
  \caption{\Simtwomnist\ samples.
    Large variations in digit scale, rotation and position,
    along with a small training set make this a challenging dataset.}
    \label{ptn:fig:sim2mnist}
\end{figure}

\subsection{Rotated \mnist~\cite{larochelle2007empirical}}
\Cref{ptn:tab:rot} shows the results.
We divide the analysis in two parts; (i) on top we show approaches with smaller networks and no
rotation augmentation, and (ii) on bottom there are no restrictions.

Between the restricted approaches, the harmonic networks~\cite{worrall2017harmonic} outperform
our model by a small margin,
but with almost four times longer training time because the convolutions on complex variables are more costly.
Also worth mentioning is the poor performance of the \stns~\cite{jaderberg15nips} with no augmentation,
which shows that learning the transformation parameters is much harder than learning the polar origin coordinates.

Between the unrestricted approaches, most variants of our model outperform the current
state of the art, with significant improvements when combined with \ccnn\ and/or test time augmentation.

Finally, we note that the \pcnn\ achieves a relatively high accuracy in this dataset because
the digits are mostly centered, so using the polar transform origin as the image center is reasonable.
Our method, however, outperforms it by a high margin, showing that even in this case,
it is possible to find an origin away from the image center that results in a more distinctive representation.

\begin{table}[htbp]
  \caption{Performance on rotated \mnist.
    Errors are averages of several runs, with standard deviations within parenthesis.
    Times are average training time per epoch.}
  \label{ptn:tab:rot}
  \centering
  \begin{tabular}{@{}lS[table-format=1.2(2)]S[table-format=4e3]S[table-format=2.2(2)]@{}}
    \toprule
    {Model}                                                  & {error [\%]}      & {params} & {time [s]}             \\
    \midrule
    PTN-S                                                    & \bgl 1.83 +- 0.04 & 27e3     & 3.64 +- 0.04           \\
    PCNN-S                                                   & 2.6 +- 0.08       & 22e3     & 2.61 +- 0.04           \\
    CCNN-S                                                   & 5.76 +- 0.35      & 22e3     & 2.43 +- 0.02           \\
    STN-S                                                    & 7.87 +- 0.18      & 43e3     & 3.90 +- 0.05           \\
    HNet \cite{worrall2017harmonic}                          & \bgd 1.69         & 33e3     & 13.29 +- 0.19          \\
    P4CNN \cite{cohen2016group}                              & 2.28              & 22e3     & {-}                    \\
    \midrule
    PTN-B+                                                   & 1.14 +- 0.08      & 129e3    & 4.38 +- 0.02           \\
    PTN-B++                                                  & \bgl 0.95 +- 0.09 & 129e3    & 4.38 \footref{ptn:fn:testtime} \\
    PTN-CNN-B+                                               & 1.01 +- 0.06      & 254e3    & 7.36                   \\
    PTN-CNN-B++                                              & \bgd 0.89 +- 0.06 & 254e3    & 7.36 \footref{ptn:fn:testtime} \\
    PCNN-B+                                                  & 1.37 +- 0.01      & 124e3    & 3.30 +- 0.04           \\
    CCNN-B+                                                  & 1.53 +- 0.07      & 124e3    & 2.98 +- 0.02           \\
    STN-B+                                                   & 1.31 +- 0.05      & 146e3    & 4.57 +- 0.04           \\
    OR-TIPooling \cite{Zhou_2017_CVPR}                       & 1.54              & 1000e3   & {-}                    \\
    TI-Pooling \cite{Laptev_2016_CVPR}                       & 1.2               & 1000e3   & 42.90                  \\
    RotEqNet \cite{marcos16_rotat_equiv_vector_field_networ} & 1.01              & 100e3    & {-}                    \\
    \bottomrule
  \end{tabular}
\end{table}

\subsection{Other \mnist\ variants}
We also perform experiments with other \mnist\ variants.
\mnistr\ and \mnistrts\ are replicated from \cite{jaderberg15nips}.
We introduce \simtwomnist, with a more challenging set of transformations from \simtwo.
See \cref{ptn:sec:dsets} for more details about the datasets.

\cref{ptn:tab:sim2} shows the results. We can see that the our model's performance matches
the \stns\ on both \mnistr\ and \mnistrts.
The deformations on these datasets are mild and training data is plenty,
so the performance may be saturated.

\stepcounter{footnote}
\footnotetext{Test time performance is 8x slower when using test time augmentation. \label{ptn:fn:testtime}}

On \simtwomnist, though, the deformations are more challenging and the training set five times smaller.
The \pcnn\ performance is significantly lower, which reiterates the importance of predicting the best polar origin.
\textcite{worrall2017harmonic} outperform the other methods (except our \ptn),
thanks to its translation and rotation equivariance properties.
Our method is more efficient both in number of parameters and training time, and is also equivariant to dilations, achieving the best performance by a large margin.

\begin{table}[htbp]
  \caption{Performance on \mnist\ variants.}
  \label{ptn:tab:sim2}
  \centering
      \scriptsize
      \begin{tabular}{@{} l
        S[table-format=1.2(2)]S[table-format=4]S[table-format=2.1, table-figures-decimal=1,table-auto-round] c
        S[table-format=1.2(2)]S[table-format=3]S[table-format=2.1, table-figures-decimal=1,table-auto-round] c
        S[table-format=2.2(3)]S[table-format=3]S[table-format=2.1, table-figures-decimal=1,table-auto-round] @{}}
        \toprule
                                                               & \multicolumn{3}{c}{\mnistr} &                 & \multicolumn{3}{c}{\mnistrts} &     & \multicolumn{3}{c}{\simtwomnist\footref{ptn:fn:noaug}}                                                                             \\
        \cmidrule{2-4} \cmidrule{6-8} \cmidrule{10-12}
                                                               & {error}                     & {par.}          & {\multirow{2}{*}{time}}         &     & {error}           & {par.}          & {\multirow{2}{*}{time}} &     & {error}           & {pars}          & {\multirow{2}{*}{time}} \\
        \addlinespace[-5pt]
                                                               & {[\%]}                      & {$\times 10^3$} &                               &     & {[\%]}            & {$\times 10^3$} &                       &     & {[\%]}            & {$\times 10^3$} &                       \\
        \midrule
        PTN-S+                                                 & 0.88 +- 0.04                & 29              & 19.72                         &     & 0.78 +- 0.05      & 32              & 24.48                 &     & \bgl 5.44 +- 0.03 & 35              & 11.92                 \\
        PTN-B+                                                 & \bgl 0.62 +- 0.04           & 129             & 20.37                         &     & 0.57 +- 0.03      & 134             & 28.74                 &     & \bgd 5.03 +- 0.11 & 134             & 12.02                 \\
        PCNN-B+                                                & 0.81 +- 0.04                & 124             & 13.97                         &     & 0.70 +- 0.01      & 129             & 17.19                 &     & 15.46 +- 0.22     & 129             & 5.33                  \\
        CCNN-B+                                                & 0.74 +- 0.01                & 124             & 12.79                         &     & 0.62 +- 0.07      & 129             & 15.97                 &     & 11.73 +- 0.57     & 129             & 5.28                  \\
        STN-B+                                                 & \bgd 0.61 +- 0.02           & 146             & 23.12                         &     & \bgl 0.54 +- 0.02 & 150             & 27.90                 &     & 12.35 +- 1.61     & 150             & 10.41                 \\
        STN \cite{jaderberg15nips}                             & 0.7                         & 400             & {-}                           &     & \bgd 0.5          & 400             & {-}                   &     & {-}               & {-}             & {-}                   \\
        HNet\footref{ptn:fn:modver} \cite{worrall2017harmonic} & {-}                         & {-}             &                               & {-} & {-}               & {-}             &                       & {-} & 9.28 +- 0.05      & 44              & 31.42                 \\
        TI-Pooling \cite{Laptev_2016_CVPR}                     & 0.8                         & 1000            & {-}                           &     & {-}               & {-}             & {-}                   &     & {-}               & {-}             & {-}                   \\
        \bottomrule
      \end{tabular}
\end{table}
\stepcounter{footnote}
\footnotetext{No augmentation is used with \simtwomnist, despite the + suffixes. \label{ptn:fn:noaug}}
\stepcounter{footnote}
\footnotetext{Our modified version, with two extra layers with subsampling to account for larger input. \label{ptn:fn:modver}}

\subsection{Ablation Study}
We quantify the performance boost obtained with wrap around padding, polar origin augmentation,
and training time rotation augmentation.
Results are with our PTN-B variant trained on Rotated \mnist.
We remove one operation at a time and verify that the performance consistently drops, which indicates that all operations are indeed helpful.
\Cref{ptn:tab:ablation} shows the results.

\begin{table}[htbp]
  \caption{Ablation study.
    Rotation and polar origin augmentation during training time, and wrap around padding all contribute to reduce the error.
    Results are from PTN-B on the rotated \mnist. }
  \label{ptn:tab:ablation}  
  \centering
  \begin{tabular}{cccS[table-format=1.2(2)]}
    \toprule
    Origin aug. & Rotation aug. & Wrap padding & {Error [\%]} \\
    \midrule
    Yes         & Yes           & Yes          & 1.12 +- 0.03 \\
    No          & Yes           & Yes          & 1.33 +- 0.12 \\
    Yes         & No            & Yes          & 1.46 +- 0.11 \\
    Yes         & Yes           & No           & 1.31 +- 0.06 \\
    \bottomrule
  \end{tabular}
\end{table}

\subsection{Visualization}
\begin{figure}[htbp]
  \begin{center}
    \includegraphics[width=\linewidth]{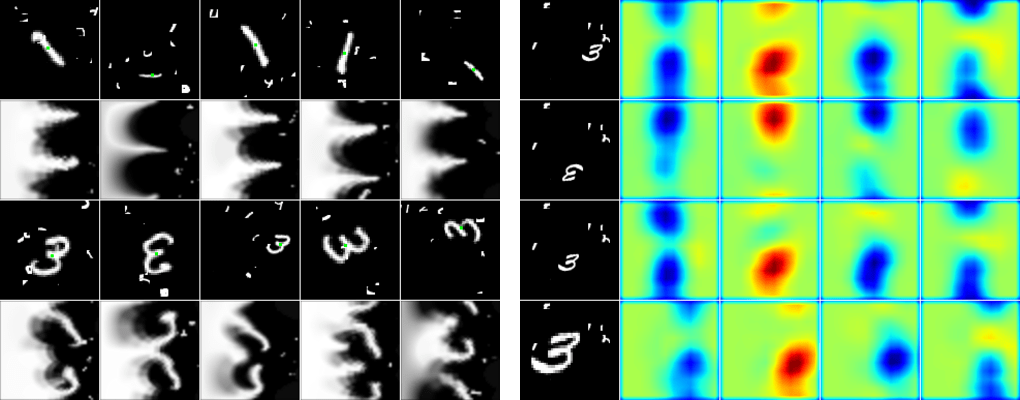}
  \end{center}
  \caption{
    \textbf{Left:} The rows alternate between samples from simtwomnist\, where the predicted origin is shown in green, and their learned polar representation.
    Note how rotations and dilations of the object become shifts.
    \textbf{Right:} Each row shows a different input and correspondent feature maps on the last convolutional layer.
    The first and second rows show that the $180^\circ$ rotation results in a half-height vertical shift of the feature maps.
    The third and fourth rows show that the $2.4 \times$ dilation results in a shift right of the feature maps.
    The first and third rows show invariance to translation.}
\label{ptn:fig:vis}
\end{figure}

We visualize network activations to confirm our claims about invariance to translation and equivariance to rotations and dilations.

\Cref{ptn:fig:vis} (left) shows some of the predicted polar origins and the results of the polar transform.
We can see that the network learns to reject clutter and find a suitable origin for the polar transform, and that the representation after the polar transformer module does present the properties claimed.

We proceed to visualize how the properties are preserved in deeper layers.
\Cref{ptn:fig:vis} (right) shows the activations of selected channels from the last convolutional layer, for different rotations, dilations, and translations of the input.
The reader can verify that the equivariance to rotations and dilations, and the invariance to translations are indeed preserved during the sequence of convolutional layers.

\subsection{Street-view house numbers (SVHN)}
In order to demonstrate the efficacy of our method on real-world \rgb\ images,
we run experiments on the \svhn\ dataset~\cite{netzer2011reading},
and a rotated version that we introduce (\rotsvhn).
The dataset contains cropped images of single digits,
as well as the slightly larger images from where the digits are cropped.
Using the latter, we can extract $360^\circ$ rotated digits without introducing artifacts.
\Cref{ptn:fig:rotsvhn} shows some examples from \rotsvhn.

We use a $32$ layer residual network~\cite{he2016deep} as a baseline (\resnet{32}).
The \ptnresnet{32} has $8$ residual convolutional layers as the origin predictor,
followed by a \resnet{32}.

In contrast with handwritten digits,
the sixes and nines in house numbers are usually indistinguishable.
To remove this effect from our analysis, we also run experiments removing those classes
from the datasets (which is indicated by appending a minus to the dataset name).
\Cref{ptn:tab:svhn} shows the results.

Note that rotations cause a significant performance loss with the conventional \resnet;
the error increases from \num{2.09}{\%} to \num{5.39}{\%},
even when removing sixes and nines from the dataset.
With our model, on the other hand, the error goes from \num{2.85}{\%} to \num{3.96}{\%},
which shows more robustness to the perturbations,
although the performance on the unperturbed datasets is slightly lower.
We expect the \ptns\ to be even more advantageous when large scale variations are also present.
\begin{figure}[htbp]
  \begin{center}
    \includegraphics[width=\linewidth]{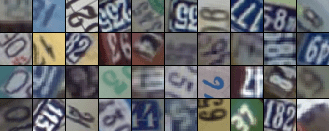}
  \end{center}
  \caption{\rotsvhn\ samples.
    Since the digits are cropped from larger images, no artifacts are introduced when rotating.
    The sixes and nines are indistinguishable when rotated.
    Note that there are usually visible digits on the sides, which pose a challenge for classification and PTN origin prediction.}
\label{ptn:fig:rotsvhn}
\end{figure}

\begin{table}[htbp]
  \caption{\svhn classification performance (error in \%).
    The minus suffix indicate removal of sixes and nines.
    PTN shows slightly worse performance on the unperturbed dataset, but is clearly superior when rotations are present.}
  \label{ptn:tab:svhn}
  \centering
  \begin{tabular}{@{} lSSSS @{}}
    \toprule
    & {\svhn} & {\rotsvhn} & {\svhn-} & {\rotsvhn-}\\
    \midrule
    \ptnresnet{32} (Ours) & 2.82 +- 0.07 &  7.90 +- 0.14 & 2.85 +- 0.07 &  3.96 +- 0.04 \\
    \resnet{32} &  2.25 +- 0.15 & 9.83 +- 0.29 &  2.09 +- 0.06 & 5.39 +- 0.09\\
    \bottomrule
  \end{tabular}
\end{table}

\subsection{Extension to 3D object classification}
We extend our model to perform $3$D object classification from voxel occupancy grids.
We assume inputs perturbed by random rotations around an axis from a family of parallel axes.
In this case, a rotation around that axis corresponds to a translation in cylindrical coordinates.

In order to achieve equivariance to rotations, we predict an axis and use it as the origin to transform to cylindrical coordinates.
If the axis is parallel to one of the input grid axes, the cylindrical transform amounts to channel-wise polar transforms, where the origin is the same for all channels and each channel is a $2$D slice of the $3$D voxel grid.
In this setting, we can just apply the polar transformer layer to each slice.

We use a technique similar to the anisotropic probing of \textcite{vam} to predict the axis.
Let $z$ denote the input grid axis parallel to the rotation axis.
We treat the dimension indexed by $z$ as channels, and run regular $2$D convolutional layers, reducing the number of channels on each layer, eventually collapsing to a single $2$D heatmap.
The heatmap centroid gives one point of the axis, and the direction is parallel to $z$.
In other words, the centroid is the origin of all channel-wise polar transforms.
We then proceed with a regular $3$D \cnn\ classifier, acting on the cylindrical representation.
The $3$D convolutions are equivariant to translations; since they act on cylindrical coordinates, the learned representation is equivariant to input rotations around axes parallel to $z$.

The axis prediction part of the cylindrical transformer network contains four $2$D blocks,
with $5\times 5$ kernels and $32$, $16$, $8$, and $4$ channels, no subsampling.
The classifier comprises eight $3$D convolutional blocks,
with $3\times 3 \times 3$ kernels,
the following number of filters: $32$, $32$, $32$, $64$, $64$, $64$, $128$, $128$,
and subsampling whenever the number of filters increase.
Total number of parameters is approximately \num{1}{M}.

We run experiments on ModelNet40~\cite{wu20153d}, which contains objects rotated around the gravity direction ($z$).
\Cref{ptn:fig:chair_bench_cyl} shows examples of input voxel grids and their cylindrical coordinates representation, while table~\ref{ptn:tab:res3d} shows the classification performance.
To the best of our knowledge, our method outperformed all published voxel-based methods,
even with no test time augmentation, at the time of the original submission~\cite{esteves2018polar}.
However, the multi-view based methods generally outperform the voxel-based~\cite{vam}.

Note that we could also achieve equivariance to scale by using log-cylindrical or log-spherical coordinates, but none of these change of coordinates would result in equivariance to arbitrary $3$D rotations.

\begin{figure}[htbp]
  \begin{center}
    \includegraphics[width=\linewidth]{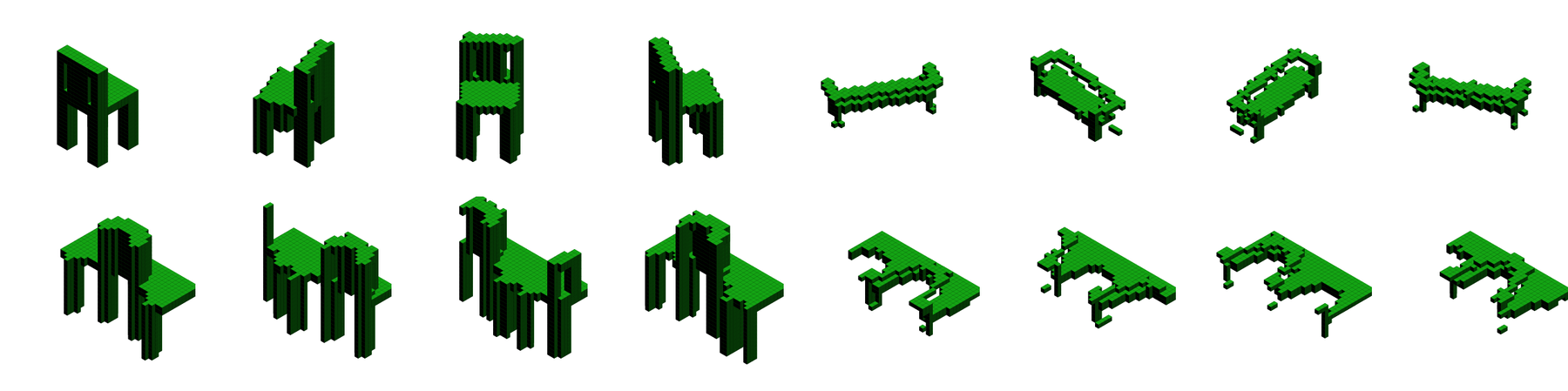}
  \end{center}
  \caption{Top: rotated voxel occupancy grids.
    Bottom: corresponding cylindrical representations.
    Note how rotations around a vertical axis correspond to translations over a horizontal axis.}
\label{ptn:fig:chair_bench_cyl}
\end{figure}

\begin{table}[htbp]
  \caption{ModelNet40 classification performance. We compare only with voxel-based methods.}
  \label{ptn:tab:res3d}
  \centering
  \begin{tabular}{@{} lSS @{}}
    \toprule
    Model                             & {Avg. class accuracy [\%]} & {Avg. instance accuracy [\%]} \\
    \midrule
    Cylindrical transformer (Ours)    & \bgd 86.5                  & 89.9                          \\
    $3$D ShapeNets \cite{wu20153d}   & 77.3                       & {-}                           \\
    VoxNet \cite{maturana2015voxnet} & 83                         & {-}                           \\
    MO-SubvolumeSup \cite{vam}       & \bgl 86.0                  & 89.2                          \\
    MO-Aniprobing \cite{vam}         & 85.6                       & 89.9                          \\
    \bottomrule
  \end{tabular}

\end{table}

\section{Conclusion}
In this chapter, we presented a novel network whose output is invariant to translations and
equivariant to the group of dilated rotations.
Similarly to the spatial transformers~\cite{jaderberg15nips}, we directly predict the translation,
though we also provide equivariance for scaling and rotation through a change of coordinates.
Our model avoids the commonly used fully connected layers for pose regression
by taking the centroid of a heatmap as the predicted transformation.
We formulate equivariance with to dilated rotations as a group convolution,
which we compute by transforming the inputs to canonical coordinates.
Our results improve the state-of-the-art performance on Rotated \mnist\ by a large margin,
and outperform all other considered methods on a new dataset we call \simtwomnist.
We expect our approach to be applicable to other problems, where the presence of different orientations and scales hinder the performance of conventional \cnns.

\glsresetall
\chapter{Equivariance to icosahedral symmetries}
\label{emvn:sec:emvn}
\chaptersubtitle{The Equivariant Multi-View Networks}

\section{Introduction}
The proliferation of large scale $3$D datasets of objects \cite{wu20153d,shapenet2015} and whole
scenes \cite{chang17_matter,dai17_scann} enables training of deep learning models that  produce global
descriptors suitable for classification and retrieval tasks.

The first challenge that arises is how to represent the inputs.
Despite numerous attempts with volumetric \cite{wu20153d,maturana2015voxnet}, point-cloud
\cite{qi2017pointnet,simonovsky2017dynamic} and mesh-based \cite{masci2015geodesic,monti2017geometric}
representations, using multiple views of the $3$D input allows switching to the $2$D domain where all
the recent image based deep learning breakthroughs (e.g., \textcite{he2016deep}) can be directly applied,
resulting in state-of-the-art performance~\cite{su2015multi,kanezaki16_rotat}.

\Mv\ based methods require some form of view-pooling, which can be
(i) pixel-wise pooling over some intermediate convolutional layer~\cite{su2015multi},
(ii) pooling over the final $1$D view descriptors~\cite{su18_deeper_look_at_shape_class}, or
(iii) combining the final logits~\cite{kanezaki16_rotat}, which can be seen as independent voting.
These operations are usually invariant to view permutations.

Our key observation in this chapter is that the conventional view pooling
occurs before any joint processing of the set of views and
will inevitably discard useful features, leading to subpar descriptors.
We solve the problem by first realizing that each view can be associated with an element of the
rotation group \SO{3}, so the natural way to combine multiple views is as a function on the group.
A traditional \cnn\ produces view descriptors that compose this function.
We design a group-convolutional network (\acrshort{G-CNN}, inspired by \textcite{cohen2016group})
to learn representations that are equivariant to transformations from the group.
This differs from the invariant representations obtained through usual view-pooling that discards  information.
We obtain invariant descriptors useful for classification and retrieval by pooling over the last \gcnn\ layer.
Our \gcnn\ has filters with localized support on the group and learns hierarchically more complex
representations as we stack more layers and increase the receptive field.

We take advantage of the finite nature of multiple views and
consider finite rotation groups like the icosahedral,
in contrast with~\textcite{s.2018spherical,esteves18eccv}
(described in \cref{sph:sec:sphcnn})
which operate on the continuous group.
To reduce the computational cost of processing one view per group element,
we greatly reduce the number of required views by transforming views to canonical coordinates with
respect to the group of in-plane dilated rotations (log-polar coordinates).
This yields an initial representation on a \hspc\ of the group,
which is lifted to a function on the group via cross-correlation, while maintaining equivariance.

We focus on $3$D shapes but our model is applicable to any task where multiple views can represent the input, as demonstrated by an experiment on panoramic scenes.

\begin{figure}[htbp]
  \centering
  \includegraphics[width=\textwidth]{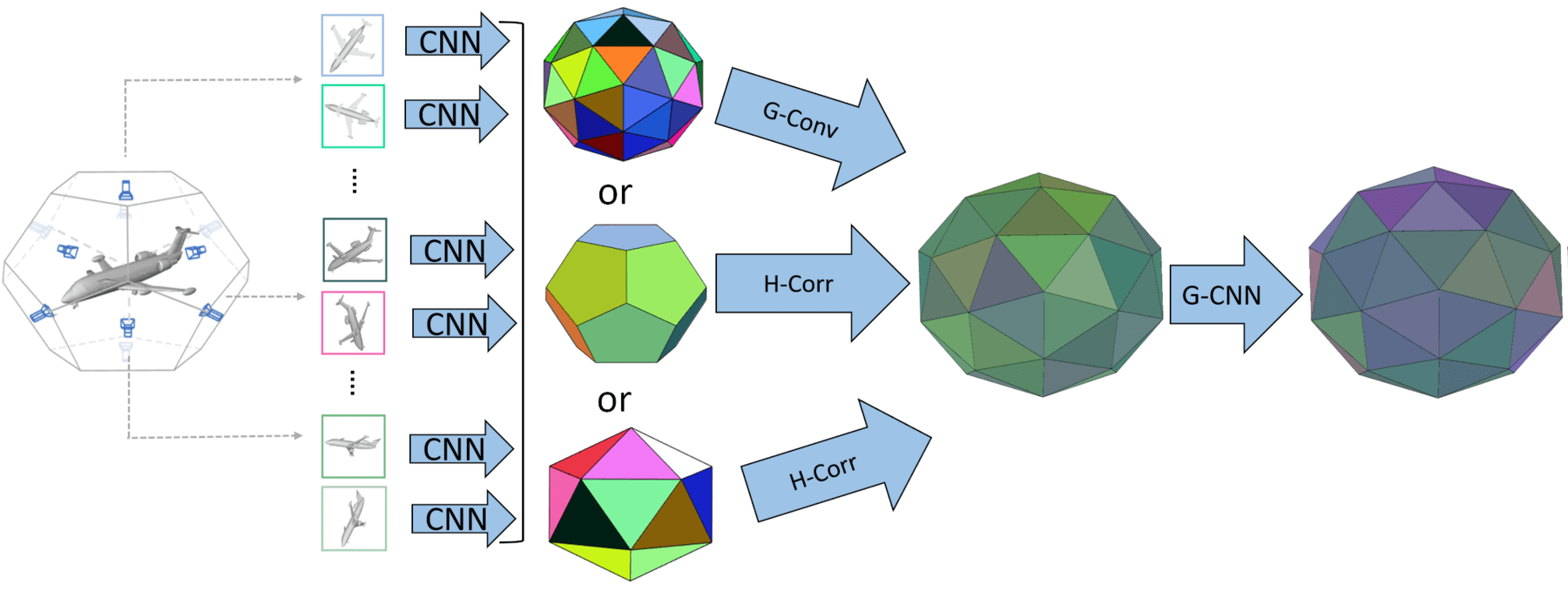}
  \caption{Our \acrfullpl{EMVN} aggregate multiple views as functions on rotation groups,
    then processed with group convolutions.
    This ensures equivariance to $3$D rotations and
    jointly reasoning over all views, leading to superior shape descriptors.
    We show functions on the icosahedral group and \acrfull{hspace}
    on appropriate solids.
    Each view is first processed by a \cnn\ and resulting descriptors are associated with a group (or \hspc) element.
    When views are identified with an \hspc, the first operation is a cross-correlation that lifts  features to the group.
    Once we have an initial representation on the group, a group \cnn\ is applied.}
    \label{emvn:structure}
\end{figure}

\Cref{emvn:structure} illustrates our model.
The contributions of this chapter are:
\begin{itemize}
\item We introduce a novel method of aggregating multiple views whether ``outside-in'' for $3$D shapes or ``inside-out'' for panoramic views.
  Our model exploits the underlying group structure, resulting in equivariant features that are functions on the rotation group.
\item We introduce a way to reduce the number of views while maintaining equivariance, via a transformation to canonical coordinates of in-plane rotations followed by homogeneous space cross-correlation.
\item We explore the finite rotation groups and homogeneous spaces and present a discrete \gcnn\ model on the largest group to date, the icosahedral group.
  We further explore the concept of filter localization for this group.
\item We achieve state of the art performance on multiple shape retrieval benchmarks, both in canonical poses and perturbed with rotations, and show applications to panoramic scene classification.
\end{itemize}

Most of the content in this chapter appeared originally in~\textcite{Esteves_2019_ICCV}.
Source code is available at \url{https://github.com/daniilidis-group/emvn}.
\section{Related work}

\paragraph{$3$D shape analysis}
Performance of $3$D shape analysis is heavily dependent on the input representation.
The main representations are volumetric, point cloud and multi-view.

Early examples of volumetric approaches are in~\textcite{shapenet2015}, who introduced the ModelNet
dataset and trained a $3$D shape classifier using a deep belief network on voxel representations,
and \textcite{maturana2015voxnet}, who present a standard architecture with $3$D convolutional layers followed by fully connected layers.

\textcite{su2015multi} realized that by rendering multiple views of the $3$D input one can transfer the power of image-based \cnns\ to $3$D tasks.
They show that a conventional \cnn\ can outperform the volumetric methods even using only a single view of the input, while an \acrfull{MV} model further improves the classification accuracy.

\textcite{vam} study volumetric and multi-view methods and propose improvements to both;
\textcite{kanezaki16_rotat} introduce an \mv\ approach that achieves state-of-the-art classification
performance by jointly predicting class and pose, though without explicit pose supervision.

\textcite{gvcnn} learns how to combine different view descriptors to obtain a
view-group-shape representation; they refer to arbitrary combinations of features as ``groups''.
This differs from our usage of the term  ``group'' which is the algebraic definition.

Point-cloud based methods~\cite{qi2017pointnet} achieve intermediate performance between volumetric
and multi-view, but are much more efficient computationally.
While meshes are arguably the most natural representation and widely used in computer graphics, only
limited success has been achieved with learning models operating directly on them
\cite{masci2015geodesic,monti2017geometric}.

In order to better compare $3$D shape descriptors, we will focus on the retrieval performance.
Recent approaches show significant improvements on retrieval: \textcite{you2018pvnet} combine point cloud and \mv\ representations,
\textcite{yavartanoo18_spnet} introduce multi-view stereographic projection, and
\textcite{han2019seqviews2seqlabels} implement a recurrent \mv\ approach.

We also consider more challenging tasks on rotated ModelNet and the SHREC'17~\cite{savva2017shrec}
large scale retrieval challenge, which contains rotated shapes.
The presence of arbitrary rotations motivates the use of equivariant representations.
\paragraph{Equivariant representations}
A number of workarounds have been introduced to deal with $3$D shapes in arbitrary orientations.
Typical examples are training time rotation augmentation and/or test time voting~\cite{vam} and
learning an initial rotation to a canonical pose~\cite{qi2017pointnet}.
The view-pooling in \textcite{su2015multi} is invariant to permutations of the set of input views.

A principled way to handle rotations is to use representations that are equivariant by design.
There are mainly three ways to embed equivariance into \cnns.
The first way is to constrain the filter structure, which is similar to Lie generator based approach ~\cite{segman1992canonical,hel1998canonical}.
\textcite{worrall2017harmonic} take advantage of circular harmonics to have both translational and
$2$D rotational equivariance in \cnns.
\textcite{tensor} extends this idea introducing a tensor field to keep translational and rotational
equivariance for $3$D point clouds,
while \textcite{weiler3dsteerable} does the same for voxel grids.

The second way is through a change of coordinates;
\textcite{henriques2017warped,esteves2018polar}%
\footnote{\textcite{esteves2018polar} is also described in \cref{ptn:sec:ptn}}
take the log-polar transform of the input and transfer rotational and scaling equivariance about a
single point to translational equivariance.

The third way is to make use of an equivariant filter orbit.
\textcite{cohen2016group} proposed the \gcnns\ with the square cyclic rotation group,
later extended to the hexagon~\cite{hex}.
\textcite{worrall2018cubenet} proposed CubeNet using Klein's four-group on $3$D voxelized data.
\textcite{cohencube} implement $3$D group convolution on the octahedral symmetry group
for volumetric CT images.
\textcite{CohenWKW19} recently considered functions on the icosahedron,
however their convolutions are on the cyclic group and not on the icosahedral as ours.
\textcite{s.2018spherical,esteves18eccv} focus on the continuous group \SO{3},
and use the spherical harmonic transform
for exact implementation of spherical convolution or correlation.
The main issue with both approaches is that the input spherical representation
is lossy and does not capture the complexity of an object's shape;
they are also less efficient and face bandwidth challenges.

\section{Preliminaries}
Our goal is to leverage symmetries in data.
A {\em symmetry} is an operation that preserves some structure of an object.
If the object is a discrete set with no additional structure, each operation can be seen as a permutation of its elements.

The term \emph{group} is used in its classic algebraic definition of a set with an operation satisfying the closure, associativity, identity, and inversion properties (\cref{h:def:group}).
A transformation group like a permutation is the ``missing link between abstract group and the notion of symmetry'' ~\cite{miller72}.

We refer to \emph{view} as an image taken from an oriented camera.
This differs from \emph{viewpoint} that refers to the optical axis direction, either \emph{outside-in} for a moving camera pointing at a fixed object, or \emph{inside-out} for a fixed camera pointing at different directions.
Multiple \emph{views} can be taken from the same \emph{viewpoint}; they are related by in-plane rotations.

\paragraph{Equivariance}
Representations that are equivariant by design are an effective way to exploit symmetries.
Recall the definition of equivariance for a map \fun{\Phi}{X}{Y}, group $G$,
and left group actions $\tau_g$ and $\tau'_g$ on the sets $X$ and $Y$, respectively.
We say that $\Phi$ is equivariant to $G$ if for any $g \in G$ and $f \in X$,
\[\Phi(\lambda_g(f))=\lambda'_{g}(\Phi(f)). \]
In the context of \cnns, $X$ and $Y$ are sets of input and feature representations, respectively.
This definition encompasses the case when $\lambda'$ is the identity,
making $\Phi$ invariant to $G$ and discarding information about $g$.
In this chapter, we are interested in non-degenerate cases that preserve information.

\paragraph{Convolution on groups}
We represent multiple views as a functions on a group and seek equivariance to the group,
so \gconv\ is the natural operation for our method.
Recall the planar convolution between \fun{f,\,k}{\R^2}{\R}, which is the main operation of \cnns:
\begin{equation*}
  (f * k)(y) = \int\limits_{x\in\R^2} f(x)k(y-x)\,dx.
\end{equation*}
We can interpret this convolution as an operation over the group of translations on the plane,
where the group action is addition of coordinate values;
the convolution is equivariant to translation.

Convolution generalizes to any group $G$ under mild conditions related to the
Haar measure as described in \cref{h:haar}.
We define the group convolution between two functions on the group \fun{f,\,k}{G}{\R} as
\begin{equation}
  (f * k)(y) = \int\limits_{g\in G} f(g)k(g^{-1}y)\,dg, \label{emvn:eq:gconv}
\end{equation}
which is equivariant to group actions from $G$ as shown in \cref{h:sec:gconv}.

\paragraph{Convolution on homogeneous spaces}
\label{emvn:sec:homogeneous}
For efficiency, we may relax the requirement of one view per group element and consider only one view per element of a homogeneous space of lower cardinality.
For example, we can represent the input on the 12 vertices of the icosahedron (a \hspc),
instead of on the 60 rotations of the icosahedral group.

A homogeneous space $X$ of a group $G$ is defined as a space where $G$ acts transitively: for any $x_1,x_2 \in X$, there exists $g\in G$ such that $x_2=gx_1$ (\cref{h:def:hspace}).

Two convolution-like operations can be defined between functions on homogeneous spaces \fun{f,\,h}{X}{\R}:
\begin{align}
  (f * h)(y) &= \int\limits_{g\in G} f(g\nu)h(g^{-1}y)\,dg, \label{emvn:eq:hconv} \\
  (f \star h)(g) &= \int\limits_{x\in X} f(gx)h(x)\,dx, \label{emvn:eq:hcorr}
\end{align}
where $\nu \in X$ is an arbitrary canonical element.
We call \cref{emvn:eq:hconv} \acrfull{hconv},
and \cref{emvn:eq:hcorr} \acrfull{hcorr}.
The integrals in the continuous case depend on the Haar measure and
its induced measure on homogeneous spaces, as shown in \cref{h:haar}.
Note that convolution produces a function on the homogeneous space $X$
while correlation lifts the output to the group $G$.
Both operations are equivariant.
For \acrshort{hconv} (\cref{emvn:eq:hconv}), where \fun{f,\,h}{X}{\R}, we have:
\begin{align*}
  (\lambda_{k}f* h)(y)
    &= \int\limits_{u\in U}f(k^{-1}u\nu)h(u^{-1}y)\,du\\
    &= \int\limits_{w\in G}f(w\nu)h((kw)^{-1}y)\,dw && (u \mapsto kw)\\
    &= \int\limits_{w\in G}f(w\nu)h(w^{-1}k^{-1}y)\,dw\\
    &=(f*h)(k^{-1}y)\\
    &=\lambda_{k}(f*h)(y).
\end{align*}              %
For \acrshort{hcorr} (\cref{emvn:eq:hcorr}), where \fun{f,\,h}{X}{\R}, we have:
\begin{align*}
(\lambda_{k}f \star h)(g)
&=\int\limits_{x\in X}f(k^{-1}gx)h(x)\,dx\\
&=(f \star h)(k^{-1}g)\\
&=\lambda'_{k}(f \star h)(y).
\end{align*}
In this case, $\lambda'_{k}$ is not necessarily equal $\lambda_{k}$ because inputs and outputs may be in different spaces.

We refer to~\textcite{kondor18icml,cohen2019general}
for more detailed expositions on group and homogeneous space convolution
in the context of neural networks.

\paragraph{Finite rotation groups}
\label{emvn:sec:rotgroups}
Our representation is a finite set of views identified with a group of rotations,
so we consider finite subgroups of the rotation group \SO{3}.
A finite subgroup of \SO{3} can be the cyclic group $\mathcal{C}_k$ of multiples of $2\pi/k$, the dihedral group $\mathcal{D}_k$ of symmetries of a regular $k$-gon, the tetrahedral, octahedral, or  icosahedral group~\cite{artin}.

Our main results are on the icosahedral group $\I$, the 60-element non-abelian group of symmetries of the icosahedron (illustrated in \cref{emvn:rotation60,emvn:Caylay2}).
The symmetries can be divided in sets of rotations around a few axes.
For example, there are five rotations around each axis passing through vertices of the icosahedron or three rotations around each axis passing through its faces centers.

\Cref{emvn:rotation60} illustrates all elements of the group by their actions on one edge of the icosahedron,
while \cref{emvn:Caylay2} shows the Cayley table; the color assigned for each group element matches the color in \cref{emvn:fig:equi60}.
\begin{figure}[htbp]
\centering
  \includegraphics[width=\textwidth]{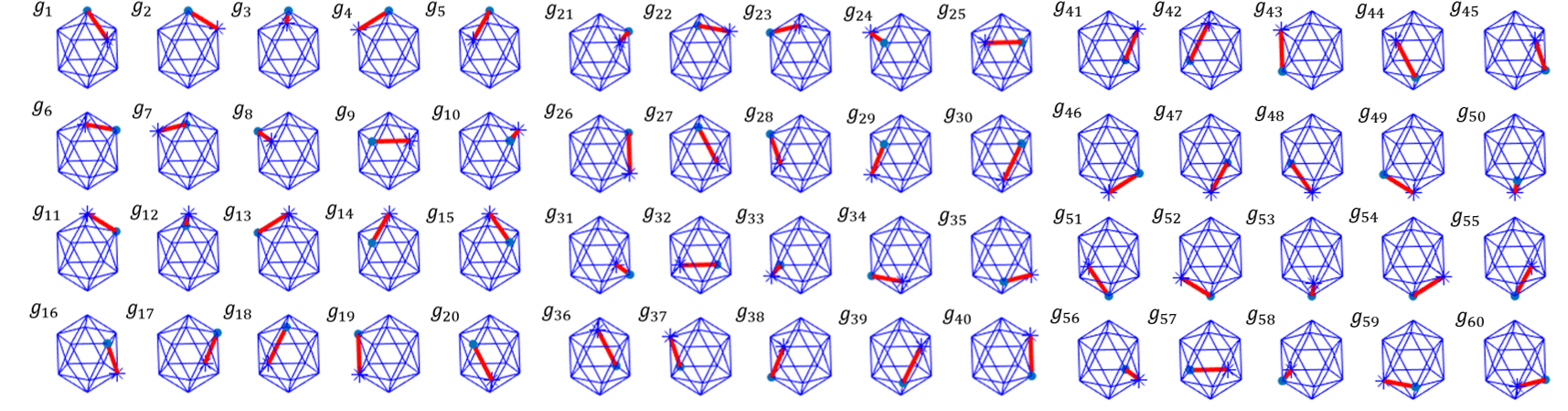}
  \caption{The 60 rotations of the icosahedral group $\I$.
    We consider $g_1$ the identity, highlight one edge, and show how each $g_i \in \I$ transforms the highlighted edge.}
  \label{emvn:rotation60}
\end{figure}
\begin{figure}[htbp]
  \centering
  \includegraphics[width=0.5\linewidth]{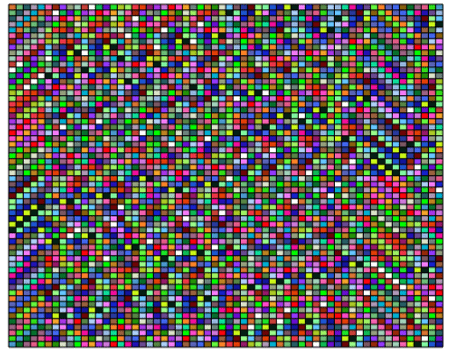}
  \caption{Cayley table for the icosahedral group $\I$.
    We can see that $\I$ is non-abelian, since the table is not symmetric.}
  \label{emvn:Caylay2}
\end{figure}

\paragraph{Equivariance via canonical coordinates}
\label{emvn:sec:canoncoords}
Some configurations (set of views identified with a discrete subgroup of \SO{3})
produce views that are related by in-plane rotations.
We leverage this to reduce the number of required views by obtaining rotation invariant view descriptors through a change to canonical coordinates followed by a \cnn.

\textcite{segman1992canonical} show that changing to a canonical coordinate system allows certain transformations of the input to appear as translations of the output.
For the group of dilated rotations on the plane (isomorphic to $\SO{2} \times \R^+$),
canonical coordinates are given by the log-polar transform.

Since planar convolutions are equivariant to translation,
converting an image to log-polar and applying a \cnn\ results
in features equivariant to dilated rotation,
which can be pooled to invariant descriptors on the last layer.
This is similar to \textcite{henriques2017warped} and
a simplified version of the model introduced in \cref{ptn:sec:ptn};
here we do not learn the transformation center.

\section{Method}
Our first step is to obtain $\abs{G}$ views of the input where each view $x_i$ is associated with a group element $g_i \in G$%
\footnote{Alternatively, we can use $\abs{X}$ views for a homogeneous space $X$ as shown in \cref{emvn:sec:fewerviews}.}.
Each view is input to a \cnn\ $\Phi_1$,
and we combine the $1$D descriptors extracted from the last layer
(before projection into the number of classes)
to form a function on the group \fun{y}{G}{\R^n}, where $y(g_i) = \Phi_1(x_i)$.
A \gcnn\ $\Phi_2$ operating on $G$ is then used to process $y$, and global average pooling on the last
layer yields an invariant descriptor useful for classification or retrieval.
Training is end-to-end.
\Cref{emvn:structure} shows the model.

\subsection{View configurations}
\label{emvn:sec:viewconf}
There are several possible view configurations of icosahedral symmetry,
consisting of vertices or faces of solids with this symmetry.
Two examples are associating viewpoints with faces/vertices of the icosahedron,
which are equivalent to the vertices/faces of its dual, the dodecahedron.
These configurations are based on platonic solids, which guarantee a uniform distribution of viewpoints.
By selecting viewpoints from the icosahedron faces, we obtain 20 sets of 3 views that differ only by \ang{120} in plane rotations; we refer to this configuration as $20\times 3$.
Similarly, using the dodecahedron faces we obtain the $12\times 5$ configuration.

In the context of $3$D shape analysis, multiple viewpoints are useful to handle self-occlusions and ambiguities.
Views that are related by in-plane rotations are redundant in this sense,
but necessary to keep the group structure.

To minimize redundancy, we propose to associate viewpoints with
the 60 vertices of the truncated icosahedron (which has icosahedral symmetry).
There is a single view per viewpoint in this configuration.
This is not a uniformly spaced distribution of viewpoints, but the variety is beneficial.
\Cref{emvn:fig:camera} shows some view configurations we considered.

\Cref{emvn:fig:equi60} shows that the map from $3$D object to
list of views determined by the icosahedral group is equivariant;
a rotation of the object incurs in a permutation of the list of views.

Note that our configurations differ from both the 80-views from \textcite{su2015multi} and
20-views from \textcite{kanezaki16_rotat} which are not isomorphic to any rotation group.
Their 12-views configuration is isomorphic to the more limited cyclic group.

\begin{figure}[htbp]
 \centering
 \includegraphics[width=\linewidth]{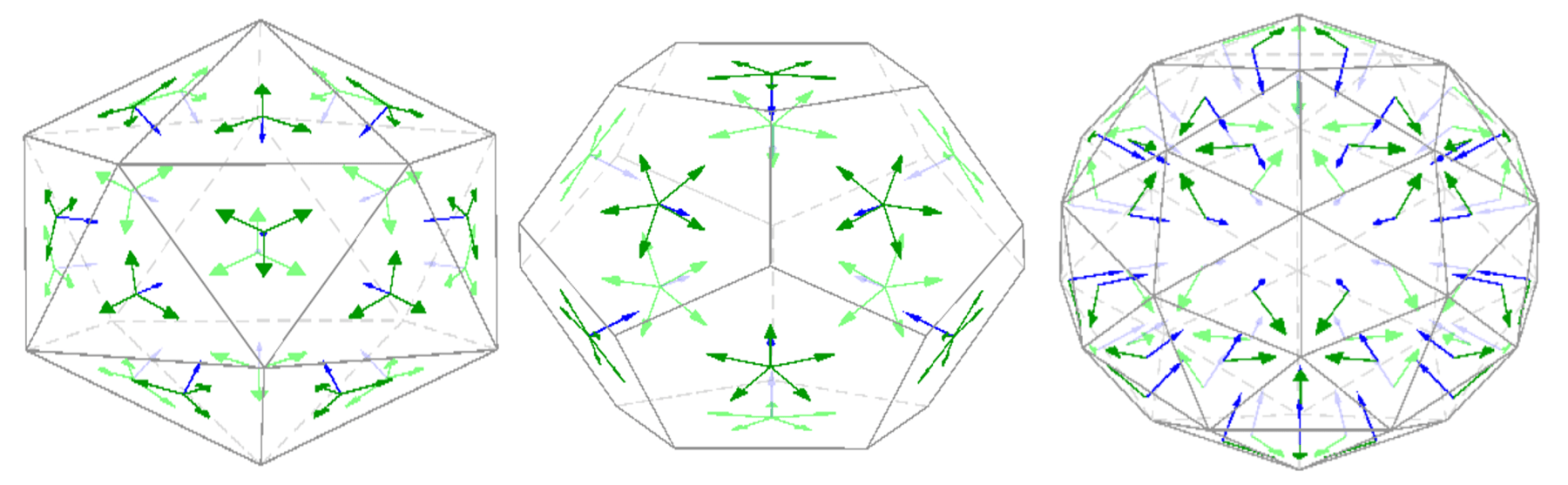}
\caption{Outside-in camera configurations considered.
  Left to right: $20\times 3$, $12\times 5$, and $60\times 1$.
  Blue arrows indicate the optical axis and green, the camera up direction.
  Object is placed at the intersection of all optical axes.
  Only the $60\times 1$ configuration avoids views related by in-plane rotations.}
\label{emvn:fig:camera}
\end{figure}

\begin{figure}[htbp]
  \centering
  \includegraphics[width=\linewidth]{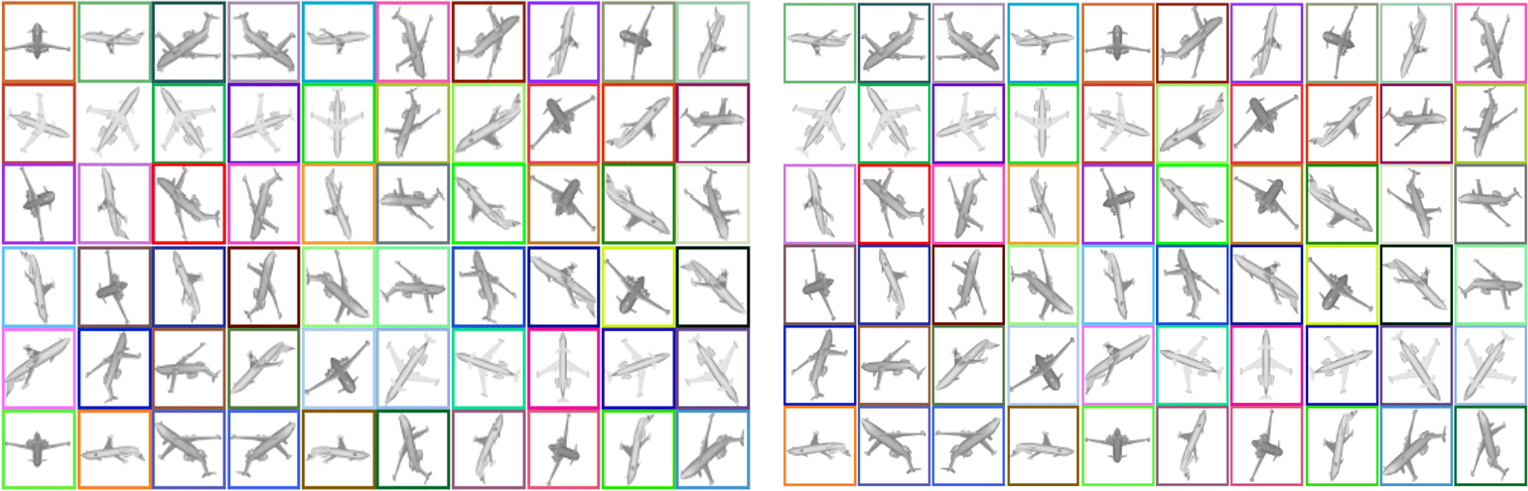}
  \caption{Equivariance of view configurations to $\I$.
    The views on the left and right are from $3$D shapes separated by a \ang{72} rotation in the discrete group.
    We mark corresponding views before and after rotation with same border color.
    Notice the five first views in the second row --
    the axis of rotation is aligned with their optical axis;
    the rotation effect is a shift right of one position for these views.
    It is clear that when $g \in \I$ is applied to the object,
    the views permute in the order given by the Cayley table,
    showing that the mapping from $3$D shape to view set is equivariant.}
    \label{emvn:fig:equi60}
\end{figure}

\subsection{Group convolutional network}
The core of the group convolutional part of our method is the discrete version of \cref{emvn:eq:gconv}.
A group convolutional layer with $c_i$ input and $c_j$ output channels,
and nonlinearity $\sigma$ is then given by
\begin{align}
  f_j^{\ell+1}(y) = \sigma\left( \sum_{i=1}^{c_i} \frac{1}{\abs{G}}\sum_{g\in G}
  f_i^\ell(g)h_{ij}(g^{-1}y) \right),
  \label{emvn:eq:gconvd}
\end{align}
where $f_i^\ell$ is the channel $i$ at layer $\ell$ and $h_{ij}$ is the filter between channels $i$ and $j$, where $1 \leq j \leq c_j$.
This layer is equivariant to actions of $G$.

Our most important results are on the icosahedral group $\I$ which has 60 elements and is the largest discrete subgroup of the rotation group \SO{3}.
To the best of our knowledge, this is the largest group ever considered in the context of discrete \gcnns.
Since $\I$ only coarsely samples \SO{3}, equivariance to arbitrary rotations is only approximate.
Our results show, nevertheless,
that the combination of invariance to local deformations provided by \cnns\
and exact equivariance by \gcnns\ is powerful enough to achieve state of the art performance in different tasks.

When considering the group $\I$, inputs to $\Phi_2$ are $60 \times n$ where $n$ is the number of channels in the last layer of $\Phi_1$ (e.g., $n=512$ for \resnet{18}).
There are $c_i \times c_j$ filters per layer,
each has at most as many parameters as the cardinality of the group.

\paragraph{\acrshort{mvcnn} as a special case}
The \mvcnn\ with late-pooling from \textcite{kanezaki16_rotat},
which outperforms the original by \textcite{su2015multi},
is a special case of our method where $\Phi_2$ just copies the inputs over
and the descriptor is $y$ averaged over $G$.
Suppose we fix the filters $h_{ij}(g)$ as follows, where $i$ and $j$ denote the output and
input channel, and $g$ denotes the element in group
\begin{align*}
h_{ij}(g)=\left\{
\begin{array}{rcl}
\abs{G} & & {i=j}\quad \text{and} \quad {g=e}\\
0 & & \text{otherwise.}
\end{array} \right.
\end{align*}

Applying group correlation with these filters, we get
\begin{align*}
(f\star h)_{i}(k) &= \sum_{j=1}^{c_1} \frac{1}{\abs{G}}\sum_{g \in G}f_j(kg)h_{ij}(g)\\
&=\left\{
\begin{aligned}
&f_i(k) && 1\leq i\leq c_{i}\\
&\quad 0  && i>c_{i},
\end{aligned} \right.
\end{align*}
where $c_{i}$ is the number of input channels.
In this way, the input is ``copied'' into the output and the our model produces the exact same descriptor as an \mvcnn\ with late pooling after the last layer.
The same result could also be achieved using group convolution.

\paragraph{Feature visualization} Our features are functions on a subgroup of the rotation group \SO{3}.
Since \SO{3} is a 3-manifold (which can be embedded in $\R^5$), visualization is challenging.
As we operate on the discrete subgroup of 60 rotations, we choose a solid with icosahedral symmetry and 60 faces as a proxy for visualization -- the pentakis dodecahedron, which is the dual of the truncated icosahedron (the ``soccer ball'' with 60 vertices).

We associate the color of each face with the feature vector at that element of the group.
Since the vector is high-dimensional (usually $256$ or $512$D),
we use \pca\ over all feature vectors in a layer (or groups of channels in a layer) and
project it into the 3 principal components that can be associated with an \rgb\ value.
The same idea is applied to visualize functions on the homogeneous spaces,
where the dodecahedron and icosahedron serve as proxies.
\Cref{emvn:fig:fmaps} shows some equivariant feature maps learned by our method.
\begin{figure}[htbp]
 \centering
 \includegraphics[width=\linewidth]{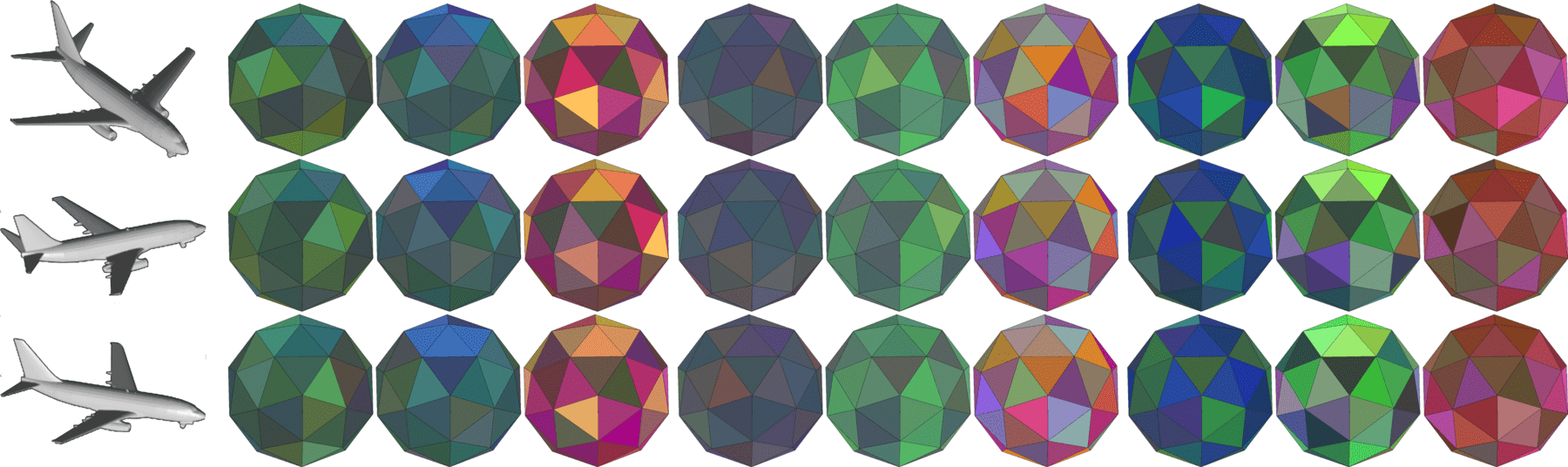}
 \caption{Features learned by our method are visualized on the pentakis dodecahedron,
   which has icosahedral symmetry so its 60 faces are identified
   with elements of the discrete rotation group $\I$.
   Columns show learned features from different channels/layers.
   The first two rows are related by a rotation of \ang{72} in $\I$.
   Equivariance is exact in this case,
   as can be verified by the feature maps rotating around the polar axis
   (notice how the top 5 cells shift one position).
   The first and third row are related by a rotation of \ang{36} around the same axis,
   which is in the midpoint between two group elements.
   Equivariance is approximate in this case, and features are a mixture of the two above.}
\label{emvn:fig:fmaps}
\end{figure}

\subsection{Equivariance with fewer views}
\label{emvn:sec:fewerviews}
As illustrated in \cref{emvn:fig:camera}, the icosahedral symmetries can be divided in sets of rotations around a few axes.
If we arrange the cameras such that they lie on these axes, images produced by each camera are related by in-plane rotations.

As shown in \cref{emvn:sec:canoncoords}, converting one image to canonical coordinates can transform in-plane rotations in translations.
We refer to converted images as ``polar images''.
Since fully convolutional networks can produce translation-invariant descriptors,
by applying them to polar images we effectively achieve invariance to in-plane
rotations~\cite{esteves2018polar,henriques2017warped},
which makes only one view per viewpoint necessary.
These networks require circular padding in the angular dimension (as described in \cref{ptn:sec:wrap}).

When associating only a single view per viewpoint,
the input is on a space of points instead of a group of rotations%
\footnote{They are isomorphic for the $60\times 1$ configuration.}.
In fact, the input is a function on a homogeneous space of the group; concretely,
for the view configurations we consider, it is on the
vertices of the icosahedron or dodecahedron.

We can apply discrete versions of convolution and correlation on homogeneous spaces as defined in \cref{emvn:sec:homogeneous}:
\begin{align}
  ^*f_j^{\ell+1}(y) &= \sigma\left( \sum_{i=1}^{c_i}\sum_{g\in G} f_i^\ell(g\nu)h_{ij}(g^{-1}y) \right),\\
  {^\star}f_j^{\ell+1}(g) &= \sigma\left( \sum_{i=1}^{c_i}\sum_{x\in X} f_i^\ell(gx)h_{ij}(x) \right).
\end{align}

The benefit of this approach is that since it uses five (resp. three) times fewer views
when starting from the $12\times 5$ (resp. $20\times 3$) configuration,
it is roughly five (resp. three) times faster as most of the computation occurs before the \gcnn.
The disadvantage is that learning from polar images can be challenging.
\Cref{emvn:fig:polar} shows one example of polar images produced from views.

When inputs are aligned (in canonical pose), an equivariant intermediate representation is not
necessary; in this setting, we can use the same method to reduce the number of required views,
but without the polar transform.
\begin{figure}[htbp]
  \centering
  \includegraphics[width=\linewidth]{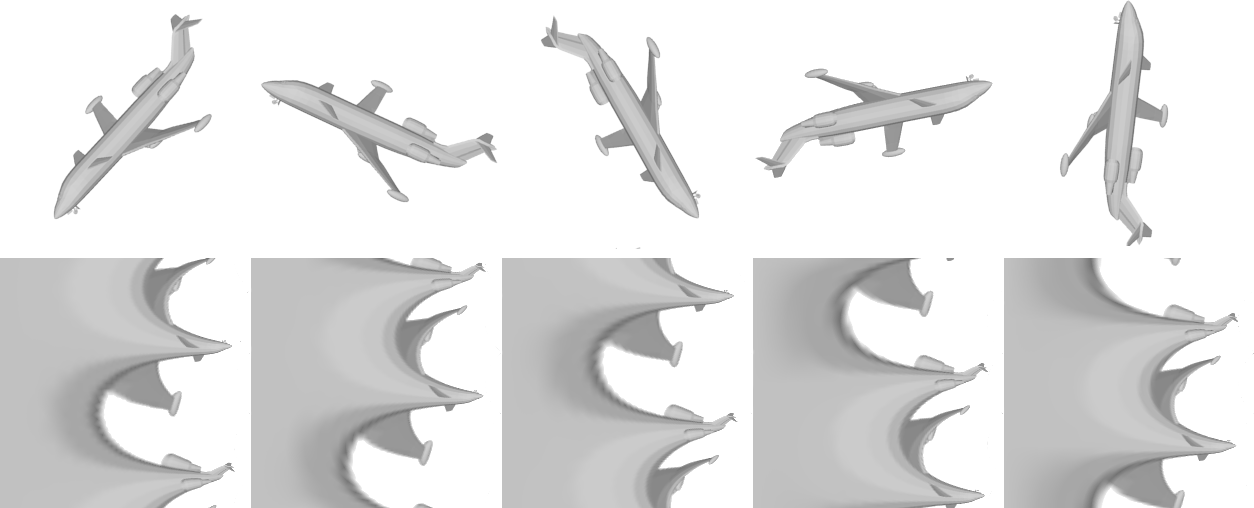}
  \caption{One subset of in-plane related views from the $12\times 5$ configuration and correspondent polar images.
    Note how the polar images are related by circular vertical shifts so their CNN descriptors are approximately invariant to the in-plane rotation.
    There are 12 such subsets for the $12\times 5$ configuration and 20 for the $20\times 3$; this allows us to maintain equivariance with 12 or 20 views instead of 60.}
 \label{emvn:fig:polar}
\end{figure}

\subsection{Filter localization}
\Gcnns\ filters are functions on a group $G$, which can have up to $\abs{G}$ entries.
Recent results obtained with deep \cnns\ show the benefit of using limited support filters
(the use of $3\times 3$ kernels throughout is common).
The advantages are two-fold: (i) convolution with limited support is computationally more efficient,
and (ii) it allows learning of hierarchically more complex features as layers are stacked.
Inspired by this idea, we introduce localized filters for discrete \gcnns%
\footnote{Localization for the continuous case was introduced in \textcite{esteves18eccv};
  we discuss it in \cref{sph:sec:sphcnn}.}.
For a filter \fun{h}{G}{\R}, we simply choose a subset $S$ of $G$ to have nonzero filter values
while $h(G - S)$ is set to zero.
Since $S$ is a fixed hyperparameter, we can compute \cref{emvn:eq:gconvd} more efficiently:
\begin{align}
  f_j^{\ell+1}(y) = \sigma\left( \sum_{i=1}^{c_i}\sum_{g\in S} f_i^\ell(yg^{-1})h_{ij}(g) \right).
\end{align}

To ensure filter locality, it is desirable that elements of $S$ are close to each other in the manifold of rotations.
The 12 smallest rotations in $\I$ are of \ang{72}.
We therefore choose $S$ to contain the identity and a number of \ang{72} rotations.

One caveat of this approach is that we need to make sure $S$ spans $G$,
otherwise the receptive field will not cover the whole input no matter how many layers are stacked,
which can happen if $S$ is in a proper subgroup of $G$ (see \cref{emvn:fig:localized}).
In practice this is not a challenging condition to satisfy; for our heuristic of choosing only \ang{72} rotations we only need to guarantee that at least two are around different axes.

\begin{figure}[htbp]
  \centering
  \includegraphics[width=\linewidth]{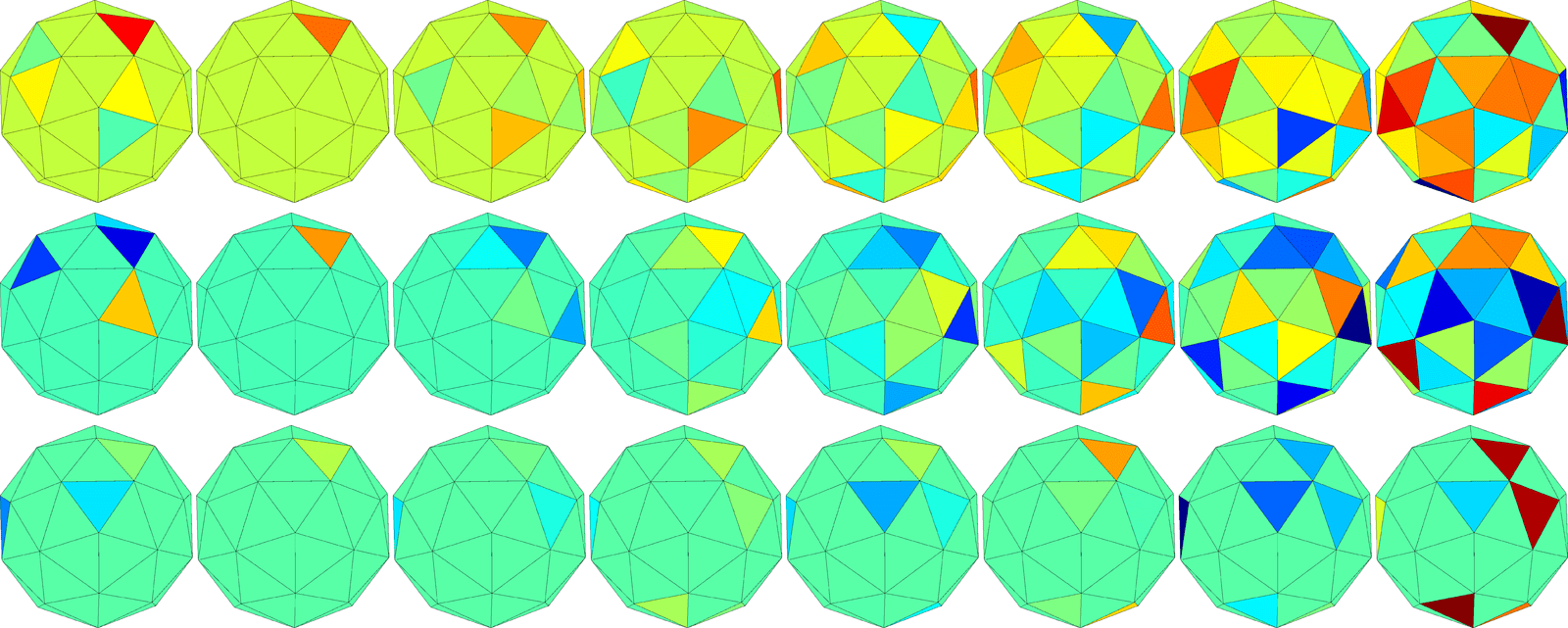}
  \caption{Localized filters and their receptive fields as we stack more layers.
    First column shows the filter, second the input, and others are results of stacking group convolutions with the same filter.
    Top row filter has 12 nonzero elements; middle and bottom have 5.
    The support for the bottom row contains elements of a 12 element subgroup, so its receptive field cannot cover the full input space.}
\label{emvn:fig:localized}
\end{figure}

\section{Experiments}
We evaluate our approach on $3$D shape classification, retrieval and scene classification.
First, we discuss the architectures, training procedures, and datasets.
\paragraph{Architectures}
We use a \resnet{18}~\cite{he2016deep} as the view processing network $\Phi_1$, with weights initialized from ImageNet~\cite{imagenet_cvpr09} pre-training.
The \gcnn\ part contains three layers with 256 channels and nine elements on its support
(note that the number of parameters is the same as one conventional $3\times 3$ layer).
We project from 512 to 256 channels so the number of parameters stay close to the baseline.
When the method in \cref{emvn:sec:fewerviews} is used to reduce the number of views,
the first \gconv\ layer is replaced by a \hcorr.

We denote variations of our method Ours-X and Ours-R-X.
The R suffix indicate retrieval specific features, that consist of (i) a triplet loss
and (ii) reordering the retrieval list so that
objects classified as the query's predicted class come first.
Before reordering, the list is sorted by cosine distance between descriptors.
For SHREC'17, choosing the number N of retrieved objects is part of the task -- in this case we simply return all objects classified as the query's class.

For fair assessment of our contributions, we implement a variation of \mvcnn,
denoted \mvcnnm{X} for $X$ input views, where the best-performing $X$ is shown.
\mvcnnm{X} has the same view-processing network, training procedure and dataset as ours;
the only difference is that it performs pooling over view descriptors instead of using a \gcnn.

\paragraph{Triplet loss}
We implement a simple triplet loss.
During training, we keep a set containing the descriptors for the last seen instance of each class,
$Z=\{z_i\}$, where $i$ is the class label.
For each entry in the mini-batch, let $c$ be the class and $z$ its descriptor.
We take the descriptor in $Z$ of the same class as a positive example ($z_c$), and chose the hardest between all the others in the set as the negative: $z_n = \text{argmin}_{z_i\in Z,\,i\neq c}(d(z_i,z))$, where $d$ is a distance function.
The contribution of this entry to the loss is then,
\begin{equation}
  \mathcal{L} = \max(d(z,z_c) - d(z,z_n) + \alpha, 0),
\end{equation}
where $\alpha$ is a margin.
We use $\alpha=0.2$ and $d$ is the cosine distance.
Note that this method is only used in the ``Ours-R'' variations of our method.

\paragraph{Training} We train using \sgd\ with Nesterov momentum as the optimizer.
The number of epochs is 15 for ModelNet and 10 for SHREC'17.
Following \textcite{he18_bag_trick_image_class_with},
the learning rate linearly increases from 0 to $lr$ in the first epoch,
then decays to zero following a cosine quarter-cycle.
When training with 60 views, we set the batch size to six, and $lr$ to $0.0015$.
This requires around \num{11}{Gb} of memory.
When training with 12 or 20 views, we linearly increase both the batch size and $lr$.

Training our 20-view model on ModelNet40 for one epoch takes approximately \num{353}{s}
on an NVIDIA 1080 Ti, while the corresponding \mvcnnm{} takes \num{308}{s}.
Training RotationNet \cite{kanezaki16_rotat} for one epoch under same conditions
takes approximately \num{1063}{s}.

\paragraph{Datasets}
We render $12\times 5$, $20\times 3$ and $60\times 1$ camera configurations (\cref{emvn:sec:viewconf})
for ModelNet and the ShapeNet SHREC'17 subset, for both rotated and aligned versions.
For the aligned datasets, where equivariance to rotations is not necessary,
we fix the camera up-vectors to be in the plane defined by the object center, camera and north pole.
This reduces the number of views from $12\times 5$ to $12$ and from $20\times 3$ to $20$.
For the rotated datasets, all renderings have 60 views and follow the group structure.
Note that the rotated datasets are not limited to the discrete group and
contain continuous rotations from $\SO{3}$.
We observe that the $60\times 1$ configuration performs best so
those are the numbers shown for ``Ours-60''.
For the experiment with fewer views, we chose $12$ from $12\times 5$ and $20$ from $20\times 3$
that are converted to log-polar coordinates (\cref{emvn:sec:fewerviews}).
For the scene classification experiment, we sample $12$ overlapping views from panoramas.
No data augmentation is performed in any experiment.

\subsection{SHREC'17 retrieval challenge}
The SHREC'17 large scale $3$D shape retrieval challenge~\cite{savva2017shrec} utilizes
the ShapeNet Core55~\cite{shapenet2015} dataset and has two modes:
``normal'' and ``perturbed'' which correspond to ``aligned'' and ``rotated''
as we defined in \cref{emvn:sec:modelnet}.
The challenge happened in 2017 but there has been recent interest on it,
especially on the ``rotated'' mode~\cite{s.2018spherical,esteves18eccv,kondor2018clebsch}.

\Cref{emvn:tab:shrec} shows the results.
N is the number of retrieved elements,
chosen to be the objects classified as the same class as the query.
The Normalized Discounted Cumulative Gain (NDGC) score
uses ShapeNet subclasses to measure relevance between retrieved models.
Methods are ranked by the mean of micro (instance-based) and macro (class-based) \map.
Only the best performing methods are shown;
refer to~\textcite{savva2017shrec} for more results.

Our model surpass the state of the art for both ``rotated'' and ``aligned''
modes even without the triplet loss, which, when included, increase the margins.
This is the most important result in this chapter, since it is on the largest available
$3$D shape retrieval benchmark and there are numerous published results on it.

\begin{table}[htbp]
  \centering
  \caption{SHREC'17 retrieval results.
    Top block: aligned dataset; bottom: rotated.
    Methods are ranked by the average between micro and macro \map\
    (the ``score'' in the second column).
    We also show precision (P), recall (R), F-score (F1), \map,
    and normalized discounted cumulative gain (G),
    where N is the number of retrieved elements.
    \mvcnn\ models without the ``M'' suffix are from \textcite{su2015multi}}
  \label{emvn:tab:shrec}
  \scriptsize
  \begin{tabular}{lS r
    S[table-format=2.1]S[table-format=2.1]S[table-format=2.1]S[table-format=2.1]S[table-format=2.1] r
    S[table-format=2.1]S[table-format=2.1]S[table-format=2.1]S[table-format=2.1]S[table-format=2.1]}
      \toprule
                                                   &           & \phantom{} & \multicolumn{5}{c}{micro} & \phantom{} & \multicolumn{5}{c}{macro}                                                                        \\
      \cmidrule{2-2} \cmidrule{4-8} \cmidrule{10-14}
                  {Method}                         & {score}   &            & {P@N}                     & {R@N}      & {F1@N}    & {mAP}     & {G@N}     &  & {P@N}     & {R@N}     & {F1@N}    & {mAP}     & {G@N}     \\
      \midrule
                  RotatNet~\cite{kanezaki16_rotat} & 67.8      &            & 81.0                      & 80.1       & 79.8      & 77.2      & 86.5      &  & 60.2      & 63.9      & 59.0      & 58.3      & 65.6      \\
                  ReVGG~\cite{savva2017shrec}      & 61.8      &            & 76.5                      & 80.3       & 77.2      & 74.0      & 82.8      &  & 51.8      & 60.1      & 51.9      & 49.6      & 55.9      \\
                  DLAN~\cite{furuya2016deep}       & 57.0      &            & 81.8                      & 68.9       & 71.2      & 66.3      & 76.2      &  & 61.8      & 53.3      & 50.5      & 47.7      & 56.3      \\
                  MVCNN-$12$    & 65.1      &            & 77.0                      & 77.0       & 76.4      & 73.5      & 81.5      &  & 57.1      & 62.5      & 57.5      & 56.6      & 64.0      \\
                  \mvcnnm{12}                      & 69.1      &            & 83.1                      & 77.9       & 79.4      & 74.9      & 83.8      &  & \bgl 66.8 & 68.4      & 65.2      & 63.2      & 70.3      \\
                  Ours-12                          & 70.7      &            & 83.1                      & 80.5       & 81.1      & 77.7      & 86.3      &  & 65.3      & 68.7      & 64.8      & 63.6      & 70.8      \\
                  Ours-20                          & 71.4      &            & \bgl 83.6                 & \bgl 80.8  & \bgl 81.5 & \bgl 77.9 & \bgl 86.8 &  & 66.4      & \bgl 70.1 & 65.9      & 64.9      & 71.9      \\
                  Ours-60                          & \bgl 71.7 &            & \bgd 84.0                 & 80.5       & 81.4      & 77.8      & 86.4      &  & \bgd 67.1 & \bgd 70.7 & \bgd 66.6 & \bgd 65.6 & \bgd 72.3 \\
                  Ours-R-20                        & \bgd 72.2 &            & \bgl 83.6                 & \bgd 81.7  & \bgd 82.0 & \bgd 79.1 & \bgd 87.5 &  & \bgl 66.8 & 69.9      & \bgl 66.1 & \bgl 65.4 & \bgd 72.3 \\
      \midrule
                  DLAN~\cite{furuya2016deep}       & 56.6      &            & \bgd 81.4                 & 68.3       & 70.6      & 65.6      & 75.4      &  & \bgd 60.7 & 53.9      & 50.3      & 47.6      & 56.0      \\
                  ReVGG~\cite{savva2017shrec}      & 55.7      &            & 70.5                      & \bgd 76.9  & 71.9      & \bgl 69.6 & 78.3      &  & 42.4      & 56.3      & 43.4      & 41.8      & 47.9      \\
                  RotatNet~\cite{kanezaki16_rotat} & 46.6      &            & 65.5                      & 65.2       & 63.6      & 60.6      & 70.2      &  & 37.2      & 39.3      & 33.3      & 32.7      & 40.7      \\
                  MVCNN-80      & 45.1      &            & 63.2                      & 61.3       & 61.2      & 53.5      & 65.3      &  & 40.5      & 48.4      & 41.6      & 36.7      & 45.9      \\
                 \mvcnnm{60}                    & 57.5      &            & 77.7                      & 67.6       & 71.1      & 64.1      & 75.9      &  & 55.7      & 56.9      & 53.5      & 50.9      & 59.7      \\
                  Ours-12                      & 58.1      &            & 76.1                      & 70.0       & 72.0      & 66.4      & 76.7      &  & 54.6      & 55.7      & 52.6      & 49.8      & 58.6      \\
                  Ours-20                      & 59.3      &            & 76.4                      & 70.5       & 72.4      & 66.9      & 77.0      &  & 54.6      & 58.0      & 53.7      & 51.7      & 60.2      \\
                  Ours-60                      & \bgl 62.1 &            & \bgl 78.7                 & 72.9       & \bgl 74.7 & \bgl 69.6 & \bgl 79.6 &  & 57.6      & \bgl 60.1 & \bgl 56.3 & \bgl 54.6 & \bgl 63.0 \\
                  Ours-R-60                    & \bgd 63.5 &            & \bgl 78.7                 & \bgl 75.0  & \bgd 75.9 & \bgd 71.8 & \bgd 81.1 &  & \bgl 58.3 & \bgd 60.6 & \bgd 56.9 & \bgd 55.1 & \bgd 63.3 \\
      \bottomrule
    \end{tabular}
\end{table}

\subsection{ModelNet classification and retrieval}
\label{emvn:sec:modelnet}
We evaluate $3$D shape classification and retrieval on variations of ModelNet~\cite{wu20153d}.
In order to compare with most publicly available results, we evaluate on ``aligned'' ModelNet,
and use all available instances with the original train/test split (9843 for training, 2468 for test).
We also evaluate on the more challenging ``rotated'' ModelNet40,
where each instance appears with a random rotation from \SO{3}.

\Cref{emvn:tab:modelnet,emvn:tab:rotmodelnet} show the results.
We show only the best performing methods and refer to the ModelNet website\footnote{\url{http://modelnet.cs.princeton.edu}} for the complete leaderboard.
Classification performance is given by accuracy (acc)
and retrieval by the \map.
Averages are over instances.
We include class-based averages in \cref{emvn:sec:extramnet}.

We outperform the retrieval state of the art for both ModelNet10 and ModelNet40,
even without retrieval-specific features.
When including such features (triplet loss and reordering by class label),
the margin increases significantly.

We focus on retrieval and do not claim state of the art on classification, which is held by RotationNet~\cite{kanezaki16_rotat}.
While ModelNet retrieval was not attempted by \textcite{kanezaki16_rotat},
the SHREC'17 retrieval was, and we show superior performance on it (\cref{emvn:tab:shrec}).
We show more comparisons with RotationNet~\cite{kanezaki16_rotat} in \cref{emvn:sec:rotnet}.

\begin{table}[htbp]
    \caption{Aligned ModelNet classification and retrieval.
      We only compare with published retrieval results.
      We achieve state of the art retrieval performance even without retrieval-specific model features.
      This shows that our view aggregation is useful even when global equivariance is not necessary.}
  \label{emvn:tab:modelnet}
  \centering
  \begin{tabular}{l
    S[table-format=2.2]S[table-format=2.2] l
    S[table-format=2.2]S[table-format=2.2]}
      \toprule
                                             & \multicolumn{2}{c}{M40 (aligned)} & \phantom{abc} & \multicolumn{2}{c}{M10 (aligned)} \\
      \cmidrule{2-3} \cmidrule{5-6}
                                             & {acc}                               & {mAP}           &  & {acc}        & {mAP}               \\
      \midrule
      MVCNN-12~\cite{su2015multi}                  & 90.1                              & 79.5          &  & {-}          & {-}                 \\
      SPNet~\cite{yavartanoo18_spnet}        & 92.63                             & 85.21         &  & \bgl 97.25 & 94.20             \\
      PVNet~\cite{you2018pvnet}              & 93.2                              & 89.5          &  & {-}          & {-}                 \\
      SV2SL~\cite{han2019seqviews2seqlabels} & 93.40                             & 89.09         &  & 94.82      & 91.43             \\
      PANO-ENN \cite{sfikas2018ensemble}     & \bgd 95.56                        & 86.34         &  &	96.85      & 93.2              \\
      \mvcnnm{12}                             & 94.47                             & 89.13         &  & 96.33      & 93.54             \\
      \midrule
      Ours-12                                & 94.51                             & \bgl 91.82    &  & 96.33      & 95.30             \\
      Ours-20                                & \bgl 94.69                        & 91.42         &  & \bgd 97.46 & \bgl 95.74        \\
      Ours-60                                & 94.36                             & 91.04         &  & 96.80      & 95.25             \\
      Ours-R-12                              & 94.67                             & \bgd 93.56    &  & 96.78      & \bgd 96.18        \\
      \bottomrule%
    \end{tabular}
\end{table}

\begin{table}[htbp]
  \caption{Rotated ModelNet40 classification and retrieval.
      Note that the gap between ``Ours'' and ``MVCNN-M'' is much larger than in the aligned dataset,
      which demonstrates the advantage of our equivariant representation.}
  \label{emvn:tab:rotmodelnet}
  \centering
    \begin{tabular}{lSS}
      \toprule
                               & \multicolumn{2}{c}{M40 (rotated)} \\
      \cmidrule{2-3}
                               & {acc}        & {mAP}                  \\
      \midrule
      MVCNN-80~\cite{su2015multi}    & 86.0       & {-}                    \\
      RotationNet~\cite{kanezaki16_rotat} & 80.0       & 74.20                \\
      Spherical CNN~\cite{s.2018spherical} & 86.9       & {-}                    \\
      \mvcnnm{60}               & 90.68      & 78.18                \\
      Ours-12                  & 88.50      & 79.58                \\
      Ours-20                  & 89.98      & 80.73                \\
      Ours-60                  & \bgl 91.00 & \bgl 82.61           \\
      Ours-R-60                & \bgd 91.08 & \bgd 88.57           \\
      \bottomrule
    \end{tabular}
\end{table}

\subsection{Comparison with RotationNet}
\label{emvn:sec:rotnet}
We provide further comparison against RotationNet \cite{kanezaki16_rotat}.
While RotationNet remains the state of the art on aligned ModelNet classification,
our method is superior on all retrieval benchmarks.
We also outperform RotationNet on more challenging classification taks: rotated and aligned ShapeNet, and rotated ModelNet.
\Cref{emvn:tab:rotat} shows the results.

\begin{table}[htbp]
  \caption{Classification accuracy (acc) and retrieval (\acrshort{map}) comparison
    against RotationNet~\cite{kanezaki16_rotat}.
    Results for ModelNet40 (MNet40) aligned (al) and rotated (rot) datasets, and for the SHREC'17 split of ShapeNet.
    The score for SHREC'17 is the average between micro and macro \map.
  }
  \label{emvn:tab:rotat}
  \small
  \centering
  \begin{tabular}{l
    S[table-format=2.2]S[table-format=2.2] l
    S[table-format=2.2]S[table-format=2.2] l
    S[table-format=2.2]S[table-format=2.2] l
    S[table-format=2.2]S[table-format=2.2]}
    \toprule
    & \multicolumn{2}{c}{MNet40 (al)} & \phantom{}  & \multicolumn{2}{c}{MNet40 (rot)} & \phantom{} & \multicolumn{2}{c}{SHREC'17 (al)} & \phantom{} & \multicolumn{2}{c}{SHREC'17 (rot)}  \\
    \cmidrule{2-3}  \cmidrule{5-6} \cmidrule{8-9} \cmidrule{11-12}
    & {acc}            & {mAP}                        &  & {acc}            & {mAP}            &  & {acc}   & {score}           &  & {acc}   & {score}           \\
    \midrule
    RotationNet~\cite{kanezaki16_rotat}        &  97.37 & 93.00          &  & 80.0           & 74.20              &  & 85.39    & 67.8          &  & 77.37     & 46.6          \\
    Ours   & 94.67 &  93.56  &  &  91.08 &  88.57 &  &  89.15 &  72.2 &  &  85.93 &  63.5 \\
    \bottomrule
  \end{tabular}
\end{table}

\subsection{Ablation}
We run an experiment to compare effects of (i) filter support size, (ii) number of \gconv\ layers, and (iii) missing views. %
We evaluate on rotated ModelNet40 with ``Ours-60'' model as baseline.
The base model has a filter support of nine elements,
three \gconv\ layers and uses all 60 views.

When considering less than 60 views, we introduce view dropout during training
where a random number (between 1 and 30) of views is selected for every mini-batch.
This improves robustness to missing views.
During test, we use a fixed number of views.
\Cref{emvn:tab:ablation} shows the results.
As expected, we can see some decline in performance with fewer layers and smaller support, which reduces the receptive field at the last layer.
Our method is robust to missing up to \num{50}{\%} of the views, with noticeable drop in performance when missing \num{80}{\%} or more.

\begin{table}[htbp]
  \centering
  \caption{Ablation study on rotated ModelNet40.
  Our best performing model is on the top row.}
  \label{emvn:tab:ablation}
  \begin{tabular}{SSScSS}
    \toprule
    {support} & {layers} & {views} & pretrained & {acc}   & {mAP}   \\
    \midrule
    9       & 3         & 60       &   yes         & 91.00 & 82.61 \\
    6       & 3         & 60       &   yes         & 90.63 & 81.90 \\
    3       & 3         & 60       &   yes         & 89.74 & 80.49 \\
    9       & 2         & 60       &   yes         & 91.00 & 81.47 \\
    9       & 1         & 60       &   yes         & 90.88 & 79.59 \\
    9       & 3         & 30       &   yes         & 89.50 & 79.20 \\
    9       & 3         & 10       &   yes         & 88.32 & 74.65 \\
    9       & 3         & 5        &   yes         & 82.77 & 64.88 \\
    9       & 3         & 60       &   no          & 87.40 & 70.44 \\
    \bottomrule
  \end{tabular}
\end{table}

\subsection{Scene classification}
So far we have shown experiments for object-centric configurations (outside-in),
but our method is also applicable to camera-centric configurations (inside-out),
which we demonstrate on the Matterport3D~\cite{chang17_matter} scene classification from panoramas task.
We sample multiple overlapping azimuthal views from the panorama
as shown in \cref{emvn:fig:matterport},
and apply our model over the cyclic group of 12 rotations, with six elements in the filter support.
\Cref{emvn:tab:sceneclass} shows the results.

\begin{figure}[htbp]
  \centering
  \includegraphics[width=\linewidth]{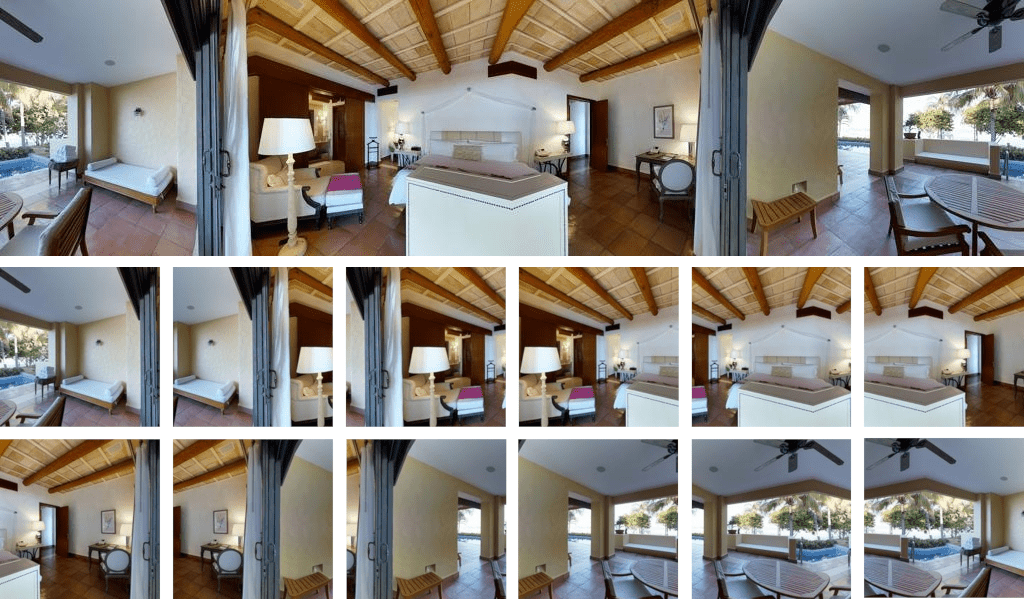}
  \caption{Top: original input from MatterPort3D~\cite{chang17_matter} scene classification task.
    Bottom: our set of 12 overlapping views.}
\label{emvn:fig:matterport}
\end{figure}

\begin{table}[htbp]
  \caption{Matterport3D panoramic scene classification results.
  We show accuracy in \% per category.}
  \label{emvn:tab:sceneclass}
  \centering
    {\fontsize{7}{9}\selectfont
    \begin{tabular}{lS[table-format=2.1]
      S[table-format=2.1]S[table-format=2.1]S[table-format=2.1]S[table-format=2.1]
      S[table-format=2.1]S[table-format=2.1]S[table-format=2.1]S[table-format=2.1]
      S[table-format=2.1]S[table-format=2.1]S[table-format=2.1]S[table-format=2.1]}
      \toprule
      & {avg}    & {office}  & {lounge}  & {family}  & {entry-} & {dining}  & {living}  & {stairs}  & {kitchen} & {porch}   & {bath-}   & {bed-}    & {hall-} \\
      \addlinespace[-5pt]
                                   &           &           &           & {room}    &        {way}    & {room}    & {room}    &           &           &           & {room}    & {room}   & {way}          \\
      \midrule
      sing.~\cite{chang17_matter} & 33.3      & 20.3      & \bgd 21.7 & 16.7      & 1.8        & 20.4      & 27.6      & 49.5      & 52.1      & 57.4      & 44.0      & 43.7      & 44.7      \\
      pano~\cite{chang17_matter}   & 41.0      & \bgl 26.5 & 15.4      & 11.4      & 3.1        & 27.7      & 34.0      & 60.6      & 55.6      & 62.7      & 65.4      & 62.9      & 66.6      \\
      MV-M-12                   & \bgl 51.9 & 18.0      & \bgl 16.4 & \bgl 23.8 & \bgl 8.6   & \bgl 46.7 & \bgl 37.1 & \bgd 84.1 & \bgl 73.3 & \bgd 81.0 & \bgd 78.2 & \bgl 81.7 & \bgd 73.8 \\
      Ours-12                      & \bgd 53.8 & \bgd 27.9 & \bgl 16.4 & \bgd 33.3 & \bgd 11.4  & \bgd 51.1 & \bgd 41.3 & \bgl 80.4 & \bgd 75.8 & \bgl 79.0 & \bgl 72.5 & \bgd 82.9 & \bgl 73.5 \\
      \bottomrule
    \end{tabular}
    }
\end{table}

The \mv\ approach is superior to operating directly on panoramas because
(i) it allows higher overall resolution while sharing weights across views, and
(ii) views match the scale of natural images so pre-training is better exploited.
Our \mvcnnm{} outperforms both baselines, and our proposed model outperforms it, which shows that the group based view aggregation is also useful in this setting.
Our representation is equivariant to azimuthal rotations here;
a \cnn\ operating directly on the panorama is also equivariant,
but without properties (i) and (ii) aforementioned.%

\subsection{Discussion}
Our model shows state of the art performance on multiple $3$D shape retrieval benchmarks.
We argue that the retrieval problem is more appropriate than classification
to evaluate shape descriptors because it requires a complete rank of similarity
between models instead of only a class label.

Our results for aligned datasets show that the full set of 60 views is not necessary and
may be even detrimental in this case;
but even when equivariance is not required, the principled view aggregation with \gconvs\
is beneficial, as direct comparison between \mvcnnm{} and our method show.
For rotated datasets, results show that performance increases with the number of views,
and that the aggregation with \gconvs\ brings major improvements.

Interestingly, our \mvcnnm{} baseline outperforms many competing approaches.
The differences with respect to the original \mvcnn~\cite{su2015multi} are
(i) late view-pooling,
(ii) use of \resnet,
(iii) improved rendering, and
(iv) improved learning rate schedule.
These significant performance gains were also observed in~\textcite{su18_deeper_look_at_shape_class}, and attest to the potential of multi-view representations.

One limitation is that our feature maps are equivariant only to discrete rotations,
and while classification and retrieval performance under continuous rotations is good,
for tasks such as continuous pose estimation it may not be.
Another limitation is that we assume views to follow the group structure, which may be difficult to achieve for real images.
This is not a problem for $3$D shape analysis, though, because we can render any arbitrary view.

\section{Conclusion}
In this chapter we presented an approach that leverages the representational power of
conventional deep \cnns\ and exploits the finite nature of the multiple views
to design a group convolutional network that performs an exact equivariance in discrete groups,
most importantly the icosahedral group.
We also introduced localized filters and convolutions on homogeneous spaces in this context.
Our method enables joint reasoning over all views as opposed to traditional view-pooling,
and surpass the state of the art by large margins on several $3$D shape retrieval benchmarks.
\section{Extra results and visualization}
\subsection{ModelNet}
\label{emvn:sec:extramnet}
Since some methods show ModelNet40 results as averages per class instead of the more common average per instance, we include extended tables with these metrics.
We also present results on rotated ModelNet10.
\Cref{emvn:tab:mupext} shows the results.

\begin{table}[htbp]
  \caption{ModelNet results.
    We include classification accuracy and retrieval mAP per class (cls) and per instance (ins).}
  \label{emvn:tab:mupext}
  \centering
  \small
    \begin{tabular}{lSSSSrSSSS}
      \toprule
                & \multicolumn{4}{c}{M40 (aligned)} & \phantom{} & \multicolumn{4}{c}{M10 (aligned)}                              \\
      \cmidrule{2-5} \cmidrule{7-10}
                & {acc ins}                          & {acc cls}      & {mAP ins} & {mAP cls} &  & {acc ins} & {acc cls} & {mAP ins} & {mAP cls} \\
      Ours-12   & 94.51                             & 92.49         & 91.82   & 88.28   &  & 96.33    & 96.00    & 95.30   & 95.00   \\
      Ours-20   & 94.69                             & 92.56         & 91.42   & 87.71   &  & 97.46    & 97.34    & 95.74   & 95.58   \\
      Ours-60   & 94.36                             & 92.40         & 91.04   & 87.30   &  & 96.80    & 96.58    & 95.25   & 95.01   \\
      Ours-R-20 & 94.44                             & 92.49         & 93.19   & 89.65   &  & 97.02    & 96.97    & 96.59   & 96.46   \\
      \midrule
                & \multicolumn{4}{c}{M40 (rotated)} & \phantom{} & \multicolumn{4}{c}{M10 (rotated)}                                   \\
      \cmidrule{2-5} \cmidrule{7-10}
      Ours-12   & 88.50                             & 85.77         & 79.58        & 74.64   &  & 91.89    & 91.54   & 86.93    & 86.08   \\
      Ours-20   & 89.98                             & 87.65         & 80.73        & 75.65   &  & 92.60    & 92.35   & 87.27    & 86.65   \\
      Ours-60   & 91.00                             & 89.24         & 82.61        & 78.02   &  & 92.83    & 92.80   & 88.47    & 88.02   \\
      Ours-R-20 & 91.08                             & 88.94         & 88.57        & 84.37   &  & 93.05    & 93.08   & 92.07    & 91.99   \\
      \bottomrule
    \end{tabular}
\end{table}

\subsection{Feature maps}
We visualize more examples of our equivariant feature maps in \cref{emvn:fig:fmapsdodec,emvn:fig:fmapsico,emvn:fig:fmapspentakis}.
Each figure shows 8 different input rotations, the first 5 are from a subgroup of rotations around one axis with \ang{72} spacing, the other 3 are from other subgroup with \ang{120} spacing.
We show the axis of rotation in red.
The first column is a view of the input, the second is the initial representation on the group or \hspc\ the other three are features on each \gcnn\ layer.

Our method is equivariant to the 60-element discrete rotation group even with only 12 or 20 input views.
In \cref{emvn:fig:fmapsdodec} we take only 12 input views, giving initial features on the \hspc\ represented by faces of the dodecahedron.
Note that the five first rotations in this case are in-plane for the views corresponding to the axis of rotation.
Due to our procedure described in \cref{emvn:sec:fewerviews},
this gives an invariant descriptor which can be visualized as the face with constant color.
Similarly, in \cref{emvn:fig:fmapsico},
we take 20 views and the invariant descriptor appears in the last three rotations.

Equivariance is easily visualized on faces neighboring the axis of rotation.
For the dodecahedron, we can see cycles of five when the axis is on one face and
cycles of three when the axis is on one vertex.
For the icosahedron, we can see cycles of three when the axis is on one face and
cycles of five when the axis is on one vertex.
For the pentakis dodecahedron (\cref{emvn:fig:fmapspentakis}), we can see
groups of five cells that shift one position when rotation is of \ang{72} and
groups of six cells that shift two positions when rotation is of \ang{120}.

\begin{figure}[htbp]
  \centering
  \includegraphics[width=0.75\linewidth]{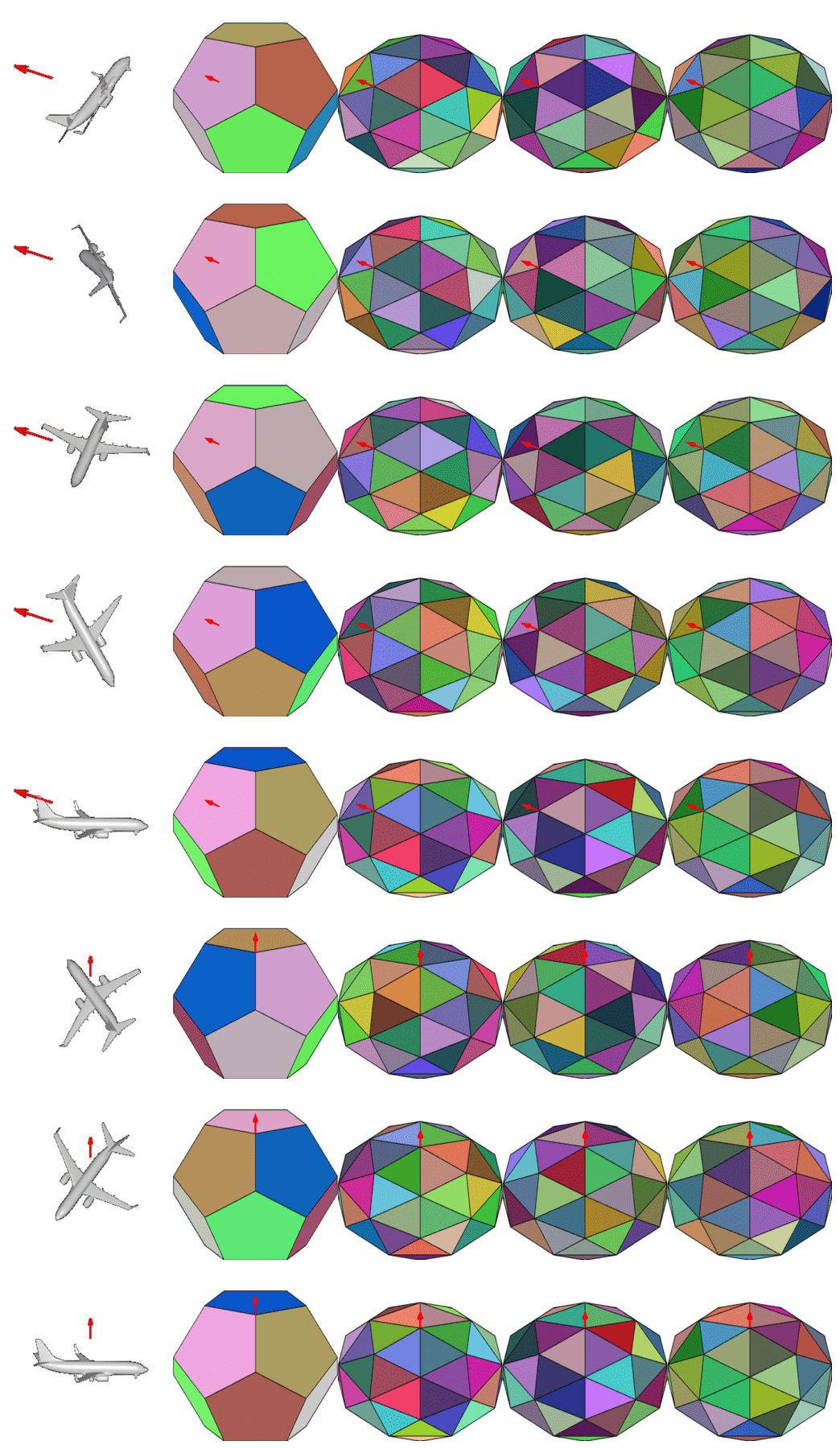}
  \caption{Feature maps with 12 input views.
  }%
  \label{emvn:fig:fmapsdodec}
\end{figure}

\begin{figure}[htbp]
  \centering
  \includegraphics[width=0.75\linewidth]{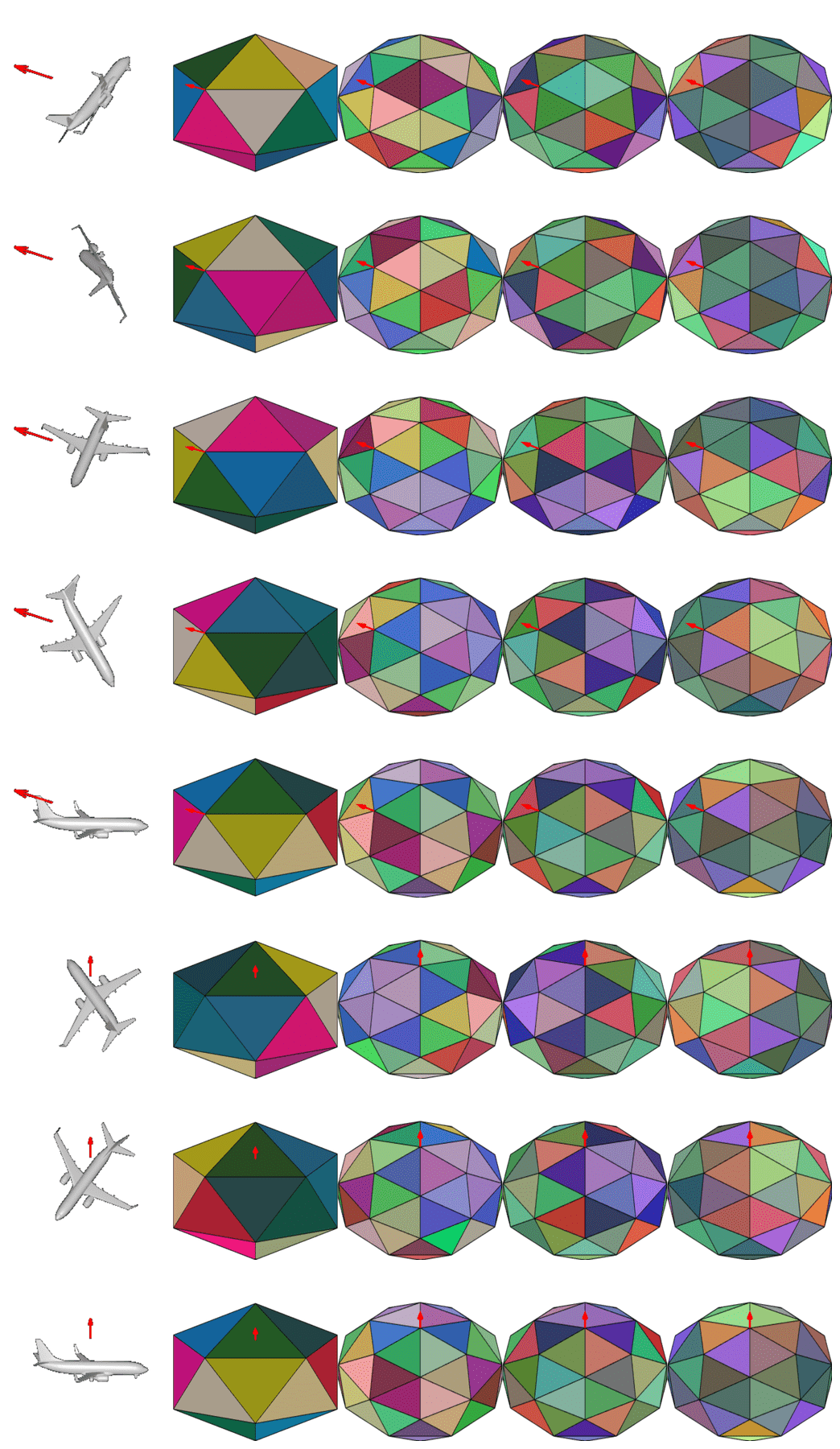}
  \caption{Feature maps with 20 input views.
    }%
  \label{emvn:fig:fmapsico}
\end{figure}

\begin{figure}[htbp]
  \centering
  \includegraphics[width=0.75\linewidth]{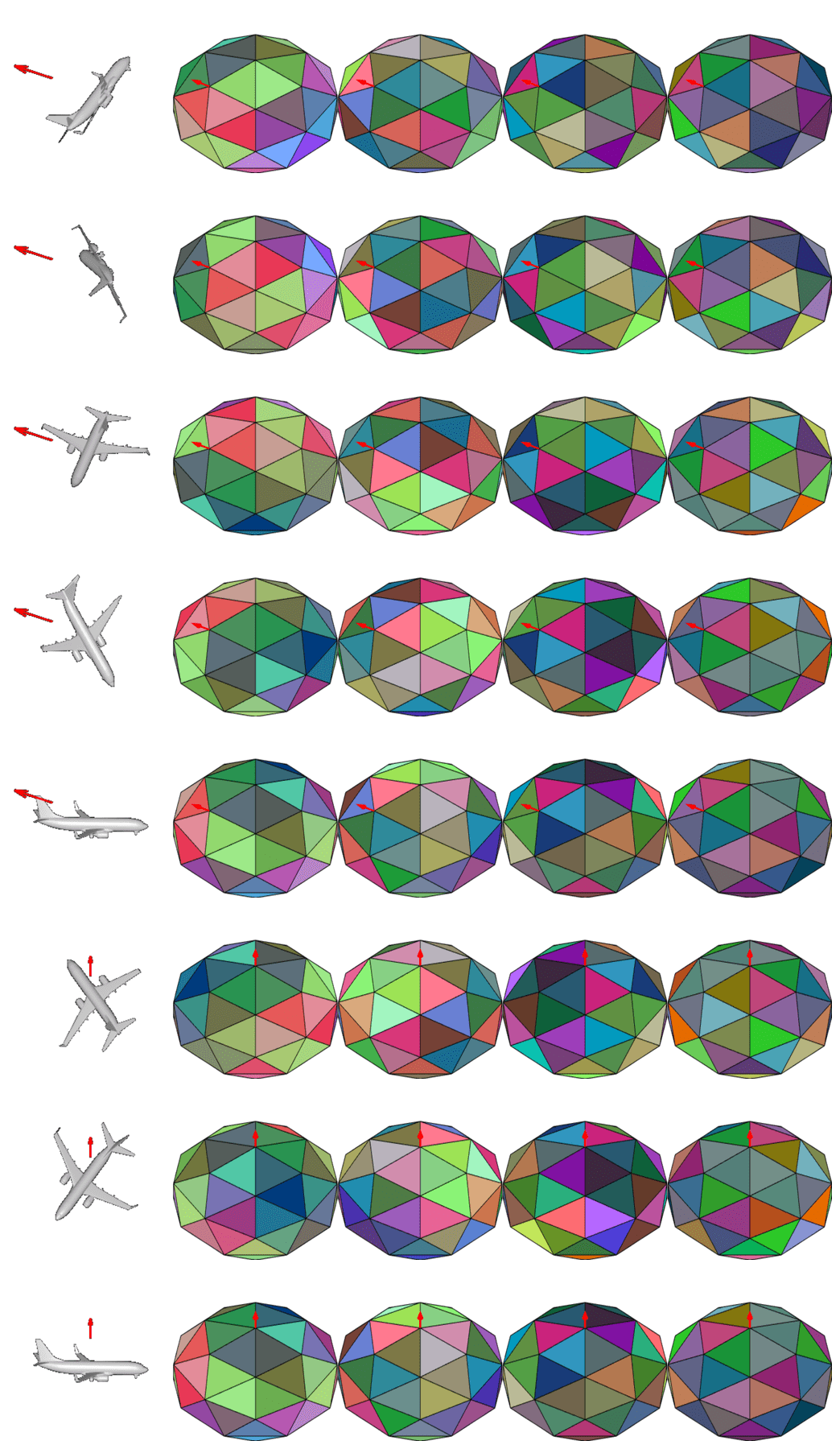}
  \caption{Feature maps with 60 input views.
  }%
  \label{emvn:fig:fmapspentakis}
\end{figure}

\glsresetall
\chapter{Equivariance to continuous 3D rotations}
\label{sph:sec:sphcnn}
\chaptersubtitle{The Spherical CNNs}

\section{Introduction}
One of the reasons for the tremendous success of \cnns\ is their equivariance to translations in
Euclidean spaces and the resulting invariance to local deformations.
The traditional way to address invariance with respect to other nuisances is with data augmentation,
while non-Euclidean inputs like point-clouds are often approximated by
euclidean representations like voxel spaces.
Only recently, equivariance with respect to other groups was considered~\cite{cohen2016group,worrall2017harmonic}
and \cnns\ for manifolds and graphs were proposed~\cite{bruna2013learning,bronstein2017geometric,s.2018spherical}.

Equivariant networks retain information about group actions on the input and
on the feature maps throughout the layers of a network.
Because of their special structure, feature transformations are directly related to
spatial transformations of the input.
Such equivariant structures yield a lower model complexity in terms of number of parameters
than alternatives like the \stns~\cite{jaderberg15nips},
where a learned canonical transformation is applied to the original input.

In this chapter, we are primarily interested in analyzing $3$D shapes
for alignment, retrieval and classification.
Translation and scale invariance are easily achieved in volumetric and point-cloud based approaches
by setting the object's origin to its center and constraining its extent to a fixed constant.
However, $3$D rotations remain a challenge.
\Cref{sph:fig:methods-zz-so3so3-zso3} illustrates how classification performance for conventional
methods suffers when arbitrary rotations are introduced.

\begin{figure}[htbp]
    \centering
    \includegraphics[width=0.7\textwidth]{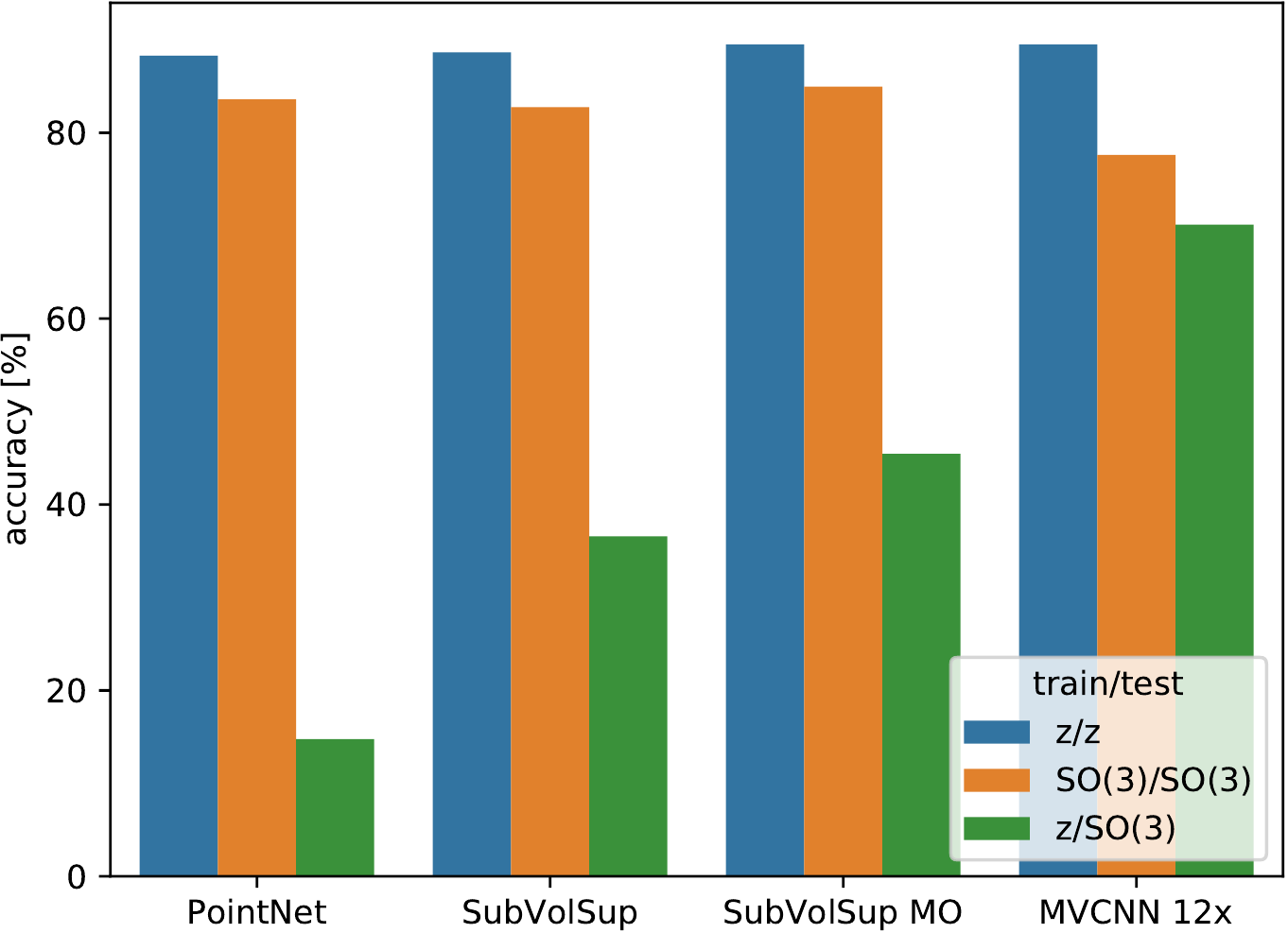}
    \caption{\small ModelNet40 classification for 
      point cloud~\cite{qi2017pointnet}, volumetric~\cite{vam}, and multi-view~\cite{su2015multi} methods. The significant drop in accuracy illustrates that conventional methods do not generalize to arbitrary ($\mathbf{SO}(3)$/$\mathbf{SO}(3)$) and unseen orientations (z/$\mathbf{SO}(3)$).}
  \label{sph:fig:methods-zz-so3so3-zso3}
\end{figure}

\begin{figure}[htbp]
  \centering
    \includegraphics[width=0.6\textwidth]{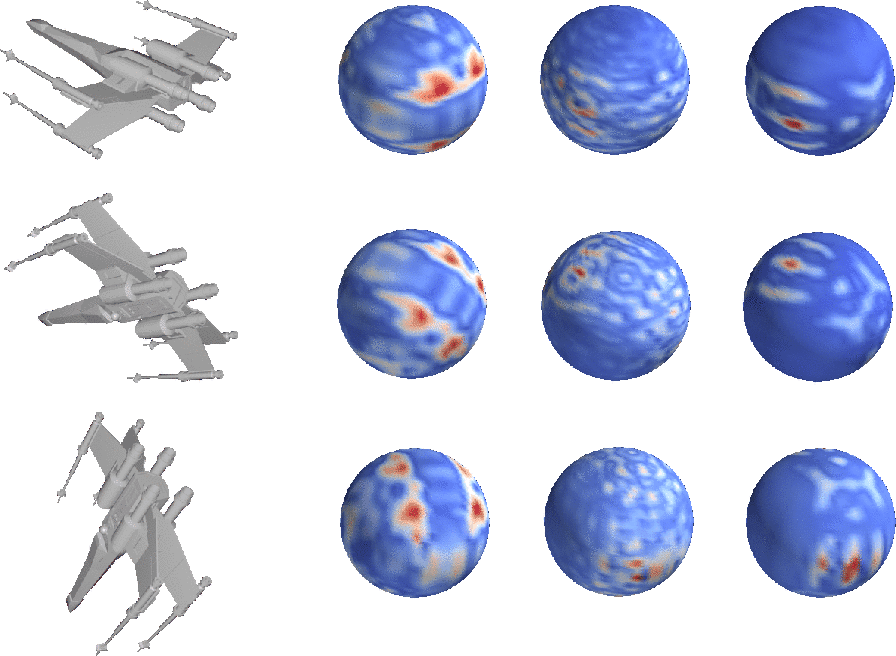}
    \caption{Each row shows the rotated input mesh and a few corresponding spherical feature maps learned by our network.
      Note the activations on the aircraft engines on the second column; they clearly illustrate rotation equivariance.}
    \label{sph:fig:intro}
  \end{figure}

  We model $3$D shapes with vector-valued spherical functions
  and introduce a novel equivariant convolutional neural network with spherical inputs
  (\cref{sph:fig:intro} illustrates the equivariance).
  The main operation is the spherical convolution, which has spherical outputs and
  is different from the cross-correlation that has outputs in the rotation group \SO{3}.

  We employ exact convolutions that yield zonal filters,
  i.e., filters with constant values along the same latitude.
  Convolutions cannot be evaluated efficiently on the spatial domain as is usual on Euclidean spaces,
  but can be exactly computed as pointwise multiplication in the spectral domain
  through decomposition in the spherical harmonics basis.

  It is natural then to apply pooling in the spectral domain.
  Spectral pooling has the advantage that it retains equivariance
  while spatial pooling on the sphere is only approximately equivariant.
  We also propose a weighted averaging pooling where the weights are proportional to the cell area.
  The only reason to return to the spatial domain is the rectifying nonlinearity,
  which is a pointwise operator.

  To obtain localized filters, we enforce a smooth spectrum by
  learning weights only on few anchor frequencies and interpolating between them,
  yielding, as additional advantage, a number of weights independent of the spatial resolution.

  We perform $3$D retrieval, classification, and alignment experiments,
  and also present an extension to semantic segmentation of spherical panoramas.
  Our aim is to show that we can achieve near state-of-the-art performance
  with a much lower network capacity,
  which we achieve for the ModelNet40~\cite{wu20153d} dataset
  and the SHREC'17 large scale $3$D shape retrieval challenge~\cite{savva2017shrec}.

The following summarizes the main contributions in this chapter.
\begin{itemize}
\setlength\itemsep{0em}
\item We propose the first neural network based on spherical convolutions.
\item We introduce pooling and parameterization of filters in the spectral domain,
  with enforced spatial localization and capacity independent of the resolution.
\item In addition to the conventional equiangular grid,
  we explore a uniform spherical grid that and show its benefits.
\item Our model has much lower capacity than non-spherical counterparts applied to $3$D data
  without sacrificing performance.
\item We present an extension of our model that
  is the first equivariant model for panoramic image segmentation.
\end{itemize}

Most of the content in this chapter appeared originally in~\textcite{esteves18eccv,sphhg,sphcnnijcv},
Source code is available at \url{https://github.com/daniilidis-group/spherical-cnn}.

\section{Related work}
\label{sph:sec:related_work}
We will start describing related work on group equivariance,
in particular equivariance on the sphere, then delve into \cnn\ representations for $3$D data.

There are different methods for enabling equivariance in \cnns.
Equivariance can be obtained by constraining filter structure similarly to Lie generator
based approaches~\cite{segman1992canonical,hel1998canonical}.
\textcite{worrall2017harmonic} is a representative of these methods in a \cnn\ setting,
using filters derived from the complex harmonics achieving
both rotational and translational equivariance.
Another way is to use a filter orbit which is itself equivariant to obtain group equivariance.
\textcite{cohen2016group} formalized these methods in the context of \cnns.

Recently, a body of work on graph convolutional networks (GCN) has emerged.
There are two threads within this space, spectral \cite{bruna2013spectral,defferrard2016convolutional,kipf2016semi,yi2016syncspeccnn}
and spatial \cite{boscaini2016learning,masci2015geodesic,monti2017geometric,dgcnn}.
These approaches learn filters on irregular but structured graph representations.
These methods differ from ours in that we are looking to explicitly learn equivariant
and invariant representations for $3$D-data modeled as spherical functions under rotation.
While such properties are difficult to construct for general manifolds, we leverage the
group action of rotations on the sphere.

\textcite{s.2018spherical} is the closest to our approach and developed in parallel.
It uses spherical correlation to map spherical inputs to features on \SO{3},
then processed with a series of cross-correlations on \SO{3}.
The main difference is that we use spherical convolutions,
which are potentially one order of magnitude faster,
with smaller (one fewer dimension) filters and feature maps.
In addition, we enforce smoothness in the spectral domain that results
in better localization of the receptive fields on the sphere and
we perform pooling in two different ways, either as a low-pass filter in the spectral domain
or as a weighted averaging in the spatial domain.
Moreover, our method outperforms \textcite{s.2018spherical} on the
SHREC'17 benchmark and on the spherical \mnist\ dataset.

Spherical representations for $3$D data have been used for retrieval tasks before
the deep learning era~\cite{frome2004recognizing,kazhdan2002harmonic}
because of their invariance properties and the efficient implementation of spherical correlation~\cite{makadia2010spherical}.

A variety of $3$D shape representations besides the spherical
have been explored in the context of deep learning.
The most natural adaptation of $2$D methods is to use a voxel-grid representation of
the $3$D object and amend the $2$D \cnn\ framework to use $3$D filters
for cascaded processing in the place of conventional $2$D filters.
Such approaches require a tremendous amount of computation even for small voxel resolutions.
The first attempts in this line were by \textcite{wu20153d} and \textcite{maturana2015voxnet},
which propose a volumetric network with $3$D convolutional layers followed by fully-connected layers.
\textcite{vam} observe significant overfitting when attempting to train such models
end-to-end and amend the technique by using subvolume classification as an auxiliary task.
They also propose an alternative model that learns to project the volumetric representation
to a $2$D representation that is then processed using a conventional $2$D \cnn.
Even with these adaptations, \textcite{vam} are challenged by overfitting and
suggest augmentation in the form of orientation pooling as a remedy.

\textcite{qi2017pointnet} present a neural network that operates directly on point clouds,
which was followed by several others~\cite{klokov17_escap_from_cells,li18_so_net}.
While much more efficient than the volumetric approaches, the generalization performance
of these models is lower (as exemplified in \cref{sph:fig:methods-zz-so3so3-zso3}),
because they operate directly on coordinate values.
Later iterations such as \textcite{qi2017pointnet++} present improvements
by learning features hierarchically, but they come with increased a computational cost.

Currently, the most successful approaches for $3$D shape analysis are view-based,
operating on rendered views of the $3$D object.
\textcite{su2015multi} introduced the idea, which gave rise to numerous follow-ups~\cite{vam,kanezaki16_rotat,bai2016gift}.
The high performance of these methods is in part due to the use of large pre-trained $2$D CNNs
(on ImageNet~\cite{Russakovsky2015imagenet}, for instance).

Volumetric and point cloud methods are not generally equivariant to $3$D rotations.
The multi-view methods are usually invariant to the discrete set of views considered,
and a large number of views would be required to approximate equivariance to continuous rotations.
These approaches all struggle with shape understanding in arbitrary orientations, even with significant training data augmentation.
The main objective of this chapter is to overcome this limitation.

\section{Preliminaries}
\label{sph:sec:math}

\subsection{Group and homogeneous space convolution}
Recall that a map \fun{\Phi}{E}{F} is equivariant
to a group $G$ when for any $g\in G$
\[\Phi(\lambda_g (f)) = \lambda'_g (\Phi (f)),\]
where $\lambda_g$ and $\lambda'_g$ are the group actions
on $E$ and $F$, respectively.

A straightforward example of equivariant representation is an orbit.
For an object $x$, its orbit $O(x)$ with respect to the group $G$ is
\begin{equation}
  O(x) = \{ \lambda_g x\; |\;  g\in G\}.
\end{equation}
When seeing $O(x)$ as a set (unordered), it is invariant to the action of $g$.
When seeing it as a list (ordered), it is equivariant, since $O(x)$ and $O(\lambda_u x)$
are related by a permutation.

Through this example it is possible to develop an intuition into the equivariance of the \gconv;
it can be viewed as averaging the inner-products of some function $f$
with all elements of the orbit of a ``flipped'' filter $k$.
Formally, we define the group convolution between \fun{f,\,k}{G}{\R} as
\[(f * k)(g) = \int\limits_{u \in G} f(u)(\lambda_uk)(g)\,dg  =
  \int\limits_{u \in G} f(u)k(u^{-1}g)\,dg \]

As shown in \cref{h:sec:gconv}, group convolution is equivariant to actions of the group.
In this chapter, we are interested in learning equivariant representations of spherical functions.
Since the sphere is not a group but a homogeneous space of \SO{3}, we specialize the
\hconv\ and \hcorr\ defined in~\cref{emvn:sec:homogeneous}.
For \fun{f,\,k}{S^2}{\R}, we have, where $\nu$ is north pole on the sphere,
\begin{align}
  (f * k)(y) &= \int\limits_{g\in \SO{3}} f(g\nu)k(g^{-1}y)\,dg, \label{sph:eq:sphconv}\\
  (f \star k)(g) &= \int\limits_{x\in S^2} f(gx)k(x)\,dx. \label{sph:eq:sphcorr}
\end{align}
Note that $f * k$ is on $S^2$ while $f \star k$ is on $\SO{3}$.
Since the space considered in this chapter is always the sphere,
we refer to \cref{sph:eq:sphconv} as spherical convolution
and to \cref{sph:eq:sphcorr} as spherical cross-correlation.

We also evaluated group convolutions on \cref{ptn:sec:ptn,emvn:sec:emvn},
but those are simpler cases.
In the \ptns\ of \cref{ptn:sec:ptn}, the dilated rotation group is abelian so the
simple change to canonical coordinates transformed the group convolution in a planar convolution.
In the \emvns\ of \cref{emvn:sec:emvn}, only discrete groups and homogeneous spaces were considered,
so the evaluation could be simplified by enumerating all elements.
For evaluation on continuous spaces these techniques do not work;
the solution is computation in the spectral domain, which we discuss next.

\subsection{Spherical harmonics}
\label{sph:sec:spher-conv}
To implement \cref{sph:eq:sphconv,sph:eq:sphcorr},
it is desirable to sample the sphere with well-distributed and compact cells with transitivity
(rotations exist which bring cells into coincidence).
Unfortunately, such a discretization does not exist~\cite{thurston97geotop}.
Neither the familiar sampling by latitude and longitude
nor the uniformly distributed sampling according to Platonic solids satisfies all constraints.
These issues are compounded with the eventual goal of performing cascaded convolutions on the sphere.

To circumvent these issues, we evaluate the spherical convolution in the spectral domain.
This is possible since the machinery of Fourier analysis extendeds
the well-known convolution theorem to functions on the sphere:
the spherical Fourier transforms of the spherical convolution and cross-correlation are products
of spherical Fourier transforms coefficients, as proved in \cref{h:sec:sphcnns}.
Recall the spherical Fourier transform and its inverse for \fun{f}{S^2}{\R} as
discussed in \cref{h:sec:fouriers2}.
For a function \fun{f}{S^2}{\R},
the spherical harmonics \fun{Y_m^\ell}{S^2}{\C} of degree $\ell$ and order $m$,
and coefficients $\hat{f}_m^{\ell} \in \C$, we have
\begin{align}
  f(x) &= \sum_{0 \le \ell \le b}\sum_{|m| \le \ell}\hat{f}_m^{\ell}Y_m^{\ell}(x) \label{sph:eq:isft}, \\
  \hat{f}_m^{\ell} &= \int\limits_{S^2} f(x) \overline{Y_m^{\ell}}(x)\, dx \label{sph:eq:sft},
\end{align}
where $b$ is the bandwidth of $f$.
We refer to \cref{sph:eq:sft} as the \acrfull{sft},
and to \cref{sph:eq:isft} as its inverse (\acrshort{isft}).
Revisiting \cref{sph:eq:sphconv}, we compute the spherical convolution
in the spectral domain  as
\begin{align}
  \widehat{f * k}_m^{\ell} = 2\pi \sqrt{\frac{4\pi}{2\ell+1}} \hat{f}_m^{\ell} \hat{k}_0^{\ell}. \label{sph:eq:sphconvspec}
\end{align}
To compute the convolution of a signal $f$ with a filter $k$,
we (i) expand $f$ and $k$ into their spherical harmonic basis (\cref{sph:eq:sft}),
(ii) compute the pointwise product (\cref{sph:eq:sphconvspec}),
and (iii) invert the spherical harmonic expansion (\cref{sph:eq:isft}).

This definition of spherical convolution differs
from spherical correlation which produces an output response on \SO{3}.
Convolution here can be seen as marginalizing the angle responsible
for rotating the filter about its north pole,
or, equivalently, considering zonal filters on the sphere.

\subsection{Practical considerations and optimizations}
To evaluate the \sft\ on a discretized setting,
we use equiangular samples on the sphere according
to the sampling theorem of~\textcite{driscoll1994computing}
\begin{align}
  \hat{f}_m^{\ell} &= \frac{\sqrt{2\pi}}{2b}\sum_{j=0}^{2b-1}\sum_{k=0}^{2b-1} a_j^{(b)} f(\theta_j, \phi_k)\overline{Y_m^{\ell}}(\theta_j, \phi_k), \label{sph:eq:dsft} \end{align}
where $\theta_j=\pi j/2b$ and $\phi_k=\pi k/b$ form the sampling grid,
and $a_j^{(b)}$ are the sample weights.
All required operations are matrix pointwise multiplications and sums,
which are differentiable and readily available in most automatic differentiation frameworks.
In our direct implementation, we precompute all needed $Y_m^{\ell}$,
and store them as constants in the computational graph.

\paragraph{Separation of variables}
We also implement a potentially faster \sft\ based on separation of variables as shown in \textcite{driscoll1994computing}.
Expanding $Y_m^{\ell}$ in \cref{sph:eq:dsft}, we obtain
\begin{equation}
  \begin{aligned}
  \hat{f}_m^{\ell} &= \sum_{j=0}^{2b-1}\sum_{k=0}^{2b-1} a_j^{(b)} f(\theta_j, \phi_k) q_m^{\ell} P_m^{\ell}(\cos{\theta_j})e^{-im\phi_k}   \\
  &= q_m^{\ell} \sum_{j=0}^{2b-1}a_j^{(b)} P_m^{\ell}(\cos{\theta_j}) \sum_{k=0}^{2b-1}f(\theta_j, \phi_k) e^{-im\phi_k}, \label{sph:eq:sep}
  \end{aligned}
\end{equation}
where $P_m^{\ell}$ is the associated Legendre polynomial,
and $q_m^{\ell}$ a normalization factor.
We compute the inner sum with a row-wise \fft\
and what remains is an associated Legendre transform, computed directly.
The same idea applies for the \isft.
We found that convolution computed using this method is roughly
as efficient as the naive approach when $b = 32$, but \num{2.4} times faster for $b = 64$.
There are faster \sft\ algorithms~\cite{driscoll1994computing,healy2003ffts}, which we did not attempt.

\paragraph{Leveraging symmetry} For real-valued inputs, $\hat{f}_{-m}^{\ell} = (-1)^{m}\overline{\hat{f}_{m}^{\ell}}$ (this follows from $\overline{Y_{-m}^{\ell}} = (-1)^m Y_m^{\ell}$). We thus only need compute half of the coefficients ($m > 0$).
Furthermore, we can rewrite the \sft\ and \isft\ to avoid computationally expensive complex number multiplications:
\begin{equation}
f = \sum_{0 \le \ell \le b} \left(\hat{f}_0^{\ell}Y_0^{\ell} + 2\sum_{m=1}^{\ell}  \,\Re(\hat{f}_m^{\ell})\Re(Y_m^{\ell}) - \,\Im(\hat{f}_m^{\ell})\Im(Y_m^{\ell})\right),
\end{equation}
where $\Re(x)$ indicate the real part of $x$ and $\Im(x)$ the imaginary.

\section{Method}
\label{sph:sec:method}
\Cref{sph:fig:overview} shows an overview of our method.
We define a block as one spherical convolutional layer, followed by optional pooling, and  nonlinearity.
A weighted global average pooling is applied at the last layer to obtain an invariant descriptor.
This section details the architectural design choices.

\begin{figure}[htbp]
\centering
\includegraphics[width=\linewidth]{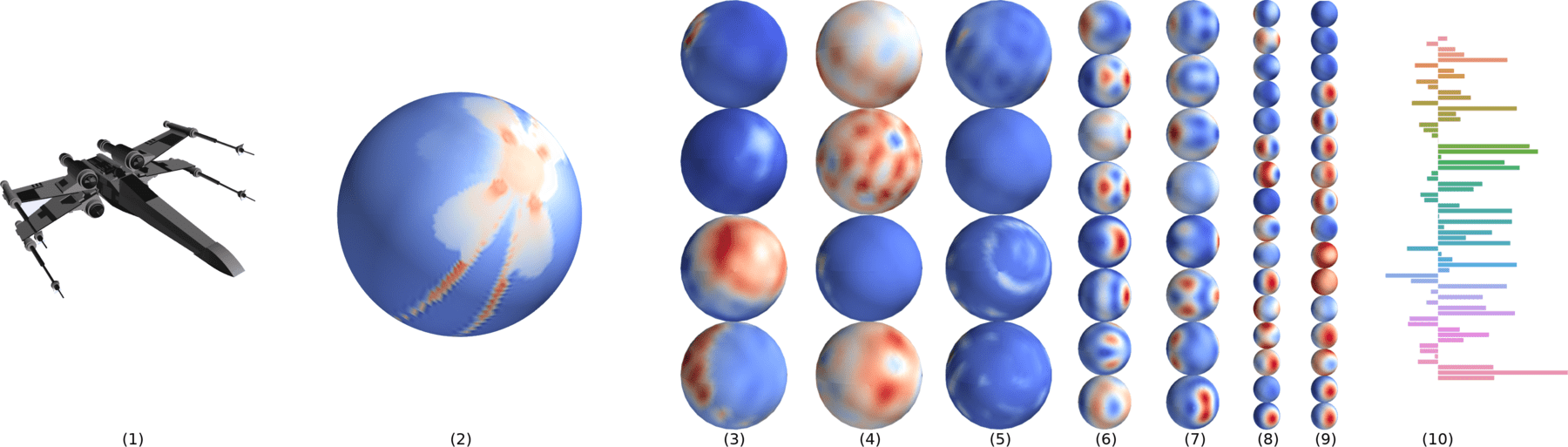}
\caption{
  Overview of our method.
  From left to right: a $3$D model (1) is mapped to a spherical function (2),
  which passes through a sequence of spherical convolutions, nonlinearities and pooling,
  resulting in equivariant feature maps (3--9).
  We show only a few channels per layer.
  A global weighted average pooling of the last feature map results in a descriptor invariant to rotation (10),
  which can be used for classification or retrieval.
  The input spherical function (2) may have multiple channels, in this picture we show the distance to intersection representation.
  }
\label{sph:fig:overview}
\end{figure}

\subsection{Spectral filtering}
In this section, we define the filter parameterization.
One possible approach would be to define a compact support around one of the poles
and learn the values for each discrete location, setting the rest to zero.
The downside of this approach is that there are no guarantees that the filter will be bandlimited.
If it is not, the \sft\ will be implicitly bandlimiting the signal,
which causes a discrepancy between the parameters and the actual realization of the filters
in the form of ringing effects.

To avoid this problem, we parameterize the filters in the spectral domain.
In order to compute the convolution of a function $f$ and a filter $k$, only the \sft\ coefficients of order $m=0$ of $k$ are necessary.
In the spatial domain, this implies that for any $k$,
there is always a zonal filter (constant value per latitude) $k_z$
such that $f * k = f * k_z$ for all $f$.
Thus, it makes sense to constrain the learned filters to be zonal.

The spectral parameterization is also faster because
it eliminates the need to compute the filter \sft,
since the filters are already in the spectral domain
as required by the convolution computation.

\paragraph{Non-localized filters}
A first approach is to parameterize the filters by all \sft\ coefficients of order $m=0$,
which are real-valued when the filter is real-valued.
For example, given $32 \times 32$ inputs, the maximum bandwidth is $b=16$,
so there are $16$ parameters to be learned: $\hat{h}_0^0, \ldots \hat{h}_0^{15} $.
A downside is that the filters may not be local; however, locality may be learned.

\paragraph{Localized filters}
\label{sph:sec:locfilters}
From Parseval's theorem and the derivative rule from Fourier analysis we can show that spectral smoothness corresponds to spatial decay.
This idea is used in the construction of graph-based neural networks \cite{bruna2013spectral},
and also applies to the filters spanned by the family of spherical harmonics of order zero ($m=0$).

Consider a normalized, zero-mean zonal filter $k'(\theta,\phi) = k(\cos\theta)$
and the functional $\Lambda_k$, which measures how spread out $k'$ is
with respect to the north pole ($\theta=0$):
\begin{align}
  \Lambda_k = \int\limits_{-1}^1 (x-1)^2k(x)^2dx.
  \label{sph:eq:loc}
\end{align}
Let us write $(x-1)k(x)$ in terms of $\hat{k}^\ell$, the Legendre coefficients of $k(x)$.
We'll need the following recursive relation between the Legendre polynomials~\cite{lebedev1972special}
\begin{align}
xP^{\ell}(x) = \frac{(\ell+1)P^{\ell+1}(x) + \ell P^{\ell-1}(x)}{2\ell + 1}.
\end{align}
We write
\begin{align*}
  k(x) &= \sum_{\ell=1}^\infty \hat{k}^\ell P^\ell(x),\\
  xk(x) &= \sum_{\ell=1}^\infty \hat{k}^\ell
          \frac{(\ell+1)P^{\ell+1}(x) + \ell P^{\ell-1}(x)}{2\ell + 1}, \\
  (x-1)k(x) &= \sum_{\ell=1}^\infty \left( \frac{\ell\hat{k}^{\ell-1}}{2\ell-1} +
              \frac{(\ell+1)\hat{k}^{\ell+1}}{2\ell+3} -
              \hat{k}^\ell \right)
              P^\ell(x) \\
       &\approx \sum_{\ell=1}^\infty \left( \frac{\hat{k}^{\ell-1}}{2} +
              \frac{\hat{k}^{\ell+1}}{2} -
              \hat{k}^\ell \right)
                P^\ell(x) \\
        &= \sum_{\ell=1}^\infty \frac{\Delta_{2} \hat{k}^\ell}{2} P^\ell(x),
\end{align*}
where $\Delta_{2} \hat{k}^\ell$ is the second order finite difference
of the coefficients $\hat{k}$ around $\ell$, a metric of smoothness.
We finally return to \cref{sph:eq:loc} and write
\begin{align}
  \Lambda_k = \int\limits_{-1}^1 (x-1)^2k(x)^2dx \approx \sum_{\ell=1}^\infty (\Delta_{2} \hat{k}^\ell)^2n_{\ell}^2,
\end{align}
where $n_\ell$ are constants.
This shows that minimizing second order finite differences of Legendre coefficients results in localized filters.
In particular, $\Delta_{2} \hat{h}^\ell$ is zero when $\hat{h}^{\ell-1}$, $\hat{h}^{\ell}$,
and $\hat{h}^{\ell+1}$ are collinear, which is what we encourage.

We fix $n$ uniformly spaced degrees $\ell_i$ (denoted anchor points) and learn the correspondent coefficients $\hat{h}^{\ell_i}$.
The coefficients for missing degrees are then obtained by linear interpolation.
Given consecutive anchor points at $\ell_i$ and $\ell_j$,
we have $\Delta_{2} \hat{h}^\ell=0$ for all $\ell_i < \ell < \ell_j$,
encouraging filter localization.

A second advantage of this procedure is that
the number of parameters per filter is independent of the input resolution.
\Cref{sph:fig:sphconv} illustrates the complete spherical convolution computation with localized filters, and
\cref{sph:fig:conv0} shows some filters learned by our model; the right side filters
are with the localization procedure.

\begin{figure}[htbp]
\centering
\includegraphics[width=\linewidth]{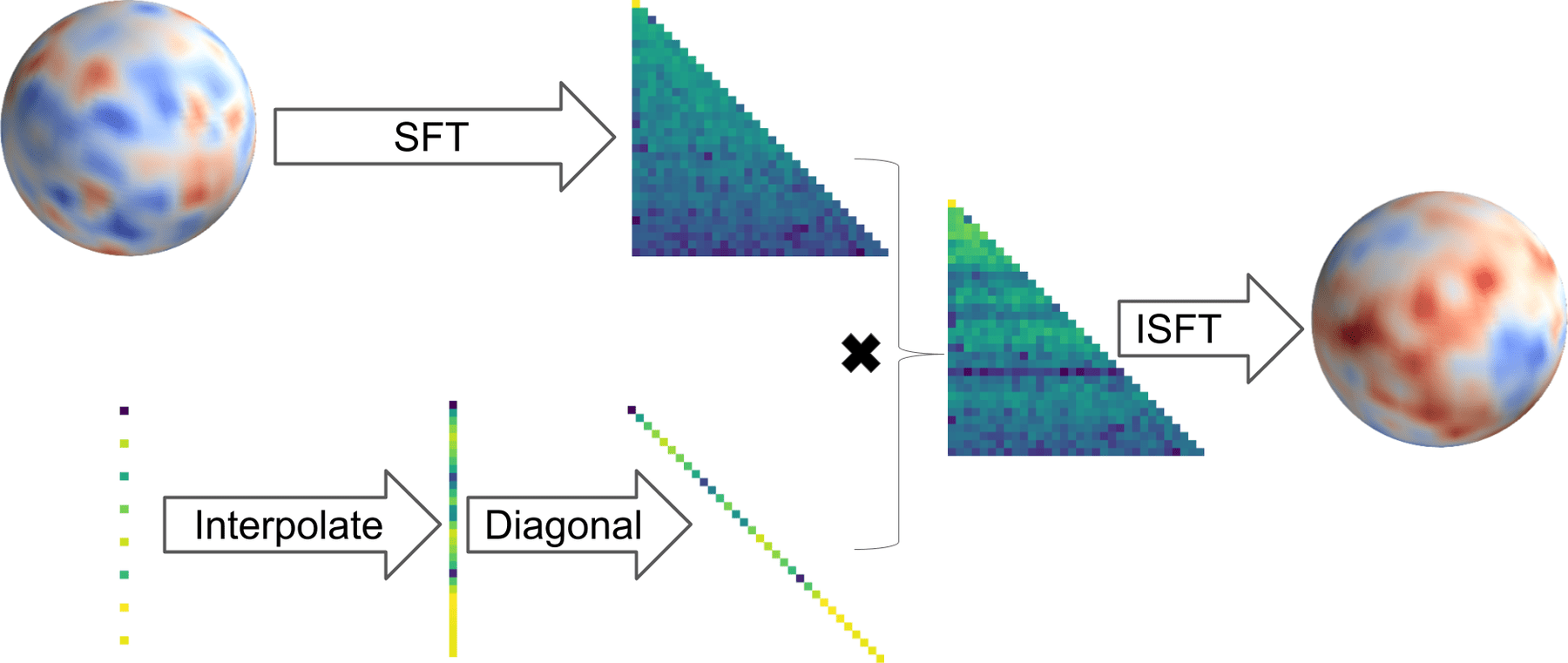}
\caption{Spherical convolution with localized filters.
  We arrange the input \sft\ in a lower-triangular matrix where the $i$-th row
  contains coefficients of order $\ell=i$.
  The anchor points shown in the bottom-left are learned (eight parameters, in this example);
  the rest of the filter spectrum is linearly interpolated.
  Then, evaluation of \cref{sph:eq:sphconvspec} for all degrees is a simple multiplication with a
  diagonal matrix constructed from the zonal filter coefficients.
  Finally, we apply the \isft\ to the resulting spectrum to recover the output spherical function.
  }
  \label{sph:fig:sphconv}
\end{figure}

\begin{figure}[htbp]
\centering
\includegraphics[width=\linewidth]{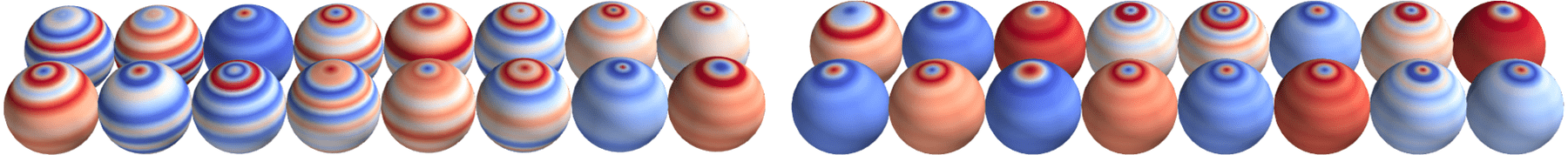}
\caption{
  Filters learned in the first layer.
  The filters are zonal.
  \emph{Left:} 16 nonlocalized filters. \emph{Right:} 16 localized filters.
  Nonlocalized filters are parameterized by all spectral coefficients (16, in the example).
  Even though locality is not enforced, some filters learn to respond locally.
  Localized filters are parameterized by a few points of the spectrum (4, in the example), the rest of the spectrum is obtained by interpolation; notice how the energy is more concentrated around the pole.
  }
  \label{sph:fig:conv0}
\end{figure}
\subsection{Pooling}
The conventional spatial max pooling used in \cnns\ has two drawbacks in spherical \cnns:
(i) it requires an expensive \isft\ to convert back to spatial domain, and
(ii) equivariance is not fully preserved,
especially because of unequal cell areas from equiangular sampling.
\Wap\ takes into account the cell areas to mitigate the latter,
but is still affected by the former.

We introduce the \spp\ for spherical \cnns. %
If the input has bandwidth $b$, we remove all coefficients with degree larger or equal than $b/2$ (effectively, a lowpass box filter).
Such operation causes ringing artifacts,
which can be mitigated by previous smoothing,
although we did not find any performance advantage in doing so.
Note that spectral pooling was proposed before for conventional \cnns~\cite{rippel15_spect_repres_convol_neural_networ},
where the high-frequency $2$D Fourier transform coefficients are dropped.

We found that spectral pooling is significantly faster%
\footnote{For the experiments in \cref{sph:tab:ablation},
  one epoch for the \wap\ model in the first row takes \num{234}{s},
  versus \num{132}{s} for the \spp\ model in the third row, both on a Nvidia 1080 Ti.},
reduces the equivariance error, but also reduces classification accuracy.
The choice between \spp\ and \wap\ is application-dependent.
For example, we found that \spp\ more suitable for applications that directly require low equivariance error, such as shape alignment.
\Cref{sph:tab:equivariance} shows the equivariance errors, while
\Cref{sph:tab:ablation} shows the classification performance for each method.

\subsection{Global pooling}
In fully convolutional networks,
it is usual to apply a global average pooling at the last layer to obtain a descriptor vector
where each entry is the average of one feature channel.
We use the same idea; however, the equiangular spherical sampling results in cells of different areas,
so we compute a weighted average instead, where a cell's weight is the sine of its latitude.
We denote it \wgap.
Note that the \wgap\ is invariant to rotation, therefore the descriptor is also invariant. \Cref{sph:fig:invariance} shows examples of such descriptors.

An alternative is to use the magnitude per degree of the \sft\ coefficients;
formally, if the last layer has bandwidth $b$ and
$\hat{f^{\ell}} = [\hat{f}_{-\ell}^{\ell},\hat{f}_{-\ell+1}^{\ell}, \ldots, \hat{f}_{\ell}^{\ell}]$,
then $d = \left[\norm{\hat{f}^0}, \norm{\hat{f}^1}, \ldots \norm{\hat{f}^{b-1}}\right]$
is an invariant descriptor \cite{arfken1966mathematical}.
We denote this approach \magl\ (magnitude per degree $\ell$).
We found no difference in classification performance when using it (see \cref{sph:tab:ablation}).

\begin{figure}[htbp]
\centering
\includegraphics[width=\linewidth]{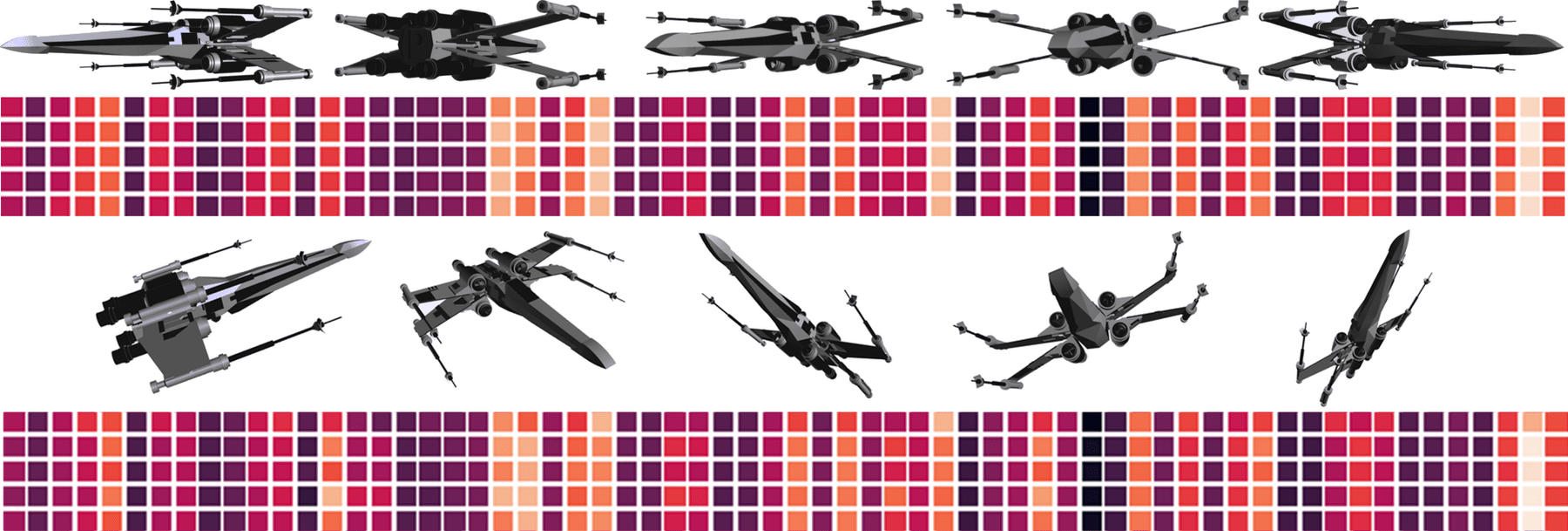}
\caption{
  Our model learns descriptors that are nearly invariant to input rotations.
  From top to bottom: five azimutal rotations and correspondent descriptors (one per row),
  five arbitrary rotations and correspondent descriptors.
  The invariance error is negligible for azimuthal rotations;
  since we use equiangular sampling, the cell area varies with the latitude,
  and rotations around $z$ preserve latitude.
  Arbitrary rotations brings a small invariance error, for reasons detailed in \cref{sph:sec:equivariance}.
  }
\label{sph:fig:invariance}
\end{figure}

\subsection{Spherical sampling}
The most common way to sample a function on the sphere is with an equiangular grid.
For instance, we use the grid from \textcite{driscoll1994computing} in most experiments, defined for $n \times n$ resolution as $\theta_i=\pi i/n$, $\phi_j=2\pi j/n$ with $0 < i,\,j < n-1$.

A major problem with equiangular grids is that the sampling near the poles is
much finer than near the equator.
This would not be an issue if we always had bandlimited input signals,
but there is no such guarantee when inputs are constructed from arbitrary meshes, as is our case.
This manifests as equivariance errors,
because some high frequency details may only come to light under certain orientations.

A potential improvement is the \healpix\ spherical grid~\cite{healpix},
which is widely used in the astrophysics community and has several appealing properties:
\begin{itemize}
\item \textbf{Hierarchical}
  The grid consists of a quadrilateral mesh on the sphere.
  At the coarsest resolution it has 12 cells; to increase the resolution, each cell is divided in 4.
  This is convenient when performing pooling,
  as it is trivial to obtain any cell's parent at a lower resolution.
  When using the \healpix\ grid we do not apply spectral or weighted average pooling;
  the average or max over sibling cells is the proper aggregation operation.
\item \textbf{Equal area}
  The area of all quadrilateral cells at some resolution is the same,
  which results in an uniform sampling of the sphere.
\item \textbf{Iso-latitude}
  The \healpix\ pixels cannot be arranged in a $2$D matrix as the equiangular grids.
  However, they are arranged in a number of parallel latitude circles.
  This allows some memory savings by using a method similar to the separation of variables in
  \cref{sph:eq:sep}, where each latitude circle is processed separately.
\end{itemize}
\Cref{sph:fig:dh-hp} shows the grid points and examples of a mesh converted to spherical function using different grids.

\begin{figure}[htbp]
\centering
\includegraphics[width=\linewidth]{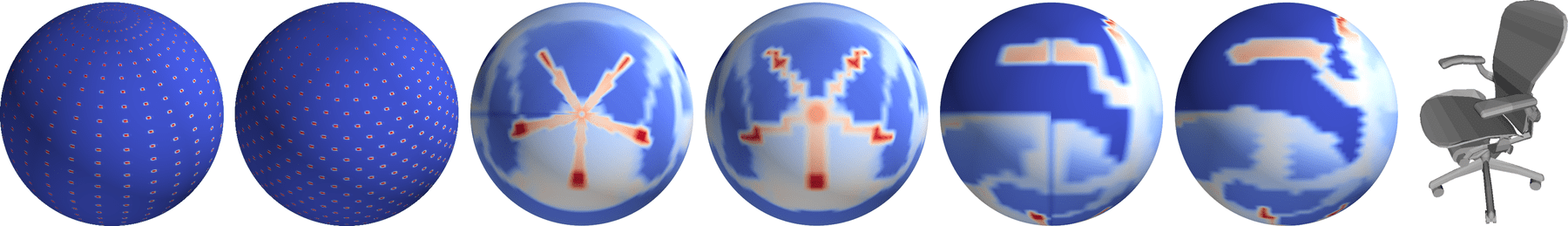}
\caption{
  We show the grids and views of a mesh of a chair converted to spherical functions
  $f_{E}$ and $f_H$ for the equiangular and \healpix\ grids, respectively.
  From left to right:
  (1) equiangular grid,
  (2) \healpix\ grid,
  (3) south pole of $f_E$,
  (4) south pole of $f_H$,
  (5) equator of $f_E$,
  (6) equator of $f_H$,
  (7) original mesh.
  Note how the equiangular grid results in higher resolution at the poles and lower at the equator,
  while \healpix\ is approximately uniform everywhere.
  In particular, the \healpix\ capture better the arms of the chair,
  even with fewer points in total (3072 vs 4096).
  }
\label{sph:fig:dh-hp}
\end{figure}

However, one disadvantage is that while there are sampling theorems that guarantee exact spherical harmonics decomposition and reconstruction of bandlimited functions for equiangular grids \cite{driscoll1994computing,healy2003ffts}, no such theorems exist for arbitrary grids.
This means that applying an \sft\ followed by an \isft\ to a function $f$
sampled on a \healpix\ grid does not result in $f$, even when $f$ is bandlimited.
Our experiments show that the advantages of a uniform grid are worth anyway (see \cref{sph:tab:m40}).

Most of our equiangular grid implementation is also applicable for the \healpix\ grid,
because after conversion to the spectral domain, the input grid does not matter anymore.
The number and arrangement of spectral coefficients is the same for any input grid.
Since our filters are also defined in the spectral domain,
the same model can be used with any input grid.
\subsection{Architecture}
\label{sph:sec:arch}
Our main architecture has two branches, one for distances and one for surface normals.
This performs better than having two input channels
and slightly better than having two separate voting networks for distance and normals.
Each branch has eight spherical convolutional layers,
and $16,\,16,\,32,\,32,\,64,\,64,\,128,\,128$ channels per layer.
We perform pooling and feature concatenation of one branch into the other
when the number of channels increase, and eight anchor points per filter are used.
\Wgap\ is applied after the last layer, which is then projected into the number of classes.

\section{Experiments}
\label{sph:sec:experiments}

The major advantage of our model is inherent equivariance to \SO{3} and
our experiments are on tasks that benefit from it;
namely, shape classification and retrieval in arbitrary orientations, shape alignment,
and panoramic image segmentation.
The focus is on problems related to $3$D shapes due to
the availability of large datasets and published results on them.
\subsection{Preliminaries}
\paragraph{Shape to sphere projection}
\label{sph:sec:spherical-3d-object}
$3$D shapes are usually represented by mesh or voxel grid,
which need to be converted to spherical functions.
The conversion function itself must be equivariant to rotations;
our learned representation will not be equivariant
if the input is pre-processed by a non-equivariant function.

Given a mesh or voxel grid, we first find the bounding sphere and its center.
For the equiangular grid%
\footnote{An analogous procedure applies when using the \healpix\ grid.}
given a desired resolution $n$,
we cast $n \times n$ equiangular rays from the center,
and obtain the intersections between each ray and the mesh/voxel grid.

Let $d_{jk}$ be the distance from the center to the farthest point of intersection,
for a ray at direction $(\theta_j, \phi_k)$.
We define the function on the sphere as $f(\theta_j, \phi_k) = d_{jk}$, $1 \le j,k \le n$.

For mesh inputs, we also compute the angle $\alpha$ between the ray and the surface normal at the
intersecting face, yieding two channels $f(\theta_j, \phi_k) =  [d_{jk}, \sin\alpha]$.

Technically, this representation is suitable for star-shaped objects,
defined as objects that contain an interior point from where the whole boundary is visible.
Moreover, the center of the bounding sphere must be one of such points.
In practice, we do not check if these conditions hold --
results show that even if the representation is ambiguous or non-invertible, it is still useful.

We do not present results for point clouds,
but projection to the sphere is also possible for this kind of input.
We can assign each point to the closest ray and
average or max pooling with respect to the distance to center
can then be used to obtain a single channel on the sphere.

\paragraph{Training}
Except when stated otherwise, we train using Adam~\cite{KingmaB14},
for $48$ epochs, initial learning rate of \num{1e-3}, divided by $5$ on epochs $32$ and $40$.
We use data augmentation for training, performing rotations,
anisotropic scaling and mirroring on input meshes,
and adding jitter to the bounding sphere center when constructing the spherical function.
Even though our learned representation is equivariant to rotations,
augmenting the inputs with rotations is still beneficial due to interpolation and sampling effects.

\subsection{Rotated handwritten digit classification}
Our initial experiment is on the spherical \mnist\ dataset introduced by \textcite{s.2018spherical}.
The dataset consists of handwritten digits from \mnist\
projected into a hemisphere and optionally rotated.
On the rotated versions, each of the \num{50}{k} \mnist\ test entries
is assigned one of 100 possible rotations,
and the \num{10}{k} test entries are assigned to 20 different possible rotations.
This requires generalization to unseen rotations to achieve good performance.
We do not perform rotation augmentation in this experiment to keep the comparison fair.

We utilize a network with six spherical convolutional layers,
and $16,\,16,\,32,\,32,\,58,\,58$ channels per layer,
with a total of \num{57}{k} parameters to match \textcite{s.2018spherical}.
Pooling is performed when the number of channels increase.
We train for $12$ epochs, initial learning rate of \num{1e-3},
divided by $5$ on epochs $6$ and $10$.

\Cref{sph:tab:sphmnist} shows the results.
We outperform the baseline in all modes, which evidences that
the limitation of our zonal filters is overcome by having deeper and wider networks,
which is possible because the spherical convolutions we use
are much more efficient than the \SO{3} cross-correlations of \textcite{s.2018spherical}.
We manage to keep the number of parameters low even with
deeper networks by parameterizing the spectra as described in~\cref{sph:sec:locfilters}.

\begin{table}[htbp]
  \caption{Spherical \mnist\ classification accuracy.
    \emph{c} means canonical orientation (no rotation).
    \emph{x/y} indicates training on \emph{x} and testing on \emph{y}.
    Comparison is against \textcite{s.2018spherical}.
   \label{sph:tab:sphmnist}}
  \centering
    {
      \begin{tabular}{@{} l
        S[table-format=1.3]
        S[table-format=1.3]
        S[table-format=1.3]
        S[table-format=2e1] @{}}
      \toprule
      Method         & {c/c}      & {\sotsot} & {c/\SO{3}} & {\# params} \\
      \midrule
      planar \cite{s.2018spherical}   & \bgl 0.98  & 0.23            & 0.11       & 58e3         \\
      $S^2$CNN \cite{s.2018spherical} & 0.96       & \bgl 0.95       & \bgl 0.94  & 58e3         \\
      Ours     & \bgd 0.987 & \bgd 0.985      & \bgd 0.981 & 57e3         \\
      \bottomrule
    \end{tabular}
    }
\end{table}

\subsection{3D object classification}

\begin{table}[htbp]
  \caption{ModelNet40 classification accuracy per instance.
    Spherical CNNs are robust to arbitrary rotations, even when not seen during training,
    while also having one order of magnitude fewer parameters and faster training. \label{sph:tab:m40}
  }
  \centering
  \begin{tabular}{l
    S[table-format=2.1]
    S[table-format=2.1]
    S[table-format=2.1]
    S[table-format=2.1e1]
    r}
      \toprule
      Method                                  & {\zz}     & {\sotsot} & {\zsot} & {params}   & {inp. size}            \\
      \midrule
      PointNet \cite{qi2017pointnet}          & 89.2      & 83.6            & 14.7       & 3.5e6      & \bgl {$2048 \times 3$}      \\
      PointNet++ \cite{qi2017pointnet++}      & 89.3      & 85.0            & 28.6       & 1.7e6      & \bgd {$1024 \times 3$}      \\
      VoxNet \cite{maturana2015voxnet}        & 83.0      & 73.0            & {-}        & 0.9e6      & {$30^3$}               \\
      SubVolSup \cite{vam}                    & 88.5      & 82.7            & 36.6       & 17e6       & {$30^3$}               \\
      SubVolSup MO \cite{vam}                 & 89.5      & 85.0            & 45.5       & 17e6       & {$20 \times 30^3$}     \\
      MVCNN 12x \cite{su2015multi}            & 89.5      & 77.6            & 70.1       & 99e6       & {$12 \times 224^2$}    \\
      MVCNN 80x \cite{su2015multi}            & \bgl 90.2 & 86.0            & \bgl 81.5  & 99e6       & {$80 \times 224^2$}    \\
      RotationNet 20x \cite{kanezaki16_rotat} & \bgd 92.4 & 80.0            & 20.2       & 58.9e6     & {$20 \times 224^2$}    \\
      Ours (equiangular)                      & 88.9      & \bgl 86.9       & 78.6       & \bgd 0.5e6 & {$2 \times 64^2$}      \\
      Ours (\healpix)                          & 88.3      & \bgd 87.4       & \bgd 82.6  & \bgd 0.5e6 & \bgl {$2 \times 3072$} \\
      \bottomrule
    \end{tabular}
\end{table}

This section shows classification performance on ModelNet40~\cite{wu20153d}.
We consider the following three modes.
\begin{description}
\item[\zz] trained and tested with azimuthal rotations,
\item[\sotsot] trained and tested with arbitrary rotations, and
\item[\zsot] trained with azimuthal and tested with arbitrary rotations.
\end{description}
\Cref{sph:tab:m40} shows the results.
All competing methods suffer a sharp drop in performance when arbitrary rotations are present,
even when they are seen during training.
Our model is more robust, but there is a noticeable drop for mode \zsot.
In the equiangular sampling case, the cell area varies with latitude.
Rotations around $z$ preserve latitude,
so regions at same height are sampled at same resolution during training,
but not during test in mode \zsot.
We show that this is improved by using the \healpix\ spherical sampling.
Even at a lower resolution (3072 vs 4096 pixels), the \healpix\ grid achieves superior performance in when the full rotation group is considered.

We evaluate competing methods using default settings of their published code.
The volumetric~\cite{vam} and point cloud based~\cite{qi2017pointnet,qi2017pointnet++}
methods cannot generalize to unseen orientations (\zsot).
The multi-view~\cite{su2015multi,kanezaki16_rotat} methods can be seen
as a brute force approach to equivariance;
\acrshort{mvcnn}~\cite{su2015multi} generalizes to unseen orientations up to a point.
Yet, our spherical \cnn\ outperforms it,
even with orders of magnitude fewer parameters and faster training.
Interestingly, RotationNet~\cite{kanezaki16_rotat}, which is the
state of the art on ModelNet40 classification,
fails to generalize to unseen rotations, despite being multi-view based.
This was also observed in one of their supplementary experiments,
and communication with the authors confirmed our evaluation results.

Equivariance to \SO{3} is not needed when only azimuthal rotations are present (\zz);
the full potential of our model is not exercised in this case.
The multi-view based models outperform ours with limited rotations due to ImageNet~\cite{Russakovsky2015imagenet} pre-training and their extra capacity,
which allows discriminating small details between shapes.

\subsection{3D object retrieval}
We run retrieval experiments on ShapeNet Core55~\cite{shapenet2015},
following the SHREC'17 $3$D shape retrieval rules~\cite{savva2017shrec},
which include random \SO{3} perturbations.

We train the network for classification on the 55 core classes (we do not use the subclasses),
with an extra in-batch triplet loss  to encourage descriptors to be close
for matching categories and far for non-matching.
The triplet loss follows \textcite{schroff2015facenet}.
Let $f$ produce descriptors for a given input,
($p_i$, $p_j$) be pairs with the same label in the same batch and $\alpha$ be a margin,
\begin{equation}
  \mathcal{L} = \sum_{(p_i,p_j)}\norm{f(p_i)-f(p_j)} - \norm{f(p_i)-f(n_{i,j})} + \alpha,
\end{equation}
where we obtain the $n_{i,j}$
using semi-hard negative mining over the in-batch elements $n$ that have different label than $p_i$:
\begin{align}
  & n_{i,j} =  \argmin_n \norm{f(p_i) - f(n)}  \nonumber \\
  &\text{such that } \norm{f(p_i)-f(n)} > \norm{f(p_i)-f(p_j)}.
\end{align}

The invariant descriptor is used with a cosine distance for retrieval.
We first compute a threshold per class that maximizes the training set F-score.
For test set retrieval, we return elements whose distances are below their class threshold and include all elements classified as the same class as the query.
\Cref{sph:tab:shrec} shows the results.
Our model matches the state-of-the-art performance at the time
(from \textcite{furuya2016deep}),
with significantly fewer parameters, smaller input size, and no pre-training.

\begin{table}[htbp]
  \caption{SHREC'17 perturbed dataset results.
    We show precision (P), recall (P) and mean average precision (mAP).
    \emph{micro} average is adjusted by category size, \emph{macro} is not.
    The sum of \emph{micro} and \emph{macro} mAP is the score used for ranking.
    We match the state of the art even with significantly fewer parameters,
    smaller input resolution, and no pre-training.
    Top results background is dark, runner-ups light.
  }%
  \centering
  \scriptsize
  \begin{tabular}{l
    S[table-format=1.2,table-auto-round]S[table-format=1.2,table-auto-round]S[table-format=1.2,table-auto-round] c
    S[table-format=1.2,table-auto-round]S[table-format=1.2,table-auto-round]S[table-format=1.2,table-auto-round] c
    S[table-format=1.4]
    rS[table-format=2.1]}
    \toprule
                                           & \multicolumn{3}{c}{micro} &            & \multicolumn{3}{c}{macro} &  &  &  & {\multirow{2}{*}{params}}                                            \\
    \cmidrule{2-4} \cmidrule{6-8}
                                           & {P@N}                       & {R@N}        & {mAP}                       &  & {P@N}                      & {R@N}                           & {mAP}        &  &     {score}       &     {input size}                  & {$\times 10^6$} \\
    \midrule
    Furuya et al. \cite{furuya2016deep}    & \bgd 0.814                & 0.683      & 0.656                     &  & \bgd 0.607               & 0.539                         & \bgd 0.476 &  & \bgl 1.132 & {$126\times 10^3$}    & 8.4             \\
    Ours (equiangular)                     & \bgl 0.717                & 0.737      & 0.685                     &  & \bgl 0.450               & 0.550                         & \bgl 0.444 &  & 1.129      & \bgl {$2\times 64^2$} & \bgd 0.5        \\
    Ours (\healpix)                        & 0.695                     & \bgd 0.774 & \bgl 0.692                &  & 0.416                    & \bgd 0.606                    & 0.442      &  & \bgd 1.134 & \bgd {$2\times 3072$} & \bgd 0.5        \\
    Tatsuma et al. \cite{tatsuma2009multi} & 0.705                     & \bgl 0.769 & \bgd 0.696                &  & 0.424                    & \bgl 0.563                    & 0.418      &  & 1.11       & {$38\times224^2$}     & 3               \\
    \textcite{s.2018spherical}             & 0.701                     & 0.711      & 0.676                     &  & {-}                      & {-}                           & {-}        &  & {-}        & {$6\times 128^2$}     & 1.4             \\
    \textcite{bai2016gift}                 & 0.660                     & 0.650      & 0.567                     &  & 0.443                    & 0.508                         & 0.406      &  & 0.97       & $50\times224^2$       & 36              \\
    \bottomrule
  \end{tabular}
  \label{sph:tab:shrec}
\end{table}

\subsection{Shape alignment}
\label{sph:sec:align}
Our learned equivariant feature maps are applicable to shape alignment using spherical correlation.
Given two shapes from the same category (not necessarily the same instance), under arbitrary orientations, we input them to the network and collect the feature maps at some layer.
We compute the correlation between each pair of corresponding feature maps, and add the results.
The result is a real-valued function on \SO{3}. The input that maximizes this function
corresponds to the rotation that aligns both shapes~\cite{makadia2010spherical}.

Features from deeper layers are richer and carry more semantic value, but are at lower resolution.
We run an experiment to determine the performance of the shape alignment per layer,
while also comparing with the spherical correlation done at the network inputs (not learned).

\begin{table}[htbp]
  \caption{Shape alignment median angular error in degrees.
    The intermediate learned features are best suitable for this task.}
  \label{sph:tab:alignment}
  \centering%
    \begin{tabular}{@ {} lSSSS @{}}
      \toprule
      & {bed}        & {chair}      & {sofa}       & {toilet}     \\
      \midrule
      input & 91.63      & 111.47     & 12.15      & 21.65      \\
      conv2 & 85.64      & 21.10      & 14.47      & 14.95      \\
      conv4 & \bgd 12.73 & \bgd 14.63 & \bgd 10.03 & \bgd 11.03 \\
      conv6 & 16.70      & 18.92      & 15.83      & 17.62      \\
      \bottomrule
    \end{tabular}
\end{table}

We select categories from ModelNet10 that do not have rotational symmetry
so that the ground truth rotation is unique and the angular error is measurable.
These categories are: \emph{bed, sofa, toilet, chair}.
Only entries from the test set are used.
Results are in \cref{sph:tab:alignment},
while \cref{sph:fig:alignment} shows some examples.
The learned features are superior to the spherical shape representation (the inputs to our network)
for this task, and best performance is achieved when aligning intermediate layers.
The resolution at conv4 is $32 \times 32$, which corresponds to cell dimensions up to \ang{11.25},
so we cannot expect errors much lower than this.

\begin{figure}[htbp]
\centering
\includegraphics[width=\linewidth]{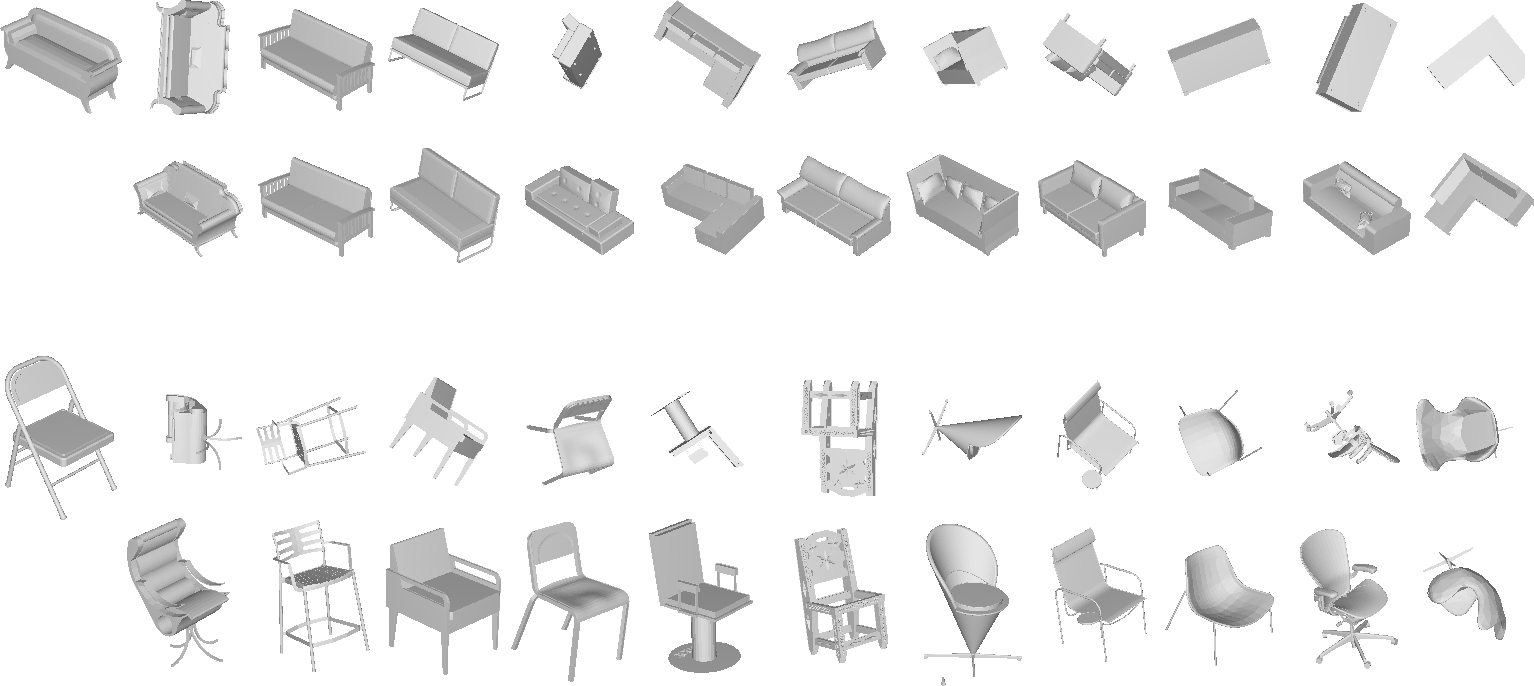}
\caption{
  Shape alignment for two categories.
  We align shapes by running spherical correlation on their feature maps.
  The semantic features learned can be used to align shapes from the same class even with large appearance variation.
  \emph{1st and 3rd rows:} reference shape, followed by queries from the same category.
  \emph{2nd and 4th rows:} Corresponding aligned shapes.
  Last column shows failure cases.
}
\label{sph:fig:alignment}
\end{figure}

\subsection{Equivariance error analysis}
\label{sph:sec:equivariance}
Conventional planar \cnns\ are often said to be translation equivariant,
but in reality they exhibit a degree of translational equivariance error
introduced by max pooling and discretization~\cite{zhang2019shiftinvar}.
Analogous effects happen in our model.

Even though spherical convolutions are equivariant to \SO{3} for bandlimited inputs,
and spectral pooling preserves bandlimit,
there are other factors that may introduce equivariance errors.
We quantify these effects in this section.

We create a new test set by randomly rotating each input in the original test set,
and collect feature maps produced by our model from both sets.
Since each relative rotation is known,
we apply the rotation to the feature maps and measure the average relative error.
\Cref{sph:tab:equivariance} shows the results, which elicit a number of conclusions.
The pointwise nonlinearity does not preserve bandlimit, and cause equivariance errors (rows 1, 4).
The mesh to sphere map is only approximately equivariant,
which can be mitigated with larger input dimensions (\emph{input} column for rows 1, 5).
Error is smaller when the input is bandlimited (rows 1, 7).
Spectral pooling is exactly equivariant,
while max-pooling introduces higher frequencies and has larger error than \wap\ (rows 1, 2, 3).
Error for an untrained model demonstrates that the equivariance is by design and not learned (row 6); the error is actually smaller because the learned filters are usually high-pass,
which increase the pointwise relative error.
A linear model with bandlimited inputs has negligible equivariance error, as expected (row 8).

\begin{table}[htbp]
  \caption{Equivariance error.
    Error is negligible for bandlimited inputs and linear layers.
    Pointwise nonlinearities increase equivariance errors.
    In practice, the error reduces with spectral pooling and larger input/feature resolutions.
  }
  \label{sph:tab:equivariance}
  \centering
  \scriptsize
    \begin{tabular}{@{} l
      ccccc c
      S[table-format=1.2]S[table-format=1.2]S[table-format=1.2]S[table-format=1.2]S[table-format=1.2]S[table-format=1.2]S[table-format=1.2]
      @{}}
      \toprule
                     & \multicolumn{5}{c}{configuration} &       & \multicolumn{7}{c}{error per layer}                                                              \\
      \cmidrule{2-6}       \cmidrule{8-14}
                     & res.                              & blim. & pool & lin. & train &  & {input} & {conv1} & {conv2} & {conv3} & {conv4} & {conv5} & {conv6} \\
      \midrule
      1. baseline    & $64^2$                            & no    & \wap  & no     & yes     &  & 0.05    & 0.11    & 0.12    & 0.14    & 0.16    & 0.17    & 0.15    \\
      2. maxpool     & $64^2$                            & no    & max  & no     & yes     &  & 0.05    & 0.11    & 0.12    & 0.14    & 0.18    & 0.19    & 0.15    \\
      3. specpool    & $64^2$                            & no    & \spp   & no     & yes     &  & 0.05    & 0.11    & 0.12    & 0.10    & 0.10    & 0.09    & 0.08    \\
      4. linear      & $64^2$                            & no    & \wap  & yes    & yes     &  & 0.05    & 0.12    & 0.13    & 0.15    & 0.14    & 0.12    & 0.04    \\
      5. lowres      & $32^2$                            & no    & \wap  & no     & yes     &  & 0.09    & 0.15    & 0.18    & 0.21    & 0.21    & 0.21    & 0.20    \\
      6. untrained   & $64^2$                            & no    & \wap  & no     & no      &  & 0.05    & 0.09    & 0.07    & 0.07    & 0.11    & 0.07    & 0.04    \\
      7. blim        & $64^2$                            & yes   & \wap  & no     & yes     &  & 0.00    & 0.10    & 0.11    & 0.11    & 0.15    & 0.14    & 0.04    \\
      8. blim/lin/sp & $64^2$                            & yes   & \spp   & yes    & yes     &  & 0.00    & 0.01    & 0.01    & 0.00    & 0.00    & 0.00    & 0.00    \\
      \bottomrule
\end{tabular}
\end{table}

\subsection{Ablation study}
In this section we evaluate variations of our method to determine the sensitivity to design choices.
We assess the effects from our contributions \spp, \wap, \wgap, and localized filters,
and evaluate how the network size affects performance.
Results in \cref{sph:tab:ablation} show that the use of \wap, \wgap,
and localized filters significantly improve performance,
and also that larger networks lead to further performance improvements.
In summary, factors that increase bandwidth (e.g. max-pooling) also increase equivariance error and
may reduce accuracy.
Global operations in early layers (e.g., non-local filters) prevent hierarchical feature learning
and also reduce accuracy.

\begin{table}[htbp]
  \caption{Ablation study.
    Spherical \cnn\ classification accuracy on rotated ModelNet40. %
    We compare combinations of input resolution,
    local and global pooling,
    filter localization and number of network parameters.
  }
  \label{sph:tab:ablation}
  \centering
    \begin{tabular}{@{} cccccc S[table-format=2.1] @{}}
      \toprule
      res.   & pool & global pool & loc. & params & details  & {acc. [\%]}     \\
      \midrule
      $3072$ & avg  & avg         & yes  & $0.49$M  & \healpix  & 87.4          \\
      $64^2$ & \wap  & \wgap        & yes  & $0.49$M  & default  & 86.9          \\
      $64^2$ & \wap  & \magl       & yes  & $0.54$M  &          & 86.9          \\
      $64^2$ & \spp   & \wgap        & yes  & $0.49$M  &          & 85.8          \\
      $64^2$ & max  & \wgap        & yes  & $0.49$M  &          & 86.7          \\
      $64^2$ & avg  & \wgap        & yes  & $0.49$M  &          & 86.7          \\
      $64^2$ & \wap  & avg         & yes  & $0.49$M  &          & 86.4          \\
      $64^2$ & \wap  & \wgap        & no   & $0.49$M  &          & 85.9          \\
      \midrule
      $32^2$ & \wap  & \wgap        & yes  & $0.39$M  &          & 85.0          \\
      $32^2$ & \wap  & \wgap        & yes  & $0.69$M  & deeper   & 85.6          \\
      $32^2$ & \wap  & \wgap        & yes  & $1.06$M  & wider    & 85.5          \\
      $32^2$ & \wap  & \wgap        & yes  & $0.12$M  & narrower & 83.8          \\
      \bottomrule
    \end{tabular}
  \end{table}

\section{Extension to panorama segmentation}
\label{sph:sec:pano}
Panoramic sensors are common for tasks that benefit from $360^\circ$ field of views.
For example, omnidirectional sensing for robotic navigation was explored as early as \textcite{yagi90iros},
and panoramic images have provided the building blocks for early VR environments~\cite{chen95siggraph}.
While the hardware profile of the early imaging devices limited their broad adoption
(e.g. mirror-lens catadioptric sensors~\cite{nayar97cvpr}),
recent hardware and algorithmic advances have created a proliferation of consumer-grade $360^\circ$ cameras.
With the resulting surge in panoramic image datasets, it is natural to investigate machine learning solutions for visual perception tasks on the sphere.
Recent efforts include PanoContext~\cite{zhang14eccv}, Im2Pano3D~\cite{song2016im2pano3d},
and \textcite{deng17iccar}.

In this section, we introduce a \schn\ for dense labeling on the sphere,
which is equivariant to camera orientation,
lifting the usual requirement for ``upright'' panoramic images,
and scalable for larger practical datasets.
The \schn\ leverages spherical residual bottleneck blocks arranged in an encoder-decoder style hourglass
architecture~\cite{newell2016stacked} to produce dense labels in an \SO{3}-equivariant fashion.

The approach presented on this section was, to the best of our knowledge,
the first to bring \SO{3}-equivariance to the task of spherical panorama segmentation,
and one of the first to naturally handle the spherical geometry.
After publication, interest in this task has increased and impressive
results were obtained by both equivariant~\cite{CohenWKW19}
and non-equivariant methods~\cite{JiangHKPMN19,zhang2019orientation}.

\subsection{Architecture}
\begin{figure}[htbp]
  \includegraphics[width=\textwidth]{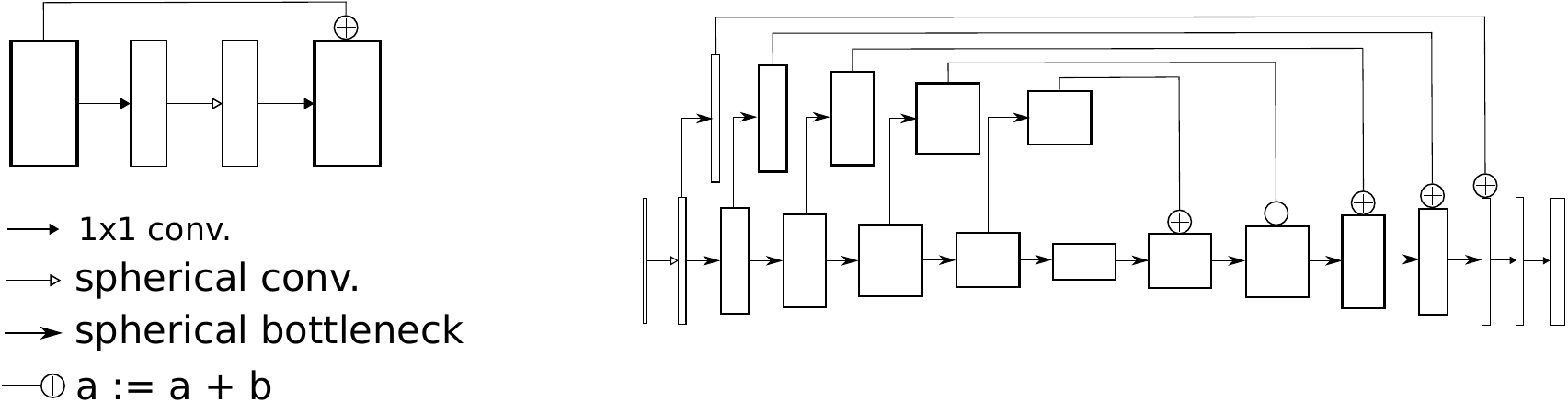}
  \caption{\Schn\ architecture.
    Blocks represent feature maps and arrows, operations. 
    The height of a block represent the spatial resolution and the width, the number of channels.
    Left: spherical residual bottleneck block.
    Right: spherical hourglass network.}
  \label{sphcnn:fig:sphhg}
\end{figure}

For dense labeling, we need deeper and more sophisticated architectures
than the ones presented so far in this chapter.
The outline of our architecture resembles an hourglass, with a series of downsampling blocks
followed by upsampling blocks, enabling high resolution outputs.
One key observation of \textcite{he2016deep} is that a residual block with two $3\times 3$ convolutional
layers can be replaced by a bottleneck block with $1\times 1$, $3\times 3$, and $1\times 1$ layers,
saving compute and increasing performance.
Since $1\times 1$ convolutions are pointwise operations, hence \SO{3}-equivariant,
we can apply the same idea to spherical convolutional layers,
yielding the spherical residual bottleneck blocks (\cref{sphcnn:fig:sphhg}).

Both dilated~\cite{yu2015multi} and deformable~\cite{dai2017deformable} convolutions have proven useful  for semantic segmentation and the spherical filters we use share some of their properties.
As explained in \cref{sph:sec:locfilters}, the number of anchor points in the spectrum loosely
determines their receptive field, as in a dilated convolution.
While the number of anchor points is fixed and small,
the weights learned at these anchors also change the support to some amount.

\subsection{Experiments}
For segmentation experiments we use all the synthetic labeled panoramas
from~\textcite{song2016im2pano3d} (as rendered by~\textcite{song2016ssc}).
We map each sky-box image onto the sphere, with a train-test split of \num{75}{k}-\num{10}{k}.

The input size is $256\times 256$, we use between $32$ and $256$ channels per layer
and $16$ anchor points for filter localization.
We create a $2$D baseline (2DHG) that has the exact same architecture of \schn,
with the spherical convolutions replaced by $2$D convolutional layers with $3\times 3$ kernels.
\Cref{sphcnn:tab:sphhg} shows the results after training for 10 epochs.
\schn\ outperforms the baseline under arbitrary orientations, localized filters outperform global, and using larger models can improve the \schn\ performance.
\Cref{sphcnn:fig:sphhg1} shows some sample outputs of our model and the
2DHG non-equivariant baseline.

\begin{table}[htbp]
  \caption{Spherical panorama semantic segmentation results.
    We show the intersection-over-union (IoU) for different combinations of
    canonical orientation (c), and $\mathbf{SO}(3)$ uniformly sampled perturbations
    on train and test sets.}
  \label{sphcnn:tab:sphhg}
  \centering
  \begin{tabular}{l
    S[table-format=0.3,table-auto-round]
    S[table-format=0.3,table-auto-round]
    S[table-format=0.3,table-auto-round]}
    \toprule
                                       & \multicolumn{3}{c}{train/test orientation}         \\
                                       & {c/c}              & {\sotsot} & {c/\SO{3}}  \\
    \midrule
    2DHG                               & \bgd 0.6393        & 0.5292          & 0.2237      \\
    SCHN/global                        & 0.5343             & \bgl 0.5376     & \bgl 0.4758      \\
    \schn\ (ours)                        & \bgl 0.5683        & \bgd 0.5582     & \bgd 0.5024 \\
    \midrule
    \schn/large                         & 0.5983             & 0.5873          & {-}         \\
    Im2Pano3D \cite{song2016im2pano3d} & 0.330\footnotemark & {-}             & {-}         \\
    \bottomrule
  \end{tabular}
  \footnotetext{Results for a harder extrapolation problem. Included here for reference only.}
\end{table}

\begin{figure}[htbp]
  \centering
  \includegraphics[width=0.8\textwidth]{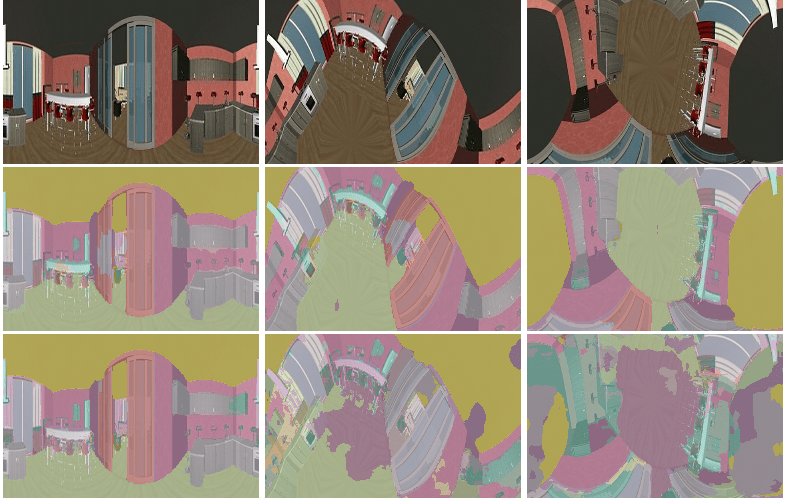}
  \caption{Panorama semantic segmentation results.
    Top: input spherical panoramas.
    Middle: segmentation masks produced by our network.
    Bottom: 2DHG baseline (non-equivariant) results.
    The baseline model can handle only azimuthal rotations (leftmost frame)}
  \label{sphcnn:fig:sphhg1}
\end{figure}

\section{Conclusion}
In this chapter we presented the spherical \cnns,
which leverage spherical convolutions to achieve equivariance to continuous \SO{3} perturbations.
We show applications to $3$D object classification, retrieval, and alignment,
as well as an extension to semantic segmentation of spherical panoramas.
The method is applicable to any data that can be represented as a spherical function;
for example, meteorological and cosmological data are good candidates.
We show that our model can naturally handle arbitrary input orientations,
requiring fewer parameters and smaller input sizes than the alternatives.

\glsresetall
\chapter{Equivariance across domains}
\chaptersubtitle{The Cross-Domain $3$D Equivariant Image Embeddings}

\label{cross:sec:cross}

\section{Introduction}
The success of \cnns\ in computer vision has shown that large training datasets
and task-specific supervision are sufficient to learn rich feature representations
for a variety of tasks such as image classification and object detection~\cite{he16_resnetv2}.
However, numerous challenges remain, such as motion estimation and view synthesis,
which require complex geometric reasoning and for which labeled data is not available at scale.
For such problems there is a trend towards developing models with geometry-aware latent representations
that can learn the structure of the world without requiring full geometric supervision~\cite{kulkarni15nips,rhodin18eccv,yang2015weaklysupervised,yan16nips,mahjourian18cvpr,Eslami1204}.

A desirable property for an image embedding is
robustness to $3$D geometric transformations of the scene.
Rotations are challenging to computer vision algorithms because $3$D rotations of objects in the world
can induce large transformations in image space.
In recent years, there has been much attention given to the study of equivariant neural networks~\cite{cohen2016group,worrall2017harmonic,WeilerHS18},
as equivariant maps provide a natural formulation to address group transformations on images.
Despite these advances, designing a $3$D rotation equivariant map of $2$D images is an open challenge.
This is because the rotation of a $3$D object does not act directly on the pixels
of the resulting image due to the intervening camera projection.
Thus, a map that is equivariant by design cannot be constructed
and instead an (approximate) equivariant map must be learned.
This is the central objective of this chapter:
how to learn an embedding for images of $3$D objects
that is equivariant to $3$D rotations of the objects?

Our solution borrows from recent works on $3$D rotation equivariant \cnns\ for $3$D shape
representations~\cite{s.2018spherical,esteves18eccv}, as presented in \cref{sph:sec:sphcnn}.
These show that spherical convolutional networks can achieve state of the art performance
on $3$D shape classification and pose estimation tasks,
and the equivariance property allows handling of $3$D shapes in arbitrary orientations
with minimal impact on performance.

In this chapter, we propose to learn equivariant embeddings of images by mapping them
into the equivariant feature space of a spherical \cnn\ trained on $3$D shape datasets.
This approach is unique in that we directly supervise the desired target embeddings with
pretrained $3$D shape features, without any other task-specific training losses.
By bootstrapping with features of $3$D shapes, our model
(i) encodes images with the shape properties of the observed object and
(ii) has an underlying spherical structure that is equivariant to $3$D rotations of the object.

The cross-domain embeddings serve different applications, either directly or indirectly,
without requiring additional task-specific supervised training.
We illustrate this point by showing results on two disparate challenges which we now describe.
\paragraph{Relative orientation estimation}
Our model maps images to rotation equivariant embeddings defined on the sphere  (\cref{cross:fig:pose_synth}-left).
The relative orientation between two images of a $3$D object
is the rotation that brings their embeddings into alignment.
We compute it with a simple spherical cross-correlation,
avoiding the usual formulation of pose estimation as
classification~\cite{vpsKpsTulsianiM15} or regression~\cite{mahendran17cvprw} tasks.
This method approaches state-of-the-art performance even though it uses no task-specific training.
The same procedure is useful to align $2$D images with $3$D shapes.

\paragraph{Novel view synthesis}
The learned embeddings also encode enough shape properties to synthesize new views.
By training a decoder from the spherical embedding space with a photometric loss,
we have a model for novel view synthesis.
To generate new views, we simply rotate the latent embedding (\cref{cross:fig:pose_synth}-right)
before feeding it to the decoder.
No task specific supervision in the form of an image and its rotated counterpart is necessary.

\begin{figure}[htbp]
\centering
 \includegraphics[width=\linewidth]{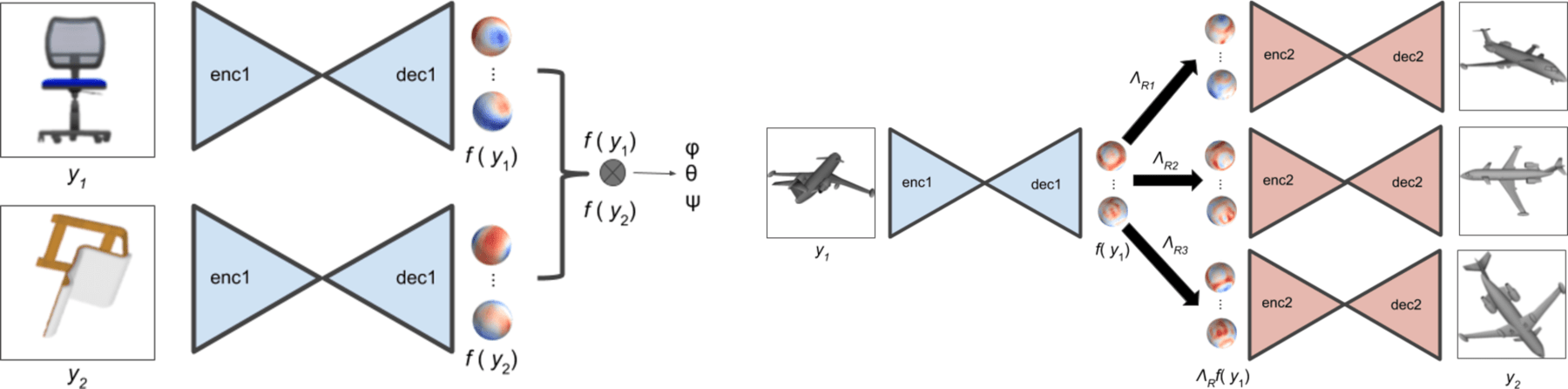}%
\caption{
  \textbf{Overview.} We learn category based spherical $3$D equivariant embeddings that can be correlated for relative pose estimation, and rotated for novel view synthesis.
  \textit{Left: relative pose estimation}.
  Given 2 images of objects from same class, we obtain the respective spherical embeddings.
  The relative pose is computed from the spherical correlation between the spherical embeddings.
  \textit{Right: novel view synthesis.}
  We first embed the input view into the spherical representation,
  then we apply the target rotation to the spherical feature maps, and feed them to the synthesizer to generate novel views.
}
\label{cross:fig:pose_synth}
\end{figure}

To reiterate, our main contribution is a novel cross-domain neural model
that can map $2$D images into a $3$D rotation equivariant feature space.
Generating spherical feature maps from $2$D images is a complex high-dimensional regression task,
mapping between topologies, which requires a novel encoder-decoder architecture.
We consider the relative pose and view synthesis tasks as proxies
for analyzing the representation power of our learned embeddings.
Nonetheless, our promising experimental results indicate
these cross-domain embeddings may be useful for a variety of tasks.

Most of the content in this chapter appeared originally in~\textcite{esteves-icml19}.

\section{Related Work}
\label{cross:sec:related_work}
A number of recent works have introduced geometric structure to
the feature representations of deep neural networks.
The most common setting is to learn intermediate features that can be directly manipulated
or transformed for a particular task.
For example, in \textcite{rhodin18eccv,worrall2017iccv,cohen15iclr,hinton11icann,yang2015weaklysupervised,kulkarni15nips},
geometric transformations can be directly applied to image features
(in some cases disentangled pose features), in order to synthesize new views.
In a related approach, \textcite{tatarchenko16eccv} use an encoder-decoder architecture
that augments pose information to the latent image embedding.

One drawback of these methods is that they typically require full supervision,
and both the geometric transformation parameters and the corresponding target image
must be available during training.
Furthermore, training with source-target pairs requires covering a large sample space --
synthesizing views from arbitrary relative $3$D orientations requires sampling pairs
of poses (sampling space is $\SO{3} \times \SO{3}$).
In contrast, our model is more sample efficient and trains with a single image per example
(sampling space is \SO{3}).

Different to the methods aforementioned, the Homeomorphic VAEs \cite{falorsi18icmlw} provide
an unsupervised way to learn an $\SO{3}$-latent-embedding for images.
However, it is unclear if it scales to practical scenarios,
as it requires a dense sampling of views to learn a continuous embedding
while dealing with intra-class variations.

There is a variety of other approaches to view synthesis.
Most relevant to our setting are the self-supervised methods that
learn geometrically meaningful embeddings using differentiable rendering to match semantic
maps~\cite{yao20183d}, shading information~\cite{henderson2018learning},
fusing latent embeddings from multiple views
and improving synthesis using multiple rendering steps~\cite{Eslami1204}.

\paragraph{Pose Estimation}
The task of object pose estimation has been a long standing problem with numerous applications
in computer vision and robotics.
Most approaches can be categorized as keypoint-based or
direct pose estimation as regression or classification.
Keypoint-based methods for object pose estimation include \textcite{pavlakos17object3d}
and \textcite{grabner18arxiv}, the former predicting semantic keypoints and
the latter bounding box corners, from which object pose follows from a \pnp\ algorithm.
Direct pose estimation methods include \textcite{vpsKpsTulsianiM15} and \textcite{su15iccv} who
formulate it as classification over a quantized viewpoint space.
\textcite{kanezaki16_rotat} train a joint $3$D object classification and pose \cnn\
from multiple views with unknown viewpoints, however the viewpoint sampling is coarse,
providing limited resolution in the estimated pose.
\textcite{mahendran17cvprw} introduces a carefully designed \cnn\ for viewpoint regression,
analyzing different representations and geodesic loss functions,
while \textcite{MousavianCVPR17} introduce a MultiBin orientation regression network.
KeypointNet~\cite{suwajanakorn2018discovery} learns category-specific semantic keypoints
and their detectors using only a geometric loss.
The $3$D keypoints are also useful for determining relative pose,
although the method struggles when exposed to arbitrary $3$D rotations
due to lack of rotation equivariance.

The key ingredient in our approach is a novel method to map $2$D images to
rotation-equivariant $3$D shape embeddings,
essentially encoding an image with $3$D geometric structure.
The choice of geometric representation (spherical embeddings) is intentional
in order to maintain rotation equivariance.
Alternative geometric representations such as volumetric
(e.g., the single-view volumetric reconstruction from \textcite{tulsiani17cvpr})
would not be rotation equivariant,
although \textcite{weiler3dsteerable} could be a reasonable alternative.

\section{Method}
We now detail our image embedding model. We begin by revisiting spherical \cnns\ (\Cref{cross:sec:sphericalcnn}) as a means to learn rich equivariant embeddings for $3$D shapes, and \cref{cross:sec:basic} introduces our cross-domain architecture that learns to map $2$D images into the same embedding space. \Cref{cross:sec:relpose_subsec,cross:sec:nvs_subsec} describe how these image embeddings can be used for relative pose estimation and novel view synthesis.
\subsection{Spherical CNNs}
\label{cross:sec:sphericalcnn}
Recall that the spherical \cnns~\cite{s.2018spherical,esteves18eccv}
and described in \cref{sph:sec:sphcnn}
produce \SO{3}-equivariant feature maps for inputs defined on the sphere,
and were useful for a variety of $3$D shape analysis tasks,
where inputs often appear in arbitrary pose, for which equivariance is particularly helpful.
In this chapter, we use the same spherical convolutional model described in \cref{sph:sec:sphcnn}
due to its efficiency and performance on $3$D shape alignment tasks,
but now we tackle the more challenging problem of relative $3$D pose estimation from $2$D images.

We start by briefly summarizing the spherical \cnns.
For functions $f$ and $k$ defined on the sphere, their convolution is
\begin{align*}
  (f * k)(y) &= \int\limits_{R\in \SO{3}} f(R\nu)k(R^{-1}y)\,dR,
\end{align*}
where $\nu$ is the north pole of the sphere (a stationary point under \SO{2}).
This extends to $K_{in}$ input channels and $K_{out}$ output channels in a straightforward manner
\begin{align}
  (f * k)_{j}(y) &= \sum_{i=1}^{K_{in}} \int\limits_{R\in \SO{3}} f_i(R\nu)k_{ij}(R^{-1}y)\,dR,
\end{align}
where $f_i$ and $(f * k)_{j}$ denote the input, and output channels, respectively.

This convolution is the primary building block of spherical \cnns.
We define $s$ as a spherical \cnn\ that maps $K_{\text{in}}$-channel spherical inputs to
$K_{\text{out}}$-channel spherical feature maps.
Precisely, in the single-channel case we have \fun{s}{L^2(S^2)}{L^2(S^2)} where $L^2(S^2)$ denotes
square-integrability, necessary for evaluation in the spectral domain.

The \SO{3} equivariance of spherical \cnns\ manifests as follows.
For any \fun{x}{S^2}{\R^{K_\text{in}}},
\begin{align}
 s(\lambda_{R} x) =  \lambda_{R} s(x),
 \label{cross:eq:s_equiv}
\end{align}
where $\lambda_R$ is the rotation operator by $R\in \SO{3}$%
\footnote{We use $\lambda_R$ as a generic rotation operator that can be applied to $3$D shapes and spherical functions, scalar or vector-valued. Interpretation should be clear from context.}.
Technically, the equivariance is only approximate as the nonlinear activations (\relus)
and spatial pooling operations break the bandlimiting assumptions which otherwise guarantee
equivariance. However, in practice these errors are negligible.

To use spherical \cnns\ with $3$D shapes, we must provide a map $r$
that converts any $3$D shape $M$ to a spherical representation.
While there are different sensible choices for $r$, we use the simple ray-casting technique
described in \cref{sph:sec:spherical-3d-object}.
Most importantly, the map $r$ is equivariant to $3$D rotations which ensures end-to-end equivariance of our $3$D shape feature maps: $s(r(\lambda_{R} M)) = \lambda_{R}s(r(M))$.

\subsection{Cross-domain spherical embeddings}
\label{cross:sec:basic}
The primary objective of this chapter is to learn an \emph{image} embedding
that can capture similar underlying $3$D shape properties and equivariant structure.
Specifically, we define an \rgb\ image as the projection $c$ of a shape $M$,
where $c$ can be any usual camera projection model, e.g., perspective or orthographic.
We seek a map $f(c(M))$ that captures the shape properties of $M$
and retains an equivariant structure: $f(c(\lambda_R M)) = \lambda_R f(c(M))$.
This is challenging because $c$ is a camera projection which is not $3$D rotation equivariant,
so we cannot have equivariance by construction.
We propose to learn an approximately equivariant embedding model $f$
using a spherical \cnn\ for $3$D shapes, i.e., a pretrained $s(r(M))$, as supervision.
We wish to learn $f$ such that $f(c(M)) = s(r(M))$.
When learned successfully, the equivariance of $f$ follows simply from \cref{cross:eq:s_equiv},
\begin{align}
	f(c(\lambda_R M)) &= s(r(\lambda_R M)) \nonumber \\
                    &= \lambda_R s(r(M))  \\
                    &=\lambda_R f(c(M)). \nonumber
    \label{cross:eq:img_equiv}
\end{align}
Since $c$ and $r$ are fixed and not part of the trainable model, we substitute $y=c(M)$ and $x=r(M)$ going forward to simplify notation.

Learning $f$ involves predicting high dimensional multi-channel spherical maps from a single image.
The two major design challenges are deciding the structure of $f(y)$ and the training loss
$\mathcal{L}(x, y)$ from predicted embedding $f(y)$ to the target ground truth $s(x)$.

\paragraph{Loss function}
We first describe the training loss.
For simplicity, we describe the loss for a single channel
(in general the final loss is aggregated over the channels).
We represent the spherical function $s(x)$ on an equiangular grid;
a discretized $s(x)$ of resolution $N \times N$ %
is indexed by pairs $(\theta_i, \phi_j)$, where $i,j \in \left\{0,1,...,N-1\right\}$.
The set $\{\theta_i\}$ uniformly samples colatitude,
and similarly $\{\phi_j\}$ uniformly samples azimuth.
Since our target embeddings are unbounded, we found crucial to use a robust loss such as Huber%
\footnote{median pose errors are $\approx 10^\circ$ larger with $L_1$ or $L_2$},
and a Huber breakpoint at \num{1} works well in practice.

We define the loss as follows, where $\mathcal{H}$ is the Huber loss,
and a weight is introduced to account for the nonuniform equirectangular spherical sampling
($\sin(\theta)$ is proportional to the sample area):
 \begin{align}
 \mathcal{L}(x, y) &= \frac{1}{N^2} \sum_{i,j = 0}^{N-1} \mathcal{H}(\sin(\theta_i)(f(y)-s(x))(\theta_i, \phi_j)) \\
  \mathcal{H}(\alpha) &= \begin{cases}
    0.5 \alpha^2                   & \text{for } |\alpha| \le 1, \\
    |\alpha| - 0.5 & \text{otherwise.}
  \end{cases}  \label{cross:eq:loss}
\end{align}

\paragraph{Architecture}
We now describe the structure of our cross-domain embedding model $f$.
With $f(y)$, we are predicting spatially dense spherical feature maps from a single $2$D image.
Convolutional encoder-decoder architectures with skip connections such as
U-Net~\cite{ronneberger15miccai} or Stacked Hourglass~\cite{newell2016stacked}
produce excellent results when some pixelwise association can be made between the input and output
domains (e.g., for dense labeling tasks like semantic segmentation~\cite{deeplabv3plus2018}).
In our case, we must learn a cross-domain map from a $2$D image
(a function on the Euclidean $2$D space) to functions on the sphere.
In this setting, architecture features such as skip connections are not only unnecessary but can be
harmful by forcing the network to incorrectly consider associations across topologies.%
\footnote{Although cross-modal learning has been explored in different domains, e.g., \textcite{aytar2016soundnet}, these methods predict representations in $\mathbb{R}^n$ from different modalities which is a simpler application of $1$D and $2$D CNNs.}

We consider an encoder-decoder architecture, with a number of rounds of downsampling from
input image to a $1$D vector, followed by rounds of upsampling
from the $1$D vector to the set of spherical feature maps.
Following the best practices for this kind of architecture proposed by \textcite{dcgan},
we employ a fully convolutional network with strided convolutions for downsampling
and transposed convolutions for upsampling.
We apply azimuthal circular padding after the $1$D bottleneck,
when the feature maps are expected to assume spherical topology.
We also found performance improvements by replacing convolutional layers with residual layers \cite{he2016deep}.
\Cref{cross:fig:architecture} illustrates the architecture.

\paragraph{Target embeddings}
The remaining design choice is how to select the appropriate target feature maps from $s(x)$.
For all our experiments, $s(x)$ is a ten layer residual spherical \cnn\
trained for ModelNet40 $3$D shape classification on $64 \times 64$ inputs
(i.e., $r(M)$ produces a single-channel $64 \times 64$ output).
The decision of which feature maps to use as the target is application-dependent.
For category-based relative pose estimation, we want features that are void of instance level details,
obtained by taking the target embedding from deeper layers.
For view synthesis, we wish that the instance-level details are preserved,
so we embed to an earlier layer.
We employ the same pretrained spherical \cnn\ for all experiments
(on ModelNet40, ObjectNet3D and ShapeNet), which attests to decent generalization performance.

\begin{figure}[htbp]
\centering
\includegraphics[width=0.8\linewidth]{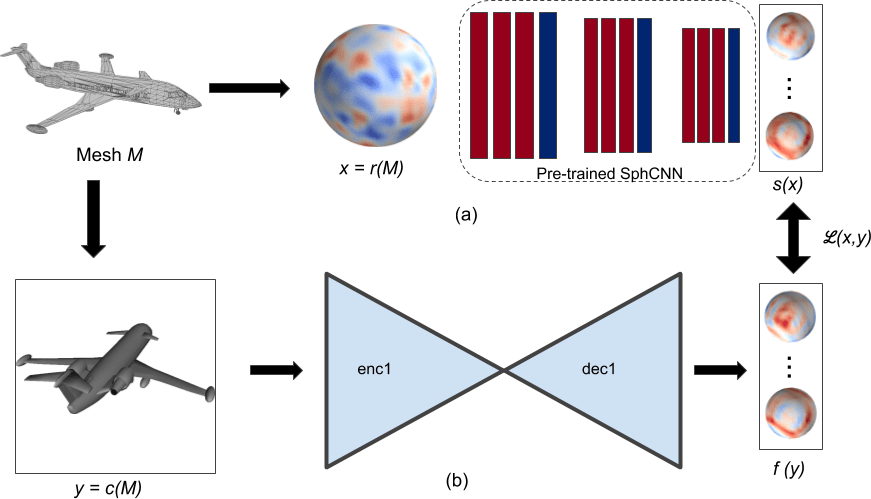}%
\caption{
  \textbf{Cross-domain spherical embeddings.} Given a $3$D mesh,
  (a) we map it to a spherical function, and use a pre-trained spherical CNN to compute its spherical embedding.
  (b) During training, we render a view and learn the transformation to the target spherical embedding using an encoder-decoder.
  For inference, the inputs are $2$D images and only the encoder-decoder part is used.
}
\label{cross:fig:architecture}
\end{figure}

\subsection{Relative pose estimation}
\label{cross:sec:relpose_subsec}
The cross-domain embeddings produced by $f$ are sufficient to recover the relative pose
between pairs of images (even between different instances of the same object category).
Since $f$ is trained to produce \SO{3}-equivariant feature maps,
we can apply $3$D rotations directly to them.
Relative orientation estimation amounts to identifying the rotation
that brings feature maps into alignment.
For alignment we use a simple spherical cross-correlation.
Given two images $y_1$ and $y_2$ we estimate their relative pose as
$\argmax_{R\in \SO{3}} G(R)$ where
\begin{align}
G(R) &= \sum_{k=0}^{K-1} \int\limits_{p\in S^2} f_k(y_1)(p) \cdot f_k(y_2)(R^Tp) dp.
\label{cross:eq:corrargmax}
\end{align}
Here the subscript $k$ denotes the $k$-th spherical channel in the image embedding.
The correlation map $G(R)$ can be evaluated efficiently in the spectral domain
(similar in spirit to spherical convolution, as shown in \cref{h:sec:sphcnns}).
See~\textcite{kostelec2008ffts,makadia2010spherical} for details and implementation.

The resolution of $G(R)$ depends on the resolution of
the input spherical functions $f(y_1)$ and $f(y_2)$.
We set our learned feature maps have a spatial resolution of $16 \times 16$ in this task,
corresponding to a cell width of \ang{22.5} at the equator,
which is too coarse for precise relative pose.
To increase resolution, we upsample the features by a factor of four
using bicubic interpolation prior to evaluating \cref{cross:eq:corrargmax}.

This method also serves to estimate relative pose between an image $y$ and mesh $M$,
by computing the cross-correlation (\cref{cross:eq:corrargmax}) between $f(y)$ and $s(r(M))$.

Recall that during training we take arbitrarily oriented meshes as inputs.
A training example consists of (i) the target embeddings from the pretrained $s(r(M))$
and (ii) a single view rendered from a fixed camera $c(M)$.
No orientation supervision is necessary,
and the model never sees pairs of images together during training.
This reduces the sample complexity and leads to faster convergence.

\subsection{Novel view synthesis}
\label{cross:sec:nvs_subsec}
The spherical embeddings learned by our method can also be applied towards novel view synthesis. 
The rotation equivariant spherical \cnn\ feature maps undergo the same rotation as its inputs,
so if we learn the inverse map that generates an image back from its embedding,
we can rotate the embeddings and generate novel views.

We define the inverse map $g=f^{-1}$ such that $g(f(y)) = y$.
If we let $y_1 = c(M)$ and $y_2 = c(\lambda_R M)$,
i.e., $y_{1}$ and $y_{2}$ are images taken of an object $M$ with a fixed camera $c$,
before and after the object undergoes a $3$D rotation, respectively.
It follows that
\begin{equation}
  g(\lambda_R f(y_1)) = g(f(y_2)) = y_2.
\end{equation}

This gives a way to generate a novel view of the $3$D object under rotation
from the spherical embedding of a single view.
The procedure is as follows; see \cref{cross:fig:pose_synth} for an illustration.
\begin{enumerate}
    \item Obtain the embedding $f(y_1)$ of given view $y_1$,
    \item Rotate the embedding by the desired $R \in \SO{3}$, obtaining $f(y_2) = \lambda_R f(y_1)$,
    \item Apply $g$ to obtain the novel view $y_2=g(f(y_2))$.
\end{enumerate}

Since $g$ is learning the inverse of $f$, we similarly design $g$ as a convolutional encoder-decoder, which is trained from single views enforcing $g(f(y)) = y$ with a pixel-wise $L_2$ loss $\mathcal{L}_s(y) = \norm{g(f(y))  - y}_2^2$  (see \cref{cross:fig:synthtrain} for illustration).
\begin{figure}[htbp]
\centering
\includegraphics[width=\linewidth]{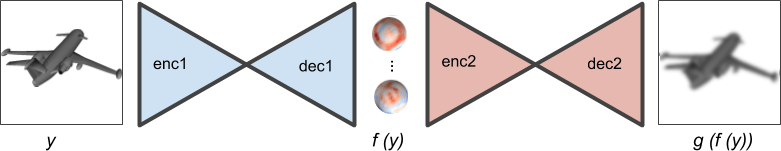}%
\caption{
  \textbf{Novel view synthesis training.}
  We learn the inverse map from spherical embeddings to $2$D views.
  The map from $2$D view to spherical embeddings (in blue) is the same as in \cref{cross:fig:architecture} and is frozen during training.
  The synthesizer network (in red) reconstructs the same input view and is trained with an $L_2$ loss.
}
\label{cross:fig:synthtrain}
\end{figure}

\section{Experiments}
\label{cross:sec:experiments}

\subsection{Architecture details}
\paragraph{Spherical CNN}
We train a 10 layer spherical \cnn\ only once for object classification on ModelNet40
and use it to generate the target embeddings for all experiments in this chapter.
The basic block is the spherical convolutional residual layer as described in \cref{sph:sec:pano},
and training minimizes a cross-entropy loss over 40 classes.
\Cref{cross:fig:layers} shows the architecture.

The network $s$ is trained for 15 epochs with a batch size of 16,
and Adam~\cite{KingmaB14} optimizer with initial learning rate of \num{1e-3},
reduced to \num{2e-4} and \num{4e-5} at steps 5000 and 8500, respectively.
Random anisotropic scaling is used as augmentation. It achieves \num{84.2}{\%} accuracy.
The model from \cref{sph:sec:sphcnn} achieves \num{86.9}{\%} on the same task,
but with a different architecture containing an extra branch to process surface normals;
our inputs here are only the ray lengths from the ray casting procedure.

\paragraph{Embedding  network}
We obtain the embeddings with encoder-decoder residual networks.
Given an input with dimensions $N \times N$,
the encoding step contains one $7\times 7$ convolutional layer followed by
$\log_2 N - 1$ blocks of two residual layers,
followed by a final convolutional layer that produce a $1$D latent vector.
The number of channels double at each residual block, starting at 64 and capped at 256.
Downsampling is through strided convolutions.

The $1$D encoding is then upsampled using a convolutional layer followed by
a sequence of residual blocks and a final $7\times 7$ convolutional layer
up to the desired resolution and number of channels,
which is $16\times 16 \times 32$ for the pose experiments,
and $32\times 32 \times 16$ for novel view synthesis.
Upsampling is through transposed convolutions.

Our targets are spherical \cnn\ features inside the residual bottlenecks,
so the embeddings have four times fewer channels than the actual spherical \cnn\ layer outputs. %
The image inputs are $128\times 128$ and the $1$D encoding has 1024 units.
\Cref{cross:fig:layers} shows more details on resolutions and number of channels per layer.

The embedding network $f$ is trained to minimize a Huber loss.
Training takes \num{200}{k} steps with a batch size of 16, and Adam~\cite{KingmaB14} optimizer with
initial learning rate of \num{2e-4},
reduced to \num{4e-5} and \num{1e-5} at steps \num{80}{k} and \num{180}{k}, respectively.
Random anisotropic scaling of meshes prior to rendering is used as augmentation.

\paragraph{Synthesis  network}
The synthesizer network $g$ follows the same structure as the embedding,
the difference being that the inputs are $32\times 32 \times 16$ and the outputs $128\times 128$.
One question that arises is if the synthesizer should be trained with the target spherical \cnn\
embeddings as inputs, $s(x)$,
or from the embeddings obtained from single views by our network, $f(y)$.
We found that the latter is slightly better.

The synthesis network is trained to minimize an $L_2$ loss for \num{200}{k} steps
with a batch size of eight, and Adam~\cite{KingmaB14} optimizer with
initial learning rate of \num{2e-4}, reduced to \num{4e-5} and \num{1e-5}
at steps \num{80}{k} and \num{180}{k}, respectively.
Random anisotropic scaling of meshes prior to rendering is used as augmentation.

\begin{figure}[htbp]
\centering
\includegraphics[width=0.6\linewidth]{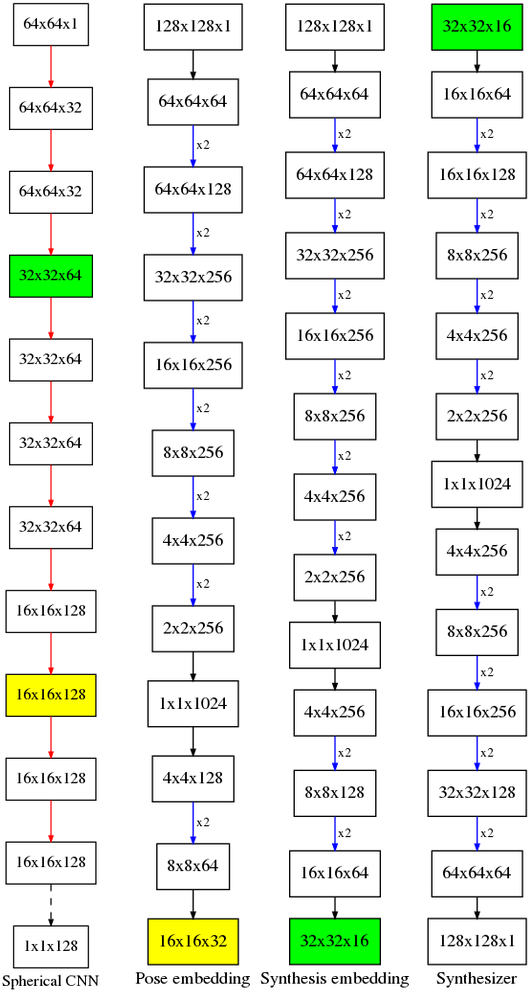}%
\caption{
  \textbf{Network architectures used in this chapter.}
  Rectangles indicate data dimensions (width x height x channels).
  \textit{Red arrow:} spherical convolutional residual bottleneck layer;
  \textit{dashed arrow:} global average pooling;
  \textit{blue arrow:} residual bottleneck layer;
  \textit{black arrow:} convolutional layer.
  Nodes with yellow and green backgrounds are the target embeddings for pose and synthesis, respectively.
}
\label{cross:fig:layers}
\end{figure}

\subsection{Datasets}
We utilize the popular large datasets of $3$D shapes
ModelNet40~\cite{wu20153d} and ShapeNet~\cite{shapenet2015} for most of our experiments.

Some approaches must explicitly deal with the symmetries present in shape categories~\cite{saxena09icra,rad17iccv}.
Our method is immune to this problem by not requiring pose annotations.
However, pose annotations are still necessary for evaluation,
therefore we limit some experiments to categories which are largely
free of symmetry and thus have unique relative orientations.

One example of problematic evaluation due to symmetries is the ShapeNet category \emph{airplanes}.
Some of the instances (e.g., spaceships and flying wings) are fully symmetric around one axis,
resulting in non-injective embeddings and two possible correct alignments that differ by \ang{180}.
For meaningful evaluation we compute the errors up to symmetry for this category.

Recall that we are not estimating pose relative to a canonical object frame
but rather relative object orientation from a pair of images.
Thus, for training, the dataset models need not come aligned per category,
and in fact we introduce random rotations at training time.
For evaluation, in order to quantify our inter-instance performance,
aligned shapes are necessary to determine the ground truth (see \cref{cross:sec:rel-pose-experiments});
for ModelNet40 we use the aligned version from \textcite{sedaghat15iccv}.

There are multiple datasets for object pose estimation, such as Pascal3D+~\cite{xiang14wacv},
KITTI~\cite{geiger12cvpr}, and Pix3D \cite{pix3d},
but they do not exhibit large variation in viewpoints, especially in camera elevation.
For example, Pascal3D+ has most elevations concentrated within $\left[-10^\circ, 10^\circ\right]$
and the official evaluation only considers azimuthal accuracy.
In our setting, we explore geometric embeddings that can capture more challenging arbitrary viewpoints.
Our results show that the problem of relative orientation from two views is
difficult even for synthetic uncluttered rendered images from ModelNet40 and ShapeNet.
Our experiments with real images are limited to the
\emph{airplane} and \emph{cars} categories of ObjectNet3D \cite{xiang2016objectnet3d},
which have the largest variety of viewpoints among all categories.
To increase the difficulty, we augment the \emph{cars} category with in-plane rotations.

\subsection{Relative pose estimation}
\label{cross:sec:rel-pose-experiments}
For training, we render views in arbitrary poses sampled from \SO{3}.
We have two modes of evaluation, \textit{instance} and \textit{category} based.
For category-based, we measure the relative pose error between each instance
and three randomly sampled instances from the test set.
For instance-based, we measure the error between each instance from the test set
and three randomly rotated versions of itself.
The error is the angle between the estimated and ground truth relative poses;
given input ground truth poses $R_1$ and $R_2$ and estimated pose $R$,
the error is $\arccos\left({(\tr{(R_2^\top R_1 R)}-1)/2}\right).$
We compare with the following methods.
\paragraph{Regression}
We consider a method based on \textcite{mahendran17cvprw},
which formulates pose estimation as regression.
To keep the comparison fair, we use the same  architecture for the encoder as our model,
shown in the middle columns of \cref{cross:fig:layers}.
The architecture is exactly the same up to the 1024 dimensional bottleneck,
which is then followed by the pose network from \textcite{mahendran17cvprw}.
We train for \num{200}{k} steps, with a batch size of 16, and Adam~\cite{KingmaB14} optimizer,
with initial learning rate of \num{1e-4}, reduced to \num{5e-5}, \num{2e-5},
and \num{8e-6} at steps \num{40}{k}, \num{75}{k} and \num{125}{k}, respectively.
The \mse\ and geodesic loss scheduling is similar to \textcite{mahendran17cvprw} --
the first \num{100}{k} steps use \mse\ loss, followed by geodesic loss.
When training the $3$DOF model, we found that the performance improves when
warm starting from a network pre-trained on the $2$DOF training set.
\textcite{mahendran17cvprw} require the ground truth pose with respect to
a canonical orientation during training,
whereas our method is self-supervised and can operate on unaligned meshes.
We still outperform it even when allowing extra information,
especially in the presence of $3$DOF rotations.

Since our method does not require aligned meshes, a more fair comparison would be to train the
regression model on pairs of views where the regression target is the relative pose.
We experimented with numerous variations of this approach
and the performance was always worse than the regression to a canonical orientation.
We report all results in the condition that most favors~\textcite{mahendran17cvprw},
using regressed canonical orientations.

\paragraph{KeypointNet}
\textcite{suwajanakorn2018discovery} introduce an unsupervised method of learning keypoints
that are applicable for pose estimation by solving a Procrustes problem.
Similarly to our method, it generates training data by rendering different views from meshes.
However, it requires consistently oriented meshes for dominant direction supervision,
whereas our method makes no assumptions about mesh orientation.
While they show results for $2$DOF rotations,
only viewpoints on a hemisphere are considered, whereas we sample the whole sphere.
We retrain and evaluate KeypointNet with full $2$DOF and $3$DOF rotations.
We utilize the publicly available code and default parameters with minor modifications.
The required changes are because \textcite{suwajanakorn2018discovery} distribute the training,
which allows a larger batch size of 256, while we train only on a single GPU with a batch size of 24.
With a smaller batch size, the default orientation prediction annealing steps
(\num{30}{k}-\num{60}{k}) prevents convergence; we changed it to \num{120}{k}-\num{150}{k}
and increased the number of steps from \num{200}{k} to \num{300}{k} to be able to reproduce
(and slightly improve) the numbers reported in \textcite{suwajanakorn2018discovery}
(see \cref{cross:tab:kptnet}).
We also modify the rendering procedure to generate the $2$DOF and $3$DOF datasets,
as the original paper only considers a $2$DOF hemisphere.

\begin{table}[htbp]
  \caption{Median angular error in degrees for instance based $2$DOF hemisphere alignment on ShapeNet.
    Our hyperparameter selection slightly outperforms the original results from \textcite{suwajanakorn2018discovery}.}
  \label{cross:tab:kptnet}
  \centering
  \begin{tabular}{lS[table-format=1.2]S[table-format=1.2]S[table-format=1.2]}
    \toprule
    & {airplane} & {car} & {chair}\\
    \midrule
    Our parameters & 6.06 & 3.31 & 4.94\\
    Original parameters  & 5.72 & 3.37 & 5.42\\
    \bottomrule
  \end{tabular}
\end{table}

\paragraph{Results for synthetic images}
\Cref{cross:tab:shapenet} shows ShapeNet relative pose estimation results.
\Cref{cross:fig:align} shows the $3$DOF alignment quality on ShapeNet by rendering views using the estimated relative poses.
We show extra results for ModelNet40 in \cref{cross:sec:m40}
and an experiment aligning meshes to images in \cref{cross:sec:im2mesh}.

\begin{table}[htbp]
  \caption{\textbf{ShapeNet relative pose estimation results.}
    We show median angular error in degrees  (\emph{err}), accuracy \emph(a@) at \ang{15} and \ang{30} for instance and category-based, 2 and 3 degrees of freedom relative pose estimation from single views on ShapeNet.
    Comparison is against \textcite{mahendran17cvprw} (\emph{Regr.}) and \textcite{suwajanakorn2018discovery} (\emph{KpNet}).
    KeypointNet does not converge on the full $3$DOF setting; we limit the viewpoints to a hemisphere when evaluating it.
  Note that we still outperform it.}
  \label{cross:tab:shapenet}
  \centering
  \scriptsize
  \begin{tabular}{l
    S[table-format=2.1,table-auto-round]S[table-format=2.1,table-auto-round]S[table-format=2.1,table-auto-round] c
    S[table-format=2.1,table-auto-round]S[table-format=2.1,table-auto-round]S[table-format=2.1,table-auto-round] c
    S[table-format=2.1,table-auto-round]S[table-format=2.1,table-auto-round]S[table-format=2.1,table-auto-round] c
    S[table-format=2.1,table-auto-round]S[table-format=2.1,table-auto-round]S[table-format=2.1,table-auto-round]}
    \toprule
      & \multicolumn{3}{c}{airplane} & \phantom{} & \multicolumn{3}{c}{car} & \phantom{} & \multicolumn{3}{c}{chair} & \phantom{} & \multicolumn{3}{c}{sofa} \\
    \cmidrule{2-4} \cmidrule{6-8} \cmidrule{10-12} \cmidrule{14-16}
    & {err}                         & {a@15}       & {a@30}                    &            & {err}                      & {a@15}       & {a@30}      &           & {err}      & {a@15} & {a@30}      &           & {err}      & {a@15} & {a@30}                              \\
    \midrule
    \multicolumn{16}{l}{{$2$\textbf{DOF, instance based}}} \\
    \; Ours       & \bgd 5.17               & \bgd 85.3  & \bgd 91.9                 &            & \bgd 3.70 & \bgd 92.2 & 92.5      &      & \bgd 5.07 & \bgd 90.6 & \bgd 94.1 &      & \bgd 4.59 & \bgd 93.6 & \bgd 95.2 \\
    \; Regr.      & 16.9                    & 46.3       & 68.7                      &            & 6.55      & 83.5      & \bgd 93.1 &      & 13.7      & 53.9      & 78.3      &      & 17.3      & 43.2      & 69.4      \\
    \; KpNet      & 6.95                    & 79.4       & 91.5                      &            & {div.}      & {div.}      & {div.}      &      & 6.34      & 84.7      & 91.8      &      & 9.20      & 71.3      & 85.4      \\
    \multicolumn{16}{l}{{$2$\textbf{DOF, category based}}} \\
    \; Ours       & \bgd 6.24               & 79.0       & 88.2                      &            & \bgd 4.73 & 73.2      & 73.3      &      & 12.1      & 59.3      & 74.4      &      & \bgd 10.8 & \bgd 58.7 & 70.5      \\
    \; Regr.      & 20.6                    & 38.7       & 63.7                      &            & 7.06      & \bgd 82.4 & \bgd 92.5 &      & 16.8      & 43.7      & 72.0      &      & 19.6      & 37.8      & 66.5      \\
    \; KpNet      & 9.07                    & \bgd 79.4  & \bgd 91.5                 &            & {div.}      & {div.}      & {div.}      &      & \bgd 8.07 & \bgd 79.5 & \bgd 90.2 &      & 15.1      & 49.8      & \bgd 71.8 \\
    \midrule
    \multicolumn{16}{l}{{$3$\textbf{DOF, instance based}}} \\
    \; Ours       & \bgd 6.64               & \bgd 80.9  & \bgd 91.9                 &            & \bgd 3.84 & \bgd 97.3 & \bgd 98.8 &      & \bgd 5.55 & \bgd 89.1 & \bgd 95.7 &      & \bgd 5.21 & \bgd 90.4 & \bgd 94.8 \\
    \; Regr.      & 45.4                    & 12.6       & 31.3                      &            & 9.83      & 69.0      & 86.5      &      & 21.7      & 31.3      & 64.3      &      & 22.2      & 34.8      & 61.4      \\
    \; KpNet      & 14.9                    & 50.3       & 76.6                      &            & 9.12      & 70.4      & 80.9      &      & 10.8      & 66.7      & 85.3      &      & 25.0      & 27.4      & 57.3      \\
    \multicolumn{16}{l}{{$3$\textbf{DOF, category based}}} \\
    \; Ours       & \bgd 7.27               & \bgd 76.4  & \bgd 89.4                 &            & \bgd 4.59 & \bgd 92.1 & \bgd 93.3 &      & \bgd 12.3 & \bgd 59.5 & 77.3      &      & \bgd 9.66 & \bgd 63.9 & \bgd 76.0 \\
    \; Regr.      & 44.4                    & 14.1       & 32.1                      &            & 10.5      & 66.5      & 85.6      &      & 25.6      & 25.1      & 57.2      &      & 24.5      & 30.9      & 58.1      \\
    \; KpNet      & 16.3                    & 46.0       & 75.0                      &            & 10.7      & 64.4      & 77.6      &      & 13.6      & 55.4      & \bgd 81.6 &      & 37.4      & 12.7      & 39.8      \\
    \bottomrule
  \end{tabular}%
\end{table}

\begin{figure}[htbp]
\centering
\includegraphics[width=\linewidth]{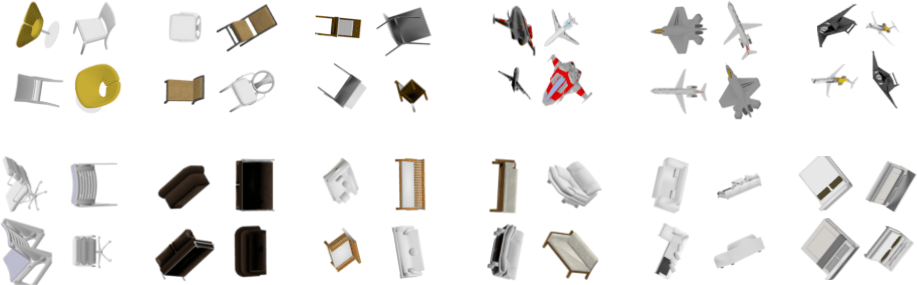}%
\caption{
\textbf{Category-based relative pose estimation.}
We render one object in the pose of the other using our estimated relative pose.
\textit{For each block, top:} Inputs 1 and 2, from the test set.
\textit{Bottom:} Mesh 2 rotated into pose 1, and mesh 1 rotated into pose 2.
We render from the ground truth meshes for visualization purposes only; the inputs to our method are solely the $2$D views and the output is the relative pose.
Note how the alignment is possible even under large appearance variation.
}
\label{cross:fig:align}
\end{figure}

\subsection{Extension to natural images}
Most labeled real-world object pose estimation datasets have restricted pose variations.
The airplane class in ObjectNet3D~\cite{xiang2016objectnet3d} is an exception
with sufficient variation of $3$D poses.
We assume object instance bounding boxes are given
(e.g. using an object detection network~\cite{cvpr17_object}).
We also experiment with the \emph{cars} category by augmenting it
with in-plane rotations to increase the pose variation.
We train our model on image-mesh pairs and significantly outperform the method based on regression. The numbers for \emph{airplanes} are up to a \ang{180} rotation due to symmetry as explained in
\cref{cross:sec:discuss} (see bottom right of \cref{cross:fig:o3d-align} for an example).
\Cref{cross:tab:o3d} shows the comparison while
\Cref{cross:fig:o3d-align} exemplifies some alignment results for \emph{airplanes}.

\begin{table}[htbp]
  \caption{Relative pose estimation results for real images from ObjectNet3D.
    We show median angle error in degrees and accuracy at \ang{15} and \ang{30}.
    We outperform the regression method based on \textcite{mahendran17cvprw} by large margins.}
  \label{cross:tab:o3d}
  \centering
  \begin{tabular}{l
    S[table-format=2.2]S[table-format=2.2]S[table-format=2.2]}
    \toprule
                  & {med err.} & {acc@15} & {acc@30} \\
    \midrule
    \multicolumn{4}{l}{\textbf{Airplane}}            \\
    \; Ours       & 13.75      & 53.40    & 76.60    \\
    \; Regression & 36.52      & 16.70    & 40.40    \\
    \midrule
    \multicolumn{4}{l}{\textbf{Car}}                 \\
    \; Ours       & 8.22       & 72.51    & 78.00    \\
    \; Regression & 16.16      & 46.87    & 74.35    \\
    \bottomrule
  \end{tabular}%
\end{table}

\begin{figure}[htbp]
\centering
\includegraphics[width=\linewidth]{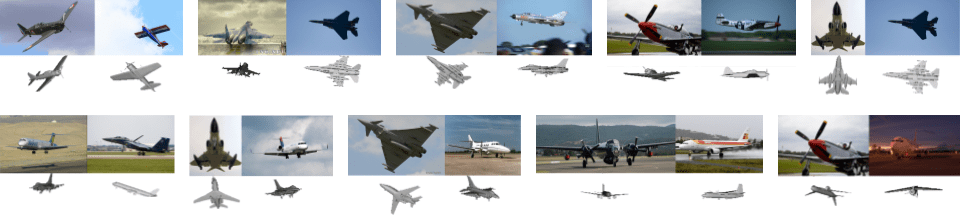}%
\caption{
\textbf{Relative pose estimation for real images.}
We render the mesh corresponding to one input in the pose of the other using the estimated relative pose.
\textit{For each $4\times 4$ block, top:} Inputs 1 and 2, from the test set.
\textit{Bottom:} Mesh 2 rotated into pose 1, and mesh 1 rotated into pose 2.
Image pairs on the top row map to the same mesh in the dataset;
on the bottom row they map to different meshes.
The bottom-right block shows a typical failure case due to symmetry. 
Meshes are used for visualization purposes only; the inputs to our method are the $2$D images and the relative pose is estimated directly from their embeddings via cross-correlation.
}
\label{cross:fig:o3d-align}
\end{figure}

\subsection{Novel view synthesis}
We evaluate novel view synthesis qualitatively.%
\footnote{We attempted a method similar to \textcite{tatarchenko16eccv} as baseline, with and without
  adversarial losses, but results were poor for the large space of rotations considered.}
\Cref{cross:fig:novel} shows the results for multiple generated views in different poses,
with a single $2$D image as input.
We do not expect to generate realistic images here, since the embeddings do not capture
color or texture and the generator is trained with a simple $L_2$ loss.
Our goal is to show that the learned embeddings  naturally capture the geometry,
which is demonstrated by this example, where a simple $3$D rotation of the spherical embeddings
obtained from a single $2$D image produces a novel view of the corresponding $3$D object rotation.
Adversarial and perceptual losses for refining the novel
views~\cite{karras2018progressive,wang2018pix2pixHD} could improve results,
when used in conjunction with our approach.
See \cref{cross:sec:extranvs} for results from other categories.

\begin{figure}[htbp]
\centering
\includegraphics[width=\linewidth]{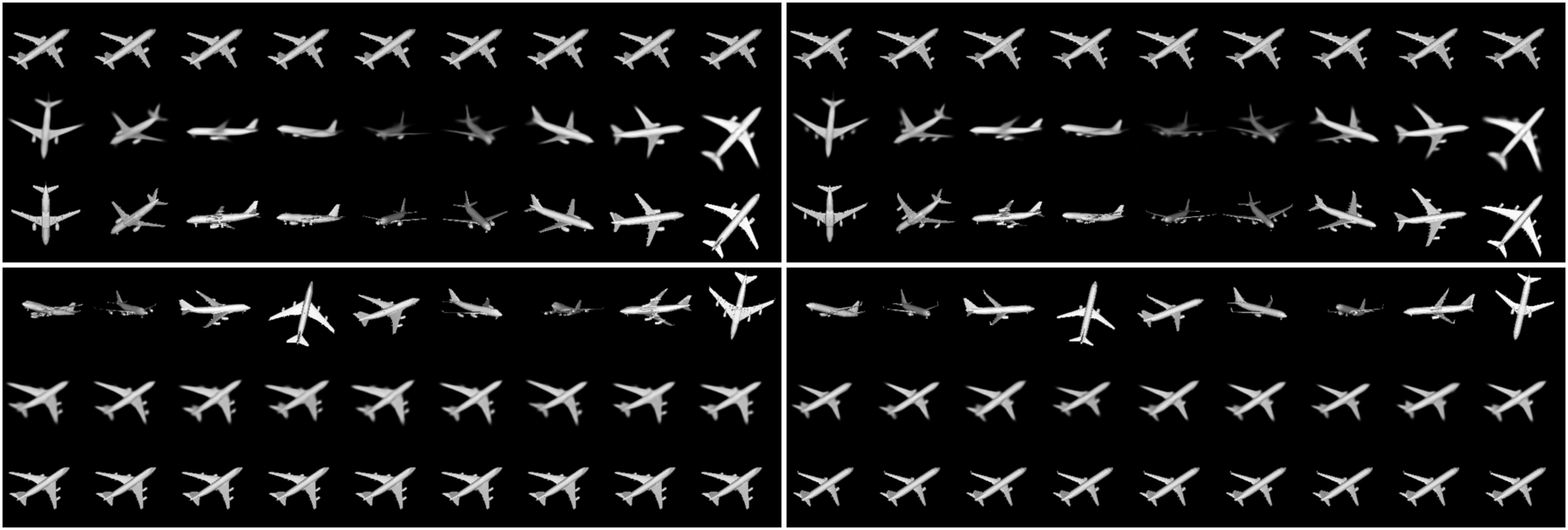}%
\caption{
  \textbf{Novel view synthesis.}
  Our embeddings are category based, capture both geometry and appearance,
  can be rotated as spheres, and can be inverted through another neural network.
  We can generate any new viewpoint from any given viewpoint.
  For each block: \textit{top row:} inputs; \textit{middle row:} novel views generated using our method; \textit{bottom row:} ground truth views rendered from the original mesh.
  Top two blocks show different views generated from a single image; bottom two blocks show a single view generated from different images.
  Each block shows a different instance from the test set.
}
\label{cross:fig:novel}
\end{figure}

\subsection{Discussion}
\label{cross:sec:discuss}
Our image to spherical cross-domain embeddings show
quantitative improvements in relative $3$D object pose estimation.
Most existing literature shows results on a restricted set of rotations,
and our numbers on $2$DOF rotations are comparable to the state of the art.
For full $3$DOF rotations, relative pose estimation from $2$D images is especially
challenging for approaches that attempt to predict the pose directly, since it requires mapping from
a large space of (possibly pairs of) visually disparate rotated images to some pose representation.
In contrast, our method learns the map from image to the spherical embeddings,
which does not require seeing pairs of inputs, and the pose estimation comes naturally
from a spherical cross-correlation.

KeypointNet~\cite{suwajanakorn2018discovery} training failed to converge or converged to a bad model for
cars $2$DOF (noted 'div' in \cref{cross:tab:shapenet}) and for the challenging $3$DOF rotations.
We found that KeypointNet converges if we limit the $3$DOF setting
to views on a hemisphere (instead of the full sphere).
Our numbers for the full $3$DOF space of rotations are still superior
to KeypointNet's results for the limited $3$DOF hemisphere.

Evaluation of the \emph{airplane} class is problematic on ShapeNet due to
the presence of symmetric instances (flying wings and some spaceships),
which admit two possible alignments that differ by a \ang{180} rotation.
We also observe problems on ObjectNet3D, but in this case it's an approximate symmetry that sometimes is
not captured by the low resolution spherical CNN feature maps.
In both cases we consider the symmetry when evaluating the errors by making
$\text{err}_{\text{sym}} = \min(\text{err}, \pi-\text{err})$, in radians.
This metric is used for all methods on \emph{airplanes}.
Note that \textcite{suwajanakorn2018discovery} also observe errors around \ang{180}
and benefit from this metric. ModelNet40 \emph{airplanes} do not suffer from this issue.

Our method is capable of synthesizing any new viewpoint from any other given viewpoint
for any instance of the category it was trained on.
The categories with less appearance variation are easier to learn and produce sharper images.
For all classes, nevertheless, we can verify that the embeddings capture full $3$D information.
\section{Conclusion}
\label{cross:sec:conclusion}
In this chapter, we explored the problem of learning expressive \SO{3}-equivariant embeddings
for $2$D images. We proposed a novel cross-domain embedding that maps $2$D images
to spherical feature maps generated by spherical \cnns\ trained on $3$D shape datasets.
In this way, our cross-domain embeddings encode images with sufficient shape properties
and an equivariant structure that together are directly useful for different tasks,
including relative pose estimation and novel view synthesis. %

We highlight two important areas for future work.
First, the cross-domain embedding architecture uses a large encoder-decoder structure.
The model complexity can be greater than what would be necessary for training traditional
task-specific models (e.g., a relative pose regression network).
This is because we are solving a much higher dimensional problem,
our model must learn an expressive feature representation that generalizes to different applications.
Nonetheless, in future work, it will be useful to explore ways to make this component more compact.

Second, by construction, our model is tied to the spherical \cnns\ that supervise the embeddings.
Another future direction is to explore different
rotation equivariant models to play this role.
Alternatively, improvements to the spherical \cnns\ themselves can also translate
to more powerful embeddings.
For example, incorporating texture/normals on their training could improve our results
and also allow more challenging tasks such as textured view synthesis.

\section{Extra experiments and visualizations}
We evaluate image to mesh alignment on ShapeNet and relative pose estimation on ModelNet40.
For completeness, we also include regression results to estimate error to a canonical pose.
\Cref{cross:tab:m403d} shows the results for ModelNet40 alignment.

\subsection{Image to mesh alignment}
\label{cross:sec:im2mesh}
Although we focus on tasks where the inputs are $2$D images,
our method produces a common equivariant representation for images and meshes that
is suitable for image to mesh alignment.
\Cref{cross:tab:im-mesh} shows the results.
The accuracy is similar whether we align image to image or image to mesh.
\begin{table}[htbp]
  \caption{Image to mesh alignment experiment on ShapeNet.
    We show the category based median relative pose error in deg for image to image (\emph{im-im}) and image to mesh (\emph{im-mesh}).}
  \label{cross:tab:im-mesh}
  \centering
  \begin{tabular}{lS[table-format=2.2]S[table-format=2.2]S[table-format=2.2]S[table-format=2.2]}
    \toprule
    & {airplane} & {car} & {chair} & {sofa}\\
    \midrule
    \multicolumn{4}{l}{$2$\textbf{DOF}} \\
    \; im-mesh & 5.65 & 4.95 & 13.28 & 12.34\\
    \; im-im & 6.24 & 4.73 & 12.10 & 10.80\\
    \midrule
    \multicolumn{4}{l}{$3$\textbf{DOF}} \\
    \; im-mesh & 5.98 & 4.24 & 13.21 & 11.43\\
    \; im-im & 7.27 & 4.59 & 12.30 & 9.66\\
    \bottomrule
  \end{tabular}
\end{table}

\subsection{ModelNet40 relative pose}
\label{cross:sec:m40}
We apply the experimental settings of \cref{cross:sec:rel-pose-experiments}
to categories of the ModelNet40 dataset.
\Cref{cross:tab:m403d} show relative pose estimation results for ModelNet40.
Results and conclusions are similar to the ones in \cref{cross:tab:shapenet}.
\begin{table}[htbp]
  \caption{Median angular error in degrees for instance and category-based, $2$DOF and $3$DOF alignment on ModelNet40.}
  \label{cross:tab:m403d}
  \centering  
  \begin{tabular}{l
    S[table-format=2.1,table-auto-round]S[table-format=2.1,table-auto-round]
    S[table-format=2.1,table-auto-round]S[table-format=2.1,table-auto-round]
    S[table-format=2.1,table-auto-round]S[table-format=2.1,table-auto-round]}
    \toprule
               & {airplane} & {bed}  & {chair} & {car}  & {sofa} & {toilet}      \\
    \midrule
    \multicolumn{7}{l}{$2$\textbf{DOF, instance-based}} \\
    \; Regression & 6.29 & 12.7  & 25.5 & 6.84 & 12.5 & 9.76 \\
    \; Ours       & 3.33 & 4.46  & 7.07 & 4.12 & 4.52 & 4.88 \\
    \multicolumn{7}{l}{$2$\textbf{DOF, category-based}} \\
    \; Regression & 7.13 & 15.8  & 32.2 & 7.00 & 13.3 & 10.4 \\
    \; Ours       & 4.80 & 6.60  & 10.2 & 4.82 & 9.56 & 10.8 \\
    \midrule
    \multicolumn{7}{l}{$3$\textbf{DOF, instance-based}}                      \\
    \; Regression & 11.8     & 26.0 & 43.7  & 16.5 & 25.3 & 17.8        \\
    \; Ours       & 7.23     & 4.93 & 7.79  & 3.95 & 6.51 & 5.17        \\
    \multicolumn{7}{l}{$3$\textbf{DOF, category-based}}                      \\
    \; Regression & 12.9     & 29.9 & 52.5  & 15.2 & 34.5 & 17.8        \\
    \; Ours       & 8.81     & 8.55 & 15.3  & 5.12 & 11.0 & 10.9        \\

    \bottomrule
  \end{tabular}   
\end{table}

\subsection{Novel view synthesis}
\label{cross:sec:extranvs}
We show novel view synthesis results for ShapeNet emph{cars} and \emph{chairs},
including a failure case in \cref{cross:fig:supp_synth}.

\begin{figure}[htbp]
\centering
\includegraphics[width=\linewidth]{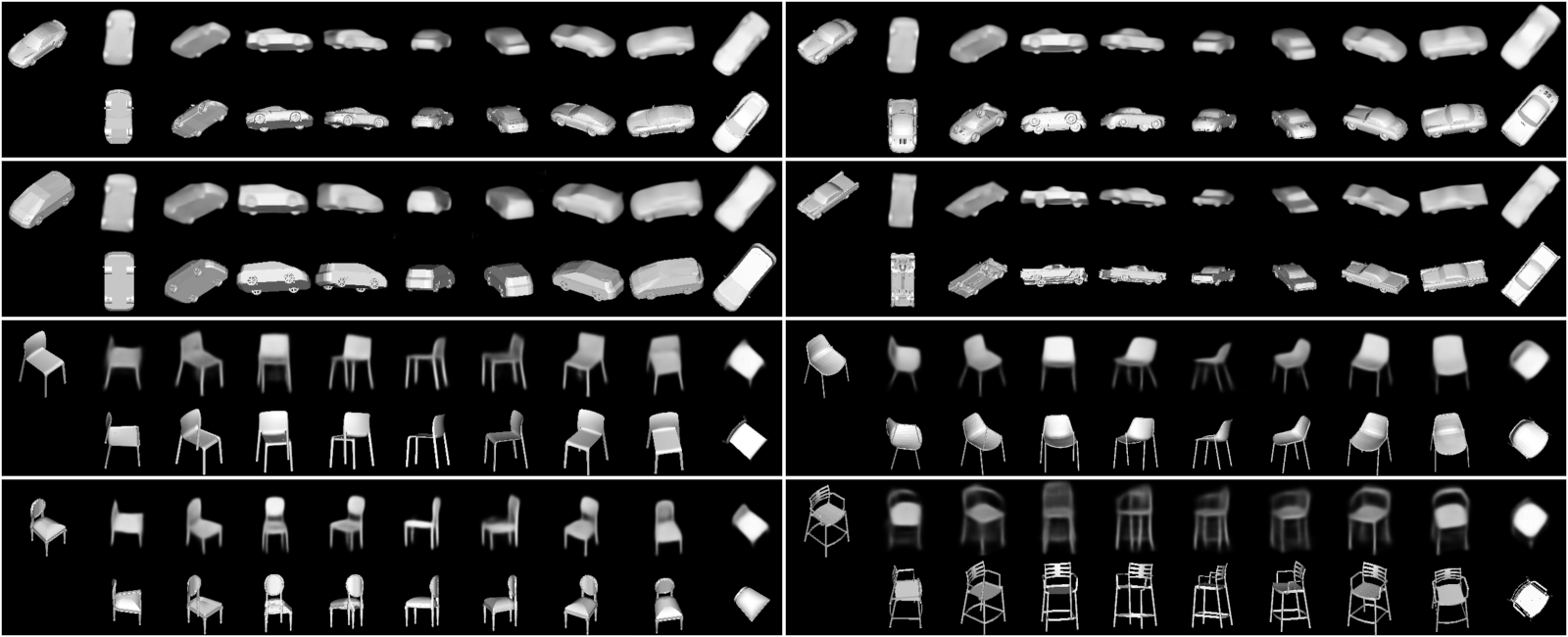}%
\caption{
  \textbf{More novel view synthesis results.}
  \textit{Top-left:} inputs, which are $2$D images from the test set. 
  \textit{Top row:} novel views generated using our method.
  \textit{Bottom row:} ground truth views rendered from the original mesh.
  The bottom right shows a failure case due to a chair with uncommon appearance.
}
\label{cross:fig:supp_synth}
\end{figure}

\subsection{Visualization}
In this section, we visualize some embeddings along with inputs and outputs of different tasks.
We randomly select three channels of the predicted embeddings
and plot them on the sphere for different input orientations.

\Cref{cross:fig:synth_anim_frames} shows inputs, embedding channels,
rotated embedding channels and outputs from novel view synthesis.
\Cref{cross:fig:pose_anim_frames} shows inputs, embedding channels, and alignment visualization.

\begin{figure}[htbp]
\centering
\includegraphics[width=\linewidth]{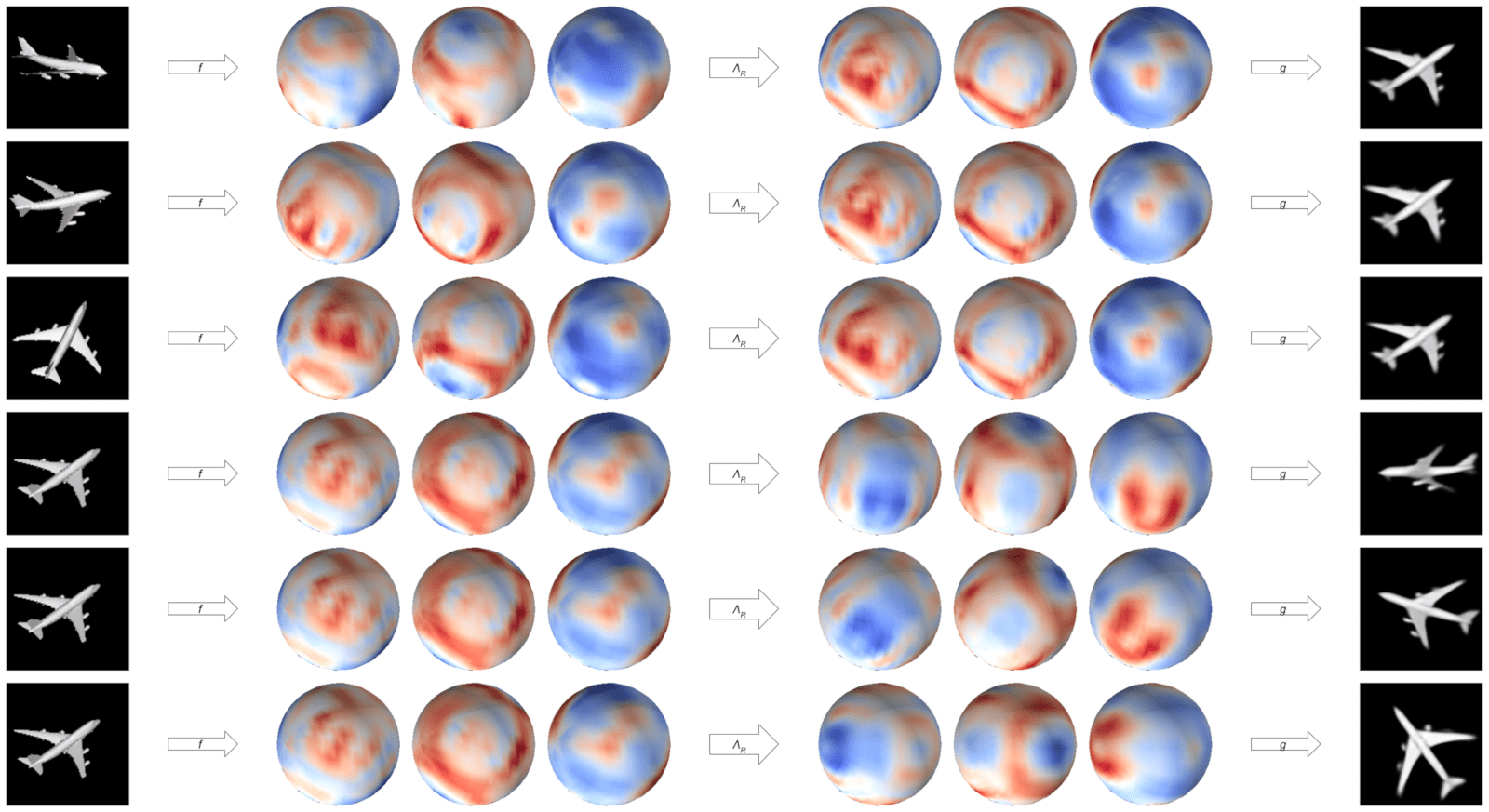}%
\caption{
  \textbf{Novel view synthesis visualization.}
  \textit{Each row:} inputs, three embedding channels, rotated embedding channels, outputs.
  Top three rows show generation of a canonical view from arbitrary views.
  Bottom three rows show generation of arbitrary views from a canonical view.
}
\label{cross:fig:synth_anim_frames}
\end{figure}

\begin{figure}[htbp]
\centering
\includegraphics[width=\linewidth]{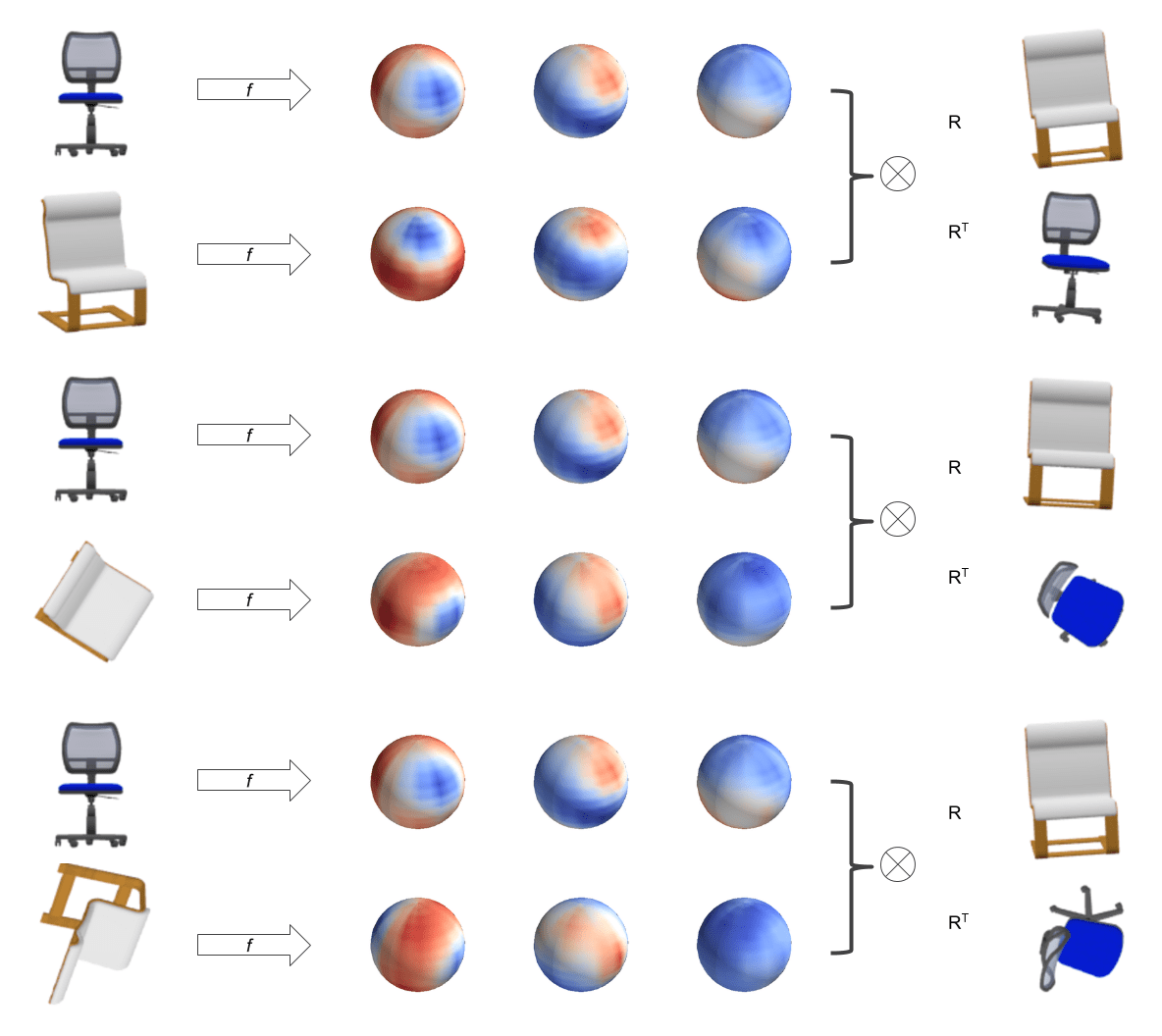}%
\caption{
  \textbf{Relative pose estimation visualization.}
  \textit{Each block of two rows:} pair of inputs, three embedding channels per input,
  mesh 2 rotated into pose 1, and mesh 1 rotated into pose 2.
  We render from the ground truth meshes for visualization purposes only;
  our inputs are solely the $2$D views and output is the relative pose.
}
\label{cross:fig:pose_anim_frames}
\end{figure}

\glsresetall
\chapter{Equivariance of spherical vector fields}
\label{spin:sec:spin}
\chaptersubtitle{The Spin-Weighted Spherical CNNs}
\section{Introduction}
Rotation equivariant \cnns\ are the natural way to learn feature representations on spherical data.
There are two prevailing designs, (a) convolution between spherical functions and zonal (isotropic; constant per latitude) filters, as presented in \cref{sph:sec:sphcnn},
and (b) convolutions on \SO{3} after lifting spherical functions to the rotation group~\cite{s.2018spherical}.
There is a clear distinction between these two designs: (a) is more efficient allowing
to build representational capacity through deeper networks, and (b) has more expressive filters but
is computationally expensive and thus is constrained to shallower networks.
The question we consider in this chapter is: how can we achieve the expressivity/representation capacity of \SO{3} convolutions with the efficiency and scalability of spherical convolutions?

We propose to leverage \swsfs,
introduced by \textcite{newman1966note} in the study of gravitational waves.
These are complex-valued functions on the sphere that, upon rotation,
suffer a phase change besides the usual spherical translation.

Our key observation is that a combination of \swsfs\ allows
more expressive representations than scalar spherical functions,
avoiding the need to lift features to the higher dimensional \SO{3}.
It also enables anisotropic filters, removing the filter constraint of purely spherical \cnns.

We define convolutions and cross-correlations of \swsfs.
For bandlimited inputs, the operations can be computed exactly in the spectral domain,
and are equivariant to the continuous group \SO{3}.
We build a \cnn\ where filters and features are sets of \swsfs,
and adapt nonlinearities, batch normalization, and pooling layers as necessary.

Besides more expressive and efficient representations, we can interpret the spin-weighted
features as equivariant vector fields on the sphere,
enabling applications where the inputs or outputs are vector fields.
Current  spherical \cnns~\cite{s.2018spherical,esteves18eccv,kondor2018clebsch,perraudin2019deepsphere}
cannot achieve equivariance in this sense, as illustrated in \cref{fig:sphvecfield}.

\begin{figure}[ht]
  \centering
  \includegraphics[width=0.6\linewidth]{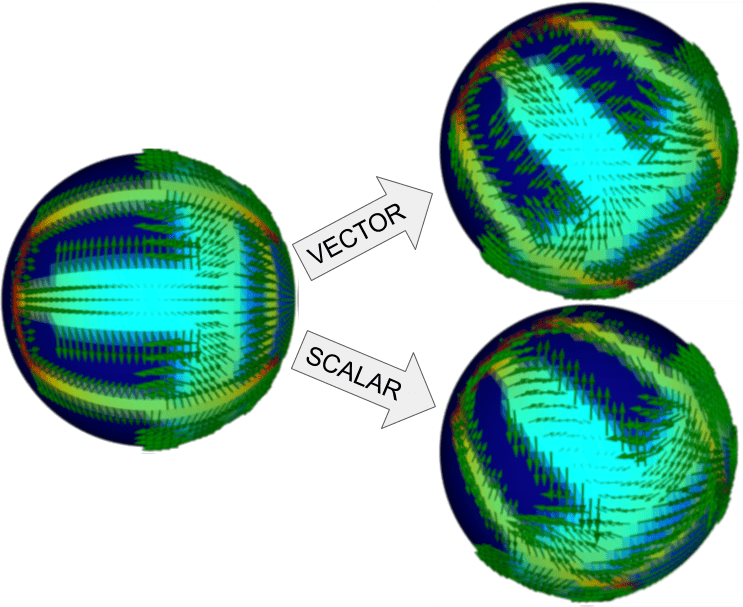}
  \caption{
    Colors represent a scalar field, and
    the green vectors represent a vector field.
    Upon rotation, scalar fields transform by simply moving values to another position,
    while vector fields move and also rotate.
    Treating vector fields as multi-channel scalars (bottom-right) results in incorrect behavior.
    The \acrlongpl{swscnn}
    equivariantly handle vector fields as inputs or outputs.
  }
  \label{fig:sphvecfield}
\end{figure}

To evaluate vector field equivariance, we introduce a variation of MNIST where the images and their
gradients are projected to the sphere.
We propose three tasks on this dataset:
1) vector field classification,
2) vector field prediction from scalar fields,
3) scalar field prediction from vector fields.
We also evaluate our model on spherical image classification, $3$D shape classification, and
semantic segmentation of spherical panoramas.

To summarize the contributions of this chapter,
\begin{enumerate}
\item We define convolution and cross-correlation between sets of spin-weighted
  spherical functions. These are \SO{3} equivariant operations that respect \swsfs\ properties.
\item We build a CNN based on these operations and adapt usual CNN components for
  sets of \swsfs\ as features and filters. This is, to the best of our knowledge, the first
  spherical CNN that operates on vector fields.
\item We demonstrate the efficacy of the \swscnns\ on a variety of tasks
  including spherical image and vector field classification,
  predicting vector field from images and conversely,
  $3$D shape classification and spherical image segmentation.
\end{enumerate}

Most of the content in this chapter appeared originally in~\textcite{esteves20_spin_weigh_spher_cnns}.
Source code is available at \url{https://github.com/daniilidis-group/swscnn}.
\section{Related work}
\paragraph{Equivariant CNNs}
The first equivariant CNNs were applied to images on
the plane~\cite{gens2014deep,cyclicsym}.
\textcite{cohen2016group} formalized these models and named them \gcnns.
While initial methods were constrained to small discrete groups of
rotations on the plane, they were later extended to
larger groups~\cite{WeilerHS18},
continuous rotations~\cite{worrall2017harmonic},
rotations and scale~\cite{esteves2018polar},
$3$D rotations of voxel grids~\cite{worrall2018cubenet,weiler3dsteerable},
and point clouds~\cite{tensor}.

\paragraph{Spherical CNNs}
\gcnns\ can be extended to homogeneous spaces of groups of symmetries~\cite{kondor18icml};
the quintessential example is the sphere $S^2$ as a homogeneous space of
the group \SO{3}, the setting of spherical CNNs.
There are two main branches.
The first branch, introduced by \textcite{s.2018spherical}, lifts the spherical inputs
to functions on \SO{3}, and its filters and features are functions on the group \SO{3},
which is higher dimensional and thus more computationally expensive to process.
\textcite{kondor2018clebsch} is another example.
The second branch, introduced by \textcite{esteves18eccv} (and described in \cref{sph:sec:sphcnn}),
is purely spherical and has filters and features on $S^2$,
using spherical convolution as the main operation.
In this case, the filters are constrained to be zonal (isotropic), which limits
the representational power.
\textcite{perraudin2019deepsphere} also uses isotropic filters, but with graph convolutions
instead of spherical convolutions.

This chapter's approach lies between these two branches; it is not restricted
to isotropic filters but it does not have to lift features to \SO{3};
we employ sets of \swsfs\ as filters and features.

A separate line of work developed spherical \cnns\ that are not rotation-equivariant
\cite{JiangHKPMN19,zhang2019orientation}, which rely on the strong
assumption that the inputs are aligned.

\paragraph{Equivariant vector fields}
Our approach can equivariantly handle spherical vector fields as inputs or outputs.
\textcite{marcos16_rotat_equiv_vector_field_networ} introduced a planar CNN
whose features are vector fields obtained from rotated filters.
\textcite{CohenW17} formalized the concept of feature types that are vectors in a group
representation space.
This was extended to $3$D Euclidean space by \textcite{weiler3dsteerable}.
\textcite{worrall2017harmonic} introduced complex-valued features on $\R^2$ whose phases change
upon rotation; this is similar in spirit to our method, but our features live on the sphere,
requiring different machinery.

\textcite{CohenWKW19} introduced a framework that produces vector field
features on general manifolds; it was specialized to the sphere by \textcite{kicanaoglu2020gauge}.
The major differences are that our implementation is fully spectral and we demonstrate it
on tasks requiring vector field equivariance.
\textcite{cohen2019general} alluded to the possibility
of building spherical CNNs that can process vector fields;
we materialize these networks.

\section{Background}
In this section, we provide the mathematical background that guides our contributions.
We first recall the more commonly encountered spherical harmonics
(described in more detail in \cref{sec:prelim}),
then generalize them to the \swshs.
We also recall the definitions convolutions between spherical functions,
which we later generalize to convolutions between spin-weighted functions.

\paragraph{Spherical Harmonics}
The spherical harmonics \fun{Y_m^\ell}{S^2}{\C} form an orthonormal basis for the space $L^2(S^2)$
of square integrable functions on the sphere.
Any function $\fun{f}{S^2}{\C}$ in $L^2(S^2)$ can be decomposed in this basis via the \sft\  (\cref{spin:eq:sft}), and
synthesized back exactly via its inverse (\cref{spin:eq:isft}),
\begin{align}
  \hat{f}_m^{\ell} &= \int\limits_{S^2} f(x) \overline{Y_m^{\ell}}(x)\, dx \label{spin:eq:sft},
\end{align}
\begin{align}
  f(x) &= \sum_{\ell=0}^\infty \sum_{|m| \le \ell}\hat{f}_m^{\ell}Y_m^{\ell}(x) \label{spin:eq:isft}.
\end{align}
We interchangeably use latitudes and longitudes $(\theta, \phi)$
or points $x \in \R^3,\, \norm{x} = 1$ to index the sphere, and
we use the hat to denote Fourier coefficients.
A function has bandwidth $B$ when
only components of order $\ell \le B$ appear in the expansion.

The spherical harmonics are related to irreducible representations of the group \SO{3} as follows,
\begin{align}
  D_{m,0}^\ell(\alpha, \beta, \gamma) = \sqrt{\frac{4\pi}{2\ell+1}} \overline{Y_m^\ell(\beta, \alpha)},
  \label{spin:eq:wig2sph}
\end{align}
where $\alpha$, $\beta$ and $\gamma$ are ZYZ Euler angles and $D^\ell$ is a Wigner-D
matrix.\footnote{The subscripts $m,\,n$ refer to rows and columns of the matrix, respectively.}
Since $D^\ell$ is a group representation and hence a group homomorphism,
we obtain a rotation formula,
\begin{align}
  \label{spin:eq:sphharmrot}
  Y_m^\ell(g x) &= \sum_{n=-\ell}^{\ell} \overline{D_{m,n}^{\ell}(g)} Y_n^\ell(x),
\end{align}
where we interchangeably use an element $g \in \SO{3}$ or Euler angles $\alpha$, $\beta$ and $\gamma$
to refer to rotations.

Consider the rotation of a function represented by its coefficients
by combining \cref{spin:eq:isft,spin:eq:sphharmrot},
\begin{align}
  f(gx)
  &= \sum_{\ell=0}^\infty \sum_{n=-\ell}^{\ell} \left(\sum_{m=-\ell}^\ell \hat{f}_m^{\ell} \overline{D_{m,n}^{\ell}(g)}\right) Y_n^\ell(x).
\end{align}
This shows that when $f(x) \mapsto f(gx)$, its Fourier coefficients transform as
\begin{align}
  \hat{f}_n^\ell \mapsto \sum_{m}  \overline{D_{m,n}^{\ell}(g)}\hat{f}_m^{\ell}
  \label{spin:eq:sphcoeffrot}
\end{align}

Finally, we recall how convolutions and cross-correlations of spherical functions
are computed in the spectral domain.
\textcite{esteves18eccv} define the convolution between two spherical functions $f$ and $k$
as \cref{spin:eq:sphconv}
while \textcite{makadia2006,s.2018spherical} define the spherical cross-correlation as \cref{spin:eq:sphcorr},

\begin{align}
  (\widehat{k * f})_m^\ell = 2\pi\sqrt{\frac{4\pi}{2\ell+1}} \hat{f}_m^\ell\hat{k}_0^\ell,
  \label{spin:eq:sphconv}
\end{align}
\begin{align}
  (\widehat{k \star f})_{m,n}^{\ell} = \hat{f}_m^\ell\overline{\hat{k}_n^{\ell}},
  \label{spin:eq:sphcorr}
\end{align}

Both are shown to be equivariant through \cref{spin:eq:sphcoeffrot}.
The left-hand side of \cref{spin:eq:sphconv} correspond to the Fourier coefficients of a spherical function,
while the left-hand side of \cref{spin:eq:sphcorr} correspond to the Fourier coefficients of a function on \SO{3}.
This section laid the foundation for the spin-weighted generalization.
See \cref{sec:prelim} and \textcite{Vilenkin_1991,folland2016course} for the full details.

\paragraph{Spin-Weighted Spherical Harmonics}
The \acrfullpl{swsf} are complex-valued functions on the sphere
whose phases change upon rotation.
They have different types determined by the spin weight.

Let \fun{\li{s}{f}}{S^2}{\C} be a \swsf\ with spin weight $s$,
$\lambda_\alpha$ a rotation by $\alpha$ around the polar axis,
and $\nu$ the north pole.
In a conventional spherical function, $\nu$ is fixed by the rotation,
so $(\lambda_\alpha (f))(\nu) = f(\nu)$.
In a spin-weighted function, however, the rotation results in a phase change,
\begin{align}
  (\lambda_\alpha(\li{s}{f}))(\nu) = \li{s}{f}(\nu) e^{-is\alpha}.
\end{align}
If the spin weight is $s=0$, this is equivalent to the conventional spherical functions.

The \acrfullpl{swsh} form a basis of the space of square-integrable
spin-weighted spherical functions; for all square-integrable $\li{s}{f}$, we can write
\begin{align}
  \li{s}{f}(\theta, \phi) &= \sum_{\ell \in \N}\sum_{m=-\ell}^{\ell}\li{s}{Y}_m^\ell(\theta, \phi) \li{s}{\hat{f}}_{m}^\ell,
\end{align}
where $\li{s}{\hat{f}}_{m}^\ell$ are the expansion coefficients,
and the decomposition is defined similarly to \cref{spin:eq:sft}.
For $s=0$, the \swshs\ are exactly the spherical harmonics;
we have $\li{0}{Y}_m^\ell = Y_m^\ell$.

The \swshs\ are related to the matrix elements $D_{mn}^\ell$ of \SO{3} representations as follows,
\begin{align}
  D_{m,-s}^\ell(\alpha, \beta, \gamma) = (-1)^s \sqrt{\frac{4\pi}{2\ell + 1}}
  \overline{\li{s}{Y}_m^\ell(\beta, \alpha)}e^{-is\gamma}.
\end{align}
Note how different spin-weights are related to different columns of $D^\ell$,
while the standard spherical harmonics are related to a single column as in \cref{spin:eq:wig2sph}.
This shows that the \swshs\ can be seen as functions on \SO{3} with sparse spectrum,
a point of view that is advocated by \textcite{Boyle_2016}.

The \swshs\ do not transform among themselves upon rotation as the
spherical harmonics (\cref{spin:eq:sphharmrot}) due to the extra phase change.
Fortunately, the coefficients of expansion of a \swsf\ into the \swshs\ do transform
among themselves according to \cref{spin:eq:sphcoeffrot}.
When $\li{s}{f}(x) \mapsto \li{s}{f}(gx)$,
\begin{align}
  \li{s}{\hat{f}}_n^\ell \mapsto \sum_{m}  \overline{D_{m,n}^{\ell}(g)}\li{s}{\hat{f}}_m^{\ell}.
  \label{spin:eq:spincoeffrot}
\end{align}
This is crucial for defining equivariant convolutions between combinations of \swsfs\
as we will do in \cref{spin:sec:swconv}.
We refer to \textcite{del20123,boyle2013angular,Boyle_2016}
for more details about \swsfs.
\section{Method}
We introduce a fully convolutional network, the \acrfull{swscnn},
where layers are based on spin-weighted convolutions,
and filters and features are combinations of \swsfs.
We define spin-weighted convolutions and cross-correlations,
show how to efficiently implement them,
and adapt common neural network layers to work with combinations of \swsfs.

\subsection{Spin-Weighted Convolutions and Cross-Correlations}
\label{spin:sec:swconv}
We define and evaluate the convolutions and cross-correlations in the spectral domain.
Consider a set of spin weights $W_F,\,W_K$ and sets of functions
$F=\{\fun{\li{s}{f}}{S^2}{\C}\ \mid s \in W_F\}$ and filters $K=\{\fun{\li{s}{k}}{S^2}{\C}\ \mid s \in W_K\}$ to be convolved.

\paragraph{Spin-weighted convolution}
We define the convolution between $F$ and $K$ as follows,
\begin{align}
  \li{s}{(\widehat{F * K})}_m^\ell = \sum_{i \in W_F} \li{i}{\hat{f}}_m^\ell \, \li{s}{\hat{k}}_i^\ell,
  \label{spin:eq:spinconv}
\end{align}
where $s \in W_K$ and $-\ell \le m \le \ell$.
Only coefficients $\li{s}{\hat{k}}_i^\ell$ where $i \in W_F$ influence
the output, imposing sparsity in the spectra of $K$.
The convolution $F*K$ is also a set of \swsfs\ with $s \in W_K$,
the same spin weights as $K$;
we leverage this to specify the desired sets of spins at each layer.

We show this operation is \SO{3} equivariant by applying the rotation formula
from \cref{spin:eq:spincoeffrot}.
Let $\lambda_g$ denote a rotation of each $\li{s}{f}(x) \in F$ by $g \in \SO{3}$.
We have,
\begin{align}
  \li{s}{(\widehat{\lambda_gF * K})}_n^\ell
  &= \sum_{i \in W_F} \sum_{m}  \overline{D_{m,n}^{\ell}(g)}\li{i}{\hat{f}}_m^{\ell} \, \li{s}{\hat{k}}_i^\ell \nonumber \\
  &= \sum_{m}  \overline{D_{m,n}^{\ell}(g)} \sum_{i \in W_F}  \li{i}{\hat{f}}_m^{\ell} \, \li{s}{\hat{k}}_i^\ell \nonumber \\
  &= \sum_{m}  \overline{D_{m,n}^{\ell}(g)} \li{s}{(\widehat{F * K})}_m^\ell \nonumber \\
  &= \lambda_g(\li{s}{(\widehat{F * K})}_n^\ell).
    \label{spin:eq:swconv_proof}
\end{align}

Now consider the spherical convolution defined in \cref{spin:eq:sphconv}.
It follows immediately that it is, up to a constant, a special case of the spin-weighted convolution,
where $F$ and $K$ have only one element with $s=0$, and only the filter coefficients
of form $\li{0}{\hat{k}}_0^\ell$ are used.

\paragraph{Spin-weighted cross-correlation}
We define the cross-correlation between $F$ and $K$ as follows,
\begin{align}
  \li{s}{(\widehat{F \star K})}_m^\ell = \sum_{i \in W_F \cap W_K} \li{i}{\hat{f}}_m^\ell \, \overline{\li{i}{\hat{k}}_s^\ell},
  \label{spin:eq:spincorr}
\end{align}
In this case, only the spins that are common to $F$ and $K$ are used,
but all spins may appear in the output, so it can be seen as a
function on \SO{3} with dense spectrum.
To ensure a desired set of spins in $F \star K$, we can sparsify the spectra in $K$
by eliminating some orders.
A procedure similar to \cref{spin:eq:swconv_proof} proves the \SO{3} equivariance of this operation.

The spin-weighted cross-correlation generalizes the spherical cross-correlation.
When $F$ and $K$ contain only a single spin weight $s=0$,
the summation in \cref{spin:eq:spincorr} will contain only one term and we recover the
spherical cross-correlation defined in \cref{spin:eq:sphcorr}.

\paragraph{Examples}
To visualize the convolution and cross-correlations, we use the phase of the
complex numbers and define local frames %
to obtain a vector field.
We visualize combinations of \swsfs\ by associating pixel intensities with the spin-weight $s=0$
and plotting vector fields for each $s > 0$.

Consider an input $F=\{\li{0}{f},\, \li{1}{f}\}$ and filter $K=\{\li{0}{k},\, \li{1}{k}\}$,
both with spin weights $0$ and $1$.
Their convolution also has spins $0$ and $1$,
as shown on the left side of \cref{fig:spinconvcorr}.
Now consider a scalar valued (spin $s=0$) input $F=\{\li{0}{f}\}$ and filter $K=\{\li{0}{k}\}$.
The cross-correlation will have components of every spin, but we only take spin weights $0$ and $1$
to visualize (this is equivalent to eliminating all orders larger than $1$ in the spectrum of $k$);
see \cref{fig:spinconvcorr} (right).

\begin{figure}[thbp]
  \centering
  \includegraphics[width=\linewidth]{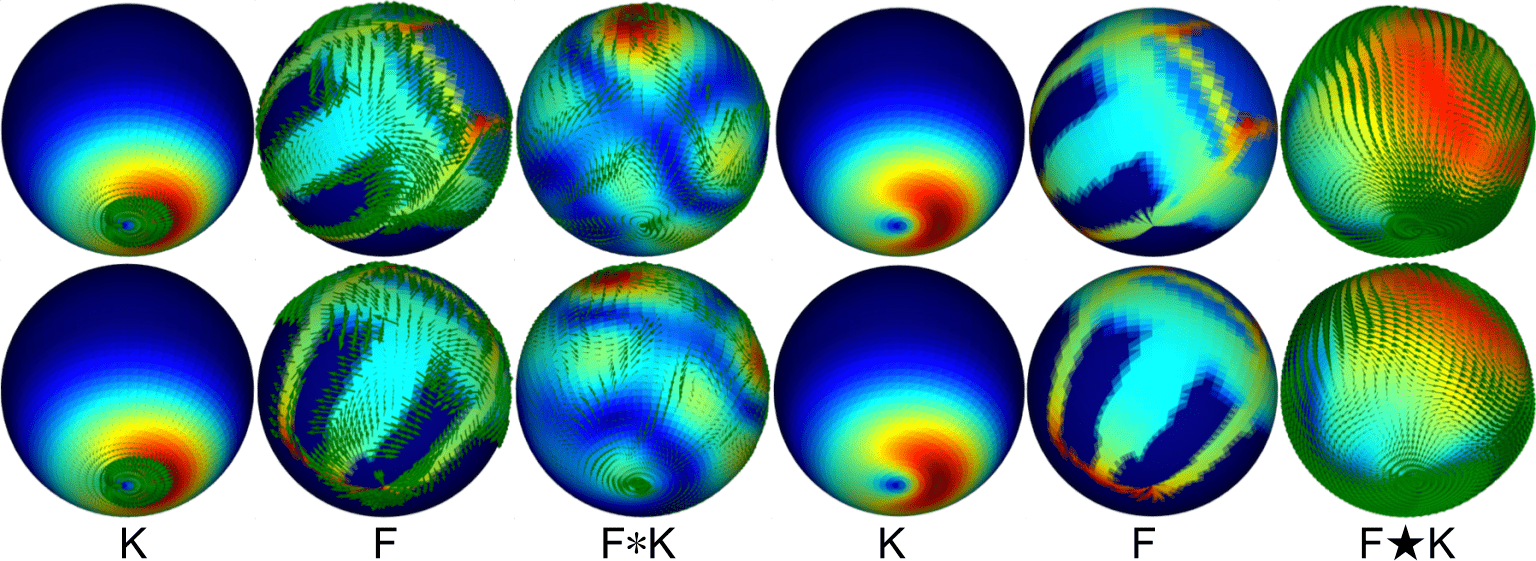}
  \caption{
    Left block ($2\times 3$): convolution between sets of functions of
    spins 0 and 1. The operation is equivariant as a vector field and
    outputs carry the same spins.
    Right block ($2\times 3$): spin-weighted cross-correlation between scalar spherical functions.
    The operation is also equivariant and we show outputs
    corresponding to spins $0$ and $1$.
    The second row shows the effect of rotating the input $F$.
  }
\label{fig:spinconvcorr}
\end{figure}

\subsection{Implementation}
\label{sec:supp:swsh}
Our implementation of the \swsh\ decomposition and its inverse follows~\textcite{Huffenberger_2010}.
The basic idea is to leverage the relation between the \swshs\ and the Wigner-D matrices.
Recall that we can write the Wigner-D matrices as
\begin{align}
D_{m,n}^\ell(\alpha,\beta,\gamma)= e^{-im\alpha}d_{m,n}^\ell(\beta)e^{-in\gamma},
\end{align}
where $d^\ell$ is a Wigner-d matrix.

We define $\Delta_{m,n}^\ell$ as
\begin{align}
  \Delta_{m,n}^\ell = d_{m,n}^\ell(\pi/2),
\end{align}
then the following relation holds~\cite{risbo1996fourier},
\begin{align}
  d_{m,n}^\ell(\theta) = i^{m-n}\sum_{k=-\ell}^\ell \Delta_{k,m}^\ell e^{-ik\theta}\Delta_{k,n}^\ell.
\end{align}

Now we rewrite the \swsh\ forward transform,
\begin{align*}
  \li{s}{\hat{f}}_m^\ell
  &= \int\limits_{\theta, \phi} \li{s}{f}(\theta,\phi) \overline{\li{s}{Y}_m^{\ell}(\theta,\phi)}\, \sin\theta \,d\theta\,d\phi \\
  &= \int\limits_{\theta, \phi} \li{s}{f}(\theta,\phi)
    (-1)^s \sqrt{\frac{2\ell + 1}{4\pi}}
    e^{is\psi}
    D_{m,-s}^\ell(\phi, \theta, \psi)
    \, \sin\theta \,d\theta\,d\phi \\
  &= (-1)^s \sqrt{\frac{2\ell + 1}{4\pi}}
    \int\limits_{\theta, \phi} \li{s}{f}(\theta,\phi)
    e^{-im\phi} d_{m,-s}^\ell(\theta)
    \, \sin\theta \,d\theta\,d\phi \\
  &= (-1)^s \sqrt{\frac{2\ell + 1}{4\pi}}
    \int\limits_{\theta, \phi} \li{s}{f}(\theta,\phi)
    e^{-im\phi}
    i^{m+s}\sum_{k=-\ell}^\ell \Delta_{k,m}^\ell e^{-ik\theta}\Delta_{k,-s}^\ell
    \, \sin\theta \,d\theta\,d\phi \\
  &= (-1)^si^{m+s} \sqrt{\frac{2\ell + 1}{4\pi}}
    \sum_{k=-\ell}^\ell \Delta_{k,m}^\ell \Delta_{k,-s}^\ell
    \int\limits_{\theta, \phi} \li{s}{f}(\theta,\phi)
    e^{-im\phi} e^{-ik\theta}
    \, \sin\theta \,d\theta\,d\phi \\
  &= (-1)^si^{m+s} \sqrt{\frac{2\ell + 1}{4\pi}}
    \sum_{k=-\ell}^\ell \Delta_{k,m}^\ell \Delta_{k,-s}^\ell
    I_{k,m}.
\end{align*}
Since the $\Delta_{m,n}^\ell$ are constants, they are pre-computed.
We still need to compute
\begin{align}
  I_{k,m}=\int\limits_{\theta, \phi} \li{s}{f}(\theta,\phi)e^{-im\phi} e^{-ik\theta}\, \sin\theta \,d\theta\,d\phi,
  \label{eq:Ikm}
\end{align}
which can be done efficiently with an FFT.
There is a problem because $\li{s}{f}$ is defined on the sphere so it is not
periodic in both directions; we then define $\li{s}{f'}$ as the periodic
extension of $\li{s}{f}$ which is a function on the torus.
See \textcite{mcewen2008fast,Huffenberger_2010} for more details about this extension.
We can then express $\li{s}{f'}$ by its Fourier coefficients,
\begin{align}
  \li{s}{f'}(\theta, \phi) = \sum_{p,q}  \li{s}{\hat{f}'_{p,q}} e^{ip\theta}e^{iq\phi}.
\end{align}
Substituting this in \cref{eq:Ikm} yields,
\begin{align*}
  I_{k,m}
  &=\int\limits_{\theta=0}^{\pi}\int\limits_{\phi=0}^{2\pi}
    \sum_{p,q}  \li{s}{\hat{f}'_{p,q}} e^{ip\theta}e^{iq\phi}
    e^{-im\phi} e^{-ik\theta}\, \sin\theta \,d\theta\,d\phi \\
  &=\sum_{p,q}  \int\limits_{\theta=0}^{\pi}\int\limits_{\phi=0}^{2\pi}
    \li{s}{\hat{f}'_{p,q}} e^{i(p-k)\theta}e^{i(q-m)\phi}\,
    \sin\theta \,d\theta\,d\phi \\
  &=\sum_{p}  2\pi \int\limits_{0}^{\pi}
    \li{s}{\hat{f}'_{p,m}} e^{i(p-k)\theta}\, \sin\theta \,d\theta \\
  &=2\pi \sum_{p} \li{s}{\hat{f}'_{p,m}} \hat{w}(p-k),
\end{align*}
where $\hat{w}$ can be obtained analytically.
Note that the last expression is a $1$D discrete convolution;
if we see $\hat{w}$ as the Fourier transform of some $w$,
the convolution can be evaluated as the FFT of the multiplication in the spatial domain,
\begin{align}
  I_{k,m} = \frac{2\pi}{N^2} \sum_{\theta, \phi}\li{s}{f'(\theta, \phi)} {w}(\theta) e^{-ik\theta}e^{-im\phi},
\end{align}
for $N$ uniformly sampled $\theta,\,\phi$.
Here, $w$ can be pre-computed,
so computing $I_{k,m}$ amounts to
1) extend the function to the torus,
2) apply the weights $w$,
3) compute a $2$D FFT.

\subsection{Spin-weighted spherical CNNs}
Our main operation is the convolution defined in \cref{spin:sec:swconv}.
Since components with the same spin can be added,
the generalization to multiple channels is immediate.
Convolution combines features of different spins, so we enforce the
same number of channels per spin per layer.
Each feature map then consists of a set of \swsfs\ of different spins,
$F = \{\fun{\li{s}{f}}{S^2}{\C^k} \mid s \in W_F\}$, where $k$ is the number
of channels and $W_F$ the set of spins.

\paragraph{Filter localization}
We compute the convolutions in the spectral domain but apply nonlinearities, batch normalization
and pooling in the spatial domain.
This requires expanding the feature maps into the \swshs\ basis and back at every layer,
but the filters themselves are parameterized by their spectrum.
We follow the idea of \cref{sph:sec:locfilters} to enforce
filter localization with spectral smoothness.
The filters there are of the form $\li{0}{\hat{k}}_0^\ell$, so the spectrum is $1$D
and can be interpolated from a few anchor points,
smoothing it out and reducing the number of parameters.
In the current case, filters take the general form $\li{s}{\hat{k}}_m^\ell$
where $s \in W_{F*K}$ are the output spin weights and $m \in W_F$ are the input
spin weights.
We then interpolate the spectrum of each component along the degrees $\ell$,
resulting in a factor of $\abs{W_{F*K}} \abs{W_F}$ more parameters per layer.

\paragraph{Batch normalization and nonlinearity}
We force features with spin weight $s=0$ to be real
by taking their real part after every convolution.
Then we can use the common \relu\ as the nonlinearity
and the standard batch normalization from \textcite{IoffeS15}.

For $s>0$, we have complex-valued feature maps.
Since values move and change phase upon rotation,
equivariant operations must commute with this behavior.
Pointwise operations on magnitudes satisfy this requirement.
Similarly to \textcite{worrall2017harmonic}, we employ a variation of
the \relu\  to the complex values $z = ae^{i\theta}$ as follows,
where $a \in \R^+$ and $b\in \R$ is a learnable scalar,
\begin{align}
  z \mapsto \max(a + b, 0) e^{i\theta}.
\end{align}
Batch normalization is also applied pointwise, but it does not commute with spin-weighted rotations
because of the mean subtraction and offset addition steps.
We adapt it by removing these steps,
where $\sigma^2$ is the channel variance,
$\gamma \in \C$ is a learnable factor
and $\eps \in \R^+$ is a constant added for stability,
\begin{align}
  z \mapsto \frac{z}{\sqrt{\sigma^2+ \eps}} \gamma.
\end{align}
As usual, the variance is computed along the batch during training and along the
whole dataset during inference.
The variance of a set of complex numbers is real and only depends on their magnitudes;
we use a spherical quadrature rule to compute it.

\paragraph{Complexity analysis}
We follow \textcite{Huffenberger_2010} for the \swsft\ implementation,
whose complexity for bandwidth $B$ is $\mathcal{O}(B^3)$.
While it is asymptotically slower than the $\mathcal{O}(B^2 \log^2{B})$ of the standard \sft\ from \textcite{driscoll1994computing},
the difference is small for bandwidths typically needed in practice~
\cite{s.2018spherical,esteves18eccv,kondor2018clebsch}.
The \soft\ implementation from \textcite{kostelec2008ffts} is $\mathcal{O}(B^4)$.
Our final model requires $\abs{W}$ transforms per layer, so
it is asymptotically a factor $\nicefrac{\abs{W}B}{\log^2{B}}$ slower than
using SFT as in \textcite{esteves18eccv},
and a factor $\nicefrac{B}{\abs{W}}$ faster than using the SOFT as in \textcite{s.2018spherical}.
Typical values in our experiments are $B=32$ and $\abs{W}=2$.

\section{Experiments}
We start with experiments on image and vector field classification,
image prediction from a vector field, and vector field from an image,
where all images and vector fields are on the sphere.
Next, we show applications to $3$D shape classification and semantic segmentation
of spherical panoramas.

We use only spin weights $0$ and $1$.
When inputs do not have both spins, the first layer
is designed so that its outputs have.
Following features and filters also have spins $0$ and $1$.

Every model is trained with different random seeds five times and
averages and standard deviations (within parenthesis) are reported.

\subsection{Spherical Image Classification}
\label{spin:sec:sphmnist}
Our first experiment is on the Spherical MNIST dataset introduced by \textcite{s.2018spherical}.
This is an image classification task where the handwritten digits from MNIST
are projected on the sphere.
Three modes are evaluated depending on whether the training/test set are rotated (R)
or not (NR).

We simplify the architecture in \cref{sph:sec:arch} to have a single branch.
The spherical baseline has six layers with $16,16,32,32,58,58$ channels per layer,
and $8$ filter parameters per layer.
The \swscnn\ follows the same topology,
switching from spherical to spin-weighted convolutions.
Since the filters now have richer spectra, they need more parameters.
In order to keep similar capacity between competing models,
we set the number parameters per spin-order pair $(s,m)$%
\footnote{We use spins 0 and 1 throughout: $M_F=M_K=\{0, 1\}$.
  This amounts to four spin-order pairs per filter per degree:
  $\li{0}k_0^\ell,\, \li{0}k_1^\ell,\, \li{1}k_0^\ell,\, \li{1}k_1^\ell$.}
to $6,6,4,4,3,3$ at each layer.
We also cut the number of channels per layer, so
while we have the same number of parameters, we have significantly fewer feature maps.
The final architecture has $16,16,20,24,28,32$ channels per layer, with
pooling every two layers, and our custom batch normalization applied at every layer.
The planar baseline has the same number of layers and uses $2$D convolutions with
$3\times 3$ kernels. We set the number of channels per layer to $16,16,32,32,54,54$.
to match the number of parameters of the other models.

Training lasts $12$ epochs using the Adam optimizer~\cite{KingmaB14},
optimizing the usual cross-entropy loss.
We set the initial learning rate to \num{1e-3}
and decay it to \num{2e-4} epoch $6$ and \num{4e-5} at epoch $10$.
The mini-batch size is set to $32$ and input resolution is $64\times 64$

\Cref{tab:sphmnist} shows the results; we outperform previous spherical CNNs in every mode.

\begin{table}[htbp]
  \caption{Spherical MNIST results.
    Our model is more expressive than the isotropic
    and more efficient than the previous anisotropic spherical CNNs,
    allowing deeper models and improved performance.
  }
  \label{tab:sphmnist}
  \centering
  \begin{tabular}{@{}
    l
    S[table-format=2.2(1)]
    S[table-format=2.2(1)]
    S[table-format=2.2(1)]
    r
    Z
    @{}}
    \toprule
                                    & {NR/NR}            & {R/R}              & {NR/R}             & {params}   & {time [s]} \\
    \midrule
    Planar CNN                      & \bgl 99.07 +- 0.04 & 81.07 +- 0.63      & 17.23 +- 0.71      & \SI{59}{k} & {-}        \\
    \textcite{s.2018spherical}      & 95.59              & 94.62              & 93.4               & \SI{58}{k} & {-}        \\
    \textcite{kondor2018clebsch}    & 96.4               & 96.6               & 96.0               & {-}        & {-}        \\
    \textcite{esteves18eccv}        & 98.75 +- 0.08      & \bgl 98.71 +- 0.05 & \bgl 98.08 +- 0.24 & \SI{57}{k} & 294        \\
    Ours                            & \bgd 99.37 +- 0.05 & \bgd 99.37 +- 0.01 & \bgd 99.08 +- 0.12 & \SI{58}{k} & 548        \\
    \bottomrule
  \end{tabular}
\end{table}

\subsection{Spherical Vector Field Classification}
One crucial advantage of the \swscnns\ is that they
are equivariant as vector fields.
To demonstrate this, we introduce a spherical vector field dataset and a classification task.
\paragraph{Dataset}
We start from MNIST~\cite{lecun2010mnist}, compute image gradients
with Sobel kernels and project the vectors to the sphere.
To increase the challenge, we follow \textcite{larochelle2007empirical}
and swap the train and test sets so there are \SI{10}{k} images for training and
\SI{50}{k} for test.
We call this dataset \svfmnist; \cref{spin:fig:dset_cls} shows some samples.
\begin{figure}[thbp]
  \centering
  \includegraphics[width=0.9\linewidth]{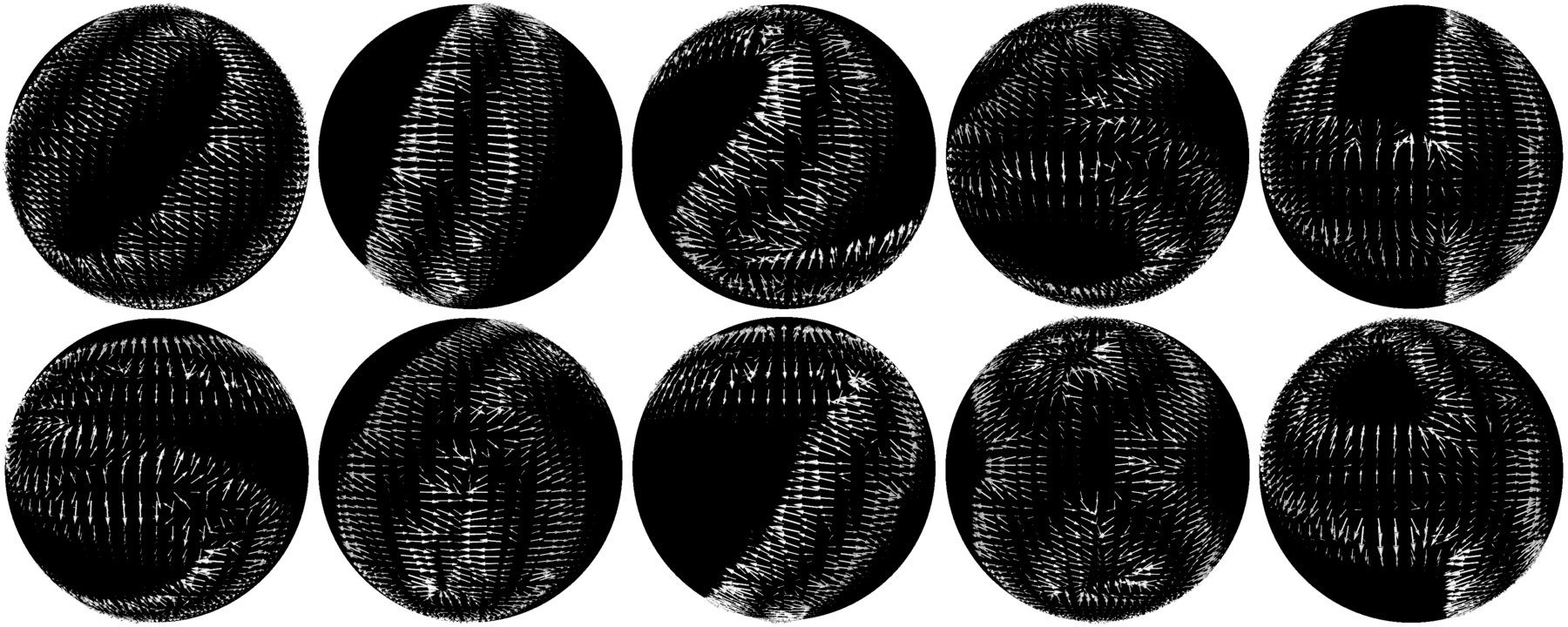}
  \caption{
    Samples from \svfmnist, classification task.
    We show one sample for each category in canonical orientation
    for easy visualization.
  }
\label{spin:fig:dset_cls}
\end{figure}

The vector field is converted to a spin weight $s=1$ complex-valued function
using a predefined local tangent frame per point on the sphere.
The inverse procedure converts $s=1$ features to output vector fields.%

\paragraph{Architecture} We use the same architecture and training protocol as
described in \cref{spin:sec:sphmnist},
the only difference is that now the first layer maps from
spin 1 to spins 0 and 1.
The planar and spherical CNN models take the vector field as
a 2-channel input.

\paragraph{Results} \Cref{tab:sphvecmnistcls} shows the results.
The NR/R column clearly shows the advantage of vector field equivariance;
the baselines cannot generalize to unseen vector field rotations,
even when they are equivariant in the scalar sense as \cite{esteves18eccv}.
\begin{table}[htbp]
  \caption{\Acrlong{svfmnist} classification results.
    When vector field equivariance is required,
    the gap between our method and the spherical and planar baselines is larger.}
  \label{tab:sphvecmnistcls}
  \centering
  \begin{tabular}{@{}
    l
    S[table-format=2.1(1)]
    S[table-format=2.1(1)]
    S[table-format=2.1(1)]
    }%
    \toprule
                           & {NR/NR}            & {R/R}              & {NR/R}                                    \\
    \midrule
    Planar                 & 97.7 +- 0.2      & 50.0 +- 0.8      & 14.6 +- 0.9                             \\
    \textcite{esteves18eccv}   & \bgd 98.4 +- 0.1 & \bgl 94.5 +- 0.5 & \bgl 24.8 +- 0.8                        \\
    Ours                   & \bgl 98.2 +- 0.1 & \bgd 97.8 +- 0.2 & \bgd 98.2 +- 0.7                        \\
    \bottomrule
  \end{tabular}
\end{table}
\subsection{Spherical Vector Field Prediction}
The \swscnns\ can also be used for dense prediction.
We introduce two new tasks on \svfmnist, 1) predicting a vector field from an image
and 2) predicting an image from a vector field.
For these tasks, we implement a fully convolutional U-Net architecture~\cite{ronneberger15miccai}
with spin-weighted convolutions, and use the same training procedure as in \cref{spin:sec:sphmnist},
but minimizing the mean squared error.

\paragraph{Datasets} When the image is a grayscale digit
and the vector field comes from its gradients,
both tasks can be easily solved via discrete integration and differentiation.
We call this case ``easy''.
It highlights a limitation of isotropic spherical CNNs; the
results show that the constrained filters cannot approximate a simple image gradient operator.

We also experiment with a more challenging scenario, denoted ``hard'', where the digits are colored
and the vector fields are rotated based on the digit category.

When predicting an image from a vector field,
the color is determined in HSV space, where the value is the original grayscale value,
the hue is $c/10$ for category $c$, and the saturation is set to one.
The target is then converted back to RGB.
\Cref{spin:fig:dset_dense_scalar} shows a few input/target pairs.
\begin{figure}[thbp]
  \centering
  \includegraphics[width=0.9\linewidth]{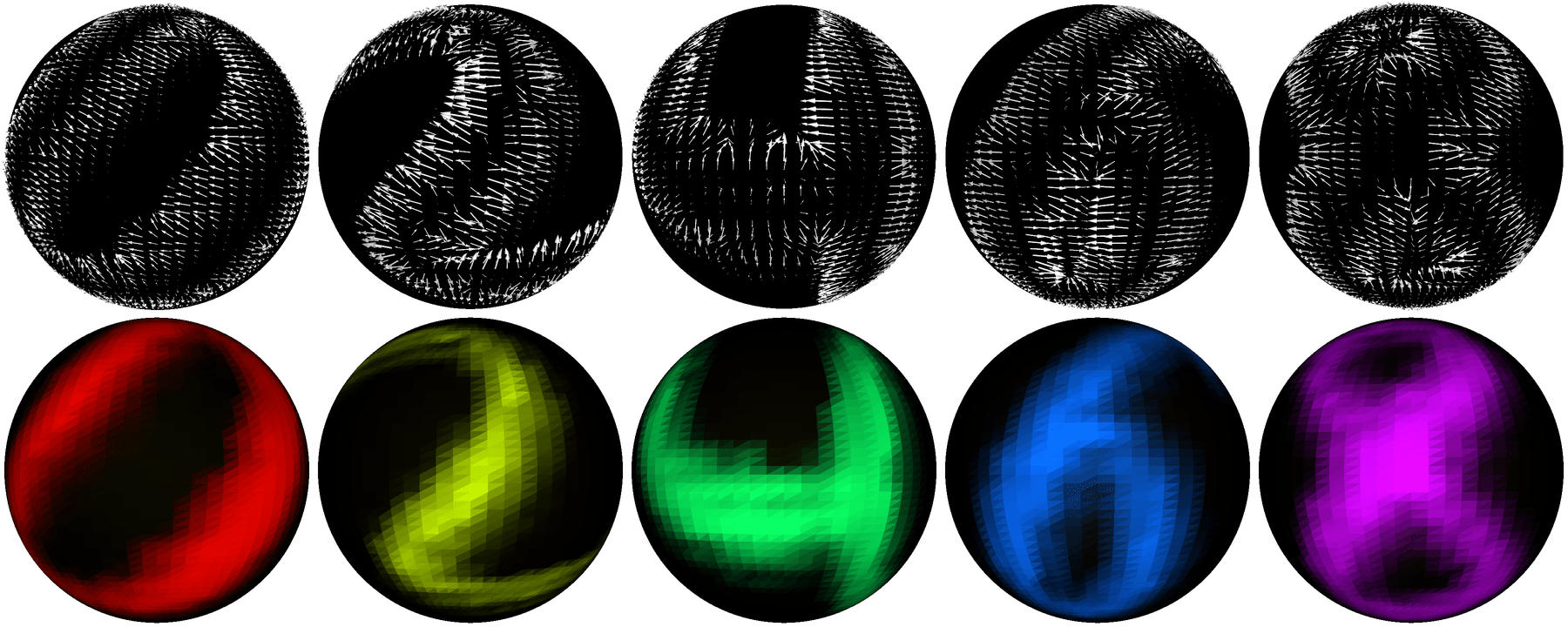}
  \caption{
    Samples from \svfmnist, image from vector field prediction ``hard'' task.
    Top shows input vector fields, bottom the target spherical images.
    Note that the targets have different colors based on the category,
    so the task cannot be solved via simple gradient integration.
    Samples are in canonical orientation for easy visualization.
  }
\label{spin:fig:dset_dense_scalar}
\end{figure}

When predicting a vector field from an image, the angular
offset on all vectors depends on the target category.
The offset for category $c$ is given by $\exp(2\pi i c/10)$.
\Cref{spin:fig:dset_dense_vector} shows a few input/target pairs.
\begin{figure}[thbp]
  \centering
  \includegraphics[width=0.9\linewidth]{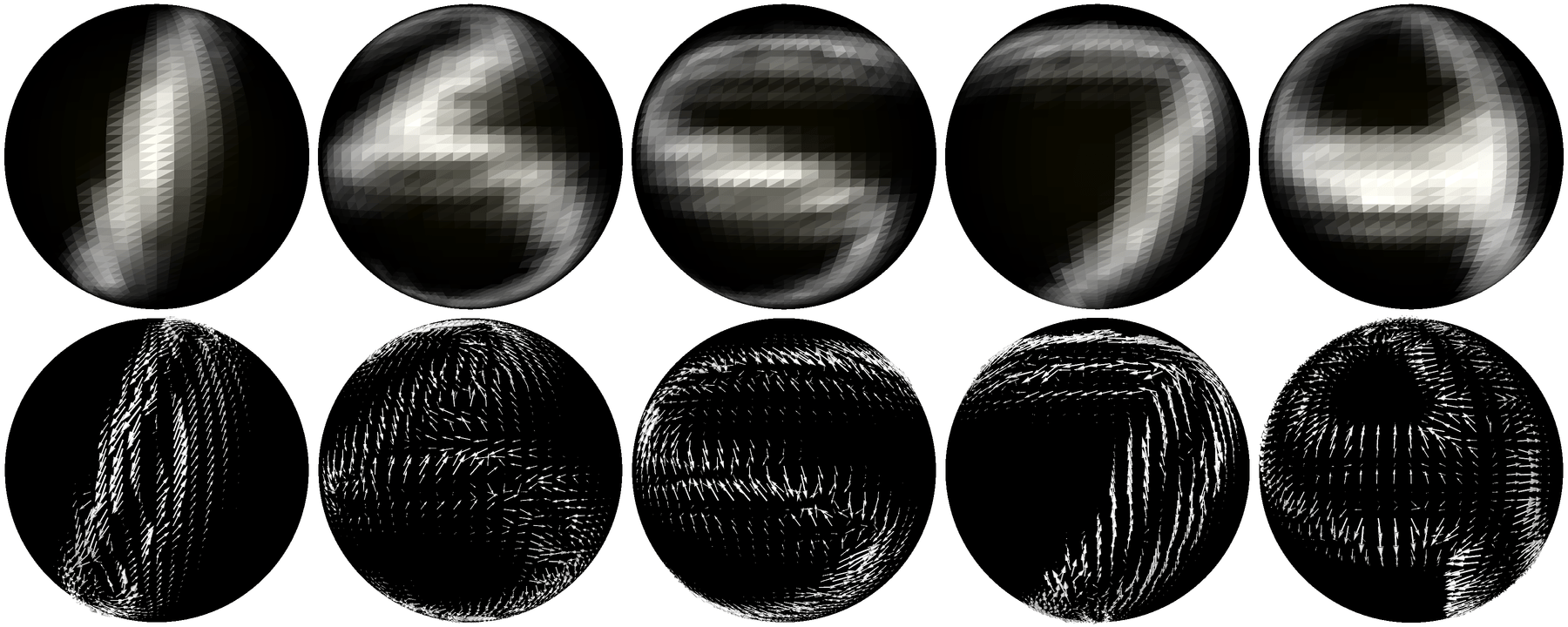}
  \caption{
    Samples from \svfmnist, vector field from image prediction ``hard'' task.
    Top shows input spherical images, bottom the target vector fields.
    The targets have different angular offsets based on the category so
    the task cannot be solved via simple image gradient estimation.
    Samples are in canonical orientation for easy visualization.
  }
\label{spin:fig:dset_dense_vector}
\end{figure}

The ``hard'' tasks involve semantics and require the network to implicitly classify the input
in order to correctly predict output color and vector directions.

\paragraph{Architectures} The architecture for dense prediction
is a fully convolutional U-Net~\cite{ronneberger15miccai} with spin-weighted convolutions.
We use $16,32,32,32,32,16$ channels per layer, with
pooling in the first two layers and nearest neighbors upsampling in the last two.
The number of filter parameters chosen per spin-order per layer is $6,4,3,3,4,6$.

The spherical CNN baseline uses spherical convolutions
and sets the numbers of filter parameters to $8$ per layer and the number
of channels to $20,40,78,78,40,20$.

The planar baseline again uses $2$D convolutions with $3\times 3$ kernels and
of channels to $18,36,72,72,36,18$ channels.

\paragraph{Results} \Cref{tab:sphvecmnistdense} shows the results.
While the planar baseline does well in the ``easy'' tasks that can be solved with
simple linear operators, our model still outperforms it when generalization
to unseen rotations is demanded (NR/R).
In the ``hard'' task, the \swscnns\ are clearly superior by large margins.
\begin{table}[htbp]
  \caption{Vector field to image and image to vector field  results on \svfmnist.
    The \swscnns\ show superior performance, especially on the more challenging tasks.
    The metric is the mean-squared error $\times 10^{3}$ (lower is better).
    All models have around {112k} parameters.}
  \label{tab:sphvecmnistdense}
\centering
{
  \begin{tabular}{@{}l
    l %
    S[table-format=2.1(1)]S[table-format=2.1(1)]S[table-format=2.2(1)]
    l
    S[table-format=2.1(1)]S[table-format=2.1(1)]S[table-format=2.1(1)]
    l %
    Z %
    Z %
    @{}}
    \toprule
                             &  &                 & {easy}          &                 &  &                  & {hard}          &                  &  &          &        \\
    \cmidrule{3-5} \cmidrule{7-9}
                             &  & {NR/NR}         & {R/R}           & {NR/R}          &  & {NR/NR}          & {R/R}           & {NR/R}           &  & {params} & {time} \\
    \midrule
    \multicolumn{12}{l}{\textbf{Image to Vector Field}} \\
    Planar                   &  & \bgd 0.3 +- 0.1 & 5.0 +- 0.1      & 9.3 +- 0.1      &  & 16.9 +- 0.5      & 26.0 +- 0.1     & 32.9 +- 0.2      &  & {112k}   & 5      \\
    \textcite{esteves18eccv} &  & 9.7 +- 0.3      & 31.0 +- 0.2     & 45.6 +- 0.7     &  & 13.3 +- 0.6      & 28.5 +- 0.4     & 41.6 +- 0.4      &  & {112k}   & 36     \\
    Ours                     &  & 2.9 +- 0.2      & \bgd 3.4 +- 0.1 & \bgd 4.3 +- 0.1 &  & \bgd 11.6 +- 0.6 & \bgd 9.2 +- 0.4 & \bgd 10.2 +- 0.6 &  & {112k}   & 67     \\
    \multicolumn{12}{l}{\textbf{Vector Field to Image}} \\
    Planar                   &  & \bgd 1.4 +- 0.1 & \bgd 3.2 +- 0.1 & 6.9 +- 0.4      &  & 3.3 +- 0.2       & 13.4 +- 0.2     & 21.1 +- 0.3      &  & {112k}   & 5      \\
    \textcite{esteves18eccv} &  & 3.8 +- 0.1      & 4.9 +- 0.2      & 15 +- 2         &  & \bgd 2.6 +- 0.1  & 6.4 +- 0.2      & 20.3 +- 0.9      &  & {112k}   & 37     \\
    Ours                     &  & 3.5 +- 0.1      & 3.8 +- 0.1      & \bgd 4.0 +- 0.1 &  & \bgd 2.6 +- 0.1  & \bgd 2.7 +- 0.1 & \bgd 2.9 +- 0.1  &  & {112k}   & 73     \\
    \bottomrule
  \end{tabular}
}
\end{table}

We show examples of inputs and outputs for the dense prediction tasks;
\Cref{spin:fig:out_dense_scalar} shows the vector field to image task while
\cref{spin:fig:out_dense_vector} shows the image to vector field task.
Models are trained on the R mode, so they have access to rotated samples
at training time.
Nevertheless, the standard CNN and spherical CNN models are not equivariant
in the vector field sense and cannot achieve the same accuracy as
the \swscnns.
\begin{figure}[thbp]
  \centering
  \includegraphics[width=\linewidth]{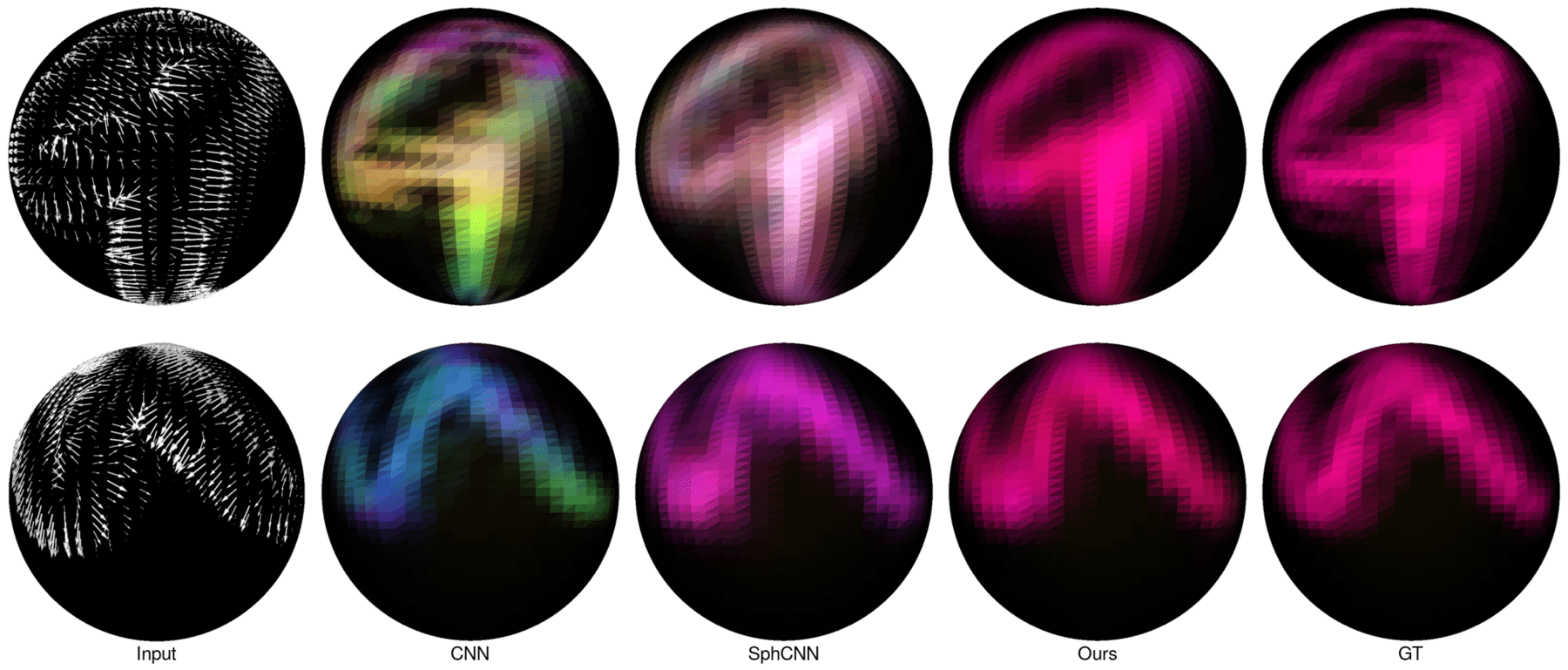}
  \caption{
    Input/output samples for the spherical vector field to image task.
    We show two rotated instances of the same input to highlight
    that standard CNNs and spherical CNNs do not respect the spherical vector
    field equivariance, while the \swscnns\ do.
  }
\label{spin:fig:out_dense_scalar}
\end{figure}
\begin{figure}[thbp]
  \centering
  \includegraphics[width=\linewidth]{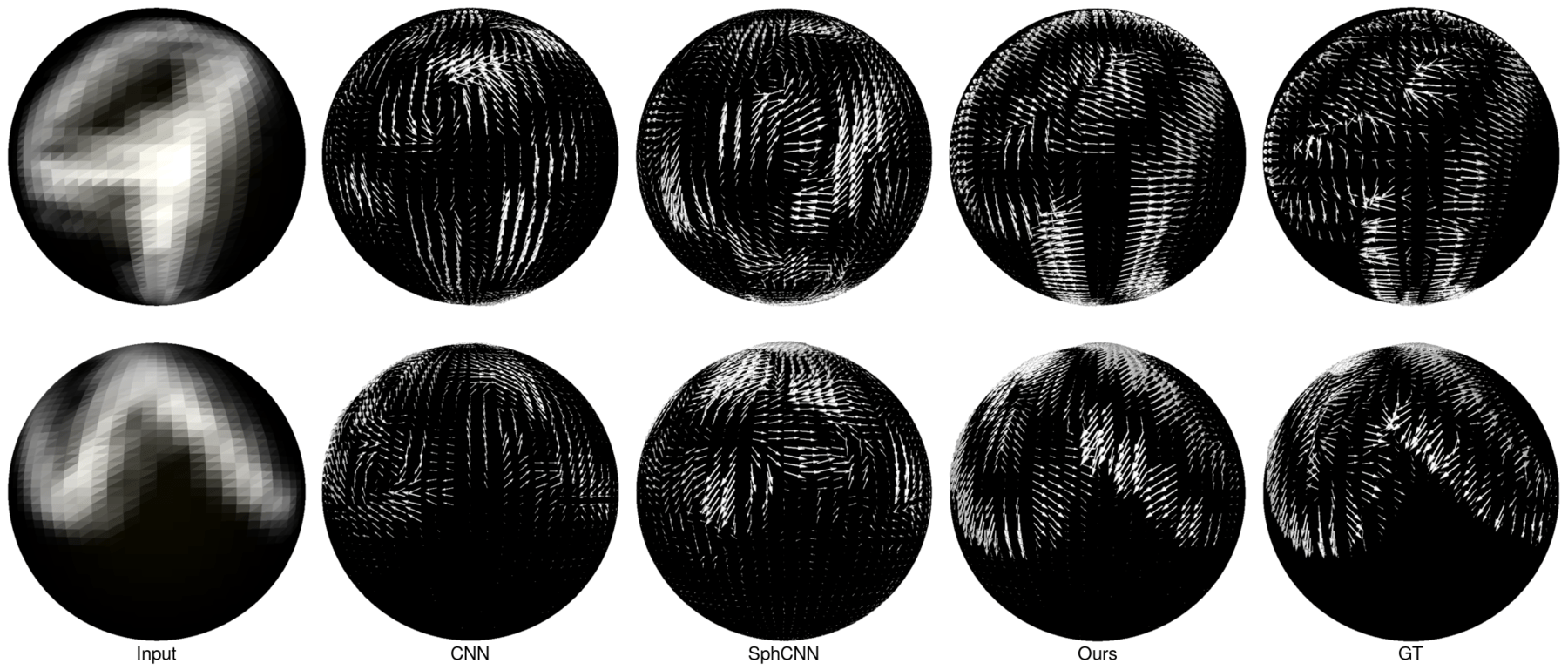}
  \caption{
    Input/output samples for the spherical image to vector field task.
  }
\label{spin:fig:out_dense_vector}
\end{figure}
\subsection{Classification of $3$D shapes}
\label{spin:sec:m40}
We tackle $3$D object classification on ModelNet40~\cite{wu20153d},
following the protocol from \cref{sph:sec:experiments} which considers
azimuthally  and arbitrarily rotated shapes.

Besides more expressive filters,
our method also represents the shapes more faithfully on the sphere.
\textcite{esteves18eccv,s.2018spherical} cast rays from the shape's center
and assign the intersection distance and angle between normal and ray to points on the sphere.
Normals are not uniquely determined by a single angle
but this limitation was necessary to preserve equivariance as a scalar field.

By using \swscnns, we can represent any normal direction uniquely,
without breaking equivariance.
We split the vector in radial and tangent components, where
the radial is represented with spin $s=0$ and the tangent has $s=1$.
Since the intersection distance has also $s=0$,
our $3$D shape representation has two spherical channels with $s=0$ and one of $s=1$.
Following \textcite{s.2018spherical}, we also use the convex hull for extra channels.

When inputs have limited orientations, a globally equivariant model
can be undesirable, even though equivariance in the local sense is still useful.
We can keep the benefits while still having access to the global pose
by breaking equivariance on the final layers, which we do by simply replacing them with regular $2$D convolutions.
We call this model ``Ours + BE''; it results in better performance on ``upright''
but worse on ``rotated'', as expected.

\paragraph{Training} We train for 48 epochs using the Adam optimizer~\cite{KingmaB14},
with learning rate linearly increasing from 0 to \num{5e-3} during the first epoch
then decayed by a factor of 5 at epochs 32 and 44.
The mini-batch size is 32 and input resolution is $64\times 64$.
The cross-entropy loss is optimized and we found that label smoothing
regularization~\cite{szegedy2016rethinking} with $\eps=0.2$ is beneficial.

\paragraph{Architecture} The basic block is residual~\cite{he2016deep} with a bottleneck
halving the number of channels when input and output number of channels match.
Our custom batch normalization and nonlinearity is applied to the complex feature maps.
We use $32,32,64,64,128,128$, $256,256$ channels per layer where average pooling is applied before
each increase in the number of channels,
and $6,6,4,4,3,3,3,3$ filter parameters are learned per spin-order per layer,
with a total of \SI{1.2}{M} parameters.
When breaking equivariance (``Ours + BE''), we replace the last two layers by 3 blocks of
$2$D convolution with $3\times 3$ kernels.

\paragraph{Baselines}
The same training procedure and architecture are used for the SphCNN~\cite{esteves18eccv} baseline,
which explains the superior numbers we report when comparing with the original paper.

We evaluate the baseline from \textcite{JiangHKPMN19} following the recipe in the paper.
The only difference is that we randomly rotate the training and test sets.
Each training set object is rotated multiple times to serve as augmentation.
The numbers we obtain differ from the \SI{90.5}{\%} accuracy reported in the original paper
because our results are for azimuthally and arbitrarily rotated datasets while
the original has all objects in a canonical pose.

\paragraph{Results} \Cref{tab:m40} compares with previous spherical \cnns.
The ``upright'' mode has only azimuthal rotations while
``rotated'' is unrestricted.
EMVN~\cite{Esteves_2019_ICCV} is state-of-the-art on this task
with \num{94.4}{\%} accuracy on ``upright'' and \num{91.1}{\%} on ``rotated'', but
it requires \num{60} images as input and much larger model.
\begin{table}[htbp]
  \caption{ModelNet40 shape classification accuracy [\%].
    Our model outperforms previous spherical CNNs
    while requiring small input size and low parameter count.}
  \label{tab:m40}
\centering
{
  \begin{tabular}{
    @{}l
    S[table-format=2.1(1)]S[table-format=2.1(1)]
    }
\toprule
                                & {upright} & {rotated} \\
\midrule
UGSCNN \cite{JiangHKPMN19}      & 87.3 +- 0.3      & 81.9 +- 0.9 \\
SphCNN \cite{esteves18eccv}     & 89.3 +- 0.5   & \bgl 88.4 +- 0.3  \\
Ours                            & \bgl 89.6 +- 0.3   & \bgd 88.8 +- 0.1  \\
Ours + BE                       & \bgd 90.1 +- 0.3   & 88.2 +- 0.2 \\    
\bottomrule
  \end{tabular}
}
\end{table}
\subsection{Semantic segmentation of spherical panoramas}
We evaluate our method on the Stanford $2$D$3$DS dataset~\cite{ArmeniSZS17},
which contains 1,413 RGB-D panoramas with corresponding pixelwise semantic labels and normals.
We follow the usual protocol of reporting the average performance over the three official folds,
and we use the same weights per class as \textcite{JiangHKPMN19} to mitigate the class imbalance.

As in \cref{spin:sec:m40}, our model is able uniquely represent surface normals.
In this task, representing the normals with respect to local tangent frames is also more realistic,
as they could be estimated from a depth sensor without knowledge of global orientation.
Note that competing methods don't usually leverage the normals, so we also show results
without them for comparison.

\paragraph{Training}
We train for 48 epochs using the Adam optimizer~\cite{KingmaB14},
with the learning rate linearly increasing from \num{0} to \num{1e-2} during the first epoch
then decayed by a factor of 10 at epoch 40.
The mini-batch size is 8 and input resolution is $128\times 128$.
The pixelwise cross-entropy loss is optimized with label smoothing
regularization~\cite{szegedy2016rethinking} with $\eps=0.2$.

\paragraph{Architecture}
A fully convolutional U-Net~\cite{ronneberger15miccai} architecture
is used with same residual block described in \cref{spin:sec:m40}.
We use $16,64,128,128,256,256,128,128,64,16$ channels per layer
where average pooling/nearest neighbor upsampling
is applied before each increase/decrease in the number of channels, and
$8,6,6,4,4,3,3,4,4,6,6,8$ filter parameters are learned per spin-order per layer,
with a total of \SI{2.5}{M} parameters.
When breaking equivariance in ``Ours + BE'', we replace the last layer by six blocks of
$2$D convolutions with $3\times 3$ kernels and 32 channels.

\paragraph{Results} \Cref{tab:2d3ds} shows the results in terms of pixelwise accuracy and \miou.
Inputs are upright so global \SO{3} equivariance is not required;
nevertheless, our method matches the state-of-the-art performance,
which demonstrates the representational power of the \swscnns.
\begin{table}[htbp]
  \caption{Semantic segmentation on Stanford $2$D$3$DS.
    Our model clearly outperforms previous equivariant models
    and matches the state-of-the-art non-equivariant model.
}
  \label{tab:2d3ds}
\centering
{
  \begin{tabular}{
    @{}
    l
    S[table-format=2.1(1)]S[table-format=2.1(1)]
    }
    \toprule
                                              & {acc [\%]}    & {\miou}         \\
    \midrule
    UGSCNN \cite{JiangHKPMN19}                & 54.7         & 38.3           \\
    Gauge CNN \cite{CohenWKW19}               & 55.9         & 39.4           \\
    HexRUNet \cite{zhang2019orientation}      & \bgl 58.6    & \bgl 43.3      \\
    SphCNN \cite{esteves18eccv}               & 52.8(6)      & 40.2(3)        \\
    Ours                                      & 55.6(5)      & 41.9(5)        \\
    Ours + normals                            & 57.5(6)      & \bgd 43.4(4)   \\
    Ours + normals + BE                       & \bgd 58.7(5) & \bgd 43.4(4)   \\
    \bottomrule
  \end{tabular}
}
\end{table}
\section{Conclusion}
In this chapter, we introduced the \acrlongpl{swscnn}, which use sets of
\acrlongpl{swsf} as features and filters, and employ layers of a newly introduced
spin-weighted spherical convolution to process spherical images or spherical vector fields.
Our model achieves superior performance on the tasks attempted, at a reasonable
computational cost.
We foresee further applications of the \swscnns\ to $3$D shape analysis,
climate/atmospheric data analysis
and other tasks where inputs or outputs can be represented as spherical images or vector fields.

\glsresetall
\chapter{Conclusion and future work}
\label{sec:conclusion}
\section{Conclusion}
This thesis presented different methods of learning equivariant representations with \cnns, and demonstrated promising results in multiple tasks.
The methods introduced leverage symmetries in the data to reduce sample and model complexity
and improve generalization performance.
We conclude by reiterating the main ideas and their applications.
\begin{enumerate}
\item The \ptns\ provide equivariance to the group of similarities on
the plane via a transformation to canonical coordinates, and were
applied to image classification.  This was one of the first models
equivariant to scale, and equivariant to a continuous group of
transformations other than translation.

\item The \emvns\ achieve equivariance to the icosahedral group of discrete rotations
through discrete group convolutions.
They were applied to $3$D shape classification, retrieval, and panoramic image classification.
These models leverage image descriptors from multiple views
to construct a function on the group that is input to a \gcnn.
The descriptors can come from any other model.

\item The spherical \cnns\ achieve equivariance to \SO{3}, the continuous group of $3$D rotations,
through spherical convolutions evaluated in the spectral domain.
Applications to $3$D shape classification, retrieval, and shape alignment were demonstrated.
This was the first model based on spherical convolutions with inputs, filters,
and features on the sphere, and
also one of the first to achieve \SO{3}-equivariance.

\item The cross-domain embeddings were introduced to obtain equivariant spherical
representations from a $2$D view of a $3$D object.
They enable computation of the relative $3$D pose between two views of the object through spherical
correlation between their spherical embeddings, and also generation of novel views by rotating
and inverting the embeddings.
This was the first learned model for pose estimation with no regression, classification,
or keypoints involved.

\item The \swscnns\ generalize the spherical \cnns.
They remove the isotropic filter constraint in an efficient way and lead to more expressive models,
also allowing equivariant processing of vector fields on the sphere.
Applications to spherical image classification, semantic segmentation,
and spherical vector field classification and generation were shown.
This model was the first demonstrate \SO{3}-equivariance for spherical vector fields.
\end{enumerate}
\section{Future work}

\subsection{Invertible mesh to sphere mapping}
Applications of our spherical and spin-weighted spherical \cnns\ to $3$D shape analysis
rely on the procedure to convert a mesh to a spherical function described in
\cref{sph:sec:spherical-3d-object}.
There, we cast rays that intersect the mesh and
construct a spherical function based on the intersection point properties.
When there are multiple intersections per ray, only a single one is used,
which makes the process non-invertible.

Besides the loss of information that may hurt tasks like classification and retrieval,
the non-invertibility forbids applications that require dense predictions
such as $3$D object part segmentation and mesh generation,
so finding an invertible mesh to sphere map would benefit several fronts.

One way to obtain such a map is through \emph{mean curvature flow},
a surface evolution process where points move
at velocities proportional to the local mean curvature, in the direction of the local surface normal.
Intuitively, points move inwards where the curvature is positive and outwards where it is negative,
so watertight genus-zero surfaces evolving under this rule tend to approach a sphere and shrink.
\textcite{mantegazza2011lecture} describes these flows in full detail.
There are different ways of computing curvature flows \cite{Kazhdan_2012,crane2013robust}
and properties like conformality or authalicity may be enforced.
Neural ODEs~\cite{ChenRBD18} may also enable learning
the flow instead of following the mean curvature rule;
\textcite{gupta20_neural_mesh_flow} applies it to learn a mapping from sphere to mesh.

Some complicating factors when computing the flow occur when inputs are not
(i) manifold meshes, (ii) watertight, and (iii) genus-zero,
so some form of pre-processing should be used to handle these cases.

\textcite{Sinha2016,maron2017convolutional} present similar ideas that map meshes to
the sphere and torus, respectively.
While \textcite{Sinha2016} also obtain a spherical representation of a mesh, they use
it just as an intermediate and flatten it to a square image to be processed by a \cnn.
\textcite{maron2017convolutional} identify this flattening as a limitation and propose a map
from mesh to the torus instead.

Since our spherical \cnns\ naturally handle spherical signals, we can operate on
spherical representations of meshes directly.
Moreover, the \swscnns\ also handle spherical vector fields,
so the flow process itself can serve as input shape features
(besides the usual features such as curvatures).
This is similar in spirit to the \hks~\cite{SunOG09},
which samples values of the heat kernel at different times to construct vertex descriptors.
In our case, flow iterations move the vertices,
so we can collect the displacements per time step and assign them
to the corresponding final position of the vertex on the sphere,
resulting in a multi-channel spherical vector field,
which we can call ``curvature flow signature''.
Similarly to the \hks, the flow is determined by intrinsic shape properties, so the representation
is equivariant to isometries.

The idea of encoding shape and transferring features through
flows of diffeomorphisms is central in the field of computational anatomy~\cite{Miller_2015}.
Another useful idea from this field is the definition of distance between shapes through
a metric on the group of diffeomorphisms, which
could be leveraged to construct losses for shape inference tasks.
Refer to \textcite{younes2010shapes} for more details about these ideas.

\subsection{Large scale computer vision problems}
The interest in equivariant representations has grown considerably since the
research for this thesis began, and it continues to grow.
Currently, most successful equivariant models are on tasks with limited data
(e.g., medical imaging), on non-Euclidean manifolds (e.g., spherical images),
or where inputs are heavily perturbed (e.g., rotated $3$D shape retrieval).

While \textcite{CohenW17,weiler2019general} demonstrated that rotation/reflection equivariant
models can improve classification on CIFAR\num{10/100}, which are upright natural image datasets,
no improvements have been achieved on popular large scale computer vision tasks such as
ImageNet image classification~\cite{Russakovsky2015imagenet} and
COCO~\cite{LinMBHPRDZ14} instance segmentation and object detection.

In theory, equivariant models can reduce sample and model complexity,
and improve generalization performance even when global perturbations are not present.
In practice, data augmentation, architecture
and optimizer choices seem to have a larger effect on performance
on these large scale tasks.

At least part of this gap is due to engineering challenges;
significant engineering effort has been put on optimizing the standard deep learning
operations (e.g., the cuDNN library~\cite{ChetlurWVCTCS14}),
while the equivariant counterparts are still mostly in a research stage.
It is possible that, with some engineering,
models equivariant to planar rotation, reflections, and/or scaling
can make progress on popular large scale computer vision tasks.

Even tasks where equivariant models do excel, such as the ones we tackled in
\cref{sph:sec:sphcnn,spin:sec:spin} could be improved with more efficient implementations
and further exploration of architecture design.
Some of the models introduced in this thesis implement fairly complicated operations with many
steps using high-level TensorFlow~\cite{tensorflow2015-whitepaper} operations; it is likely that
lower-level implementations would be more efficient, enabling larger models and
higher resolutions.
The e3nn library~\cite{e3nn_2020_3723557} is a notable related endeavor.

\subsection{Unsupervised learning of symmetries}
Every equivariant model presented in this thesis and the vast majority of
related literature assume that the symmetries are known beforehand;
we design models to be equivariant to a specific group of symmetries.
What if the symmetries are unknown?
While some symmetries are ubiquitous, such as the shift-invariance in natural images,
which justifies the use of $2$D convolutions,
other types of symmetries may be unknown,
so an unsupervised way of finding them in the data can be useful.

There is some recent exploration in this direction.
\textcite{krippendorf20_detec_symmet_with_neural_networ} search for
invariant orbits on the last layer of a pre-trained network, while
\textcite{zhou20_meta_learn_symmet_by_repar} use meta-learning to enforce
filter constraints that reveal underlying symmetries.

\appendix
\clearpage
\phantomsection
\addcontentsline{toc}{chapter}{List of Tables}

\singlespacing
\listoftables
\clearpage
\phantomsection
\renewcommand{\listfigurename}{List of Illustrations}
\addcontentsline{toc}{chapter}{List of Illustrations}
\listoffigures

\doublespacing
\clearpage
\renewcommand{\abbreviationsname}{List of Acronyms}
\printnoidxglossaries

\clearpage
\phantomsection
\addcontentsline{toc}{chapter}{Bibliography} %
\printbibliography

\end{document}